\newif\ifpreprint
\pgfplotsset{compat=1.18}
\pgfplotsset{
  log ticks with fixed point,
}
\theoremstyle{plain}
\newtheorem{theorem}{Theorem}
\title{AMBER: Adaptive Mesh Generation by \\
Iterative Mesh Resolution Prediction}
\author{\textbf{Niklas Freymuth}$^1$\thanks{correspondence to \texttt{niklas.freymuth@kit.edu}}~~
\textbf{Tobias Würth}$^2$~
\textbf{Nicolas Schreiber}$^1$~
\textbf{Balazs Gyenes}$^1$~
\textbf{Andreas Boltres}$^1\,^3$
\\
\textbf{Johannes Mitsch}$^2$~
\textbf{Aleksandar Taranovic}$^1$~
\textbf{Tai Hoang}$^1$~
\textbf{Philipp Dahlinger}$^1$~
\\
\textbf{Philipp Becker}$^1$~
\textbf{Luise Kärger}$^2$~
\textbf{Gerhard Neumann}$^1$~
\\
$^1$Autonomous Learning Robots,
Karlsruhe Institute of Technology,
Karlsruhe\\
$^2$Institute of Vehicle System Technology, 
Karlsruhe Institute of Technology, Karlsruhe\\
$^3$SAP SE
}
\definecolor{amber}{RGB}{236,157,6}
\definecolor{greyblue}{HTML}{6C8EBF}
\definecolor{wiptextcolor}{RGB}{208, 135, 112}  
\definecolor{todoitemcolor}{RGB}{191, 97, 106}
\newacronym{gns}{GNS}{Graph Network Simulator}
\newacronym{mgn}{MGN}{MeshGraphNet}
\newacronym{mpc}{MPC}{Model Predictive Control}
\newacronym{mbrl}{MBRL}{Model-Based Reinforcement Learning}
\newacronym{gnn}{GNN}{Graph Neural Network}
\newacronym{sofa}{SOFA}{Simulation Open Framework Architecture}
\newacronym{mpn}{MPN}{Message Passing Network}
\newacronym{mlp}{MLP}{Multilayer Perceptron}
\newacronym{cnn}{CNN}{Convolutional Neural Network}
\newacronym{mse}{MSE}{Mean Squared Error}
\newacronym{iou}{IoU}{Intersection over Union}
\newacronym{ggns}{GGNS}{Grounding Graph Network Simulator}
\newacronym{amber}{\textit{AMBER}}{Adaptive Meshing By Expert Reconstruction}
\newacronym{gcn}{GCN}{Graph Convolutional Network}
\newacronym{pde}{PDE}{Partial Differential Equation}
\newacronym{asmr}{\textit{ASMR}}{Adaptive Swarm Mesh Refinement}
\newacronym{gmm}{GMM}{Gaussian Mixture Model}
\newacronym{mp}{MP}{Movement Primitive}
\newacronym{fem}{FEM}{Finite Element Method}
\newacronym{amr}{AMR}{Adaptive Mesh Refinement}
\newacronym{amg}{AMG}{Adaptive Mesh Generation}
\newacronym{ml}{ML}{Machine Learning}
\newacronym{rl}{RL}{Reinforcement Learning}
\newacronym{il}{IL}{Imitation Learning}
\newacronym{dcd}{DCD}{Density-Aware Chamfer Distance}
\newacronym{unet}{U-Net}{U-Net}
\begin{document}
\maketitle

\begin{abstract}
    The cost and accuracy of simulating complex physical systems using the Finite Element Method (FEM) scales with the resolution of the underlying mesh. 
    Adaptive meshes improve computational efficiency by refining resolution in critical regions, but typically require task-specific heuristics or cumbersome manual design by a human expert. 
    We propose Adaptive Meshing By Expert Reconstruction (AMBER), a supervised learning approach to mesh adaptation. 
    Starting from a coarse mesh, AMBER iteratively predicts the sizing field, i.e., a function mapping from the geometry to the local element size of the target mesh, and uses this prediction to produce a new intermediate mesh using an out-of-the-box mesh generator. 
    This process is enabled through a hierarchical graph neural network, and relies on data augmentation by automatically projecting expert labels onto AMBER-generated data during training.
    We evaluate AMBER on $2$D and $3$D datasets, including classical physics problems, mechanical components, and real-world industrial designs with human expert meshes.
    AMBER generalizes to unseen geometries and consistently outperforms multiple recent baselines, including ones using Graph and Convolutional Neural Networks, and Reinforcement Learning-based approaches. 
\end{abstract}

\section{Introduction}
Physical simulations are a fundamental tool in a wide range of science and engineering applications.
As simulations become more complex, researchers and practitioners increasingly rely on numerical solutions to intricate \glspl{pde}.
The \gls{fem} discretizes complex geometries into simpler mesh elements and solves the resulting system of linear equations~\citep{brenner2008mathematical, reddy2019introduction, larsonFiniteElementMethod2013, liu2022eighty}.
The \gls{fem} is ubiquitous in numerical engineering, finding application in fluid flow simulations~\citep{connor2013finite}, structural mechanics~\citep{hughes2003finite, abdullah2008review}, electromagnetics~\citep{jin2015finite}, and injection molding~\citep{baum2023approaches}.

For such simulations, both the simulation cost and accuracy scale with mesh resolution.
Therefore, adaptive meshing, which assigns more mesh elements to key regions of the geometry, is essential for efficient and accurate simulations~\citep{plewa2005adaptive, huang2010adaptive}.
An example is structural analysis in the automotive industry~\citep{abdullah2008review}, where \gls{fem} is used to model complex components under varying forces and stresses. Figure~\ref{fig:amber_schematic} shows such a component, a car seat crossmember, where a finer mesh is required near bends and holes.
Traditional \gls{amr} techniques iteratively refine existing meshes using predefined heuristics based on problem geometry and process conditions
~\citep{zienkiewicz1992superconvergent, nemecAdjointBasedAdaptiveMesh, bangerth2013adaptive}.
Similarly, \gls{amg} generates meshes from functions such as sizing fields, which define local element sizes on the geometry~\citep{lo2014finite, marcum2014aligned}.
However, both methods are still limited in efficiency and adaptability to new applications.
As a result, adaptive meshing in practice requires significant manual input and domain expertise.
Engineers often hand-tune local mesh resolutions for each new geometry or problem~\citep{lo2014finite, shimada2006current, baker2005mesh}.
This repetitive and time-consuming process creates bottlenecks in applications like iterative design and process optimization. 

To address this issue, we propose \gls{amber}, a data-driven method for iterative \gls{amg}. 
\gls{amber} employs a \gls{mpn}~\citep{gilmer2017neural, pfaff2020learning}, a class of \glspl{gnn}~\citep{bronstein2021geometric}, to predict target element sizes across a sequence of mesh refinement steps. 
Trained on small datasets, each consisting of roughly $20$ geometries and corresponding expert meshes, \gls{amber} learns underlying meshing strategies and tackles the core challenge of extreme local variation in element sizes.
Unlike prior learned \gls{amg} approaches~\citep{huang2021machine, zhangMeshingNet3DEfficientGeneration2021, khan2024graphmesh}, \gls{amber} iteratively generates meshes using each intermediate mesh’s vertices as sampling points to predict the next target sizing field. 
This iterative scheme, together with the \gls{mpn}, enables adaptation to non-uniform geometries, while simultaneously adjusting local sampling resolution in response previous mesh generation steps.
As a result, \gls{amber} is highly effective in adaptive meshing, where spatially varying target sizes necessitate correspondingly localized prediction densities.

Figure~\ref{fig:amber_schematic} shows an overview of our method. 
At inference time, \gls{amber} starts from a coarse initial mesh and iteratively predicts sizing fields to feed into an out-of-the-box mesh generator~\citep{geuzaine2009gmsh}, which generates an adapted mesh. 
During training, predicted sizing fields are supervised by projecting element sizes from expert meshes onto intermediate meshes. 
To address the distribution shift introduced by intermediate meshes during inference, we maintain a replay buffer populated with meshes generated by the model itself. 
This strategy echoes online imitation learning approaches such as DAgger~\citep{ross2011reduction}, but replaces the human-in-the-loop with automatic data generation and labeling.
In doing so, \gls{amber} bootstraps~\citep{davison1997bootstrap} its own training distribution, implicitly performing data augmentation~\citep{shorten2019survey} by including meshes on different local scales, to stabilize learning and inference.

\definecolor{gray}{RGB}{150, 150, 150}
\begin{figure*}[t]
    \centering
    \includegraphics[width=\textwidth]{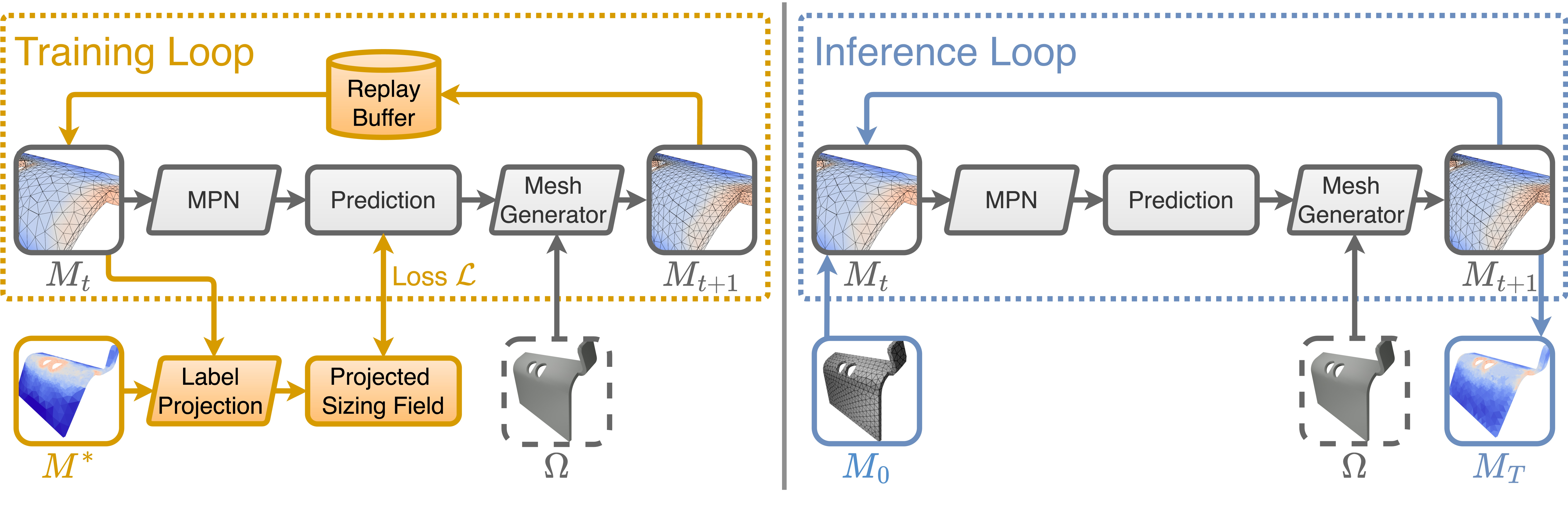}
    \vspace{-0.7cm}
    \caption{
    \gls{amber} learns adaptive mesh generation on complex geometries for simulation applications from an expert dataset.
    \textbf{Left:}
    During \textcolor{amber}{training}, \gls{amber} predicts a sizing field, as indicated by the mesh's color, from labels projected from an expert mesh $M^*$.
    \gls{amber} continuously updates a replay buffer with newly generated meshes to preserve a diverse and accurate training data distribution.
    \textbf{Right:} During \textcolor{greyblue}{inference}, \gls{amber} starts from an initial mesh $M_0$, predicts a sizing field per element, and feeds it into a mesh generator that refines the mesh using the underlying geometry $\Omega$. 
    This process is repeated until a final mesh $M_T$ is produced.
    On the car seat crossmember shown above, \gls{amber} learns that the expert assigns more mesh elements to holes and sharp bends, which are particularly interesting for strength and durability analyses.
    } 
    \label{fig:amber_schematic}
\end{figure*}

We evaluate our method on six novel datasets introduced in this work, covering a wide range of $2$D and $3$D geometries meshed by human experts and heuristics%
\footnote{
\ifpreprint
  Project page, code and datasets are available at \url{https://niklasfreymuth.github.io/AMBER}.
\else
  Code and datasets are provided in the supplement and will be open-sourced upon acceptance.
\fi
}%
.
These geometries vary in difficulty and model a diverse set of common engineering problems. 
We compare \gls{amber} against supervised learning~\citep{huang2021machine,zhang2020meshingnet, lockPredictingNearOptimalMesh2024} and \gls{rl}~\citep{freymuth2023swarm} baselines. 
\gls{amber} consistently produces higher-quality meshes than all baselines, both in terms of visual quality and quantitative metrics.
We additionally explore the runtime of \gls{amber}'s components. We find that \gls{amber}'s cost is dominated by the final mesh generation step, which is required for all mesh generation methods, and that the full \gls{amber} mesh generation process is faster and scales significantly better than classical iterative error estimation methods.
Furthermore, we present extensive ablations to show the effects of individual design choices, such as loss, refinement steps, and sizing field parametrization.

To summarize our contributions, we
\textbf{(1)} propose \gls{amber}, a novel approach for \glsreset{amg}\gls{amg} that produces a sequence of meshes, using each intermediate mesh to predict a target resolution for the next mesh,
\textbf{(2)} introduce six new datasets spanning both $2$D and $3$D geometries, designed to reflect realistic and diverse problem settings; two of these include human-generated meshes, and
\textbf{(3)} conduct extensive experiments demonstrating that \gls{amber} produces significantly better meshes than state-of-the-art supervised and \gls{rl} methods on these datasets.

\section{Related Work}

\textbf{Meshing for Simulation.}
\glsreset{fem}
Modern meshing approaches either use \glsreset{amr}\gls{amr}, which refines an existing mesh~\citep{plewa2005adaptive, fidkowski2011review}, or \glsreset{amg}\gls{amg}, which generates a new mesh~\citep{frey2007mesh, yano2012optimization, remacle2013frontal, si2008adaptive}. 
Typical \gls{amr} techniques rely on heuristics~\citep{zienkiewicz1992superconvergent} or error estimates~\citep{nemecAdjointBasedAdaptiveMesh, bangerth2013adaptive}, which can be inaccurate, unreliable, or computationally expensive~\citep{bangerth2013adaptive, cerveny2019nonconforming, wallwork2021mesh}.
In contrast, \glsreset{amg}\gls{amg} methods generate new meshes from geometric or solution-derived features over the domain, such as curvature or Hessian information, to prescribe local element size and potentially anisotropy~\citep{borouchaki1998mesh, d1991optimal, marcum2014aligned}. 
While effective in practice~\citep{frey2007mesh, huang2010adaptive}, they share the shortcomings of \gls{amr} and also require task-specific metrics or a tediously hand-crafted target sizing field for each domain~\citep{loseille2011continuous, huang2010adaptive}.
In contrast, we aim to learn scalar sizing fields directly from expert meshes. 

\textbf{Learning-Based Mesh Generation.} 
Existing learning based \gls{amg} approaches train surrogate models to either directly predict a sizing field or the local solution error, which is inverted to obtain a sizing field.
One line of work encodes the domain using a simple, parameterized representation, which is fed to an \gls{mlp} that either predicts coordinate-conditioned outputs~\citep{zhang2020meshingnet, zhangMeshingNet3DEfficientGeneration2021}, similar to NeRFs~\citep{mildenhall2021nerf}, or computes the sizing field on a fixed background mesh~\citep{lock2023predicting, sanchez2024machine} or as a set of point sources~\citep{lock2023meshing}.
\citet{huang2021machine} discretize the domain into a fixed-resolution image and process it with a CNN to directly predict a sizing field.
More recent methods use a \gls{gcn} to operate on the vertices of a coarse mesh.
Of these, \textit{GraphMesh}~\citep{khan2024graphmesh} generalizes to arbitrary polygonal domains and improves over prior \gls{gcn}-based models~\citep{kim2023gmr}.
\gls{amber} also predicts a sizing field on a discrete mesh, but does so iteratively across a sequence of intermediate meshes.
This enables dynamic adaptation of the sizing field across scales, without being restricted to any specific domain representation or discretization, allowing it to produce higher quality meshes. 

\textbf{Learning-Based Mesh Refinement.}
Several recent \gls{amr} approaches employ learning for mesh refinement by subdivision, i.e., they train a model to iteratively decide which mesh elements to divide into multiple smaller elements. 
In this class, supervised methods include learning refinement strategies with recurrent networks~\citep{bohn2021recurrent}, optimizing element anisotropy based on error estimates~\citep{fidkowski2021metric}, and using hand-crafted features to estimate error for adjoint-based refinement~\citep{roth2022neural, wallwork2022e2n}.
Alternatively, a recent line of work applies \gls{rl} to \gls{amr} by element subdivision~\citep{freymuth2023swarm, freymuth2024adaptive, foucart2023deep, yang2023reinforcement}, employing carefully crafted reward functions to quantify the benefit of each refinement. 
These reward functions typically require an underlying system of equations and either restrict the maximum mesh resolution~\citep{yang2023reinforcement, freymuth2023swarm} or encode a specific, heuristic refinement criterion~\citep{foucart2023deep}.
Out of these methods, \gls{asmr}~\citep{freymuth2023swarm, freymuth2024adaptive} proposes local, element-wise rewards, improving scaling capability and mesh quality over previous work. 
\gls{amber} further improves over \gls{asmr}\textit{'s} scalability and mesh quality, while avoiding the complicated reward design and the requirement for a \gls{fem} in the loop by using expert meshes. 
Another class of learning-based \gls{amr} methods employs mesh movement~\citep{huang2010adaptive} for refinement~\citep{song2022m2n, hu2024better, zhang2024um2n}.
These methods start with a uniform mesh and deform its elements, requiring a fixed starting resolution.
In contrast, \gls{amber} learns to produce a sequence of sizing fields from a coarse uniform mesh, inducing meshes with different numbers of elements. 
Other mesh movement based methods focus on highly specific tasks, such as fluid dynamics~\citep{yu2024flow2mesh, jian2025para2mesh}, while \gls{amber} is task agnostic. 

\textbf{Graph Network Simulators.} 
\glspl{gnn}~\citep{bronstein2021geometric}, particularly \glspl{mpn}~\citep{gilmer2017neural, pfaff2020learning}, are widely popular for mesh-based surrogate simulation~\citep{pfaff2020learning, linkerhagner2023grounding, allen2022graph, allen2023learning, lopez2024scaling, hoang2025geometry}.
\glspl{mpn} encompass the function class of several classical~\gls{pde} solvers~\citep{brandstetter2022message}, making them a popular choice for learning representations on meshes~\citep{pfaff2020learning, linkerhagner2023grounding, wurth2024physics}.
We similarly use \glspl{mpn} on meshes, but do not learn a simulator. 
Instead, we generate application-specific adaptive meshes for efficient and robust \gls{fem}-based simulation.

\textbf{Online Data Generation.}
Imitation learning approaches such as DAgger~\citep{ross2011reduction} address distribution shift by iteratively querying expert feedback on model rollouts. 
Bootstrapping methods like pseudo-labeling~\citep{lee2013pseudo} and Noisy Student~\citep{xie2020self} expand the training set using model-generated labels. 
Replay buffers~\citep{lin1992self, mnih2015human, fedus2020revisiting} mitigate covariate shift by combining past and current experiences, while data augmentation~\citep{shorten2019survey, zhang2018mixup, tobin2017domain} introduces synthetic variations to enhance generalization.
Unlike these approaches, \gls{amber} stores model-generated meshes across resolutions in a replay buffer and automatically projects labels onto them. 
This process effectively augments training data by providing meshes of different resolutions, improving distributional robustness without requiring external supervision or expert relabeling.

\section{Method}
\label{sec:method}

Our training datasets contain $N$ tuples $\{(\Omega, \mathcal{P}, M^*)\}$, each consisting of a 
geometry $\Omega\subseteq\mathbb{R}^d$ of dimension $d$, an optional set of process conditions $\mathcal{P}$, and a corresponding expert mesh $M^{*}$.
Each geometry describes a closed physical body in $2$D or $3$D, which is discretized into simplical elements $M^*_{i}$ on the subdomain $\Omega^*_{i}\subset\Omega$ by the mesh.
We aim to learn a function that takes a geometry $\Omega$ and process conditions $\mathcal{P}$ from the dataset and generates a mesh $M$ that minimizes a distance metric $d(M, M^{*})$ to the corresponding expert mesh $M^{*}$.
We make no further assumptions on the structure of the meshes, and use both heuristically refined and human-generated meshes as expert data.

We factorize mesh generation into two parts.
First, a learnable function consumes a geometry $\Omega$, process conditions $\mathcal{P}$ and derived features, and outputs a spatially-varying, scalar-valued sizing field $\Omega\rightarrow \mathbb{R}_{>0}$.
Second, a non-parametric function $g_\texttt{msh}:(\Omega \times ( \Omega\rightarrow \mathbb{R}_{>0})) \rightarrow M$ consumes a geometry and a sizing field and returns a mesh approximately conforming to this sizing field.
The sizing field describes the desired average edge length of the generated mesh's elements over the domain.
We consider isotropic meshes, i.e., meshes where the elements have a roughly equal aspect ratio. 
In this case, the local sizing field is directly related to the desired volume of the resulting mesh elements.

\textbf{\glsreset{mpn}\gls{mpn}.}
We instantiate our backbone to predict sizing fields using an \gls{mpn}~\citep{gilmer2017neural, pfaff2020learning}. An \gls{mpn} iteratively updates the latent node and edge features over 
$L$ message-passing steps.
We encode mesh vertices as nodes $\mathcal{V}$ and their neighborhood relations as edges $\mathcal{E} \subseteq \mathcal{V} \times \mathcal{V}$ of a bidirectional graph $\mathcal{G}_{\Omega^t} = \mathcal{G} = (\mathcal{V}, \mathcal{E})$.
We assign process condition and domain-dependent vertex features $\mathbf{h}_v$ and edge features $\mathbf{h}_e$. 
Using learned linear embeddings $\mathbf{h}_v^0=\mathbf{h}_v\mathbf{M}_v$ and $\mathbf{h}_e^0=\mathbf{h}_e\mathbf{M}_e$ of the initial node and edge features, each step $l$ computes features
\begin{equation*}
\mathbf{h}^{l+1}_{e} = \mathbf{h}^{l}_{e} +\psi^{l}_{\mathcal{E}}(\mathbf{h}^{l}_v, \mathbf{h}^{l}_u, \mathbf{h}^{l}_{e}), \textrm{ with } e = (u, v)\text{,}
\qquad
\mathbf{h}^{l+1}_{v} = \mathbf{h}^{l}_{v} +\psi^{l}_{\mathcal{V}}(\mathbf{h}^{l}_{v}, \bigoplus_{e=(v,u)\in \mathcal{E}} \mathbf{h}^{l+1}_{e})\text{.}
\end{equation*}
The permutation-invariant aggregation $\bigoplus$ can be realized via, e.g., a sum, mean, or maximum operator. 
All $\psi^l_\mathcal{E}$ and $\psi^l_\mathcal{V}$ are parameterized as learned~\glspl{mlp}. 
The output of the final layer is a learned representation $\mathbf{h}^L_v$ for each node $v \in \mathcal{V}$.
We feed this representation into a decoder~\gls{mlp} to yield a prediction $x_j=\text{MPN}(\mathcal{G}, \mathbf{h}_v, \mathbf{h}_e)_j$ per node $v_j \in \mathcal V$, which we abbreviate as $\text{MPN}(v_j)$.

\textbf{Mesh Generation.}
We refer to the non-parametric function $g_\texttt{msh}$ as the \textit{mesh generator}.
It creates a mesh that matches the desired sizing field under several criteria on the elements, such as their aspect ratio and size gradation.
This results in well-behaved elements and a smooth transition between element sizes.
While different mesh generators exist, we use the Frontal Delaunay algorithm implemented in \textsc{gmsh}~\citep{geuzaine2009gmsh} for simplicity.

\subsection{Iterative Mesh Generation with \textit{AMBER}}
\textbf{Predicting a Sizing Field.} Given a geometry $\Omega$ and task-specific process conditions $\mathcal{P}$, a coarse, uniform initial mesh is generated for the initial $M^t$ with $t=0$. This mesh is then encoded as a graph and processed using an~\gls{mpn} to predict the \textit{discrete} sizing field $\hat{f}(v_j)$ over mesh vertices $v_j$, with $\hat{f}(v_j)$ derived from the network's output $x_j$ through a subsequent transformation.

We could alternatively predict sizing values per mesh element, yielding a piecewise constant sizing field.
However, since the \gls{mpn} operates on an intermediate mesh with a different topology from the target mesh, element-level predictions lack the granularity needed for effective refinement. 
Instead, \gls{amber} predicts sizing field values over mesh vertices and applies the interpolant $\mathcal{I}_M(\hat{f})$ to yield a sizing field that is piecewise linear.
This interpolant weights the discrete sizing field at the vertices $v_j$ by the mesh's nodal basis functions $\phi_j$~\citep{larsonFiniteElementMethod2013}, yielding a \textit{continuous} sizing field. 
Given a point $\mathbf{z} \in \mathbb{R}^d$ we define the interpolant as
\begin{equation}
\label{eq:continuous_sizing_field}
\mathcal{I}_M(\hat{f})(\mathbf{z}) = 
\begin{cases}
\sum\limits_{j=1}^{|V|} \hat{f}(v_j) \, \phi_j(\mathbf{z}), & \text{if } \mathbf{z} \in \Omega_i, M_i\in\mathcal{N}(v_j) \text{ for some } j, \\
\hat{f}(v_{j'}), & \text{otherwise, where } j' = \arg\min\limits_j \|\mathbf{z} - \mathbf{p}(v_j)\| \text{,}
\end{cases}
\end{equation}
where $\mathbf{p}(v_j)\in\mathbb{R}^d$ and $\mathcal{N}(v_j)\subset{M}$ are the position and element neighborhood of vertex $v_j$, respectively.
The fallback to nearest-neighbor extrapolation ensures that the sizing field is defined across all of $\Omega$, including regions outside the discretized mesh domain.

\textbf{Iterative Generation.} At step $t$, the mesh generator consumes the continuous sizing field given by $\mathcal{I}_{M^t}(\hat{f})$ and its underlying geometry $\Omega$.
Using the vertices of each mech as the sampling points for the next continuous sizing field and repeating this process over $T$ steps results in a final mesh $M^T$.
Intuitively, an accurately predicted intermediate sizing field results in a mesh that is more similar to the expert mesh, and therefore provides better sampling points for the \gls{mpn} to predict the next sizing field even more accurately.
Compared to one-step approaches that predict a sizing field on an image~\citep{huang2021machine} or a single coarse mesh~\citep{khan2024graphmesh}, \gls{amber} therefore automatically adapts its sampling resolution, allowing it to output arbitrarily complex and highly non-uniform meshes where required. 
We prove convergence of this process in the one-dimensional case under the assumption of perfect predictions
in Appendix~\ref{app_sec:convergence_proof}.
The right part of Figure~\ref{fig:amber_schematic} visualizes this process.

\subsection{Training \textit{AMBER}}
\label{ssec:how_to_train_your_amber}

\textbf{Predictions and Targets.}
Let $V(M_i)$ be the volume of the $d$-dimensional simplicial element $M_i$ of the target mesh.
We define the element-wise sizing field as  the average edge length of that element
$f_e(M_i)=\left(V(M_i) \frac{d!}{\sqrt{d+1}}\right)^{\frac{1}{d}}$.
The union over the element's sizing fields induces a piecewise-constant sizing field.
To compute the target value $y_j$ of the discrete sizing field at vertex $v_j$ of an intermediate mesh $M^t$, we evaluate the sizing field of the expert mesh $M^*$ at the vertex position $\mathbf{p}(v_j)$.
That is, we assign targets $y_j = f_e(M^*_i)$ where $M^*_i \in M^*$ and $\mathbf{p}(v_j) \in \Omega^*_i$.
If a vertex lies outside the expert mesh due to, e.g., discretization of the domain, we project it to the nearest element.
We could alternatively obtain target values by interpolating the expert sizing field using Equation~\ref{eq:continuous_sizing_field}.
However, as we show in our experiments, due to \gls{amber}'s iterative process, the local resolution of the expert mesh is sufficient to adequately represent the granularity of the solution everywhere.

We train a single shared \gls{mpn} to regress the target sizing field of the current mesh generation step using a simple \glsreset{mse}\gls{mse} loss.
Since sizing fields are strictly positive, we add a softplus transformation to the network's output.
To increase the weight of numerically smaller elements in the loss function, we optimize in the untransformed space. 
Thus, given a prediction $x_j=\text{MPN}(v_j)$, our loss becomes
\begin{equation}
    \label{eq:main_loss}
    \mathcal{L}=\frac{1}{|V|}\sum_{j=1}^{|V|}\left(x_j-\text{softplus}^{-1}(y_j)\right)^2\text{.}
\end{equation}
We then recover the discrete predicted sizing field as  $ \hat{f}(v_j) = \text{softplus}(x_j)$.

\textbf{Replay Buffer.}
During inference, \gls{amber} auto-regressively produces a series of intermediate meshes $M^t$.
The initial mesh $M^0$ is coarse and uniform.
However, the corresponding expert mesh $M^*$ is generally finer and has highly varied topology.
To prevent a distribution shift between the training data and the data seen during inference, we therefore maintain a replay buffer~\citep{lin1992self, fedus2020revisiting} of bootstrapped data containing intermediate meshes that \gls{amber} generates during training.
The replay buffer is initialized with one uniform coarse mesh per expert mesh.
After each training epoch, we sample $k$ meshes from the replay buffer for producing new intermediate meshes.
For each, we predict a discrete sizing field, generate a new mesh from the induced continuous sizing field, annotate the vertices with a target sizing field using the expert mesh, and store this new labeled mesh in the buffer.
The full training pipeline is shown on the left of Figure~\ref{fig:amber_schematic}.

\subsection{Empirical Improvements}
\label{ssec:practical_improvements}

\begin{figure}[t]
  \centering
  \begin{subfigure}[c]{0.195\textwidth}
    \includegraphics[width=\textwidth]{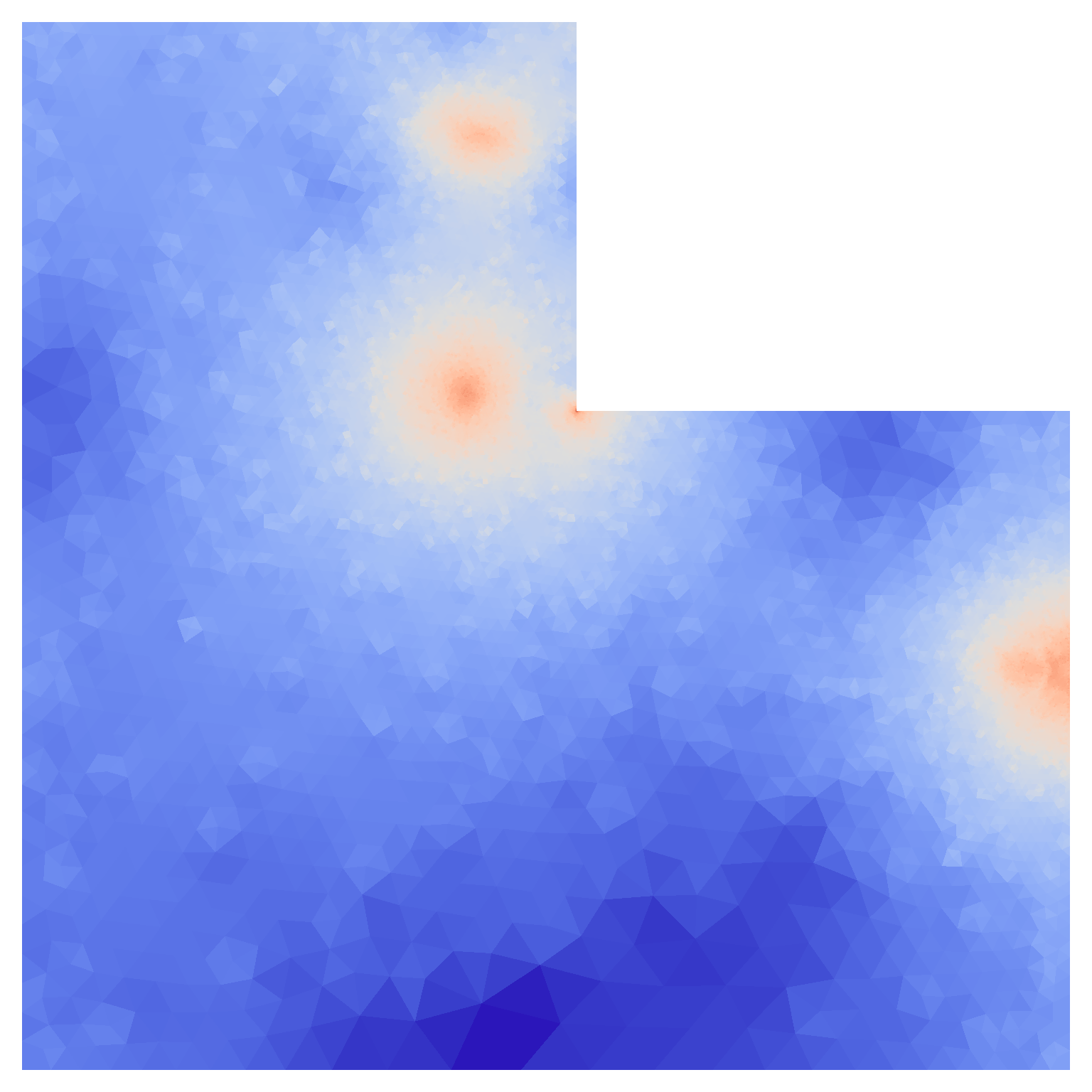}
    \vspace{-0.5cm}
    \caption{\texttt{Poisson}}
    \label{fig:overv_poisson}
  \end{subfigure}\hfill
  \begin{subfigure}[c]{0.195\textwidth}
    \includegraphics[width=\textwidth]{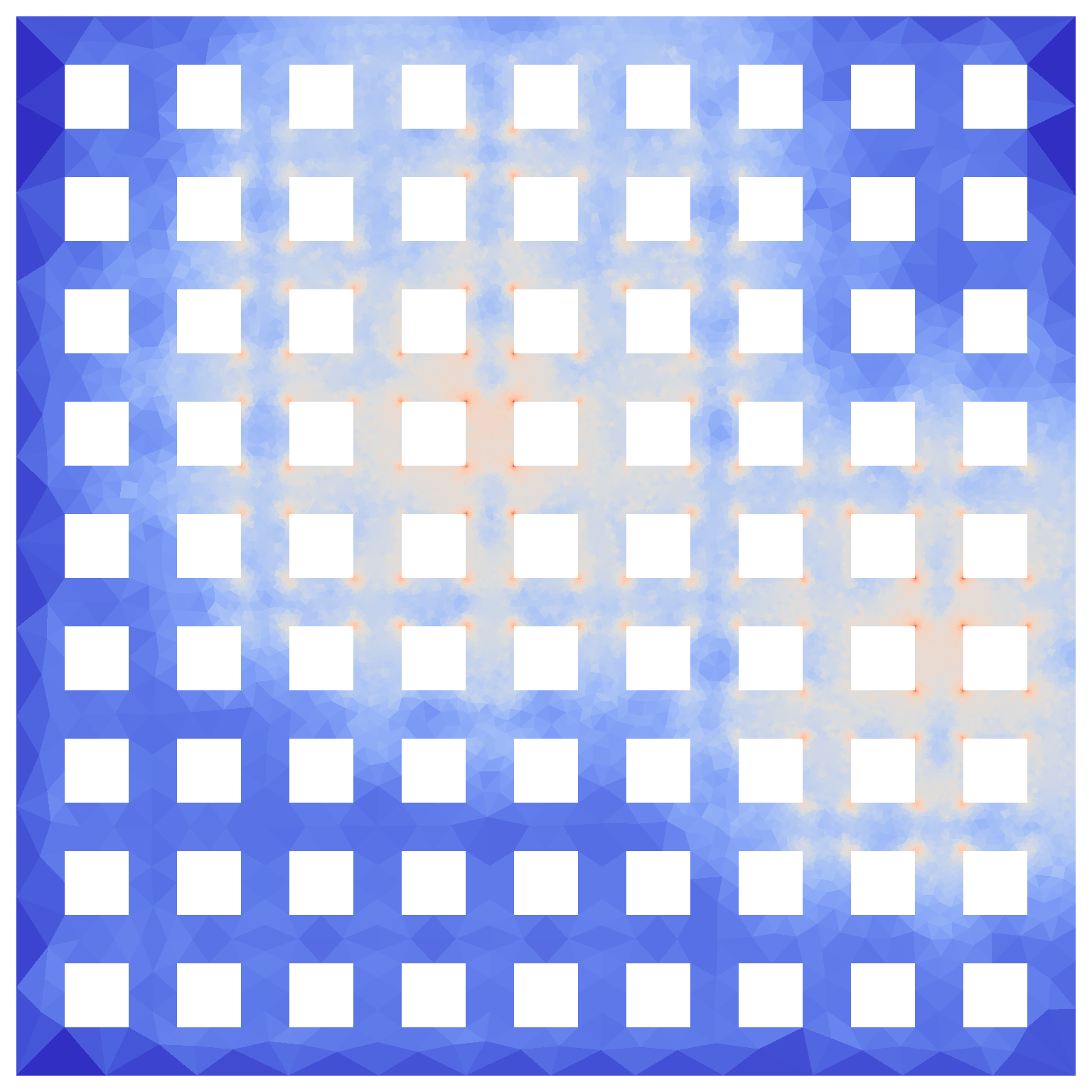}
    \vspace{-0.5cm}
    \caption{\texttt{Laplace}}
    \label{fig:overv_laplace}
  \end{subfigure}\hfill
  \begin{subfigure}[c]{0.195\textwidth}
    \includegraphics[width=\textwidth]{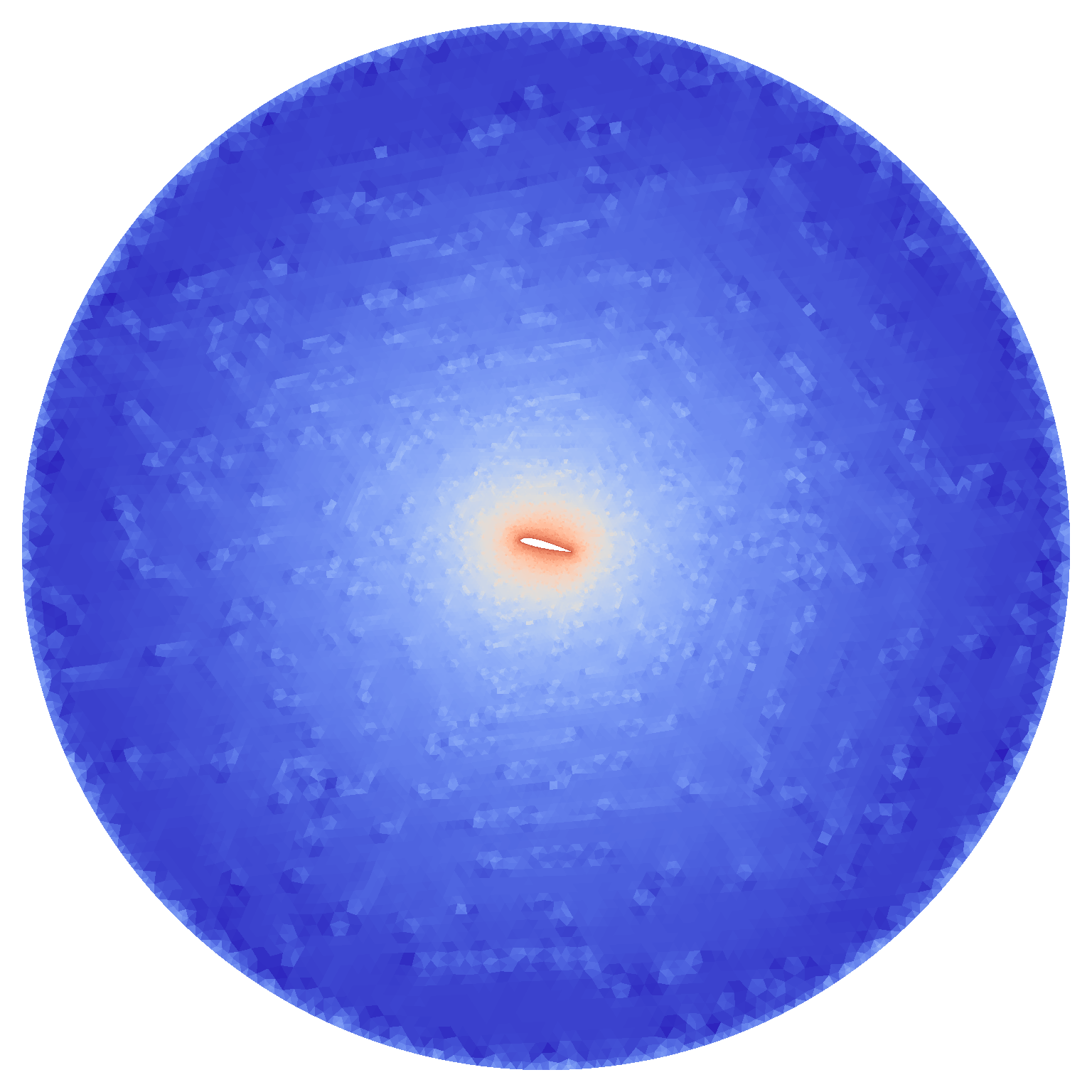}
    \vspace{-0.5cm}
    \caption{\texttt{Airfoil}}
    \label{fig:overv_airfoil}
  \end{subfigure}
  \begin{subfigure}[c]{0.195\textwidth}
    \includegraphics[width=\textwidth]{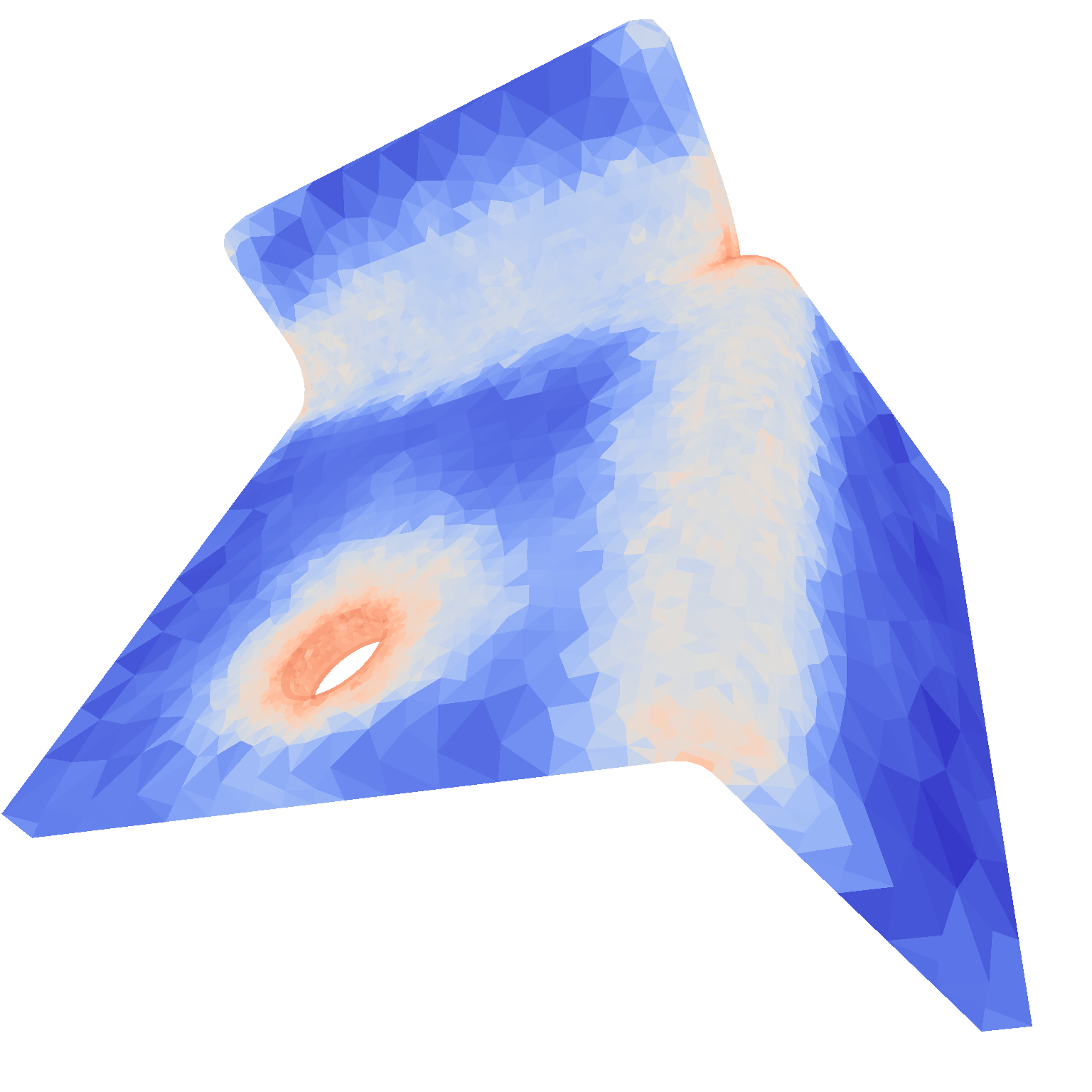}
    \vspace{-0.5cm}
    \caption{\texttt{Console}}
    \label{fig:overv_console}
  \end{subfigure}
  \begin{subfigure}[c]{0.195\textwidth}
    \includegraphics[width=\textwidth]{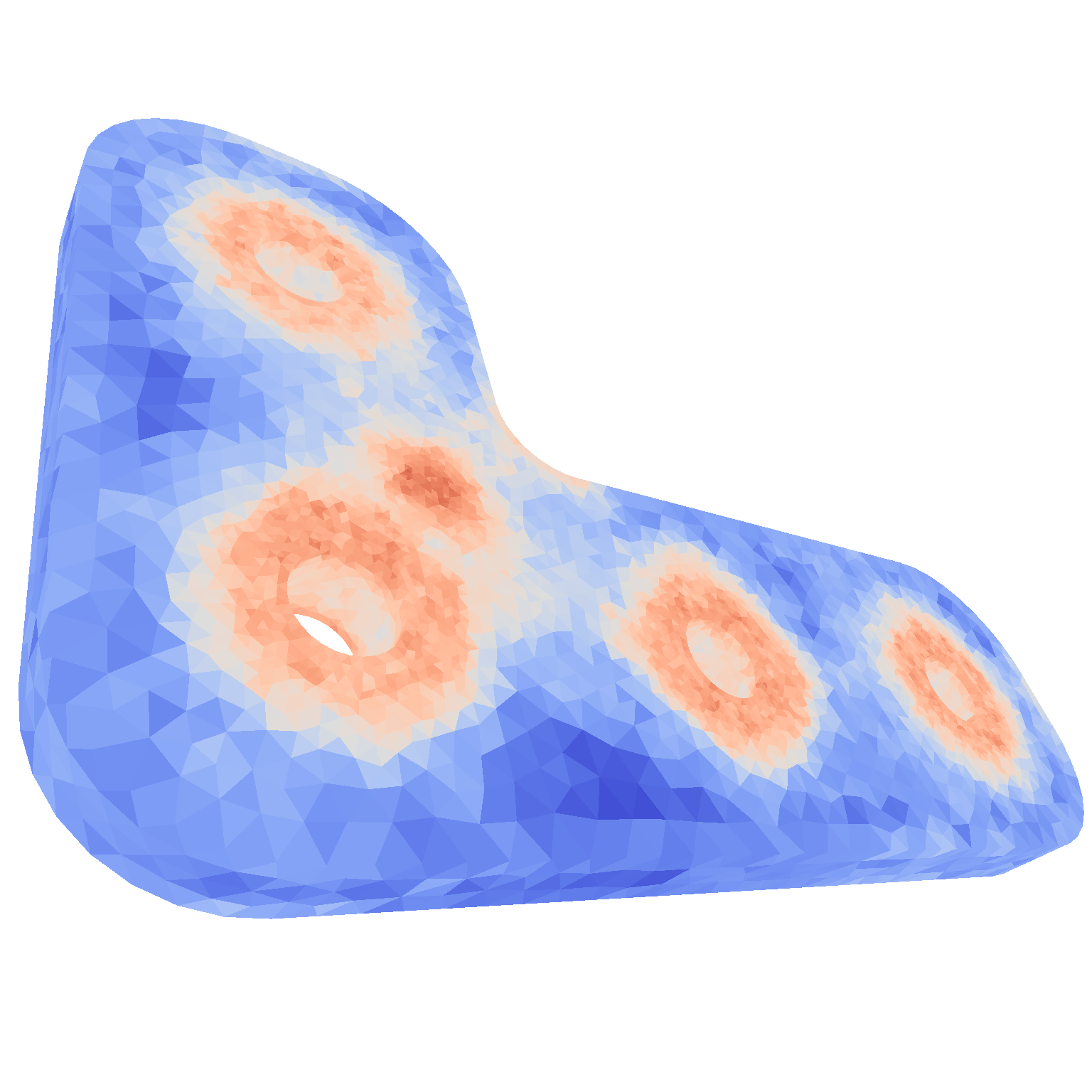}
    \vspace{-0.5cm}
    \caption{\texttt{Mold}}
    \label{fig:overv_mold}
  \end{subfigure}
  \\
  \begin{subfigure}[t]{\textwidth}
    \includegraphics[width=\textwidth]{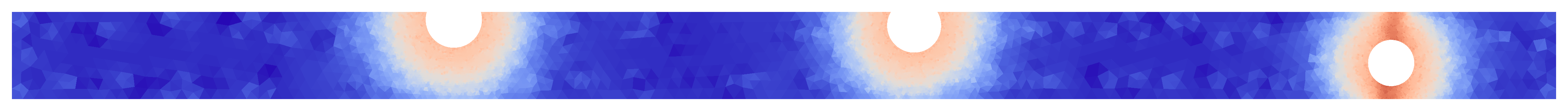}
    \vspace{-0.5cm}
    \caption{\texttt{Beam}}
    \label{fig:overv_beam}
  \end{subfigure}
  \\
  \caption{
    Exemplary \textit{\gls{amber}} meshes for each dataset. 
    The color represents the local element size, with smaller elements being red.
    We propose six novel and challenging datasets for mesh generation.
    \textbf{(a)} \texttt{Poisson} uses an L-shaped domain with a multimodal load function.
    \textbf{(b)} \texttt{Laplace} features parameterized $2$D lattices with complex Dirichlet boundaries.
    \textbf{(c)} \texttt{Airfoil} includes geometries representative of aerodynamic flow setups.
    \textbf{(d)} \texttt{Console} consists of $3$D car seat crossmembers. 
    \textbf{(e)} \texttt{Mold} includes complex $3$D plates used in injection molding contexts.  
    \textbf{(f)}\texttt{Beam} covers elongated, perforated beams inducing long-range mesh dependencies.
  }
  \label{fig:task_overview}
\end{figure}

Inspired by common best practices, we propose several algorithmic optimizations to further improve \gls{amber}'s applicability and efficiency.

\textbf{Uniform Refinement Depth.}
During training, we assign each intermediate mesh a \textit{depth} that corresponds to the number of refinement steps it has undergone from the initial uniform mesh.
To reduce distribution shift between inference and training, we enforce a uniform distribution over mesh depths in the replay buffer.
When generating new intermediate meshes, we first uniformly sample a target depth and then a mesh with the corresponding depth.
We set the maximum depth to the number of refinement steps used during evaluation, $T$.

\textbf{Adaptive Batch Size.}
Since the meshes in the replay buffer vary greatly in size, using a fixed number of meshes per batch would sometimes lead to out-of-memory errors, or otherwise leave significant available memory unused.
Instead, we set a maximum total size over all graphs in a batch, and greedily fill a batch with the least-sampled meshes until it is reached.
We define the size of graph as the sum of its number of nodes and edges $s(\mathcal{G})=|\mathcal{V}|+|\mathcal{E}|$.

\textbf{Hierarchical Architecture.}
The receptive field of an \gls{mpn} is determined by the number of message passing steps. 
As a mesh undergoes iterative refinement, the receptive field can vary significantly across the domain.
This makes it challenging to choose appropriate hyperparameters and hinders long-range communication between regions of the graph during the later refinement steps.
To ensure a consistent, resolution-invariant receptive field, we employ a hierarchical graph structure that combines the graph $\mathcal{G}^0 = (\mathcal{V}^0, \mathcal{E}^0)$ corresponding to the initial coarse mesh $M^0$ with that of the current intermediate mesh $M^t$ for all $t>0$. 
The hierarchical graph is defined as
$
\mathcal{G}_{\text{hier}} = \left( \mathcal{V}^0 \cup \mathcal{V}^t,\ \mathcal{E}^0 \cup \mathcal{E}^t \cup \mathcal{E}^{\text{cross}}\right)\text{,}
$
where $\mathcal{E}^{\text{cross}} = \{ (v, \pi(v)),\ (\pi(v), v) \mid v \in \mathcal{V}^t \} $ contains bidirectional edges between each intermediate vertex $v\in\mathcal{V}^t$ and its closest vertex in the coarse mesh \mbox{$\pi(v) = \arg\min_{u \in \mathcal{V}^0} \| \mathbf{p}(v) - \mathbf{p}(u) \|_2$}. 
We provide a binary node feature indicating the current mesh $M^t$, and mask all node-level features of the initial mesh, using it solely to provide consistent topological connectivity. 

\textbf{Input/Output Normalization.}
We normalize all network inputs, i.e., all node and edge features, to have zero mean and unit variance. 
The labels are normalized similarly, and the inverse normalization is applied to map predictions back to the original scale. 
Since the data distribution evolves as new meshes are added to the replay buffer, we maintain running statistics for each input and target feature. 

\textbf{Residual Prediction.}
We improve training stability by predicting the residual between the target sizing field $y_j=f_e(M_i^*)$ and the current discrete sizing field $b_j=f(v_j)$.
Given the element neighborhood $\mathcal{N}(v_j)$ of vertex $v_j$ and element-based sizing fields $f_e(M_i)$, we compute the current discrete sizing field $b_j$ at $v_j$ from the current mesh as the convex combination
\begin{equation}
\label{eq:vertex_sizing_field}
b_j = f(v_j) = \sum_{M_i \in \mathcal{N}(v_j)} \frac{V(M_i)}{\sum_{M_i \in \mathcal{N}(v_j)} V(M_i)} f_e(M_i)\text{.}
\end{equation}
We now recover the predicted sizing fields by
 $\hat{f}(v_j) = \text{softplus}(x_j+\text{softplus}^{-1}(b_j))$, 
and adapt the loss in Equation~\ref{eq:main_loss} accordingly.

\textbf{Scaling Sizing Fields.}
We can scale the resolution of generated meshes by introducing a simple refinement constant $c_t$ depending on the generation step $t$, such that the next generated mesh is
$M^{t+1} = g_\texttt{msh}(\Omega, c_t \mathcal{I}_{M^t}(\hat{f})(\mathbf{z}))$. 
While the predicted intermediate meshes allow \gls{amber} to adaptively refine its sampling resolution, reaching the full resolution of the expert mesh is unnecessary and computationally expensive. 
To mitigate this, we set $c_t{>}1$ for $t<{T-1}$ to coarsen intermediate meshes, starting at the first step $t{=}0$.
Here, setting an exponentially decaying $c_t$ reduces the number of elements for intermediate meshes without reducing the accuracy of the final mesh $M^T$. 
Additionally, during inference, we can also set $c_{T-1}{<}1$ to generate meshes that have a higher resolution than the expert.
This adaptation allows the model to flexibly adapt to a given element budget without retraining, which enables zero-shot generalization through a single scalar parameter.

\section{Experiments}
\label{sec:experiments}

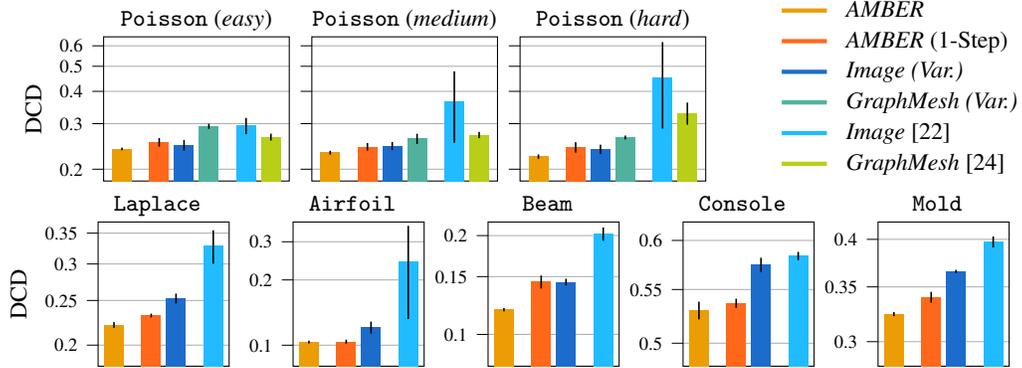
\begin{figure}[t]
    \centering
    \begin{minipage}[t]{0.75\textwidth}
    \hspace{.35cm}
    \begin{tikzpicture}
\pgfplotsset{every tick label/.append style={font=\scriptsize}} 
\pgfplotsset{every axis label/.append style={font=\small}} 

\definecolor{cadetblue80178158}{RGB}{80,178,158}
\definecolor{darkgray176}{RGB}{176,176,176}
\definecolor{deepskyblue33188255}{RGB}{33,188,255}
\definecolor{orange2361576}{RGB}{236,157,6}
\definecolor{orangered25510427}{RGB}{255,104,27}
\definecolor{royalblue28108204}{RGB}{28,108,204}
\definecolor{yellowgreen18420623}{RGB}{184,206,23}

\begin{groupplot}[group style={group size=3 by 1,  horizontal sep=0.3cm}]
\nextgroupplot[
xtick=\empty, 
x=0.33cm,
height=3.5cm,
title style={font=\small, yshift=-0.25cm}, 
log basis y={10},
ytick = {0.2, 0.3, 0.4, 0.5, 0.6},
yticklabels = {0.2, 0.3, 0.4, 0.5, 0.6},
tick align=outside,
tick pos=left,
title={\texttt{Poisson} (\textit{easy})\vphantom{y}},
xmin=-0.74, xmax=6.74,
y grid style={darkgray176},
ylabel={DCD},
ymajorgrids,
ymin=0.18, ymax=0.65,
ymode=log,
ytick style={color=black},
ytick={0.2,0.3,0.4,0.5,0.6},
yticklabels = {0.2, 0.3, 0.4, 0.5, 0.6},
]
\draw[draw=none,fill=orange2361576] (axis cs:-0.4,0.0001) rectangle (axis cs:0.4,0.23957611143267);
\draw[draw=none,fill=orangered25510427] (axis cs:1.1,0.0001) rectangle (axis cs:1.9,0.254158110836959);
\draw[draw=none,fill=royalblue28108204] (axis cs:2.1,0.0001) rectangle (axis cs:2.9,0.247576414601273);
\draw[draw=none,fill=cadetblue80178158] (axis cs:3.1,0.0001) rectangle (axis cs:3.9,0.293115040203021);
\draw[draw=none,fill=deepskyblue33188255] (axis cs:4.6,0.0001) rectangle (axis cs:5.4,0.294464784988092);
\draw[draw=none,fill=yellowgreen18420623] (axis cs:5.6,0.0001) rectangle (axis cs:6.4,0.266247027927318);
\path [draw=black, semithick]
(axis cs:0,0.236795284527552)
--(axis cs:0,0.242356938337787);

\path [draw=black, semithick]
(axis cs:1.5,0.244325188587857)
--(axis cs:1.5,0.263991033086061);

\path [draw=black, semithick]
(axis cs:2.5,0.236028946186192)
--(axis cs:2.5,0.259123883016355);

\path [draw=black, semithick]
(axis cs:3.5,0.286339919949308)
--(axis cs:3.5,0.299890160456734);

\path [draw=black, semithick]
(axis cs:5,0.273209874731386)
--(axis cs:5,0.315719695244798);

\path [draw=black, semithick]
(axis cs:6,0.258049238156178)
--(axis cs:6,0.274444817698458);

\nextgroupplot[
xtick=\empty, 
x=0.33cm,
height=3.5cm,
title style={font=\small, yshift=-0.25cm}, 
log basis y={10},
ytick = {0.2, 0.3, 0.4, 0.5, 0.6},
yticklabels = {,,},
tick align=outside,
tick pos=left,
title={\texttt{Poisson} (\textit{medium})\vphantom{y}},
xmin=-0.74, xmax=6.74,
y grid style={darkgray176},
ymajorgrids,
ymin=0.18, ymax=0.65,
ymode=log,
ytick style={color=black}
]
\draw[draw=none,fill=orange2361576] (axis cs:-0.4,0.0001) rectangle (axis cs:0.4,0.231664280004189);
\draw[draw=none,fill=orangered25510427] (axis cs:1.1,0.0001) rectangle (axis cs:1.9,0.244137102101762);
\draw[draw=none,fill=royalblue28108204] (axis cs:2.1,0.0001) rectangle (axis cs:2.9,0.245819909361945);
\draw[draw=none,fill=cadetblue80178158] (axis cs:3.1,0.0001) rectangle (axis cs:3.9,0.262343797536633);
\draw[draw=none,fill=deepskyblue33188255] (axis cs:4.6,0.0001) rectangle (axis cs:5.4,0.3657293389242);
\draw[draw=none,fill=yellowgreen18420623] (axis cs:5.6,0.0001) rectangle (axis cs:6.4,0.27118235964117);
\path [draw=black, semithick]
(axis cs:0,0.22769611677537)
--(axis cs:0,0.235632443233009);

\path [draw=black, semithick]
(axis cs:1.5,0.235775666752448)
--(axis cs:1.5,0.252498537451076);

\path [draw=black, semithick]
(axis cs:2.5,0.237021808839645)
--(axis cs:2.5,0.254618009884245);

\path [draw=black, semithick]
(axis cs:3.5,0.250471794272477)
--(axis cs:3.5,0.274215800800788);

\path [draw=black, semithick]
(axis cs:5,0.253170050663329)
--(axis cs:5,0.478288627185071);

\path [draw=black, semithick]
(axis cs:6,0.263590724221153)
--(axis cs:6,0.278773995061187);

\nextgroupplot[
xtick=\empty, 
x=0.33cm,
height=3.5cm,
title style={font=\small, yshift=-0.25cm}, 
log basis y={10},
ytick = {0.2, 0.3, 0.4, 0.5, 0.6},
yticklabels = {,,},
tick align=outside,
tick pos=left,
title={\texttt{Poisson} (\textit{hard})\vphantom{y}},
xmin=-0.74, xmax=6.74,
y grid style={darkgray176},
ymajorgrids,
ymin=0.18, ymax=0.65,
ymode=log,
ytick style={color=black}
]
\draw[draw=none,fill=orange2361576] (axis cs:-0.4,0.0001) rectangle (axis cs:0.4,0.223747831864909);
\draw[draw=none,fill=orangered25510427] (axis cs:1.1,0.0001) rectangle (axis cs:1.9,0.243096986011795);
\draw[draw=none,fill=royalblue28108204] (axis cs:2.1,0.0001) rectangle (axis cs:2.9,0.238939221626123);
\draw[draw=none,fill=cadetblue80178158] (axis cs:3.1,0.0001) rectangle (axis cs:3.9,0.265214647257594);
\draw[draw=none,fill=deepskyblue33188255] (axis cs:4.6,0.0001) rectangle (axis cs:5.4,0.453748918526901);
\draw[draw=none,fill=yellowgreen18420623] (axis cs:5.6,0.0001) rectangle (axis cs:6.4,0.329601557671586);
\path [draw=black, semithick]
(axis cs:0,0.219421523396562)
--(axis cs:0,0.228074140333256);

\path [draw=black, semithick]
(axis cs:1.5,0.23187765538123)
--(axis cs:1.5,0.254316316642361);

\path [draw=black, semithick]
(axis cs:2.5,0.229120563627883)
--(axis cs:2.5,0.248757879624362);

\path [draw=black, semithick]
(axis cs:3.5,0.260615939782395)
--(axis cs:3.5,0.269813354732792);

\path [draw=black, semithick]
(axis cs:5,0.287217248101623)
--(axis cs:5,0.620280588952179);

\path [draw=black, semithick]
(axis cs:6,0.297310865096346)
--(axis cs:6,0.361892250246826);

\end{groupplot}

\end{tikzpicture}
    \end{minipage}%
    \begin{minipage}[t]{0.25\textwidth}
    \begin{tikzpicture}

\definecolor{cadetblue80178158}{RGB}{80,178,158}
\definecolor{darkgray176}{RGB}{176,176,176}
\definecolor{deepskyblue33188255}{RGB}{33,188,255}
\definecolor{orange2361576}{RGB}{236,157,6}
\definecolor{orangered25510427}{RGB}{255,104,27}
\definecolor{royalblue28108204}{RGB}{28,108,204}
\definecolor{yellowgreen18420623}{RGB}{184,206,23}

\begin{axis}[%
hide axis,
xmin=10,
xmax=50,
ymin=0,
ymax=0.1,
legend style={
    font=\small,
    draw=white!15!black,
    legend cell align=left,
    legend columns=1,
    legend style={
        draw=none,
        column sep=1ex,
        line width=1pt,
    }
},
]
\addlegendimage{ultra thick, orange2361576}
\addlegendentry{\gls{amber}}
\addlegendimage{ultra thick, orangered25510427}
\addlegendentry{\gls{amber} (1-Step)}
\addlegendimage{ultra thick, royalblue28108204}
\addlegendentry{\textit{Image (Var.)}}
\addlegendimage{ultra thick, cadetblue80178158}
\addlegendentry{\textit{GraphMesh (Var.)}}

\addlegendimage{ultra thick, deepskyblue33188255}
\addlegendentry{\textit{Image}~\cite{huang2021machine}}
\addlegendimage{ultra thick, yellowgreen18420623}
\addlegendentry{\textit{GraphMesh}~\cite{khan2024graphmesh}}
\end{axis}
\end{tikzpicture}%
    \end{minipage}
    \begin{tikzpicture}
\pgfplotsset{every tick label/.append style={font=\scriptsize}} 
\pgfplotsset{every axis label/.append style={font=\small}} 

\definecolor{darkgray176}{RGB}{176,176,176}
\definecolor{deepskyblue33188255}{RGB}{33,188,255}
\definecolor{orange2361576}{RGB}{236,157,6}
\definecolor{orangered25510427}{RGB}{255,104,27}
\definecolor{royalblue28108204}{RGB}{28,108,204}

\begin{groupplot}[group style={group size=5 by 1,  horizontal sep=0.85cm}]
\nextgroupplot[
xtick=\empty, 
x=0.33cm,
height=3.5cm,
title style={font=\small, yshift=-0.25cm}, 
log basis y={10},
tick align=outside,
tick pos=left,
title={\texttt{Laplace \vphantom{y}}},
xmin=-0.64, xmax=4.64,
y grid style={darkgray176},
ylabel={DCD},
ymajorgrids,
ymin=0.18, ymax=0.37,
ymode=log,
ytick style={color=black},
yticklabels={0.2,0.25,0.3,0.35},
ytick={0.2,0.25,0.3,0.35},
]
\draw[draw=none,fill=orange2361576] (axis cs:-0.4,0.0001) rectangle (axis cs:0.4,0.221179231943084);
\draw[draw=none,fill=orangered25510427] (axis cs:1.1,0.0001) rectangle (axis cs:1.9,0.231914891976621);
\draw[draw=none,fill=royalblue28108204] (axis cs:2.1,0.0001) rectangle (axis cs:2.9,0.252501531375795);
\draw[draw=none,fill=deepskyblue33188255] (axis cs:3.6,0.0001) rectangle (axis cs:4.4,0.32761280951359);
\path [draw=black, semithick]
(axis cs:0,0.217991227984705)
--(axis cs:0,0.224367235901464);

\path [draw=black, semithick]
(axis cs:1.5,0.229743180361744)
--(axis cs:1.5,0.234086603591498);

\path [draw=black, semithick]
(axis cs:2.5,0.246285319391829)
--(axis cs:2.5,0.25871774335976);

\path [draw=black, semithick]
(axis cs:4,0.30024763548297)
--(axis cs:4,0.35497798354421);

\nextgroupplot[
xtick=\empty, 
x=0.33cm,
height=3.5cm,
title style={font=\small, yshift=-0.25cm}, 
log basis y={10},
tick align=outside,
tick pos=left,
title={\texttt{Airfoil \vphantom{y}}},
xmin=-0.64, xmax=4.64,
y grid style={darkgray176},
ymajorgrids,
ymin=0.08, ymax=0.37,
ymode=log,
ytick style={color=black},
yticklabels={0.1,0.2,0.3},
ytick={0.1,0.2,0.3},
]
\draw[draw=none,fill=orange2361576] (axis cs:-0.4,0.0001) rectangle (axis cs:0.4,0.103374962761639);
\draw[draw=none,fill=orangered25510427] (axis cs:1.1,0.0001) rectangle (axis cs:1.9,0.10386999552782);
\draw[draw=none,fill=royalblue28108204] (axis cs:2.1,0.0001) rectangle (axis cs:2.9,0.120855332128254);
\draw[draw=none,fill=deepskyblue33188255] (axis cs:3.6,0.0001) rectangle (axis cs:4.4,0.243643488658189);
\path [draw=black, semithick]
(axis cs:0,0.101805943366479)
--(axis cs:0,0.1049439821568);

\path [draw=black, semithick]
(axis cs:1.5,0.101666543996354)
--(axis cs:1.5,0.106073447059286);

\path [draw=black, semithick]
(axis cs:2.5,0.113218763697346)
--(axis cs:2.5,0.128491900559162);

\path [draw=black, semithick]
(axis cs:4,0.132048941399694)
--(axis cs:4,0.355238035916685);

\nextgroupplot[
xtick=\empty, 
x=0.33cm,
height=3.5cm,
title style={font=\small, yshift=-0.25cm}, 
log basis y={10},
tick align=outside,
tick pos=left,
title={\texttt{Beam \vphantom{y}}},
xmin=-0.64, xmax=4.64,
y grid style={darkgray176},
ymajorgrids,
ymin=0.08, ymax=0.22,
ymode=log,
ytick style={color=black},
yticklabels={0.1,0.15,0.2},
ytick={0.1,0.15,0.2},
]
\draw[draw=none,fill=orange2361576] (axis cs:-0.4,0.0001) rectangle (axis cs:0.4,0.118940497407635);
\draw[draw=none,fill=orangered25510427] (axis cs:1.1,0.0001) rectangle (axis cs:1.9,0.144538072599842);
\draw[draw=none,fill=royalblue28108204] (axis cs:2.1,0.0001) rectangle (axis cs:2.9,0.144382411390477);
\draw[draw=none,fill=deepskyblue33188255] (axis cs:3.6,0.0001) rectangle (axis cs:4.4,0.202318931893767);
\path [draw=black, semithick]
(axis cs:0,0.117521214978392)
--(axis cs:0,0.120359779836878);

\path [draw=black, semithick]
(axis cs:1.5,0.137984586122174)
--(axis cs:1.5,0.151091559077509);

\path [draw=black, semithick]
(axis cs:2.5,0.140962218008453)
--(axis cs:2.5,0.147802604772501);

\path [draw=black, semithick]
(axis cs:4,0.192904715412371)
--(axis cs:4,0.211733148375162);

\nextgroupplot[
xtick=\empty, 
x=0.33cm,
height=3.5cm,
title style={font=\small, yshift=-0.25cm}, 
log basis y={10},
tick align=outside,
tick pos=left,
title={\texttt{Console \vphantom{y}}},
xmin=-0.64, xmax=4.64,
y grid style={darkgray176},
ymajorgrids,
ymin=0.48, ymax=0.62,
ymode=log,
ytick style={color=black},
yticklabels={0.5,0.55,0.6},
ytick={0.5,0.55,0.6},
]
\draw[draw=none,fill=orange2361576] (axis cs:-0.4,0.0001) rectangle (axis cs:0.4,0.529926837372588);
\draw[draw=none,fill=orangered25510427] (axis cs:1.1,0.0001) rectangle (axis cs:1.9,0.536675554521954);
\draw[draw=none,fill=royalblue28108204] (axis cs:2.1,0.0001) rectangle (axis cs:2.9,0.574562992589961);
\draw[draw=none,fill=deepskyblue33188255] (axis cs:3.6,0.0001) rectangle (axis cs:4.4,0.583631247800415);
\path [draw=black, semithick]
(axis cs:0,0.521627075993722)
--(axis cs:0,0.538226598751454);

\path [draw=black, semithick]
(axis cs:1.5,0.532295386475416)
--(axis cs:1.5,0.541055722568492);

\path [draw=black, semithick]
(axis cs:2.5,0.567286272589968)
--(axis cs:2.5,0.581839712589954);

\path [draw=black, semithick]
(axis cs:4,0.579281502283432)
--(axis cs:4,0.587980993317397);

\nextgroupplot[
xtick=\empty, 
x=0.33cm,
height=3.5cm,
title style={font=\small, yshift=-0.25cm}, 
log basis y={10},
tick align=outside,
tick pos=left,
title={\texttt{Mold \vphantom{y}}},
xmin=-0.64, xmax=4.64,
y grid style={darkgray176},
ymajorgrids,
ymin=0.28, ymax=0.42,
ymode=log,
ytick style={color=black},
yticklabels={0.3,0.35,0.4},
ytick={0.3,0.35,0.4},
]
\draw[draw=none,fill=orange2361576] (axis cs:-0.4,0.0001) rectangle (axis cs:0.4,0.324172217403428);
\draw[draw=none,fill=orangered25510427] (axis cs:1.1,0.0001) rectangle (axis cs:1.9,0.339969576912029);
\draw[draw=none,fill=royalblue28108204] (axis cs:2.1,0.0001) rectangle (axis cs:2.9,0.365535459630832);
\draw[draw=none,fill=deepskyblue33188255] (axis cs:3.6,0.0001) rectangle (axis cs:4.4,0.397127557458664);
\path [draw=black, semithick]
(axis cs:0,0.322363309689356)
--(axis cs:0,0.3259811251175);

\path [draw=black, semithick]
(axis cs:1.5,0.334834140224906)
--(axis cs:1.5,0.345105013599153);

\path [draw=black, semithick]
(axis cs:2.5,0.363984148825532)
--(axis cs:2.5,0.367086770436132);

\path [draw=black, semithick]
(axis cs:4,0.391047214827766)
--(axis cs:4,0.403207900089561);

\end{groupplot}

\end{tikzpicture}
    \caption{
    Mean and two times standard error of expert mesh similarity evaluated by \gls{dcd} (lower is better). 
    \gls{amber} achieves the best results across all datasets, demonstrating its ability to generate highly accurate meshes on diverse and challenging domains. 
    All methods perform well on \texttt{Poisson} (\textit{easy}). 
    As task complexity increases, the baselines and eventually variants become less reliable.
    \gls{amber} \textit{(1-Step)} remains strong across tasks, while the full model achieves further improvements through iterative refinement.
    }
    \label{fig:main_results}
\end{figure}

\textbf{Datasets and Features.} 
We introduce six novel datasets representing realistic \gls{fem} problems that need adaptive meshing to meet common efficiency and accuracy requirements.
The datasets span $2$D and $3$D domains, as well as diverse applications in physics-based simulation, structural mechanics, and industrial design.
Depending on the dataset, we generate geometries procedurally, or source them from openly available or custom datasets.
For datasets without a concrete underlying system of equations, we generate meshes from human experts and manually designed, specialized heuristics. 
Other datasets consider a concrete problem, where we employ an iterative refinement heuristic that utilizes a \gls{fem} error indicator. 
Using this heuristic, we create \textit{easy}, \textit{medium}, and \textit{hard} variants of the \texttt{Poisson} dataset to provide expert meshes on different scales.
Here, more refinement steps results in an expert mesh with more elements and a larger difference between the largest and smallest elements increases, making the dataset more challenging.
Figure~\ref{fig:task_overview} illustrates representative \gls{amber} meshes from the test set of each dataset.
Across datasets, mesh resolution ranges from $1\,042$ to $65\,191$ elements. 

Appendix~\ref{app_sec:hyperparameters} provides training details for \gls{amber} and 
Appendix~\ref{app_sec:mesh_generation} details the mesh generation process.
We derive dataset-specific features as a function of the process conditions $\mathcal{P}$ for \texttt{Poisson}, \texttt{Laplace} and \texttt{Mold}, as detailed in Appendix~\ref{app_sec:datasets}.
The \texttt{Poisson} and \texttt{Laplace} datasets use a \gls{fem} solver in the loop for expert mesh generation via an iterative refinement heuristic.
For these datasets, we therefore provide \gls{fem} solutions as a vertex-level input feature for each mesh.
For all datasets, we add several geometric features, as detailed in Appendix~\ref{app_ssec:geometric_features}.

\textbf{Evaluation.}
We evaluate the generated meshes by comparing their local resolution to that of an unseen expert reference mesh on the same geometry and process conditions, using five random seeds per experiment. 
First, we use the \glsreset{dcd}\gls{dcd}~\citep{wu2021density} over both mesh's vertices.
The \gls{dcd} is a symmetric, exponentiated variant of the Chamfer distance that allows multiple points in one set to match a single point in the other.
Semantically, it treats both vertex sets as samples from an unknown density. 
Second, we compute a symmetric relative projected $L^2$ error between the sizing fields induced by the evaluated and expert meshes.
In contrast to the \gls{dcd}, this metric captures discrepancies in local element sizes.
The combination of these two metrics with different semantic interpretations is robust against potential artifacts in the generated meshes.
Finally, we evaluate downstream simulation quality versus number of mesh elements for the \texttt{Poisson} task by using the norm of the error indicator of Equation~\ref{app_eq:error_indicator_poisson}. 
Appendix~\ref{app_sec:metrics} details all metrics.

\textbf{Baselines and Variants.}
We compare to \textit{GraphMesh}~\citep{khan2024graphmesh}, which is based on a two-step \gls{gcn}.
\textit{GraphMesh} relies on mean value coordinates~\citep{floater2003mean}  and is limited to polygonal domains, only allowing us to evaluate it on the \texttt{Poisson} dataset.
\textit{Image}~\citep{huang2021machine} predicts either pixel- or voxel-wise sizing fields from binary geometry masks of a discretized domain using a $2$D or $3$D \gls{cnn}, respectively. 
We adapt both baselines to use softplus-transformed predictions.
This transformation is omitted in the original works, which focus on relatively simple problems where training instabilities are less pronounced.
Without it, models tend to diverge, producing overly fine meshes. 
To disentangle training and algorithmic design, we additionally introduce \textit{Variants} of each baseline that incorporate our loss (Equation~\ref{eq:main_loss}) and normalization.
Additionally, we compare against \gls{amber} \textit{(1-Step)}.
This variant runs a single \gls{amber} generation step by setting $T{=}1$, which demonstrates the benefit of iterative mesh generation.

We compare against \glsreset{asmr}\gls{asmr}~\citep{freymuth2024adaptive} as an \gls{rl} baseline.
\gls{asmr} learns a policy to iteratively mark elements for \gls{amr}, optimizing a reward function tied to the improvement of a specific \gls{fem} solution. 
This reward requires a fine-grained uniform reference mesh to compare to, whose resolution bounds the maximum number of refinements.
In Appendix~\ref{app_ssec:asmr_variant}, we also explore a variant that omits the reference mesh in favor of using the error indicator from Equation~\ref{app_eq:error_indicator_poisson}.
This modification enables deeper refinement and mesh resolutions comparable to those of the expert.
We evaluate \textit{\gls{asmr}} on the \texttt{Poisson} task.
Appendix~\ref{app_sec:baselines_variants} details all baselines and variants.

\textbf{Runtime and Cross-Dataset Generalization.}
We measure the runtime of \gls{amber} and its individual components for \texttt{Poisson} (\textit{easy}/\textit{hard}) and compare it to the expert heuristic used to generate the data.
We additionally explore \gls{amber}'s ability to generalize across datasets by training a single model on joint data of \texttt{Poisson} (\textit{hard}), \texttt{Laplace} and \texttt{Airfoil}. 
We concatenate the 20 expert meshes per task into 60 total training meshes, using a shared replay buffer for the data. 
We one-hot encode the task in the node features, and zero out task‑specific features when unavailable. 
We do not change any other training or inference hyperparameters. 
We call this variant \gls{amber} (Mixed).

\textbf{Additional Experiments.}
For \gls{amber}, we explore the loss in Equation~\ref{eq:main_loss} and the components from Section~\ref{ssec:practical_improvements}. 
We also vary training data size and test alternative sizing field parameterizations for $\hat{f}$.
For the \textit{Image (Variant)} baseline, we explore lower input resolutions and versions that omit either the loss or input/output normalization.
Appendix~\ref{app_sec:extended_results} provides additional details.

\section{Results}
\label{sec:results}

\begin{figure}[t]
    \centering
    \includegraphics{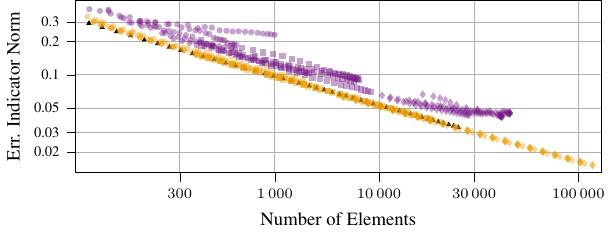}
    \begin{tikzpicture}
\definecolor{orange2361576}{RGB}{236,157,6}
\definecolor{purple11522131}{RGB}{115,22,131}
\begin{axis}[%
hide axis,
xmin=10,
xmax=50,
ymin=0,
ymax=0.1,
legend style={
    draw=white!15!black,
    legend cell align=left,
    legend columns=1,
    legend style={
        draw=none,
        column sep=1ex,
        line width=1pt,
        font=\small
    }
},
]

\addlegendimage{ultra thick, white}
\addlegendentry{\textbf{Method:}}

\addlegendimage{ultra thick, orange2361576}
\addlegendentry{\gls{amber}}

\addlegendimage{ultra thick, purple11522131}
\addlegendentry{\gls{asmr}}

\addlegendimage{ultra thick, , mark=triangle*, only marks, draw opacity=0.0}
\addlegendentry{Expert}

\addlegendimage{ultra thick, white}
\addlegendentry{}

\addlegendimage{ultra thick, white}
\addlegendentry{\textbf{Difficulty:}}

\addlegendimage{ultra thick, mark=*, only marks, draw opacity=0.0}
\addlegendentry{\textit{easy}}

\addlegendimage{ultra thick, mark=square*, only marks, draw opacity=0.0}
\addlegendentry{\textit{medium}}

\addlegendimage{ultra thick, , mark=diamond*, only marks, draw opacity=0.0}
\addlegendentry{\textit{hard}}

\addlegendimage{ultra thick, white}
\addlegendentry{~}

\end{axis}
\end{tikzpicture}%
    \vspace{-0.15cm}
    \caption{
        Log-log plot of error indicator norm versus number of mesh elements (lower left is better) for \gls{amber}, \textit{\gls{asmr}} and the expert across \texttt{Poisson} (\textit{easy}, \textit{medium}, \textit{hard}). 
        Each marker shows the mean over the test set for a given seed. %
        \gls{amber} and \textit{\gls{asmr}} evaluations are obtained by scaling the final predicted sizing field and tuning the element penalty, respectively. 
        \gls{amber} closely matches or even exceeds expert performance in terms of indicator error, and generalizes to meshes that are more than $3{\times}$ finer, maintaining the expected error-element trend beyond $100\,000$ elements.
    }
    \label{fig:pareto}
\end{figure}

\textbf{Quantitative Results.}
Figure~\ref{fig:main_results} evaluates \glsreset{dcd}\gls{dcd} over vertex sets to the expert mesh across datasets.
On \texttt{Poisson} (\textit{easy}), all methods perform well.
As complexity increases for, e.g., \texttt{Poisson} (\textit{medium}/\textit{hard}), our training procedure shows more significant benefits, causing both variants to significantly outperform their published baselines.
Across datasets, the \gls{amber} \textit{(1-Step)} produces accurate sizing field predictions and high-quality meshes closely matching the expert. 
It also generalizes to $3$D, where the \textit{Image} methods struggle. 
\gls{amber} further improves mesh quality, likely due to its iterative mesh generation. 
Here, multiple generation steps allow the intermediate meshes, which govern the prediction resolution, to adapt dynamically to the underlying geometry, improving mesh quality in complex regions.
Appendix~\ref{app_ssec:l2_error_evaluations} shows consistent trends using a symmetric $L^2$ error, supporting \gls{amber}’s ability to generate high-quality meshes.
Appendix~\ref{app_ssec:indicator_evaluations} matches these results on \texttt{Poisson} (\textit{hard}) and \texttt{Laplace} for the norm of the error indicator of Equation~\ref{app_eq:error_indicator_poisson}, which requires a concrete system of equations to evaluate.
This strong correlation between the error indicator and \gls{dcd} across methods supports the use of \gls{dcd} as a reliable proxy on datasets where downstream simulation error is not directly available.

Figure~\ref{fig:pareto} compares \gls{amber}, \textit{\gls{asmr}}, and the expert meshes using the per-mesh norm of the same indicator for \texttt{Poisson} (\textit{easy}/\textit{medium}/\textit{hard}).
We obtain Pareto fronts by varying \textit{\gls{asmr}}'s element penalty and scaling \gls{amber}’s predicted sizing field at inference between $0.5$ and $2.0$. 
All methods follow the expected log-log error–element trend~\citep{dorfler1996convergent}. 
Markers show test-set averages per target resolution and random seed.
\textit{\gls{asmr}} exhibits high variance across seeds and degrades beyond ${\sim}30\,000$ elements due to its fixed-depth reference mesh.
In contrast, \gls{amber} closely matches and slightly surpasses expert performance on fine meshes, likely due to smoother mesh generation. 
It also generalizes to ${>}100\,000$ elements, even though the largest expert mesh has only $31\,510$ elements.
This generalization only requires adjusting a single scalar, enabling zero-shot, budget-aware mesh generation without retraining.

\textbf{Runtime and Cross-Dataset Generalization.}
Appendix~\ref{app_ssec:runtime} compares \gls{amber}'s runtime with that of the expert heuristic used to generate \texttt{Poisson} data.
For the same number of elements, both methods achieve a similar error indicator norm. Yet \gls{amber} is significantly faster on finer meshes, outperforming the iterative expert heuristic by more than an order of magnitude on meshes with more than $30\,000$ elements.
We additionally find in Table~\ref{app_tab:runtime_split} that \gls{amber}'s runtime is dominated by its last mesh generation step, which is needed for any mesh generation method.

Table~\ref{tab:generalization} explores \gls{amber}'s ability to train on multiple datasets at the same time. 
\gls{amber} (Mixed) shows the approximately equal performance to \gls{amber} on all considered tasks, opening up interesting avenues for multi-task and general-purpose learned mesh generation algorithms in future work.

\begin{table}[t]
\centering
\caption{
        Generalization across datasets.
        Comparison between \gls{amber} trained individually per dataset and \gls{amber} (Mixed) trained jointly on all datasets. 
        The mixed model achieves nearly identical performance, indicating strong generalization and potential for multi-task learned mesh generation.
}
\vspace{0.2cm}
\label{tab:generalization}
\begin{tabular}{lccc}
\toprule
\textbf{Method} & \texttt{Poisson} (\textit{hard})& \texttt{Laplace} & \texttt{Airfoil} \\
\midrule
\gls{amber} & $0.224 \pm 0.004$ & $0.222 \pm 0.003$ & $0.103 \pm 0.002$ \\
\gls{amber} (Mixed) & $0.226 \pm 0.011$ & $0.222 \pm 0.005$ & $0.102 \pm 0.002$ \\
\bottomrule
\end{tabular}
\end{table}

\textbf{Qualitative Results.}
Figure~\ref{fig:task_overview} shows a final \gls{amber} mesh per dataset and
Figure~\ref{fig:baseline_comp_main} shows a close-up of generated meshes for different supervised methods on \texttt{Console}.
Both figures show that \gls{amber} produces accurate sizing fields on diverse domains and geometries and produces high-quality meshes, closely resembling the expert.
In contrast, the \textit{Image} baselines only learn general, low-frequency features of the expert's sizing field, but fail to capture finer details.
The result is a comparatively more uniform, less adaptive mesh.
Figure~\ref{fig:full_rollout_console} provides a full \gls{amber} rollout on \texttt{Console}, showcasing the iterative generation process. 
In each step, \gls{amber} consumes the previous mesh, using it to predict an increasingly accurate sizing field.
We provide further visualizations for \gls{amber} rollouts and generated meshes for all baselines in Appendix~\ref{app_sec:Visualizations}.

\begin{figure}[t]
  \centering
  \begin{subfigure}[t]{0.17\textwidth}
    \centering
    \includegraphics[trim={0 11cm 0 11cm},clip,width=\linewidth]{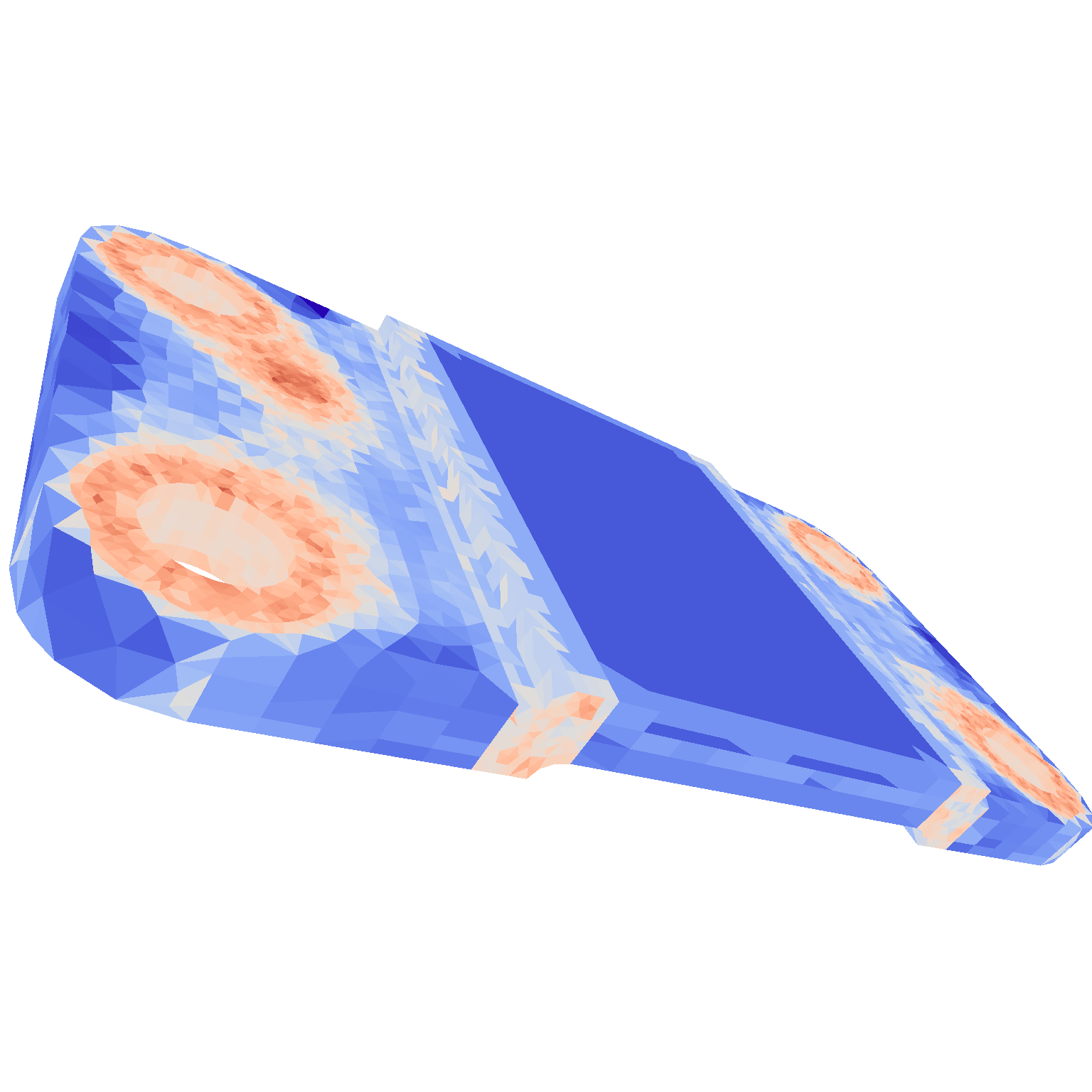}
    \vspace{1pt}
    \includegraphics[trim={6.4cm 4.5cm 0 0},clip,width=\linewidth]{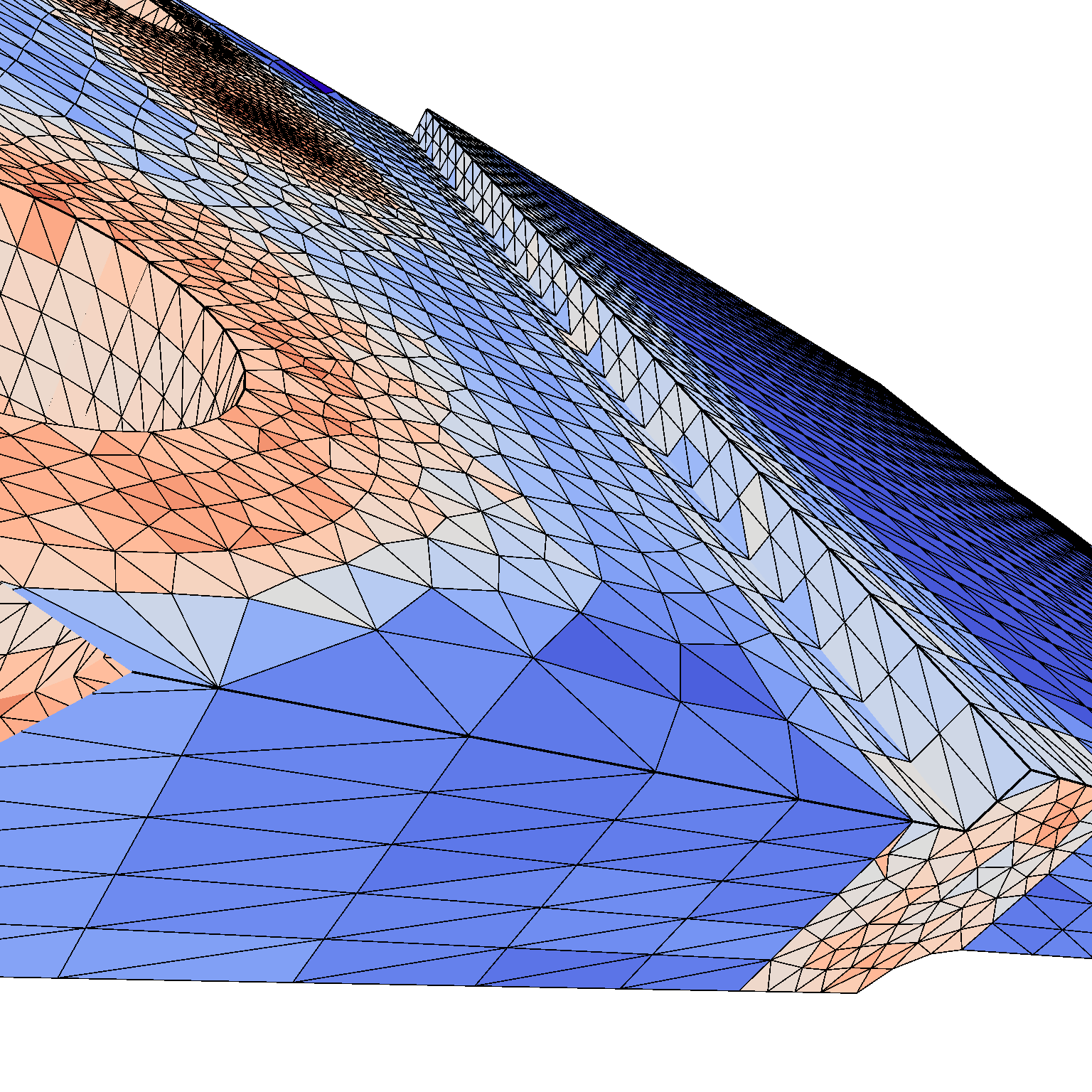}
    \caption*{Expert}
  \end{subfigure}
  \hspace{0.01\textwidth} %
  \vrule width 0.4pt %
  \hspace{0.01\textwidth} %
  \begin{subfigure}[t]{0.17\textwidth}
    \centering
    \includegraphics[trim={0 11cm 0 11cm},clip,width=\linewidth]{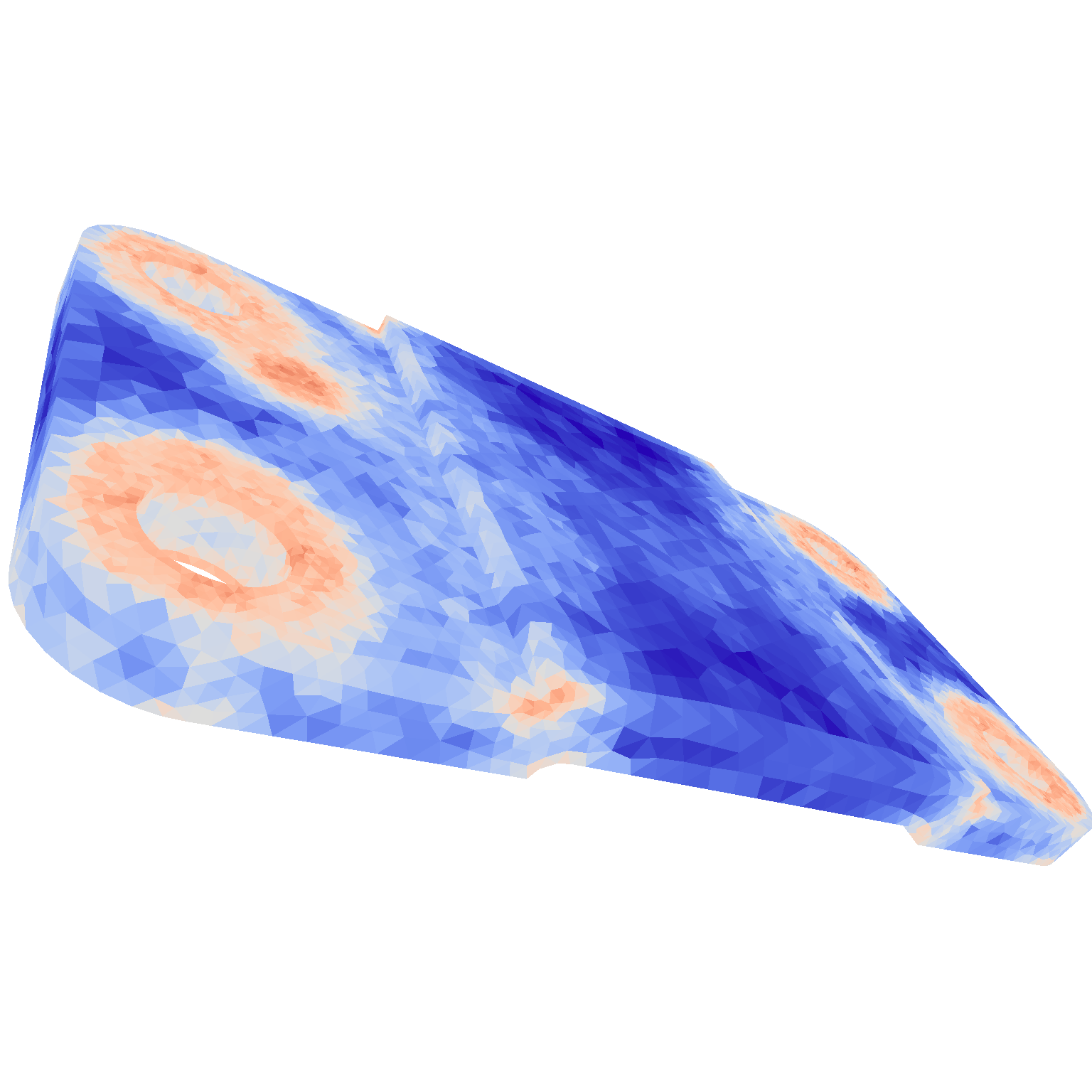}
    \vspace{1pt} %
    \includegraphics[trim={6.4cm 4.5cm 0 0},clip,width=\linewidth]{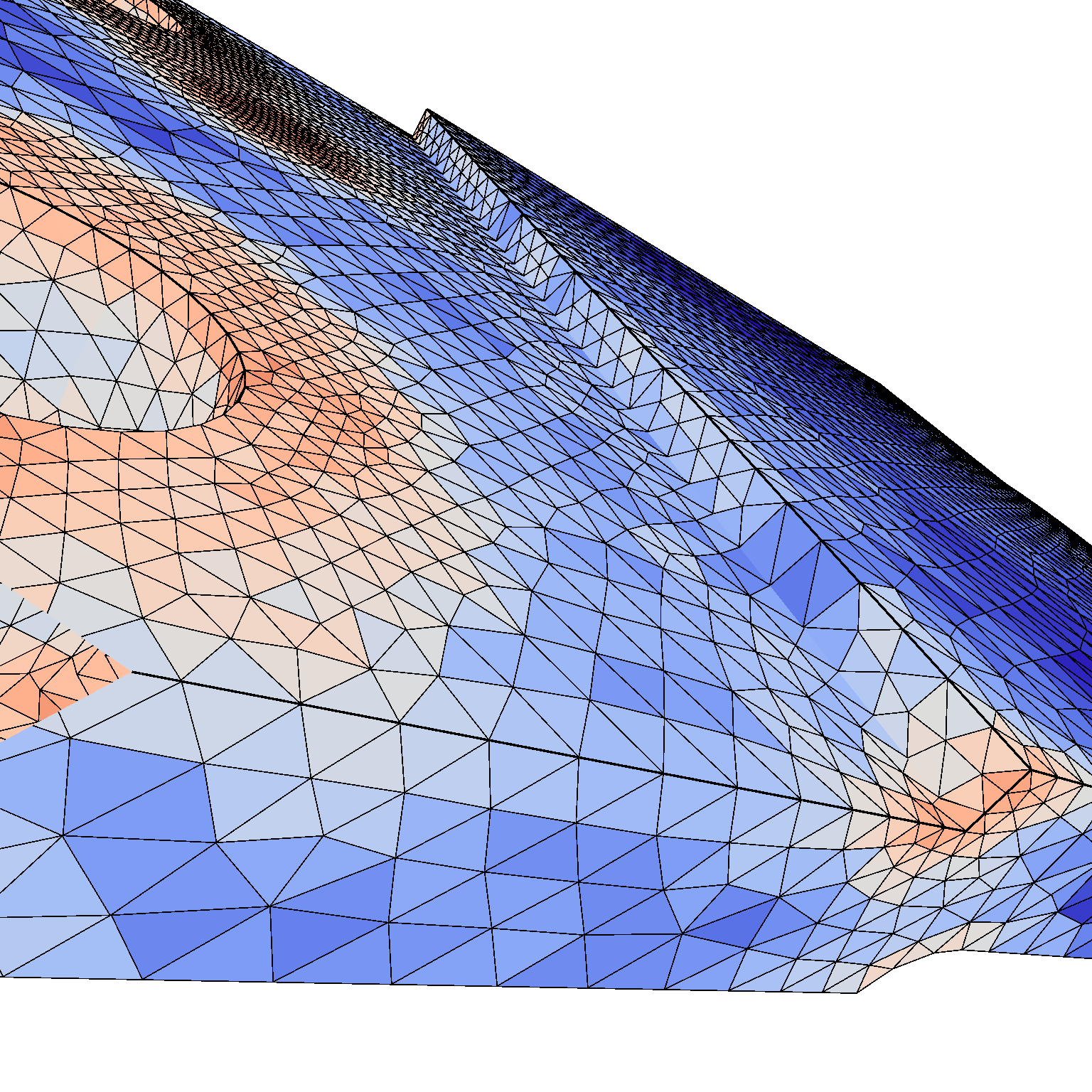}
    \caption*{\gls{amber}}
  \end{subfigure}\hfill
  \begin{subfigure}[t]{0.17\textwidth}
    \centering
    \includegraphics[trim={0 11cm 0 11cm},clip,width=\linewidth]{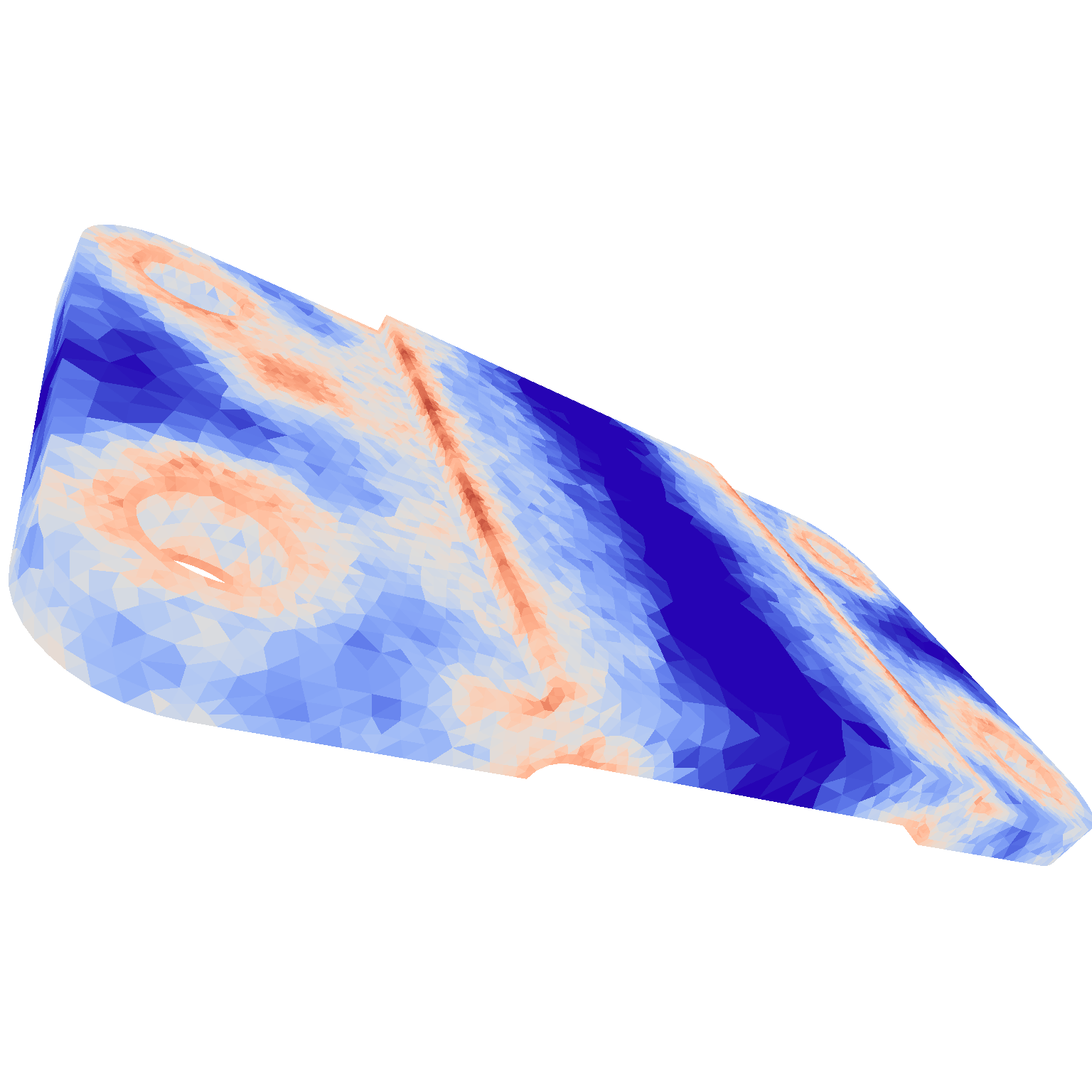}
    \vspace{1pt}
    \includegraphics[trim={6.4cm 4.5cm 0 0},clip,width=\linewidth]{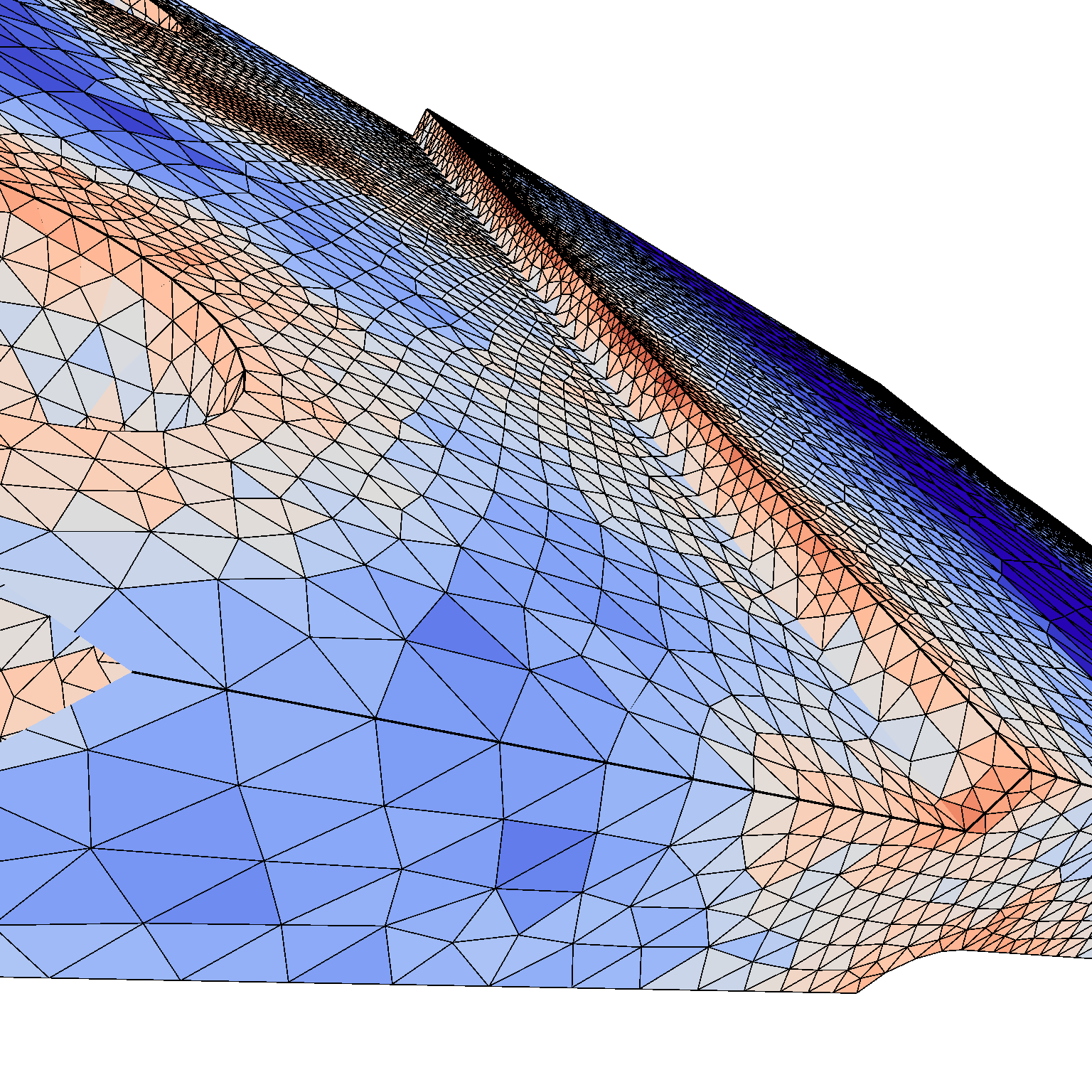}
    \caption*{\gls{amber} \textit{(1-Step)}}
  \end{subfigure}\hfill
  \begin{subfigure}[t]{0.17\textwidth}
    \centering
    \includegraphics[trim={0 11cm 0 11cm},clip,width=\linewidth]{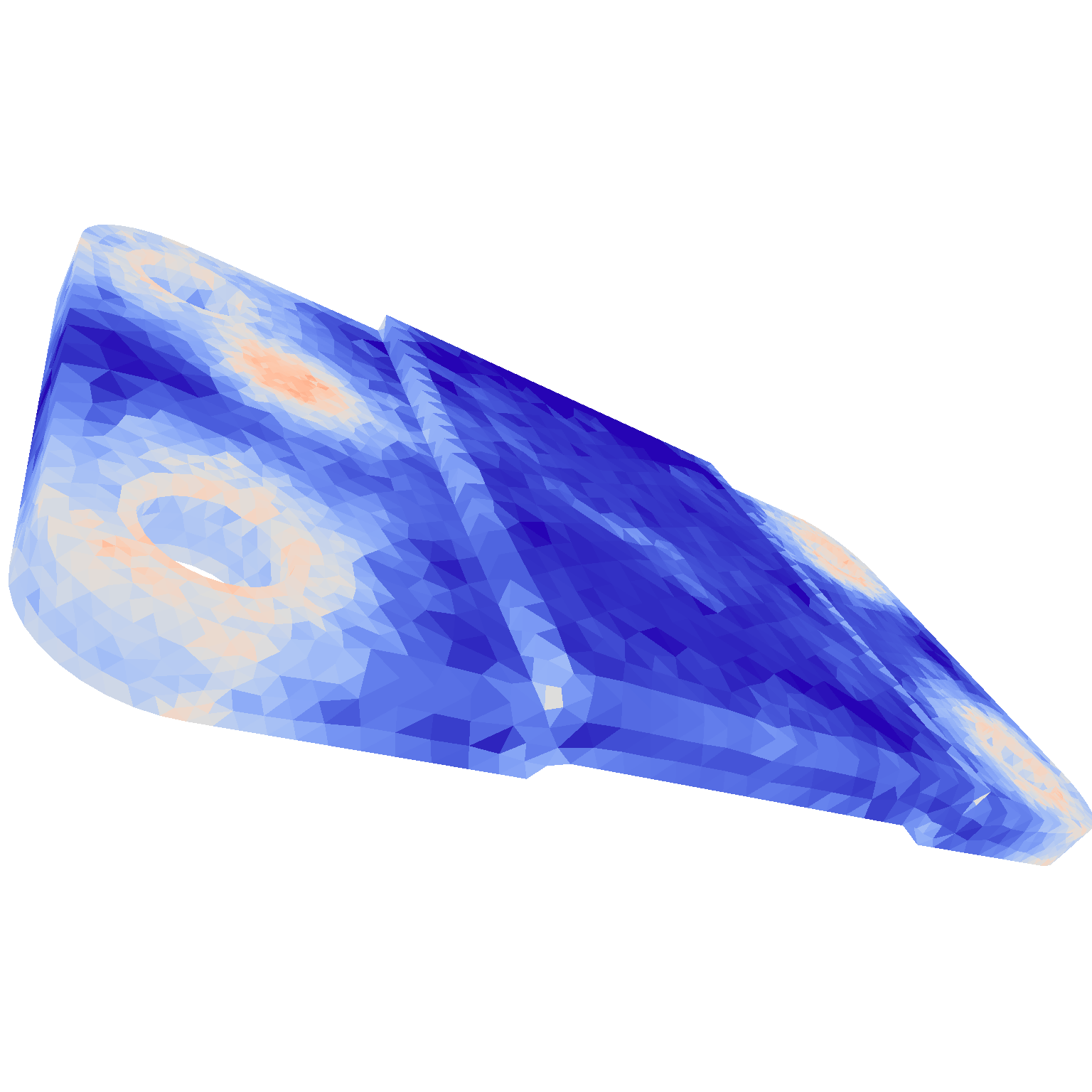}
    \vspace{1pt}
    \includegraphics[trim={6.4cm 4.5cm 0 0},clip,width=\linewidth]{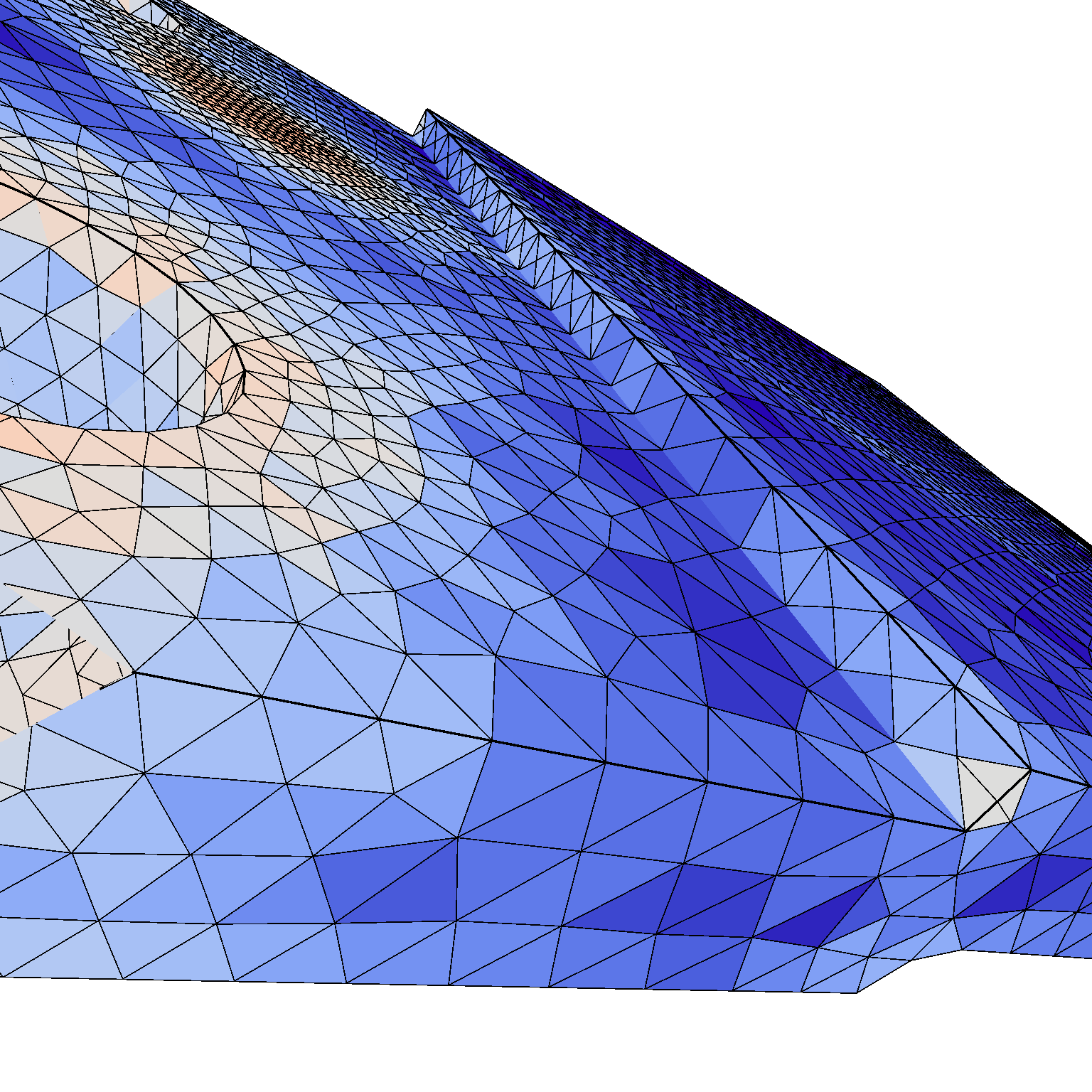}
    \caption*{\textit{Image (Variant)}}
  \end{subfigure}\hfill
  \begin{subfigure}[t]{0.17\textwidth}
    \centering
    \includegraphics[trim={0 11cm 0 11cm},clip,width=\linewidth]{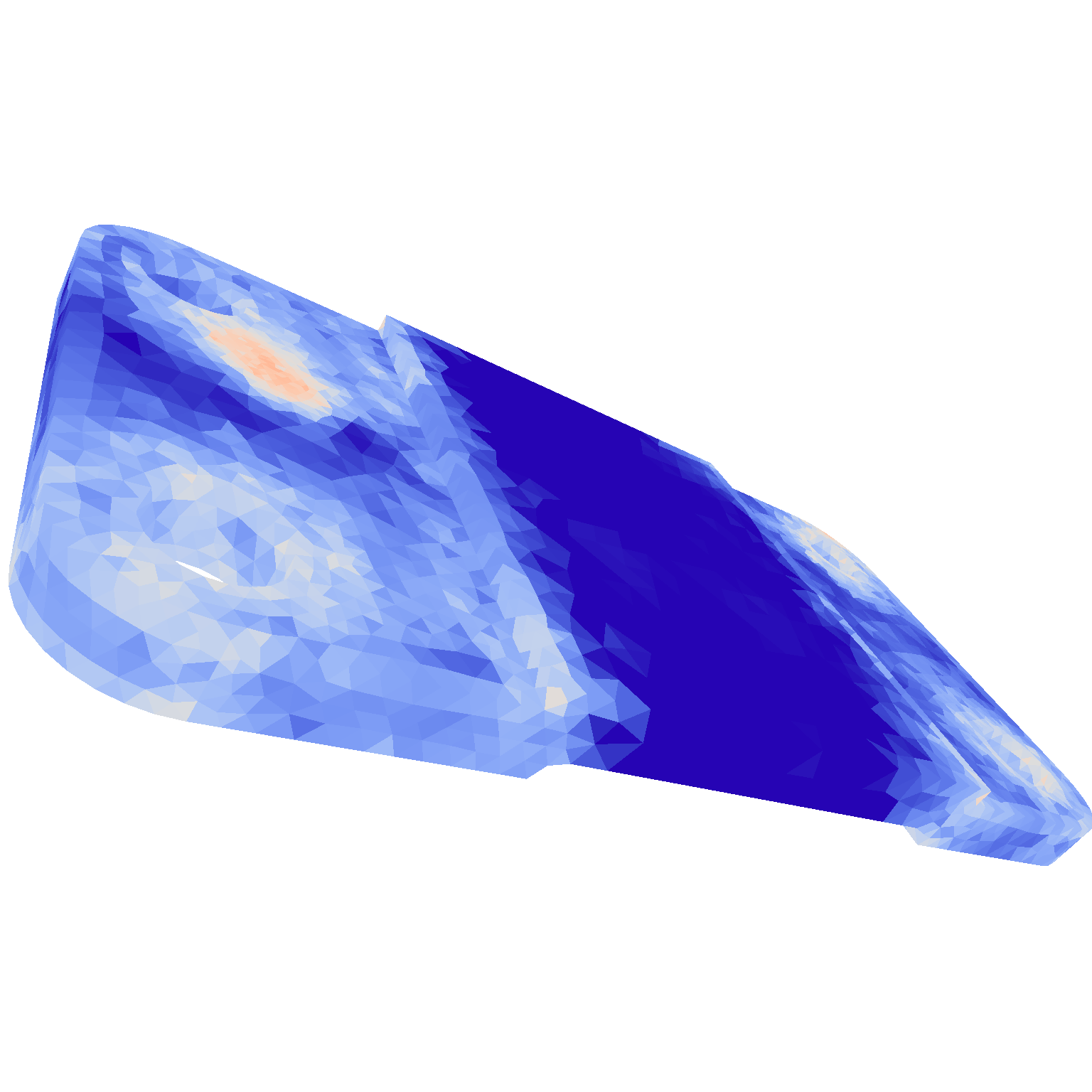}
    \vspace{1pt}
    \includegraphics[trim={6.4cm 4.5cm 0 0},clip,width=\linewidth]{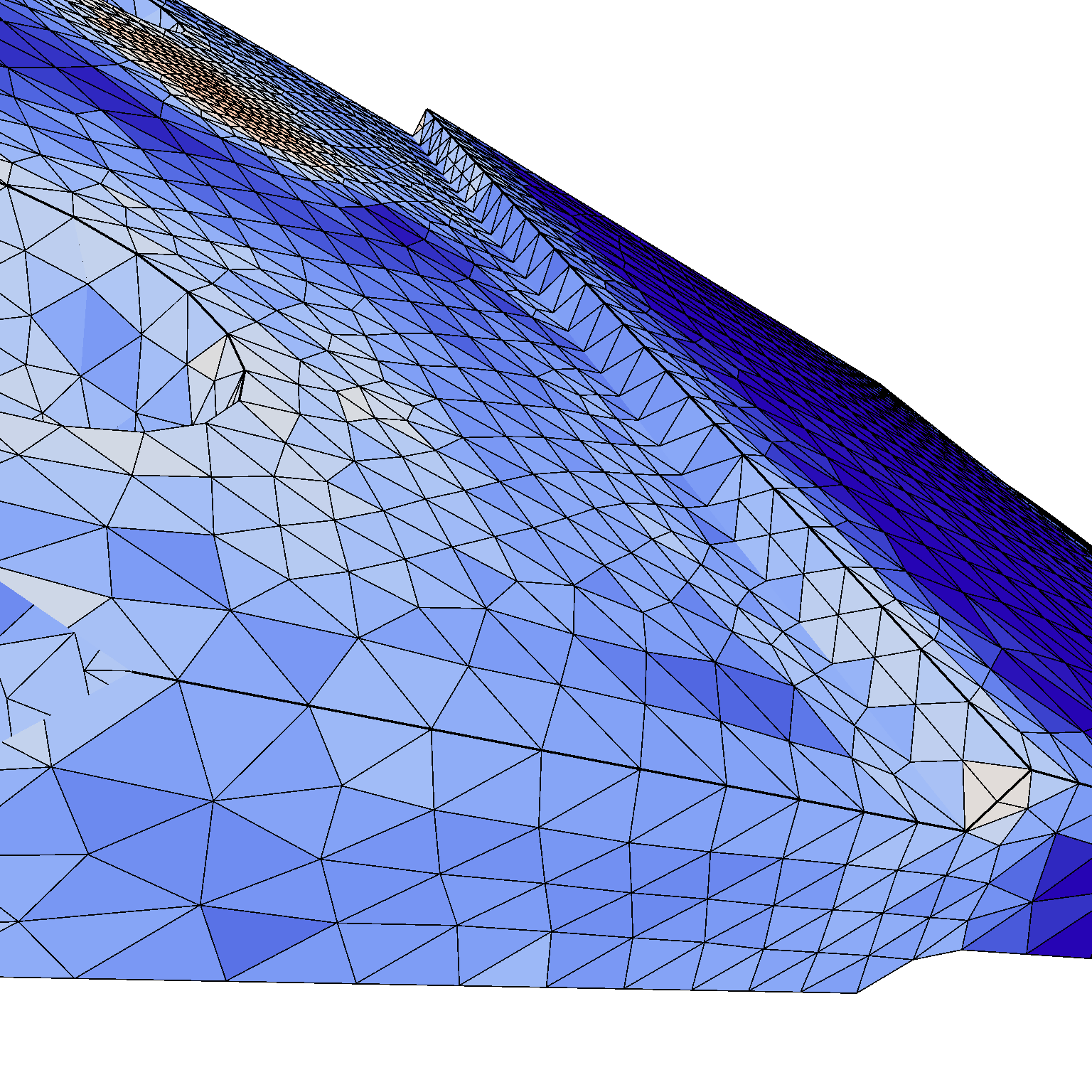}
    \caption*{\textit{Image}}
  \end{subfigure}\hfill
  \vspace{-0.15cm}
  \caption{
   Full views and close-ups of generated \texttt{Mold} test meshes.
   The element size is denoted by color, with red indicating small elements.
  \gls{amber} closely matches the expert mesh, producing finer elements near the hole and coarser elements near the mesh's border. In comparison, the \textit{Image} baselines have less variation in the element size, matching the expert less closely.
  }
  \label{fig:baseline_comp_main}
\end{figure}

\begin{figure}[t]
  \centering
  \setlength{\tabcolsep}{0pt}%
  \renewcommand{\arraystretch}{0}%
  \begin{tabularx}{\textwidth}{
      *{4}{>{\centering\arraybackslash}X}
      !{\hspace{0.8mm}\vrule width 0.4pt\hspace{0.8mm}}
      *{2}{>{\centering\arraybackslash}X}
    }
    \subcaptionbox*{$t{=}0$}[0.16\textwidth]{%
      \includegraphics[width=\linewidth]{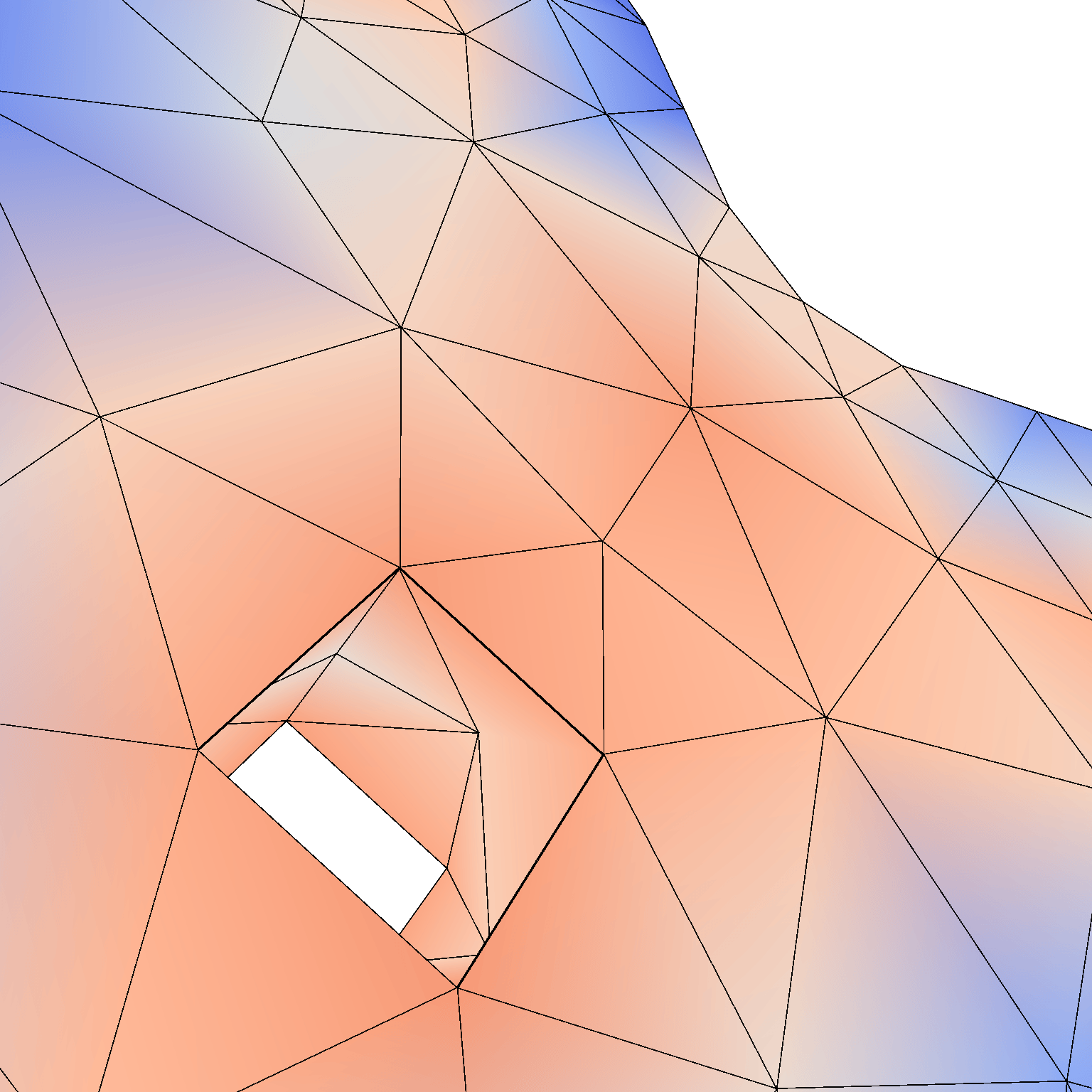}%
    }%
    & \subcaptionbox*{$t{=}1$}[0.16\textwidth]{%
      \includegraphics[width=\linewidth]{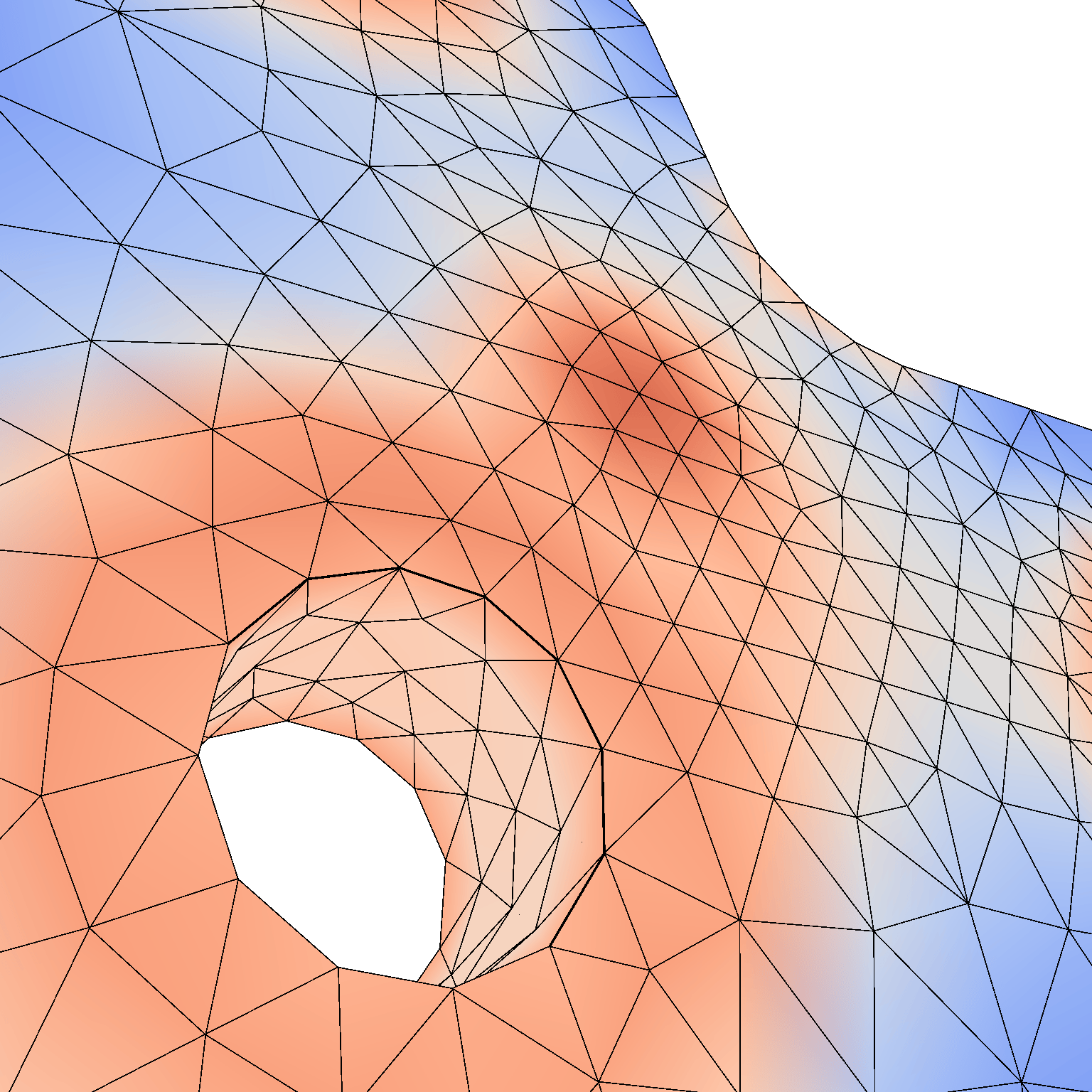}%
    }%
    & \subcaptionbox*{$t{=}2$}[0.16\textwidth]{%
      \includegraphics[width=\linewidth]{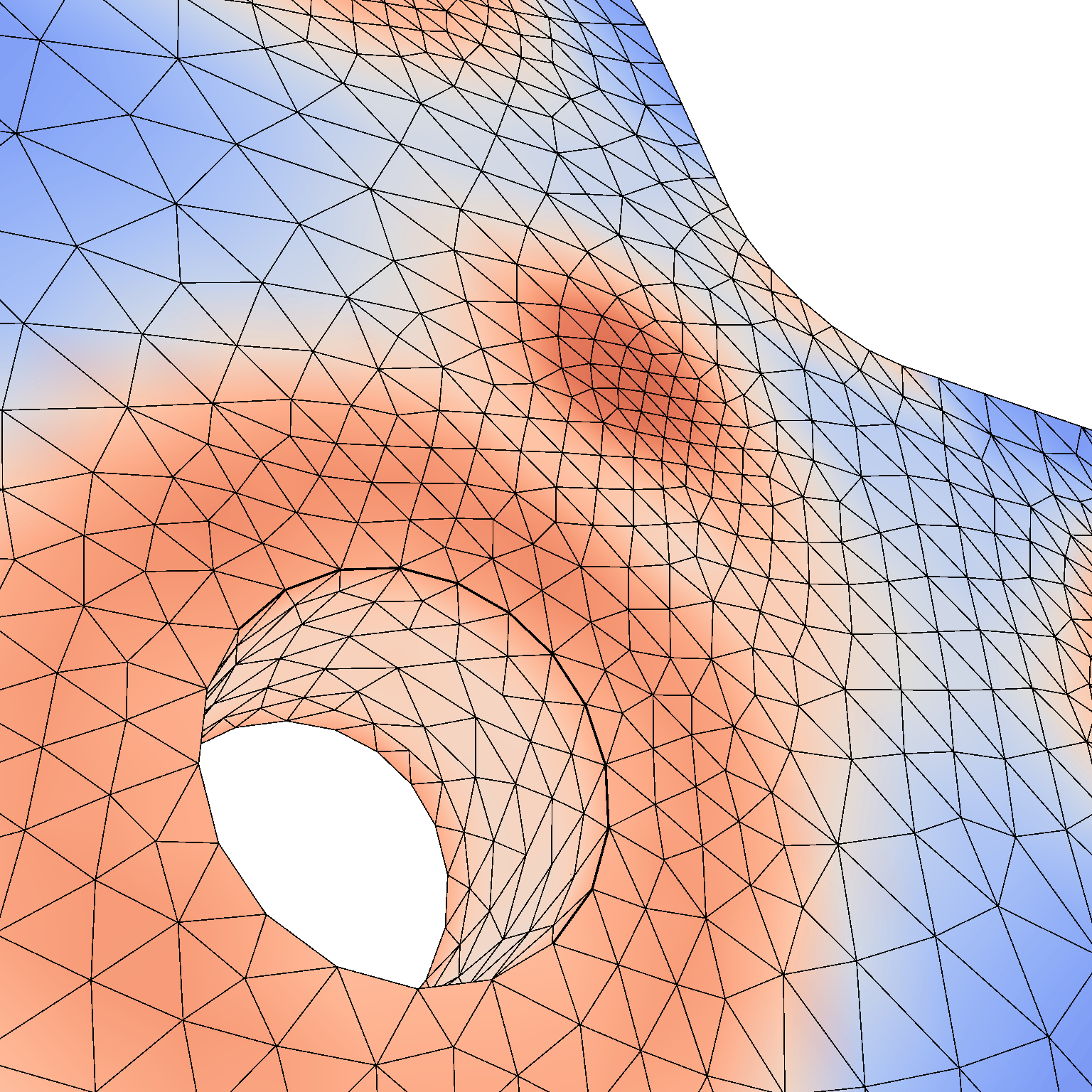}%
    }%
    & \subcaptionbox*{$t{=}3$}[0.16\textwidth]{%
      \includegraphics[width=\linewidth]{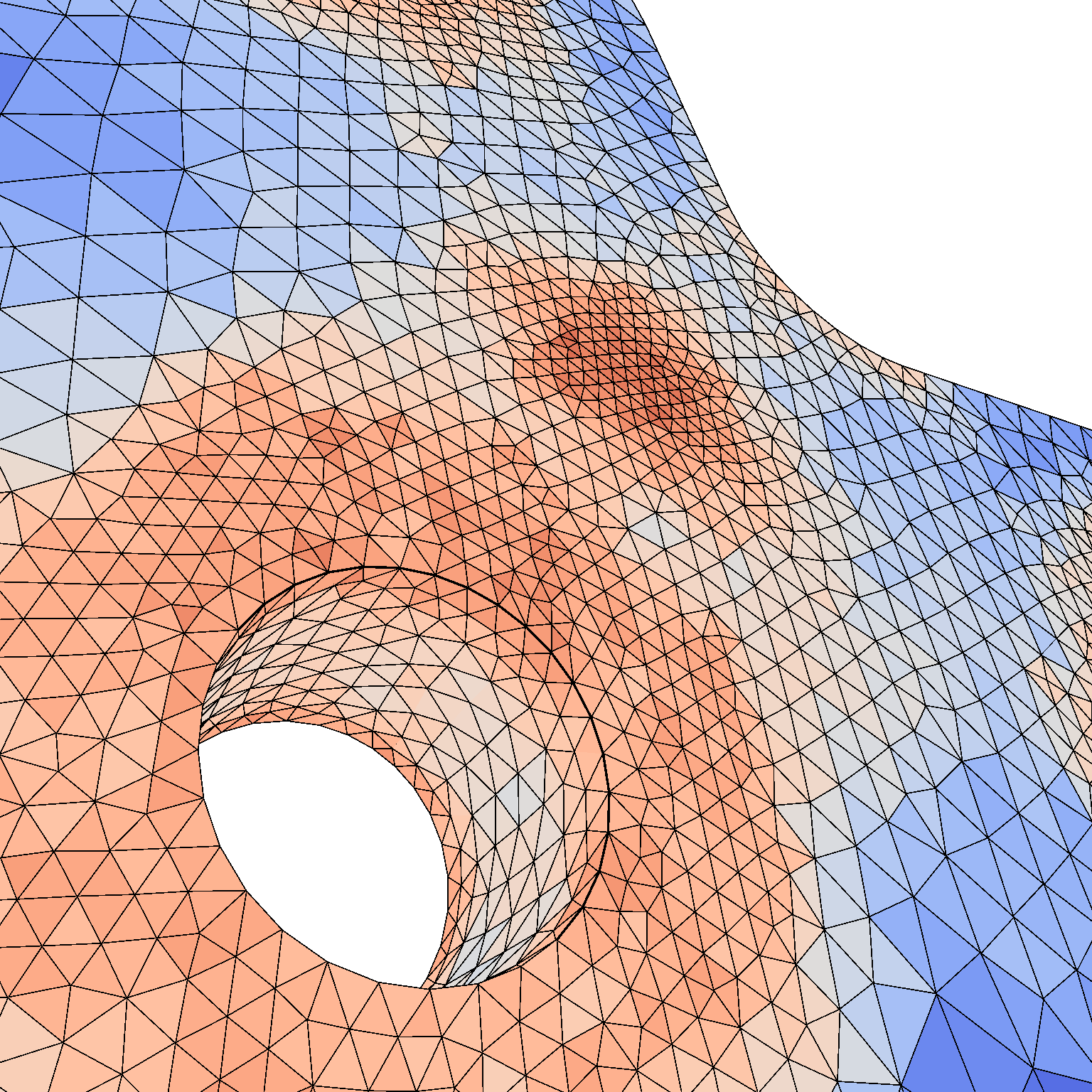}%
    }%
    & \subcaptionbox*{Expert}[0.16\textwidth]{%
      \includegraphics[width=\linewidth]{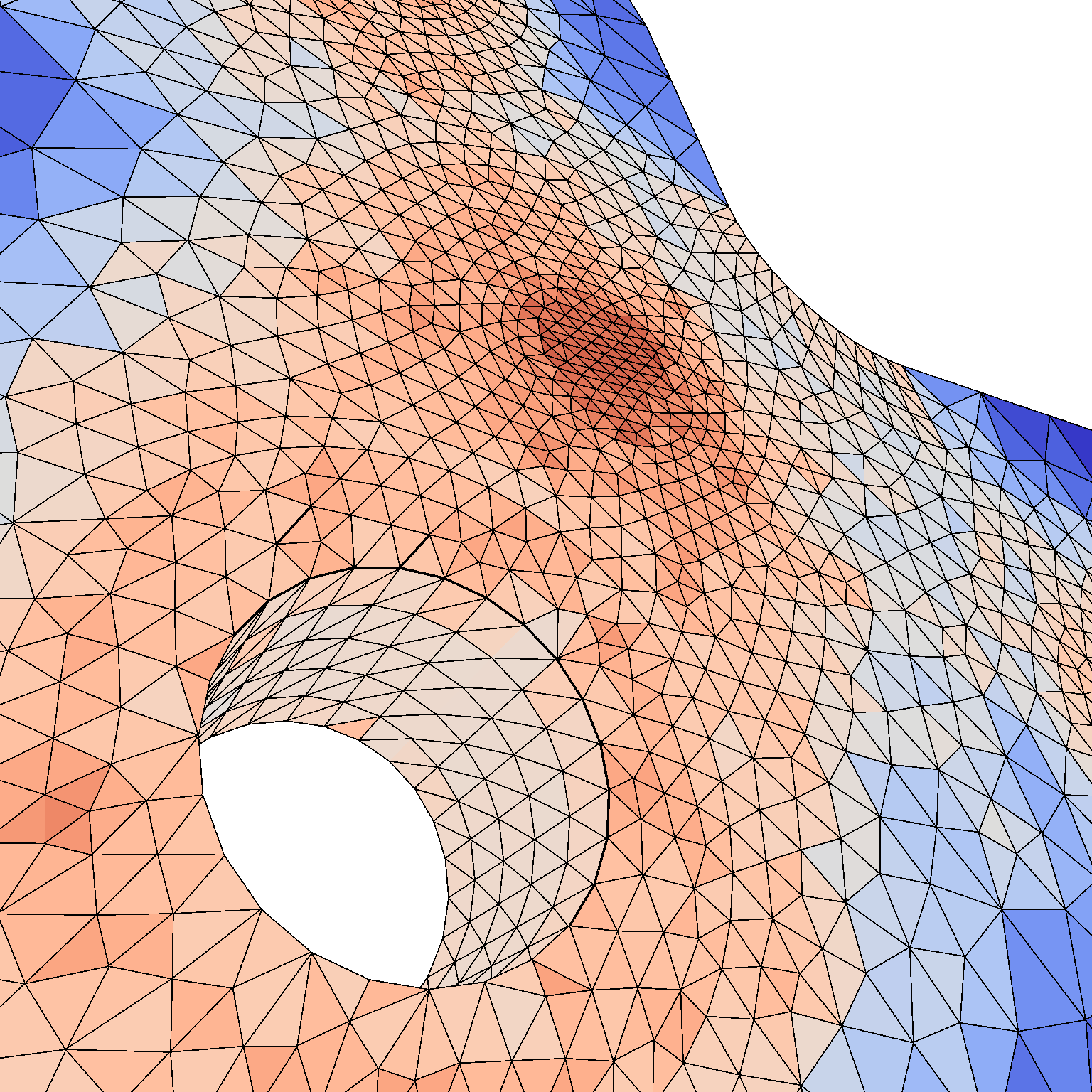}%
    }%
    & \subcaptionbox*{Full View}[0.16\textwidth]{%
      \includegraphics[width=\linewidth]{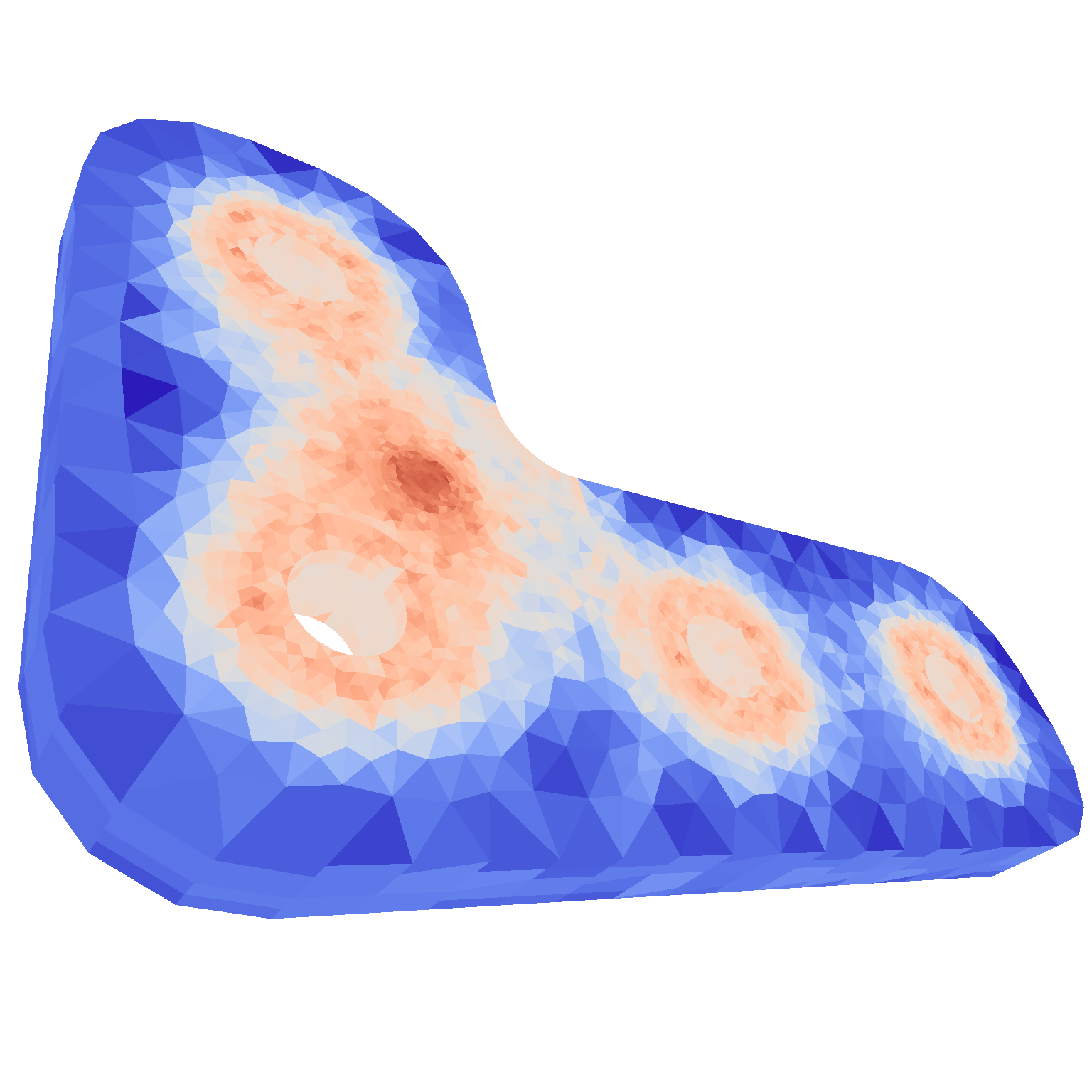}%
    }%
  \end{tabularx}
  \vspace{-0.15cm}
  \caption{
  Close-ups of intermediate and final \gls{amber} meshes on \texttt{Mold}, contrasted with the expert mesh.
  The color for intermediate meshes denotes the predicted sizing field (red is small), which is given to a mesh generator to produce the next mesh.
  The final mesh's color denotes its element size.
  }
  \label{fig:full_rollout_console}
\end{figure}

\textbf{Additional Experiments.}
Appendix~\ref{app_ssec:ablations_algorithm_design} validates the algorithmic improvements from Section~\ref{ssec:practical_improvements}.
In particular, the hierarchical architecture, the loss and normalization are critical for performance.
Appendix~\ref{app_ssec:ablations_sizing_field} finds piecewise-linear sizing fields work better than piecewise-constant ones.
Appendix~\ref{app_ssec:data_efficiency} shows that \gls{amber} benefits modestly from additional data, and generalizes well from only five training samples on several datasets.
This data-efficiency likely stems from \gls{amber}'s Euclidean-invariant \gls{mpn} architecture and implicit data augmentation.
We find in Appendix~\ref{app_ssec:generation_steps} that \gls{amber} improves for more mesh generation steps, converging at around three steps.
Appendix~\ref{app_ssec:image_ablations} explores different configurations of \textit{Image (Variant)}, showing the importance of image resolution, loss function, and normalization. 
Finally, Appendix~\ref{app_ssec:asmr_variant} introduces an \textit{\gls{asmr}} variant with the error indicator as reward. %
While this version avoids \textit{\gls{asmr}}'s degradation on fine meshes, it performs worse overall, likely due to a weaker reward signal.

\section{Conclusion}
\label{sec:conclusion}
We introduce \gls{amber}, a novel method for iterative \glsreset{amg}\gls{amg} that combines a replay buffer of bootstrapped data with Message Passing Graph Neural Networks operating on intermediate meshes.
At each step, \gls{amber} consumes the current mesh to predict a target resolution for the next one, allowing fine-grained adaptation to complex geometries.
\gls{amber} generates high-quality adaptive meshes across six novel datasets spanning diverse and realistic $2$D and $3$D geometries, consistently outperforming supervised and reinforcement learning baselines.
These results demonstrate the effectiveness of learning-based approaches in reducing manual meshing effort, contributing toward more efficient and scalable simulation workflows in engineering applications.
Appendix \ref{app_sec:broader_impact} discusses the broader impact of our work.

\textbf{Limitations and Future Work.}
\gls{amber} predicts scalar-valued, piecewise-linear sizing fields, which limits expressiveness in scenarios requiring extreme variation in local mesh density, or anisotropic refinement.
Predicting tensor-valued sizing fields or using higher-order polynomials is a promising direction for future work.
Furthermore, our experiments indicate that the same model can be trained on different datasets, showing that there is potential to train a general-purpose model across a vast amount of datasets.
Lastly, assessing the performance of \gls{amber} directly through simulation error metrics on real-world scenarios is an interesting avenue for future research.

\clearpage

\ifpreprint
\section*{Acknowledgements}
This work is part of the DFG AI Resarch Unit 5339 regarding the combination of physics-based simulation with AI-based methodologies for the fast maturation of manufacturing processes. The financial support by German Research Foundation (DFG, Deutsche Forschungsgemeinschaft) is gratefully acknowledged. 
This work is additionally funded by the German Research Foundation (DFG, German Research Foundation) - SFB-1574 – 471687386.
The authors acknowledge support by the state of Baden-Württemberg through bwHPC, as well as the HoreKa supercomputer funded by the Ministry of Science, Research and the Arts Baden-Württemberg and by the German Federal Ministry of Education and Research.

\fi

\bibliographystyle{unsrtnat}
\bibliography{04_bibliography/bib}

\clearpage
\appendix
\newpage
\section*{NeurIPS Paper Checklist}

\begin{enumerate}

\item {\bf Claims}
    \item[] Question: Do the main claims made in the abstract and introduction accurately reflect the paper's contributions and scope?
    \item[] Answer: \answerYes{} %
    \item[] Justification: As stated in the abstract and introduction, Section~\ref{sec:results} provides qualitative and quantitative results on the experiments as introduced in Section~\ref{sec:experiments}, supporting the claims made in the abstract and introduction. 
    Additionally, more detailed results can be found in  Appendix~\ref{app_sec:extended_results} and Appendix~\ref{app_sec:Visualizations}.
    \item[] Guidelines:
    \begin{itemize}
        \item The answer NA means that the abstract and introduction do not include the claims made in the paper.
        \item The abstract and/or introduction should clearly state the claims made, including the contributions made in the paper and important assumptions and limitations. A No or NA answer to this question will not be perceived well by the reviewers. 
        \item The claims made should match theoretical and experimental results, and reflect how much the results can be expected to generalize to other settings. 
        \item It is fine to include aspirational goals as motivation as long as it is clear that these goals are not attained by the paper. 
    \end{itemize}

\item {\bf Limitations}
    \item[] Question: Does the paper discuss the limitations of the work performed by the authors?
    \item[] Answer: \answerYes{} %
    \item[] Justification: Section~\ref{sec:conclusion} discusses the limitations of \gls{amber}, both in terms of assumptions made and the scope of the evaluations in the paper.
    \item[] Guidelines:
    \begin{itemize}
        \item The answer NA means that the paper has no limitation while the answer No means that the paper has limitations, but those are not discussed in the paper. 
        \item The authors are encouraged to create a separate "Limitations" section in their paper.
        \item The paper should point out any strong assumptions and how robust the results are to violations of these assumptions (e.g., independence assumptions, noiseless settings, model well-specification, asymptotic approximations only holding locally). The authors should reflect on how these assumptions might be violated in practice and what the implications would be.
        \item The authors should reflect on the scope of the claims made, e.g., if the approach was only tested on a few datasets or with a few runs. In general, empirical results often depend on implicit assumptions, which should be articulated.
        \item The authors should reflect on the factors that influence the performance of the approach. For example, a facial recognition algorithm may perform poorly when image resolution is low or images are taken in low lighting. Or a speech-to-text system might not be used reliably to provide closed captions for online lectures because it fails to handle technical jargon.
        \item The authors should discuss the computational efficiency of the proposed algorithms and how they scale with dataset size.
        \item If applicable, the authors should discuss possible limitations of their approach to address problems of privacy and fairness.
        \item While the authors might fear that complete honesty about limitations might be used by reviewers as grounds for rejection, a worse outcome might be that reviewers discover limitations that aren't acknowledged in the paper. The authors should use their best judgment and recognize that individual actions in favor of transparency play an important role in developing norms that preserve the integrity of the community. Reviewers will be specifically instructed to not penalize honesty concerning limitations.
    \end{itemize}

\newpage

\item {\bf Theory assumptions and proofs}
    \item[] Question: For each theoretical result, does the paper provide the full set of assumptions and a complete (and correct) proof?
    \item[] Answer: \answerYes{} %
    \item[] Justification: We provide a convergence proof of the iterative mesh generation process of \gls{amber} in a simplified one-dimensional setting in Appendix~\ref{app_sec:convergence_proof}, assuming perfect predictions.
    \item[] Guidelines:
    \begin{itemize}
        \item The answer NA means that the paper does not include theoretical results. 
        \item All the theorems, formulas, and proofs in the paper should be numbered and cross-referenced.
        \item All assumptions should be clearly stated or referenced in the statement of any theorems.
        \item The proofs can either appear in the main paper or the supplemental material, but if they appear in the supplemental material, the authors are encouraged to provide a short proof sketch to provide intuition. 
        \item Inversely, any informal proof provided in the core of the paper should be complemented by formal proofs provided in appendix or supplemental material.
        \item Theorems and Lemmas that the proof relies upon should be properly referenced. 
    \end{itemize}

    \item {\bf Experimental result reproducibility}
    \item[] Question: Does the paper fully disclose all the information needed to reproduce the main experimental results of the paper to the extent that it affects the main claims and/or conclusions of the paper (regardless of whether the code and data are provided or not)?
    \item[] Answer: \answerYes{} %
    \item[] Justification: We provide an overview of our proposed method \gls{amber} and its training process in Section~\ref{sec:method}. Additionally, we provide implementation details and hyperparameters for \gls{amber} and the included baselines in Appendix~\ref{app_sec:hyperparameters}, with additional information on the baselines provided in Appendix~\ref{app_sec:baselines_variants}. We describe the setups used in our experiments in Section~\ref{sec:experiments} and provide details on mesh generation and used datasets in Appendix~\ref{app_sec:datasets}~and~\ref{app_sec:mesh_generation}. Finally provide our source code and data as supplementary material. We ensure that the source code is well documented to facilitate reproduction of our experimental results.
    \item[] Guidelines:
    \begin{itemize}
        \item The answer NA means that the paper does not include experiments.
        \item If the paper includes experiments, a No answer to this question will not be perceived well by the reviewers: Making the paper reproducible is important, regardless of whether the code and data are provided or not.
        \item If the contribution is a dataset and/or model, the authors should describe the steps taken to make their results reproducible or verifiable. 
        \item Depending on the contribution, reproducibility can be accomplished in various ways. For example, if the contribution is a novel architecture, describing the architecture fully might suffice, or if the contribution is a specific model and empirical evaluation, it may be necessary to either make it possible for others to replicate the model with the same dataset, or provide access to the model. In general. releasing code and data is often one good way to accomplish this, but reproducibility can also be provided via detailed instructions for how to replicate the results, access to a hosted model (e.g., in the case of a large language model), releasing of a model checkpoint, or other means that are appropriate to the research performed.
        \item While NeurIPS does not require releasing code, the conference does require all submissions to provide some reasonable avenue for reproducibility, which may depend on the nature of the contribution. For example
        \begin{enumerate}
            \item If the contribution is primarily a new algorithm, the paper should make it clear how to reproduce that algorithm.
            \item If the contribution is primarily a new model architecture, the paper should describe the architecture clearly and fully.
            \item If the contribution is a new model (e.g., a large language model), then there should either be a way to access this model for reproducing the results or a way to reproduce the model (e.g., with an open-source dataset or instructions for how to construct the dataset).
            \item We recognize that reproducibility may be tricky in some cases, in which case authors are welcome to describe the particular way they provide for reproducibility. In the case of closed-source models, it may be that access to the model is limited in some way (e.g., to registered users), but it should be possible for other researchers to have some path to reproducing or verifying the results.
        \end{enumerate}
    \end{itemize}

\item {\bf Open access to data and code}
    \item[] Question: Does the paper provide open access to the data and code, with sufficient instructions to faithfully reproduce the main experimental results, as described in supplemental material?
    \item[] Answer: \answerYes{} %
    \item[] Justification: We provide our source code and data as supplementary material upon submission. We ensure that the source code is well documented and able to runs out of the box to facilitate reproduction of our experimental results.
    \item[] Guidelines:
    \begin{itemize}
        \item The answer NA means that paper does not include experiments requiring code.
        \item Please see the NeurIPS code and data submission guidelines (\url{https://nips.cc/public/guides/CodeSubmissionPolicy}) for more details.
        \item While we encourage the release of code and data, we understand that this might not be possible, so “No” is an acceptable answer. Papers cannot be rejected simply for not including code, unless this is central to the contribution (e.g., for a new open-source benchmark).
        \item The instructions should contain the exact command and environment needed to run to reproduce the results. See the NeurIPS code and data submission guidelines (\url{https://nips.cc/public/guides/CodeSubmissionPolicy}) for more details.
        \item The authors should provide instructions on data access and preparation, including how to access the raw data, preprocessed data, intermediate data, and generated data, etc.
        \item The authors should provide scripts to reproduce all experimental results for the new proposed method and baselines. If only a subset of experiments are reproducible, they should state which ones are omitted from the script and why.
        \item At submission time, to preserve anonymity, the authors should release anonymized versions (if applicable).
        \item Providing as much information as possible in supplemental material (appended to the paper) is recommended, but including URLs to data and code is permitted.
    \end{itemize}

\item {\bf Experimental setting/details}
    \item[] Question: Does the paper specify all the training and test details (e.g., data splits, hyperparameters, how they were chosen, type of optimizer, etc.) necessary to understand the results?
    \item[] Answer: \answerYes{} %
    \item[] Justification: In Section~\ref{sec:experiments} we provide an overview of the training and test setups that lead to the results presented in Section~\ref{sec:results}. For brevity, the complete description of the training setting and task/baseline setups is provided in Appendix~\ref{app_sec:datasets},~\ref{app_sec:hyperparameters}~and~\ref{app_sec:baselines_variants}.
    \item[] Guidelines:
    \begin{itemize}
        \item The answer NA means that the paper does not include experiments.
        \item The experimental setting should be presented in the core of the paper to a level of detail that is necessary to appreciate the results and make sense of them.
        \item The full details can be provided either with the code, in appendix, or as supplemental material.
    \end{itemize}

\newpage

\item {\bf Experiment statistical significance}
    \item[] Question: Does the paper report error bars suitably and correctly defined or other appropriate information about the statistical significance of the experiments?
    \item[] Answer: \answerYes{} %
    \item[] Justification: We repeat all experiments for five random seeds.
    We report mean and two times standard error for all bar charts.
    For the Pareto plot evaluations, we instead report all individual seeds without aggregation.
    \item[] Guidelines:
    \begin{itemize}
        \item The answer NA means that the paper does not include experiments.
        \item The authors should answer "Yes" if the results are accompanied by error bars, confidence intervals, or statistical significance tests, at least for the experiments that support the main claims of the paper.
        \item The factors of variability that the error bars are capturing should be clearly stated (for example, train/test split, initialization, random drawing of some parameter, or overall run with given experimental conditions).
        \item The method for calculating the error bars should be explained (closed form formula, call to a library function, bootstrap, etc.)
        \item The assumptions made should be given (e.g., Normally distributed errors).
        \item It should be clear whether the error bar is the standard deviation or the standard error of the mean.
        \item It is OK to report 1-sigma error bars, but one should state it. The authors should preferably report a 2-sigma error bar than state that they have a 96\% CI, if the hypothesis of Normality of errors is not verified.
        \item For asymmetric distributions, the authors should be careful not to show in tables or figures symmetric error bars that would yield results that are out of range (e.g. negative error rates).
        \item If error bars are reported in tables or plots, The authors should explain in the text how they were calculated and reference the corresponding figures or tables in the text.
    \end{itemize}

\item {\bf Experiments compute resources}
    \item[] Question: For each experiment, does the paper provide sufficient information on the computer resources (type of compute workers, memory, time of execution) needed to reproduce the experiments?
    \item[] Answer: \answerYes{} %
    \item[] Justification: We provide additional information on the used compute resources in Appendix~\ref{app_ssec:hardware_compute}.
    \item[] Guidelines:
    \begin{itemize}
        \item The answer NA means that the paper does not include experiments.
        \item The paper should indicate the type of compute workers CPU or GPU, internal cluster, or cloud provider, including relevant memory and storage.
        \item The paper should provide the amount of compute required for each of the individual experimental runs as well as estimate the total compute. 
        \item The paper should disclose whether the full research project required more compute than the experiments reported in the paper (e.g., preliminary or failed experiments that didn't make it into the paper). 
    \end{itemize}
    
\item {\bf Code of ethics}
    \item[] Question: Does the research conducted in the paper conform, in every respect, with the NeurIPS Code of Ethics \url{https://neurips.cc/public/EthicsGuidelines}?
    \item[] Answer: \answerYes{} %
    \item[] Justification: We have read the NeurIPS Code of Ethics and made sure our research is fully compliant with it.
    \item[] Guidelines:
    \begin{itemize}
        \item The answer NA means that the authors have not reviewed the NeurIPS Code of Ethics.
        \item If the authors answer No, they should explain the special circumstances that require a deviation from the Code of Ethics.
        \item The authors should make sure to preserve anonymity (e.g., if there is a special consideration due to laws or regulations in their jurisdiction).
    \end{itemize}

\item {\bf Broader impacts}
    \item[] Question: Does the paper discuss both potential positive societal impacts and negative societal impacts of the work performed?
    \item[] Answer: \answerYes{} %
    \item[] Justification: We include a brief discussion on broader impact in Appendix~\ref{app_sec:broader_impact}.
    \item[] Guidelines:
    \begin{itemize}
        \item The answer NA means that there is no societal impact of the work performed.
        \item If the authors answer NA or No, they should explain why their work has no societal impact or why the paper does not address societal impact.
        \item Examples of negative societal impacts include potential malicious or unintended uses (e.g., disinformation, generating fake profiles, surveillance), fairness considerations (e.g., deployment of technologies that could make decisions that unfairly impact specific groups), privacy considerations, and security considerations.
        \item The conference expects that many papers will be foundational research and not tied to particular applications, let alone deployments. However, if there is a direct path to any negative applications, the authors should point it out. For example, it is legitimate to point out that an improvement in the quality of generative models could be used to generate deepfakes for disinformation. On the other hand, it is not needed to point out that a generic algorithm for optimizing neural networks could enable people to train models that generate Deepfakes faster.
        \item The authors should consider possible harms that could arise when the technology is being used as intended and functioning correctly, harms that could arise when the technology is being used as intended but gives incorrect results, and harms following from (intentional or unintentional) misuse of the technology.
        \item If there are negative societal impacts, the authors could also discuss possible mitigation strategies (e.g., gated release of models, providing defenses in addition to attacks, mechanisms for monitoring misuse, mechanisms to monitor how a system learns from feedback over time, improving the efficiency and accessibility of ML).
    \end{itemize}
    
\item {\bf Safeguards}
    \item[] Question: Does the paper describe safeguards that have been put in place for responsible release of data or models that have a high risk for misuse (e.g., pretrained language models, image generators, or scraped datasets)?
    \item[] Answer: \answerNA{} %
    \item[] Justification: We do not use scraped datasets, and we perceive the risk for misuse of our mesh refinement architecture to be substantially lower than e.g. for pretrained language models. Nevertheless, we briefly discuss potential avenues of questionable use in Appendix~\ref{app_sec:broader_impact}.
    \item[] Guidelines:
    \begin{itemize}
        \item The answer NA means that the paper poses no such risks.
        \item Released models that have a high risk for misuse or dual-use should be released with necessary safeguards to allow for controlled use of the model, for example by requiring that users adhere to usage guidelines or restrictions to access the model or implementing safety filters. 
        \item Datasets that have been scraped from the Internet could pose safety risks. The authors should describe how they avoided releasing unsafe images.
        \item We recognize that providing effective safeguards is challenging, and many papers do not require this, but we encourage authors to take this into account and make a best faith effort.
    \end{itemize}

\newpage

\item {\bf Licenses for existing assets}
    \item[] Question: Are the creators or original owners of assets (e.g., code, data, models), used in the paper, properly credited and are the license and terms of use explicitly mentioned and properly respected?
    \item[] Answer: \answerYes{} %
    \item[] Justification: We use the original code base for \gls{asmr} to implement this baseline. We credit this use in the paper.
    \item[] Guidelines:
    \begin{itemize}
        \item The answer NA means that the paper does not use existing assets.
        \item The authors should cite the original paper that produced the code package or dataset.
        \item The authors should state which version of the asset is used and, if possible, include a URL.
        \item The name of the license (e.g., CC-BY 4.0) should be included for each asset.
        \item For scraped data from a particular source (e.g., website), the copyright and terms of service of that source should be provided.
        \item If assets are released, the license, copyright information, and terms of use in the package should be provided. For popular datasets, \url{paperswithcode.com/datasets} has curated licenses for some datasets. Their licensing guide can help determine the license of a dataset.
        \item For existing datasets that are re-packaged, both the original license and the license of the derived asset (if it has changed) should be provided.
        \item If this information is not available online, the authors are encouraged to reach out to the asset's creators.
    \end{itemize}

\item {\bf New assets}
    \item[] Question: Are new assets introduced in the paper well documented and is the documentation provided alongside the assets?
    \item[] Answer: \answerYes{} %
    \item[] Justification: We introduce six novel datasets. Each dataset consists of geometries, expert meshes, and potentially process conditions. We integrate them in our codebase, which provides documentation for the dataset usage.
    \item[] Guidelines:
    \begin{itemize}
        \item The answer NA means that the paper does not release new assets.
        \item Researchers should communicate the details of the dataset/code/model as part of their submissions via structured templates. This includes details about training, license, limitations, etc. 
        \item The paper should discuss whether and how consent was obtained from people whose asset is used.
        \item At submission time, remember to anonymize your assets (if applicable). You can either create an anonymized URL or include an anonymized zip file.
    \end{itemize}

\item {\bf Crowdsourcing and research with human subjects}
    \item[] Question: For crowdsourcing experiments and research with human subjects, does the paper include the full text of instructions given to participants and screenshots, if applicable, as well as details about compensation (if any)? 
    \item[] Answer: \answerNA{} %
    \item[] Justification: This paper does not involve crowdsourcing nor research with human subjects.
    \item[] Guidelines:
    \begin{itemize}
        \item The answer NA means that the paper does not involve crowdsourcing nor research with human subjects.
        \item Including this information in the supplemental material is fine, but if the main contribution of the paper involves human subjects, then as much detail as possible should be included in the main paper. 
        \item According to the NeurIPS Code of Ethics, workers involved in data collection, curation, or other labor should be paid at least the minimum wage in the country of the data collector. 
    \end{itemize}

\item {\bf Institutional review board (IRB) approvals or equivalent for research with human subjects}
    \item[] Question: Does the paper describe potential risks incurred by study participants, whether such risks were disclosed to the subjects, and whether Institutional Review Board (IRB) approvals (or an equivalent approval/review based on the requirements of your country or institution) were obtained?
    \item[] Answer: \answerNA{} %
    \item[] Justification: This paper does not involve crowdsourcing nor research with human subjects.
    \item[] Guidelines:
    \begin{itemize}
        \item The answer NA means that the paper does not involve crowdsourcing nor research with human subjects.
        \item Depending on the country in which research is conducted, IRB approval (or equivalent) may be required for any human subjects research. If you obtained IRB approval, you should clearly state this in the paper. 
        \item We recognize that the procedures for this may vary significantly between institutions and locations, and we expect authors to adhere to the NeurIPS Code of Ethics and the guidelines for their institution. 
        \item For initial submissions, do not include any information that would break anonymity (if applicable), such as the institution conducting the review.
    \end{itemize}

\item {\bf Declaration of LLM usage}
    \item[] Question: Does the paper describe the usage of LLMs if it is an important, original, or non-standard component of the core methods in this research? Note that if the LLM is used only for writing, editing, or formatting purposes and does not impact the core methodology, scientific rigorousness, or originality of the research, declaration is not required.
    \item[] Answer: \answerNA{} %
    \item[] Justification: This research does not involve LLMs as any important, original, or non-standard components.
    \item[] Guidelines:
    \begin{itemize}
        \item The answer NA means that the core method development in this research does not involve LLMs as any important, original, or non-standard components.
        \item Please refer to our LLM policy (\url{https://neurips.cc/Conferences/2025/LLM}) for what should or should not be described.
    \end{itemize}

\end{enumerate}

\clearpage
\section{Broader Impact}
\label{app_sec:broader_impact}
The proposed method, \glsreset{amber}\gls{amber}, has the potential to benefit numerous domains that depend on computational modeling and simulation by significantly reducing computational costs without compromising accuracy. 
This efficiency can expand the scope of feasible simulations in engineering design, and support the deployment of simulation-based tools in resource-constrained environments.
Nonetheless, as common with advanced computational tools, there is a risk of misuse in contexts such as weapons development or unsustainable resource exploitation.

\section{Theoretical Convergence of the Iterative Mesh Generation Process}
\label{app_sec:convergence_proof}
In this section, we provide a convergence proof of the iterative mesh generation process of \gls{amber} in a simplified one-dimensional setting. We consider the unit interval as the domain of interest:
\begin{equation*}
    \Omega = [0, 1] \subset \mathbb{R}.
\end{equation*}

A one-dimensional mesh $M$ is defined as a set of points
\begin{equation*}
    M = \{v_1, \dots, v_N\}
\end{equation*}
such that $v_1 = 0$, $v_N = 1$, and $v_i < v_{i+1}$ for all $i = 1, \dots, N - 1$. The sizing field $f_e(M)$ induced by the mesh is directly related to the spacing between points and is defined for $z$ such that $v_i \le z < v_{i+1}$\footnote{For $z = v_N = 1$, we set $f_e(M)(z) = v_N - v_{N-1}$.} as
\begin{equation}
    f_e(M)(z) := v_{i+1} - v_i, \label{eq:proof_1}
\end{equation}
which is defined for the general setting in Section~\ref{ssec:how_to_train_your_amber}.

We construct a mesh generator $g_{\text{msh}}$ that, given a sizing field $f : [0, 1] \rightarrow \mathbb{R}_{>0}$, generates a mesh as follows: set $v_1 := 0$, and define
\begin{equation*}
    v_{i+1} := \min(v_i + f(v_i), 1). 
\end{equation*}
We terminate the process when $v_{i+1} = 1$, resulting in a mesh $g_{\text{msh}}(f) = \{v_1, \dots, v_N\}$. It is easy to see that
\begin{equation}
    v_{i+1} = \sum_{j=1}^i f(v_j), \label{eq:proof_2}
\end{equation}
for intermediate points $i< N - 1$.
Note that this generator acts as an inverse to the sizing field in the sense that
\begin{equation*}
    g_{\text{msh}}(f_e(M)) = M.
\end{equation*}

Given a mesh $M^t = \{v_1^t, \dots, v_N^t\}$ and a target mesh $M^*$, we assume perfect predictions and define an interpolated sizing field as
\begin{equation}
    \mathcal{I}_{M^t}(\hat{f})(z) := (1-d)\,f_e(M^*)(v_i^t) + d\,f_e(M^*)(v_{i+1}^t), \label{eq:proof_3}
\end{equation}
for $z = (1-d)\,v_i^t + d\,v_{i+1}^t$ with $0 \le d \le 1$.

Under these assumptions, we can prove the following:

\begin{theorem}
Let $M^1 = \{v^1_1, \dots v^1_{N_1}\}$ be an initial mesh and $M^* = \{v_1^*, \dots, v_N^*\}$ a target mesh. For a given mesh $M^t$, define one iteration of \gls{amber} by
\begin{equation}
    M^{t+1} = g_{\text{msh}}(\mathcal{I}_{M^t}(\hat{f})).
\end{equation}
Then, it holds that $M^N = M^*$.
\end{theorem}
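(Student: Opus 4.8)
The plan is to track how many leading vertices of the intermediate mesh already coincide with the target, and to show that each iteration increases this count by exactly one. Concretely, I would introduce the invariant $P(t,k)$: the first $k$ vertices of $M^t$ agree with $M^*$, i.e. $v_i^t = v_i^*$ for all $i \le k$. Since every mesh begins at $0$ and $v_1^* = 0$, the initial mesh satisfies $P(1,1)$ trivially, which serves as the base case.

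The heart of the argument is the inductive step $P(t,k) \Rightarrow P(t+1,k+1)$, and the single fact that makes it work is that the interpolant in Equation~\ref{eq:proof_3} is \emph{exact} at vertices of $M^t$: evaluating at $z = v_i^t$ sets $d=0$, so $\mathcal{I}_{M^t}(\hat f)(v_i^t) = f_e(M^*)(v_i^t)$. I would then run a nested induction over the generation index used to build $M^{t+1}$. The generator starts at $v_1^{t+1} = 0 = v_1^*$; assuming $v_i^{t+1} = v_i^*$ for some $i \le k$, the point $v_i^*$ equals $v_i^t$ by $P(t,k)$ and is therefore a vertex of $M^t$, so the recursion $v_{i+1}^{t+1} = v_i^{t+1} + \mathcal{I}_{M^t}(\hat f)(v_i^{t+1})$ collapses to $v_i^* + \bigl(v_{i+1}^* - v_i^*\bigr) = v_{i+1}^*$. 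Propagating this from $i=1$ up to $i=k$ reconstructs $v_1^*, \dots, v_{k+1}^*$ exactly, establishing $P(t+1,k+1)$.

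For termination I would use strict monotonicity of $M^*$ together with $v_N^* = 1$: for $k+1 < N$ every reconstructed vertex is strictly below $1$, so the $\min$-clamp in $g_{\text{msh}}$ never triggers prematurely, and the possibly-incorrect tail of $M^{t+1}$ beyond index $k+1$ is irrelevant to the invariant. When the advancing prefix reaches index $N$, the recursion produces $v_{N-1}^* + \bigl(1 - v_{N-1}^*\bigr) = 1$, the clamp returns $1$, and generation halts, so $M^N$ has exactly $N$ vertices, all equal to those of $M^*$. Chaining $P(1,1) \Rightarrow P(2,2) \Rightarrow \cdots \Rightarrow P(N,N)$ over the $N-1$ iterations then yields $M^N = M^*$.

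The main obstacle is bookkeeping rather than depth: I must keep the two levels of induction (over the iteration index $t$, and over the generation index inside a single call to $g_{\text{msh}}$) cleanly separated, and carefully justify that (i) interpolant exactness only requires the evaluation point to be a vertex of $M^t$, not of $M^*$, and (ii) the boundary clamping and the garbage tail of each intermediate mesh never interfere with the correctly advancing prefix. Confirming that $f_e(M^*)(v_i^*) = v_{i+1}^* - v_i^*$ for interior $i$, using the half-open convention of Equation~\ref{eq:proof_1}, is the one routine detail left to check.
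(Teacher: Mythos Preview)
Your proposal is correct and follows essentially the same induction as the paper: both show that the first $k$ vertices of $M^k$ coincide with $M^*$, using exactness of the interpolant at mesh vertices and the recursion defining $g_{\text{msh}}$. The paper collapses your inner induction into the telescoping-sum identity of Equation~\ref{eq:proof_2}, while you spell out the per-step recursion and add the termination/clamping argument that the paper leaves implicit; these are presentational differences only.
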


\begin{proof}
We prove this by induction showing that the first $k$ vertices of the k-th output $\{v_1^k,\dots v_k^k\} \subset M^k$ are equal to the target vertices $\{v_1^*,\dots v_k^*\} \subset M^*$.

The case for $k=1$ is trivial. Consider now the $k+1$-th \gls{amber} step. It holds for $i \le k$:
\begin{equation}
    \mathcal{I}_{M^k}(\hat{f})(v_i^k) = f_e(M^*)(v_i^k) = f_e(M^*)(v_i^*) = v_{i+1}^* - v_i^*,
\end{equation}
using Eq. \ref{eq:proof_3} for the first equality, the induction proposition for the second equality, and Eq. \ref{eq:proof_1} for the last equality.
From Eq. \ref{eq:proof_2} and using the result from above, we get
\begin{equation}
    v_{i+1}^{k+1} = \sum_{j=1}^i \mathcal{I}_{M^k}(\hat{f})(v_j^k) = \sum_{j=1}^i v_{j+1}^* - v_j^* =v^*_{i+1},
\end{equation}
for all $i \le k$ which proves the desired result.
\end{proof}

\section{Datasets}
\label{app_sec:datasets}

We propose a total of six novel and varied datasets.
\texttt{Poisson} features L-shaped domains with a Gaussian Mixture Model as the load function and zero Dirichlet boundaries, adapted from \textit{\gls{asmr}}~\citep{freymuth2023swarm}. 
We vary the resolution of the expert mesh to define \textit{easy}, \textit{medium}, and \textit{hard} variants. 
\texttt{Laplace} contains parameterized $2$D lattices governed by the Laplace equation with complex Dirichlet boundary conditions, representative of structures used in, e.g., materials design.
\texttt{Airfoil} includes flow simulations around airfoil-like shapes, as commonly encountered in aerodynamic engineering.
\texttt{Beam} captures elasticity problems in mechanical engineering, using elongated beams with internal circular holes. 
The elongated beams induce long-range dependencies across the mesh.
\texttt{Console} consists of $3$D car seat crossmember geometries, parameterized and meshed by a human expert. The resulting meshes are optimized for downstream strength and durability analyses.
\texttt{Mold} represents injection molding setups with complex $3$D plates, varying inlet positions, and handcrafted expert meshes. 

Table~\ref{app_tab:dataset_meta} summarizes dataset metadata. 
\texttt{Poisson} and \texttt{Laplace} solve a concrete system of equations to yield a \gls{fem} solution over the mesh. 
This solution is used for expert mesh creation, and as features for the graph that we input into the \gls{mpn}.
For the \texttt{Mold} task, the process conditions $\mathcal{P}$ of each data point are comprised of the inlet position for the molding process, which always lies on the surface of the geometry.
As such, we re-use each \texttt{Mold} geometry multiple times with different inlet positions, and generate a suitable expert mesh for each of them. 

Table~\ref{app_tab:dataset_stats} provides detailed statistics on mesh resolution for the training sets. 
Meshes range from $705$ to $116\,704$ elements in the training data. 
The $3$D datasets, i.e., \texttt{Console} and \texttt{Mold}, have a higher ratio of elements to vertices, as they use tetrahedral instead of triangular elements.

The sections below describe the construction of each dataset, including geometry generation and expert mesh creation.
We implement the \gls{fem} \texttt{Poisson} and \texttt{Laplace} in \textsc{scikit-fem}~\citep{gustafsson2020scikit}. 
For these datasets, we generate separate training data for each seed during training, but evaluate on a fixed set of validation and test data points.
For the other datasets, we created a fixed set of training, validation and testing data points.

\begin{table}[t]
\centering
\caption{
Overview of dataset characteristics.
}
\label{app_tab:dataset_meta}
\resizebox{\textwidth}{!}{%
    \begin{tabular}{lcccccc}
    \toprule
    \textbf{Name} & \textbf{Dim.} & \textbf{Application} & \textbf{Geometries} & \textbf{Online \gls{fem}} & \textbf{Expert Meshes} & \textbf{Process Conditions} \\
    \midrule
    \texttt{Poisson}  & $2$D & Electrostatics & Procedurally generated & Yes & Error indicator & Load function \\
    \texttt{Laplace}  & $2$D & Heat or fluid flow & Procedurally generated & Yes & Error indicator & Dirichlet boundary \\
    \texttt{Airfoil}  & $2$D & Fluid dynamics & Open-source dataset & No & Task-specific heuristic & None \\
    \texttt{Beam}     & $2$D & Mechanical load & Procedurally generated & No & Task-specific heuristic & None \\
    \texttt{Console}  & $3$D & Durability analysis & Closed-source dataset & No & Human labeled & None \\
    \texttt{Mold}     & $3$D & Injection molding & Open-source dataset & No & Human labeled & Inlet position \\
    \bottomrule
    \end{tabular}
}
\end{table}

\begin{table}[t]
\centering
\caption{
Number of data points per split and min/mean/max number of vertices and elements per mesh in the training data.
$^\dagger$For \texttt{Mold}, each geometry is paired with multiple inlet positions and corresponding expert meshes. Each of the $18$ training geometries is used with $3$ inlet positions. We reserve $5$ \texttt{Mold} geometries for validation and test, using $1$ and $2$ inlet positions, respectively.
}
\label{app_tab:dataset_stats}
\resizebox{\textwidth}{!}{%
    \begin{tabular}{lccc|ccc|ccc}
    \toprule
    & \multicolumn{3}{c|}{\textbf{\# Data Points}} 
    & \multicolumn{3}{c|}{\textbf{\# Vertices}} 
    & \multicolumn{3}{c}{\textbf{\# Elements}} \\
    \textbf{Name} 
    & Train & Val & Test 
    & Min & Mean & Max 
    & Min & Mean & Max \\
    \midrule
    \texttt{Poisson} (\textit{easy})   & $20$ & $20$ & $20$ & $387$ & $549$ & $674$ & $705$ & $1\,042$ & $1\,292$ \\
    \texttt{Poisson} (\textit{medium}) & $20$ & $20$ & $20$ & $1\,562$ & $2\,234$ & $2\,736$ & $2\,985$ & $4\,358$ & $5\,365$ \\
    \texttt{Poisson} (\textit{hard})   & $20$ & $20$ & $20$ & $9\,951$ & $13\,224$ & $15\,884$ & $19\,563$ & $26\,185$ & $31\,510$ \\
    \texttt{Laplace}        & $20$ & $20$ & $20$ & $10\,161$ & $13\,840$ & $18\,193$ & $19\,308$ & $26\,341$ & $34\,414$ \\
    \texttt{Airfoil}        & $20$ & $5$  & $5$ & $20\,229$ & $20\,942$ & $22\,152$ & $39\,995$ & $41\,425$ & $43\,842$ \\
    \texttt{Beam}           & $20$ & $10$ & $20$ & $13\,011$ & $27\,727$ & $42\,306$ & $25\,161$ & $53\,804$ & $82\,521$ \\
    \texttt{Console}        & $19$ & $2$ & $5$ & $2\,222$  & $6\,606$  & $10\,130$ & $7\,800$  & $25\,769$ & $41\,856$ \\
    \texttt{Mold}$^\dagger$ & $3\times18$ & $1\times5$ & $2\times 5$ & $7\,369$  & $13\,208$ & $22\,871$ & $33\,308$ & $65\,191$ & $116\,704$ \\
    \bottomrule
    \end{tabular}
}
\end{table}

\subsection{\texttt{Poisson}}
\label{app_ssec:poisson}

We consider adaptive, problem-specific meshes for Poisson's equation with zero Dirichlet boundary conditions, given in weak form as
\begin{align*}
\int_\Omega \nabla u \cdot \nabla v \, \mathrm{d}\boldsymbol{x} = \int_\Omega q v \, \mathrm{d}\boldsymbol{x} \quad \forall v \in V.
\end{align*}

Each domain is a randomly generated L-shaped geometry of the form
$
\Omega = (0,1)^2 \setminus \left([p_0^{(1)}, 1] \times [p_0^{(2)}, 1]\right),
$
with $ p_0 = (p_0^{(1)}, p_0^{(2)}) $ sampled from $ U(0.2, 0.8)^2 $. 
The load function \mbox{$q \colon \Omega \to \mathbb{R}$} is a \gls{gmm} with three components. 
Means are drawn from $U(0.0, 1.0)^2$ and re-sampled if they fall within $0.01$ of the domain boundary or outside the domain. 
Covariances are initialized diagonally with log-uniform entries in $\exp(U(\log 0.0001, \log 0.0005))$, and then randomly rotated to obtain full covariance matrices. 
Component weights follow $\exp(N(0,1)) + 1$, normalized across components, to provide meaningful weight to each component.

Expert meshes are constructed by refining a uniform coarse mesh with element volume $0.01$ using a threshold-based heuristic that accounts for the load function and gradient jumps across element facets~\citep{binev2004adaptive, bangerth2012algorithms, foucart2023deep}.
The local error indicator for element $M_i$ is given by
\begin{equation}
    \label{app_eq:error_indicator_poisson}
    \text{err}(M_i) = h_i^2 \| q \|_{L^2(M_i)}^2 + h_i \left\| \left[\!\left[ \nabla u \cdot \mathbf{n} \right]\!\right] \right\|_{L^2(\partial M_i)}^2,
\end{equation}
where $h_i$ denotes the characteristic length of $M_i$, and $\left[\!\left[ \nabla u \cdot \mathbf{n} \right]\!\right]$ denotes the jump in the normal derivative of $u$ across facets of $M_i$, where $\mathbf{n}$ is the outward unit normal. 
This estimator highlights regions with strong source terms or large inter-element gradient discontinuities.
Elements are marked for refinement if
$
\text{err}(M_i) > \theta \cdot \max_j \text{err}(M_j)
$
with $\theta = 0.85$ fixed across all data points. 
Marked elements are refined via a conforming red-green-blue scheme~\citep{carstensen2004adaptive}, followed by Laplacian smoothing after each refinement step.

Each data point comprises a random domain, source term, and corresponding expert mesh.
Additionally, we vary task difficulty by controlling the number of refinement steps.
We use $25$ steps for an \textit{easy} variant, $50$ steps for \textit{medium}, and $100$ for \textit{hard}.
We solve the equation on each intermediate mesh and extract the solution per vertex as a vertex-level input feature to our \gls{mpn}.
In addition, we use the evaluation of $q$ at each vertex as a node feature.
We use analogous features evaluated at pixel positions for the image baselines.

\subsection{\texttt{Laplace}}
\label{app_ssec:laplace}

The \texttt{Laplace} dataset emulates heat conduction or fluid transport through lattice structures during, e.g., compression-based manufacturing processes.
It follows the same setup and refinement procedure as \texttt{Poisson} (cf.\ Appendix~\ref{app_ssec:poisson}), but solves the Laplace equation
\begin{align*}
\int_\Omega \nabla u \cdot \nabla v \, \mathrm{d}\boldsymbol{x} = 0 \quad \forall v \in V.
\end{align*}
We impose a complex Dirichlet boundary condition based on a \gls{gmm}, applied only to the inner boundary (i.e., the boundaries of the holes). The \gls{gmm} has means sampled from $U(0.1, 0.9)^2$ and covariances with diagonal entries drawn from $\exp(U(\log 0.005, \log 0.01))$, followed by random rotation.
The domain consists of a parameterized family of lattice-like geometries. 
Each instance contains a uniform grid of $k \times k$ square holes, with $k \in [5, 10]$ and hole sizes drawn from $U(0.04, 0.075)$. Holes are placed evenly, ensuring uniform ligament thickness throughout the lattice.

The refinement procedure is identical to that used in \texttt{Poisson}.
Since there is no load function, i.e., $q=0$ for the Laplace equation, the local error indicator simplifies to
\begin{equation}
    \label{app_eq:error_indicator_laplace}
\text{err}(M_i) = h_i \left\| \left[\!\left[ \nabla u \cdot \mathbf{n} \right]\!\right] \right\|_{L^2(\partial M_i)}^2\text{.}
\end{equation}

We use a fixed number of $100$ refinement steps for all data points, corresponding to the \textit{hard} setup of \texttt{Poisson}.
Each data point consists of a domain, boundary condition, and expert mesh. 
We solve the equation on each intermediate mesh and use the solution at each vertex as an input feature to our \gls{mpn}.

\subsection{\texttt{Airfoil}}
\label{app_ssec:airfoil}
We sample airfoil geometries from the~\textsc{UIUC Airfoil Coordinates Database}\footnote{\url{https://m-selig.ae.illinois.edu/ads/coord_database.html}}, each with a randomly selected angle of attack. 
Meshing is performed using \textsc{GMSH-Airfoil-2D}\footnote{\url{https://github.com/cfsengineering/GMSH-Airfoil-2D/tree/main}}, which utilizes a task-specific heuristic to generate high-quality meshes with large inflow/outflow regions and fine resolution near the airfoil. 
We generate $30$ meshes, each placing the airfoil at the center of a circular domain within $[0,1]^2$. 
The mesh size is set to $0.01$ near the airfoil and $0.25$ at the outer boundary, yielding approximately $20\,000$ vertices per mesh.

\subsection{\texttt{Beam}}  
\label{app_ssec:beam}  

Beam geometries are widely used in mechanical engineering to study structural responses under load, for example in the context of non-linear elasticity~\citep{lestringant2020discrete, antman2005problems}.
We generate adaptive beam geometries in \textsc{gmsh}~\citep{geuzaine2009gmsh}.  
We start with elongated rectangular domains, sampled from height $h$ and length $l$
$$ h \sim \mathcal{N}(0.5, 0.05)\text{,} \quad l \sim \mathcal{N}(10.0, 1.0)\text{.} $$  
Randomly placed disks are subtracted from the domain.  
The $i$-th disk has radius  
$$ r_i \sim \mathcal{U}(0.25h, 2.0h)\text{,} $$  
and its center is placed at  
$$ x_i \sim \mathcal{U}(x_{i-1} + 1.5r_i, x_{i-1} + 20.0r_i)\text{,} \quad y_i \sim \mathcal{U}(0, h)\text{,} $$ 
using an initial reference position $x_0 = 0.1l$ to sample the first disk. 
Disk placement proceeds sequentially until the beam end is reached. 
A minimum part thickness of $0.001$ is enforced. 
Meshing uses a manually crafted and carefully tuned heuristic that ensures fine resolution near disks and in thin regions of the geometry.

\subsection{\texttt{Console}}
\label{app_ssec:console}
\texttt{Console} uses data obtained from a real-world scenario in the automotive industry. 
We have a parameterized family of $3$D geometries representing a car's seat crossmembers.
The geometries are obtained using \textsc{Onshape}\footnote{\url{https://www.onshape.com/}} and feature various sharp bends as well as up to two circular holes.
Tetrahedral meshes for this dataset are generated by a human expert using \textsc{ANSA}\footnote{\url{https://www.beta-cae.com/ansa.htm}}.
The expert is initially presented with a coarse mesh, on which they iteratively select regions to refine, specifying the target element size of each selected region. 
The resulting meshes are optimized for downstream strength and durability analyses, but our experiments are conducted solely on the meshes and their underlying geometry.

\subsection{\texttt{Mold}}
\label{app_ssec:mold}

Injection molding is a key process for manufacturing thin, complex components in high-volume industrial settings~\citep{rosato2012injection, heaney2018handbook}.  
We select plane-like geometries from the \textsc{ABC: A Big CAD Model} dataset~\citep{koch2019abc}, aligning them such that the longest dimension lies along $x$ and the shortest along $z$.  
This standardization does not affect the rotation-invariant \gls{amber}, but helps the \textit{Image} baselines.  
Geometries are normalized so that the longest in-plane dimension is $1$, and their thickness is rescaled to $z \sim \mathcal{U}(0.06, 0.09)$.  
Each geometry is duplicated three times with varying injection point locations, which are provided as process conditions and influence the meshing strategy.
Geometries are imported into \textsc{Abaqus}~\citep{abaqus2009} and manually meshed by an expert using the standard tetrahedral meshing algorithm.  
Meshes are tailored for injection molding, with $4$–$6$ elements across thickness and local refinement at holes, edges, and injection points.  
Mesh generation takes approximately $20$ minutes per geometry, depending on complexity.

\section{Training Setup and Hyperparameters}
\label{app_sec:hyperparameters}

\subsection{Hardware and Compute}
\label{app_ssec:hardware_compute}
All graph-based methods are trained on an NVIDIA 3090 GPU. 
The image-based methods are instead trained on an NVIDIA A100 GPU to accommodate the memory requirement of the comparatively high-resolution images.
Each method is given a computational budget of up to $36$ hours, although most methods, including \gls{amber}, usually converge after $4$-$12$ hours, depending on the considered dataset.
We train every method for five seeds. 
We evaluate four methods on eight datasets, counting \texttt{Poisson} (\textit{easy}/\textit{medium}/\textit{hard}) separately, and four additional methods on three datasets. 
We additionally have a total of $31$ additional experiments across three datasets.
Combined, this yields an estimated total compute of $8[\text{hours}] \times 5 [\text{seeds}] \times (8\times 4 + 4\times 3 + 31)=3000 [\text{hours}]$. 
A comparable amount was used for preliminary runs and hyperparameter tuning.

\subsection{Training}
We implement all neural networks in PyTorch~\citep{paszke2019pytorch} and optimize using ADAM~\citep{kingma2014adam}. 
We use a learning rate of $1.0e$-$3$ and a linear learning rate scheduler with a warmup from $0$ to the full learning rate during the first $10\,\%$ of training. We apply weight decay of $1.0e$-$6$.
We train  for a total of $25\,600$ mini-batches for \texttt{Poisson} and \texttt{Laplace}, \texttt{Airfoil}, and $51\,200$ mini-batches for \texttt{Beam}, \texttt{Console} and \texttt{Mold}.

\subsection{Node and Edge Features}
\label{app_ssec:geometric_features}
In addition to the dataset-specific features, as described in Appendix~\ref{app_sec:datasets} each node is assigned features for the average sizing field of adjacent elements, as provided in Equation~\ref{eq:vertex_sizing_field}, and the vertex degree. 
As edge features, we use the Euclidean distance between vertex positions and an approximate curvature, defined as the signed angle between the averaged surface normals of the edge’s endpoints. 
The curvature lies in $[-1,1]$, with positive values for convex and negative values for concave regions. 
Since all features are invariant to Euclidean transformations, the architecture is invariant to rotation, translation, reflection, and vertex permutation~\citep{bronstein2021geometric}.

\subsection{\textit{AMBER} Hyperparameters}
The \gls{mpn} of \gls{amber} consists of $20$ separate message passing steps for all datasets.
Each message passing step uses separate two-layer \glspl{mlp} and LeakyReLU activations for its node and edge updates.
We apply Layer Normalization~\citep{ba2016layer} and Residual Connections~\citep{he2016deep} independently after each node and edge feature update, and use Edge Dropout~\citep{rong2019dropedge} of $0.1$ during training.
The final node features are fed into a two-layer \gls{mlp} decoder.
All \glspl{mlp} use a latent dimension of $64$.
We experimented with slightly different parameterizations in preliminary experiments, finding that \gls{amber} is relatively insensitive to the details of the underlying \gls{mpn}.
We provide an overview of \gls{amber} hyperparameters in Table~\ref{tab:hyperparams}.

\begin{table}[ht]
\centering
\caption{\gls{amber} parameters and experiment configuration (variable names as used in the main text)}
\begin{tabularx}{\textwidth}{l l l l}
\toprule
\textbf{Section} & \textbf{Parameter} & \textbf{Variable} & \textbf{Value} \\
\midrule
\multirow{4}{*}{Optimization} & Optimizer & & ADAM \\
& Learning rate & & $1.0 \times 10^{-3}$ \\
& Learning rate scheduler & & linear with $10\,\%$ warm-up \\
& Weight decay & & $1.0 \times 10^{-6}$ \\
\midrule
\multirow{6}{*}{\gls{mpn}} & Aggregation function & $\bigoplus$ & mean \\
& \gls{mpn} steps & $L$ & $20$ \\
& Activation function & & Leaky ReLU \\
& Edge dropout & & $0.1$ \\
& MLP layers & & $2$ \\
& Latent dimension & & $64$ \\
\midrule
\multirow{5}{*}{\gls{amber}} & Refinement steps & $T$ & $3$ \\
& Maximum buffer size & & $500$ meshes \\
& Buffer addition frequency & $k$ & $8$ samples every $128$ batches \\
& Training steps & & $25\,600$ or $51\,200$ (task-dependent) \\
& Batch size & & $500\,000$ graph nodes plus edges \\
& Sizing field scaling & $c_t$ & $1.618^{T - t - 1}$ \\
\bottomrule
\end{tabularx}
\label{tab:hyperparams}
\end{table}

\section{Mesh Generation}
\label{app_sec:mesh_generation}
We use \textsc{gmsh}~\citep{geuzaine2009gmsh} for mesh generation.
For simplicity, we clip the predicted sizing fields during mesh generation to
$
\left(0.8 \min\{f_e(M_i^*)\},\, 1.25 \max\{f_e(M_i^*)\}\right), \quad \text{with } M_i^* \subseteq M^*,\, M^* \in \mathcal{D},
$
i.e., to a range around the most extreme values seen during training.
Here, $f_e$ is the element-wise sizing field introduced in Section~\ref{ssec:how_to_train_your_amber}.
This is only done during mesh generation, and does not impact the model predictions or the loss of Equation~\ref{eq:main_loss}.
We further constrain the mesh generation process of \gls{amber} to a budget of $1.5 \max\{|M_i^*|, M_i^*\subseteq M^*,M^*\in \mathcal{D}\}$ elements, i.e., to $150\,\%$ of the mesh elements of the largest mesh in the training dataset.
To ensure that this budget is met, we employ a conservative heuristic that estimates the number of elements in a newly generated mesh from a given sizing field, and then computes a scaling factor such that the new mesh does not exceed the available number of elements. 
This constraint makes training more predictable by preventing very large meshes and thus unexpected peaks in runtime between training epochs.
While this constraint is also active during inference, we find that it practically never activates after the training has converged. 

\section{Metrics}
\label{app_sec:metrics}

\subsection{Density-Aware Chamfer Distance (DCD)}
\label{app_ssec:dcd}
We evaluate mesh similarity using the \gls{dcd}~\citep{wu2021density}, a symmetric, exponentiated variant of the Chamfer distance that accounts for multiple points in one set matching a single point in the other. 
Given vertex sets \(\mathcal{V}_1\) and \(\mathcal{V}_2\), the \gls{dcd} is defined as 
\begin{equation}
\label{app_eq:dcd}
\begin{split}
d_{\mathrm{DCD}}(\mathcal{V}_1, \mathcal{V}_2) = \frac{1}{2} \bigg[ 
&\frac{1}{|\mathcal{V}_1|} \sum_{v \in \mathcal{V}_1} \left( 1 - \frac{1}{n_v} e^{-\|\mathbf{p}(v) - \mathbf{p}(\hat{v}(v, \mathcal{V}_2))\|_2} \right) \\
+ &\frac{1}{|\mathcal{V}_2|} \sum_{v \in \mathcal{V}_2} \left( 1 - \frac{1}{n_v} e^{-\|\mathbf{p}(v) - \mathbf{p}(\hat{v}(v, \mathcal{V}_1))\|_2} \right) 
\bigg],
\end{split}
\end{equation}
where $\hat{v}(v, \mathcal{V}') = \arg\min_{v' \in \mathcal{V}'} \|\mathbf{p}(v) - \mathbf{p}(v')\|_2$ is the nearest neighbor, and $n_v$ is the number of points in the other set for which $v$ is the nearest neighbor.
The \gls{dcd} is a purely geometric metric that treats both vertex sets as samples from an unknown density over the domain.

\subsection{\texorpdfstring{$L^2$} ~ Error}  
\label{app_ssec:l2}
We additionally evaluate mesh similarity using a symmetric relative projected $L^2$ error between the vertex-based sizing fields of the evaluated and expert meshes. 
This metric complements the purely geometric \gls{dcd} by quantifying discrepancies in local element sizes.
Let $f$ and $f^*$ denote the vertex-based sizing fields of Equation~\ref{eq:vertex_sizing_field} on the evaluated mesh $M$ and expert mesh $M^*$, respectively.
We use the interpolant $\mathcal{I}$ from Equation~\ref{eq:continuous_sizing_field} to evaluate each sizing field at the vertex positions of the opposite mesh. 
The symmetric relative projected $L^2$ error is then defined as
\begin{equation}
\label{app_eq:symmetric_projected_l2}
\begin{split}
d_{\mathrm{L2}}(M, M^*) = \frac{1}{2} \left(
\frac{\left\| f^*(v_j^*) - \mathcal{I}_{M^*}(f)(\mathbf{p}(v^*_j)) \right\|_2}{\left\| \mathcal{I}_{M^*}(f)(\mathbf{p}(v^*_j)) \right\|_2}
+
\frac{\left\| f(v_j) - \mathcal{I}_{M}(f^*)(\mathbf{p}(v_j)) \right\|_2}{\left\| \mathcal{I}_{M}(f^*) (\mathbf{p}(v_j))\right\|_2}
\right),
\end{split}
\end{equation}
where $\|\cdot\|_2$ denotes the discrete $\ell^2$ norm over vertices.

\subsection{Error Indicator Norm}
\label{app_ssec:error_indicator_norm}

For \texttt{Poisson}, we evaluate \gls{asmr} and \gls{amber} using the norm of the error indicator of Equation~\ref{app_eq:error_indicator_poisson}, i.e.,
\begin{equation}
\label{app_eq:error_indicator_norm}
d_{\text{err}}(M) = \|\text{err}(M_i)\|_2 = \sqrt{ \sum_{M_i\in M} \text{err}(M_i)^2 }\text{.}
\end{equation}

In contrast to the above metrics, the error indicator norm approximates the remaining simulation error for a given mesh, independent of some reference mesh or vertex set.
It naturally decreases for finer meshes, but quantifies how well a given mesh works for downstream simulation for its budget.
We thus evaluate the Expert, \textit{\gls{asmr}} and \gls{amber} for different target mesh granularities on a Pareto front of number of mesh elements compared to this norm.

\section{Baselines and Variants}
\label{app_sec:baselines_variants}

The following sections provide detailed setups for all baselines and variants used in our experiments.
Unless mentioned otherwise, the baseline and variant experiments follow the setup and hyperparameters described in Appendix~\ref{app_sec:hyperparameters}. 

\subsection{\textit{GraphMesh}}
\label{app_ssec:graphmesh}

\textit{GraphMesh}~\citep{khan2024graphmesh} uses a two-stage \gls{gcn}~\citep{kipf2016semi} to extract geometric information from polygonal domains. 
It constructs a local copy of the boundary graph for each coarse mesh vertex, encoding relative features to all boundary vertices.
These features are mean value coordinates~\citep{floater2003mean}, spatial distances, and mesh-hop counts. 
Thus, each coarse vertex is represented by an individual boundary graph that contains features of the boundary relative to this vertex. 
This construction limits \textit{GraphMesh} to polygonal domains, which in our case restricts it to the \texttt{Poisson} datasets.
These graphs are processed by a single-layer \gls{gcn}, and the resulting embeddings are pooled to yield one latent vector per coarse vertex of the original mesh.
To enable load-specific sizing field prediction, the same vertex-level features used in  \gls{amber} are appended to these embeddings. 
For the \texttt{Poisson} datasets, these features include vertex degree, interpolated sizing field, load function value, and solution value at the vertex position. 
The combined features are used as node inputs to a second \gls{gcn} stage consisting of $6$ residual graph convolutional layers with $128$ dimensional hidden states. 
\textit{GraphMesh} is trained using a Mean Average Error to the target sizing field and does not apply normalization.
We find that \textit{GraphMesh} quickly starts to overfit, especially on \texttt{Poisson} (\textit{easy}), likely due to poor generalization capabilities of its \gls{gcn} and the construction of the geometry embedding.
To compensate, we reduce the number of training steps to $3\,200/6\,400/12\,800$ for \texttt{Poisson} (\textit{easy}/\textit{medium}/\textit{hard}).
We tune the resolution of the underlying mesh by dataset for optimal validation performance.

In \textit{GraphMesh (Variant)}, we instead apply the loss in the inverse-softplus space, as in Equation~\ref{eq:main_loss}, and add input/output normalization. 
We also use $20$ layers with dimension $64$ to match \gls{amber}'s \gls{mpn}.

\subsection{\textit{Image} Baseline}
\label{app_ssec:image_baseline}

The \textit{Image} baseline~\citep{huang2021machine} operates on discretized domain images. In $2$D, we use $512$ pixels along the longest axis.
We evaluate other resolutions in Appendix~\ref{app_ssec:image_ablations}.
In $3$D, we use $96$ voxels along the longest axis. 
We follow the original setup and use a U-Net~\citep{ronneberger2015u} with $64$ initial channels and $5$ down- and up-convolution blocks. 
Each block contains $2$ convolutions with kernel size $3$, followed by batch normalization and a ReLU activation. 
After each down-convolution, we apply max-pooling with kernel size and stride $2$ to halve the resolution and double the number of channels. 
The up-convolutions reverse this process, and skip connections are added between corresponding depths.
We use $2$D and $3$D convolutions, batch normalization and pooling operations for the $2$D and $3$D datasets, respectively.
For task-specific datasets, i.e., \texttt{Poisson} and \texttt{Laplace}, we generate a uniform background mesh with roughly one element per pixel and compute an \gls{fem} solution on this mesh to yield our input features. 
For \texttt{Poisson}, we additionally include the load function evaluated at each pixel. 
Finally, for all datasets, we add a binary mask that indicates if a given pixel or voxel is inside the domain as an input feature.
We also mask the loss accordingly, only predicting and training on pixels within the domain.
The \textit{Image} baseline is trained on a regular \gls{mse} loss over pixel-wise predicted and target sizing fields.

The \textit{Image (Variant)} extends the \textit{Image} baseline to the loss of Equation~\ref{eq:main_loss} and input/output normalization.
We evaluate both choices individually in Appendix~\ref{app_ssec:image_ablations}.

\subsection{\textit{AMBER (1-Step)}}
\label{app_ssec:amber_1step}

We experiment with a variant of \gls{amber} that only uses a single mesh generation step, i.e., that predicts vertex-level sizing field on a uniform mesh, and uses this to generate the adaptive mesh.
This variant explores \gls{amber} without the ability to generate and act on an intermediate meshes, i.e., on a fixed sampling resolution for the predicted sizing field.
We keep all \textit{\gls{amber} }hyperparameters the same, but omit all parts of the method that depend on iterative mesh generation.
Since \textit{\gls{amber} (1-Step)} heavily depends on the resolution of its input mesh, we tune this resolution separately for each dataset for optimal validation performance.

\subsection{Adaptive Swarm Mesh Refinement (\textit{ASMR})}
\label{app_ssec:asmr}

For \textit{\gls{asmr}}~\citep{freymuth2023swarm, freymuth2024adaptive} we integrate the \texttt{Poisson} dataset into the provided codebase\footnote{\url{https://github.com/niklasfreymuth/asmr}}, replacing the original mesh generator with \textsc{gmsh} and adapting the dataset parameters to match Appendix~\ref{app_ssec:poisson}. 
We also adapt the batching scheme to that used for  \gls{amber} to prevent too-large batches, sampling from the \gls{rl} replay buffer until the combined number of graph nodes and edges reaches $500\,000$.

We adopt the \gls{mpn} architecture and training schedule proposed by \textit{\gls{asmr}}, using $2$ \gls{mpn} steps. 
Preliminary experiments with more message passing steps showed no improvement, which is consistent with prior observations on \gls{rl} model scaling~\citep{bjorck2021towards, nauman2024theory}.

We use the reward function proposed by \textit{\gls{asmr}}~\citep{freymuth2023swarm}.
Given an element $M^t_i$ at refinement step $t$, the reward is
\begin{equation}
\label{app_eq:asmr_local_reward}
\mathbf{r}(M^t_i) = \frac{1}{V(M^t_i)}\left(\text{err}(M^t_i)
-\sum_j \mathbf{Q}^t_{ij}\text{err}(M^{t+1}_j)\right)
-\alpha\left(\sum_j\mathbb{I}(M^{t+1}_j\subseteq M^t_i)-1\right)\text{,}
\end{equation}
where $\alpha$ is a scalar that controls the trade-off between accuracy and mesh complexity, and is given to the policy as context, and $\mathbf{Q}$ maps refined elements to their parents. 
The local error $\text{err}(M^t_i)$ is computed by integrating the element-wise solution against a reference solution on a high-resolution mesh.
This reference mesh is obtained by uniformly refining the initial mesh six times. Further refinement was found to be computationally infeasible.

The reward function includes a $1/V(M^t_i)$ scaling term. 
In \gls{asmr}, both reward and evaluation are based on integration against a fine reference mesh, making the scaling consistent with the objective. 
We adopt the same reward and optimization, limiting \gls{asmr} to $6$ uniform refinement steps. 
Appendix~\ref{app_ssec:asmr_variant} explores a variant that replaces the integrated error with the error indicator, allowing deeper refinement. 
In both cases, we evaluate using the error indicator, as a sufficiently fine uniform reference mesh is infeasible for our datasets. 
Under this metric, the scaling biases refinement toward small elements and can lead to a gap to expert performance. 
In preliminary experiments, we tried to remove the scaling term, which led to unstable training and non-convergence.

We apply an adaptive element penalty during training by sampling $\alpha$ from a predefined range that yields mesh sizes comparable to \texttt{Poisson} (\textit{easy}/\textit{medium}/\textit{hard}). 
At inference time, we evaluate across a range of $20$ geometrically spaced $\alpha$ values, producing meshes of varying resolution and corresponding indicator error for a full comparison.

\section{Extended Results}
\label{app_sec:extended_results}

\subsection{\texorpdfstring{$L^2$} ~ Error Evaluations}
\label{app_ssec:l2_error_evaluations}

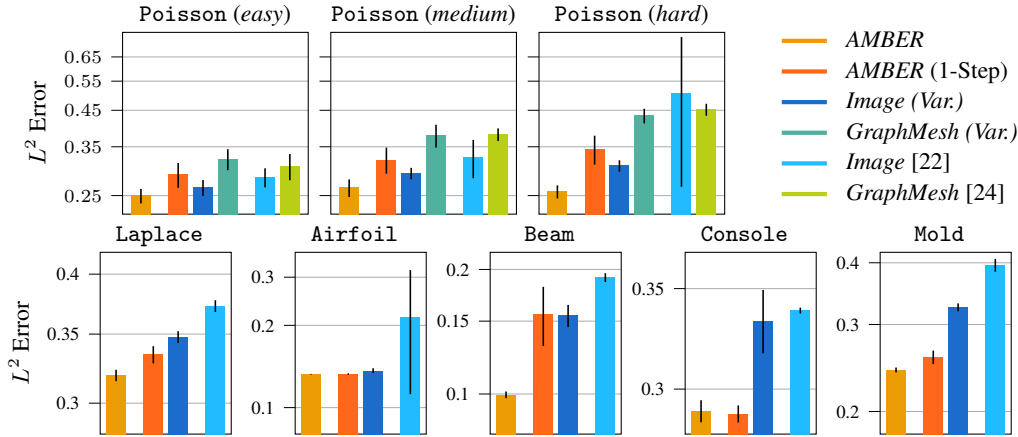
\begin{figure}[t]
    \centering
    \begin{minipage}[t]{0.75\textwidth}
    \hspace{.35cm}
    \begin{tikzpicture}
\pgfplotsset{every tick label/.append style={font=\scriptsize}} 
\pgfplotsset{every axis label/.append style={font=\small}} 

\definecolor{cadetblue80178158}{RGB}{80,178,158}
\definecolor{darkgray176}{RGB}{176,176,176}
\definecolor{deepskyblue33188255}{RGB}{33,188,255}
\definecolor{orange2361576}{RGB}{236,157,6}
\definecolor{orangered25510427}{RGB}{255,104,27}
\definecolor{royalblue28108204}{RGB}{28,108,204}
\definecolor{yellowgreen18420623}{RGB}{184,206,23}

\begin{groupplot}[group style={group size=3 by 1,  horizontal sep=0.3cm}]
\nextgroupplot[
xtick=\empty, 
x=0.33cm,
height=4cm,
title style={font=\small, yshift=-0.25cm}, 
log basis y={10},
tick align=outside,
tick pos=left,
title={\texttt{Poisson} (\textit{easy})\vphantom{y}},
xmin=-0.74, xmax=6.74,
y grid style={darkgray176},
ylabel={$L^2$ Error},
ymajorgrids,
ymin=0.22, ymax=0.77,
ymode=log,
ytick style={color=black},
ytick={0.25,0.35,0.45,0.55,0.65},
]
\draw[draw=none,fill=orange2361576] (axis cs:-0.4,0.0001) rectangle (axis cs:0.4,0.249306964920654);
\draw[draw=none,fill=orangered25510427] (axis cs:1.1,0.0001) rectangle (axis cs:1.9,0.28836241548642);
\draw[draw=none,fill=royalblue28108204] (axis cs:2.1,0.0001) rectangle (axis cs:2.9,0.263956907846946);
\draw[draw=none,fill=cadetblue80178158] (axis cs:3.1,0.0001) rectangle (axis cs:3.9,0.321003512093025);
\draw[draw=none,fill=deepskyblue33188255] (axis cs:4.6,0.0001) rectangle (axis cs:5.4,0.283085044073214);
\draw[draw=none,fill=yellowgreen18420623] (axis cs:5.6,0.0001) rectangle (axis cs:6.4,0.305377452912449);
\path [draw=black, semithick]
(axis cs:0,0.236802751165297)
--(axis cs:0,0.261811178676011);

\path [draw=black, semithick]
(axis cs:1.5,0.263750346266668)
--(axis cs:1.5,0.312974484706173);

\path [draw=black, semithick]
(axis cs:2.5,0.249455290177694)
--(axis cs:2.5,0.278458525516198);

\path [draw=black, semithick]
(axis cs:3.5,0.297795407882718)
--(axis cs:3.5,0.344211616303333);

\path [draw=black, semithick]
(axis cs:5,0.264462447724036)
--(axis cs:5,0.301707640422392);

\path [draw=black, semithick]
(axis cs:6,0.277860331252792)
--(axis cs:6,0.332894574572107);

\nextgroupplot[
xtick=\empty, 
x=0.33cm,
height=4cm,
title style={font=\small, yshift=-0.25cm}, 
log basis y={10},
ytick={0.25,0.35,0.45,0.55,0.65},
yticklabels={,,},
tick align=outside,
tick pos=left,
title={\texttt{Poisson} (\textit{medium})\vphantom{y}},
xmin=-0.74, xmax=6.74,
y grid style={darkgray176},
ymajorgrids,
ymin=0.22, ymax=0.77,
ymode=log,
ytick style={color=black}
]
\draw[draw=none,fill=orange2361576] (axis cs:-0.4,0.0001) rectangle (axis cs:0.4,0.263562836583977);
\draw[draw=none,fill=orangered25510427] (axis cs:1.1,0.0001) rectangle (axis cs:1.9,0.319307906214274);
\draw[draw=none,fill=royalblue28108204] (axis cs:2.1,0.0001) rectangle (axis cs:2.9,0.291183528689897);
\draw[draw=none,fill=cadetblue80178158] (axis cs:3.1,0.0001) rectangle (axis cs:3.9,0.377656311255893);
\draw[draw=none,fill=deepskyblue33188255] (axis cs:4.6,0.0001) rectangle (axis cs:5.4,0.324342873898836);
\draw[draw=none,fill=yellowgreen18420623] (axis cs:5.6,0.0001) rectangle (axis cs:6.4,0.380684048483381);
\path [draw=black, semithick]
(axis cs:0,0.247629518352733)
--(axis cs:0,0.27949615481522);

\path [draw=black, semithick]
(axis cs:1.5,0.290770789268051)
--(axis cs:1.5,0.347845023160498);

\path [draw=black, semithick]
(axis cs:2.5,0.279732451951222)
--(axis cs:2.5,0.302634605428571);

\path [draw=black, semithick]
(axis cs:3.5,0.34806301102747)
--(axis cs:3.5,0.407249611484316);

\path [draw=black, semithick]
(axis cs:5,0.281783310576733)
--(axis cs:5,0.366902437220938);

\path [draw=black, semithick]
(axis cs:6,0.364106176286083)
--(axis cs:6,0.397261920680679);

\nextgroupplot[
xtick=\empty, 
x=0.33cm,
height=4cm,
title style={font=\small, yshift=-0.25cm}, 
log basis y={10},
ytick={0.25,0.35,0.45,0.55,0.65},
yticklabels={,,},
tick align=outside,
tick pos=left,
title={\texttt{Poisson} (\textit{hard})\vphantom{y}},
xmin=-0.74, xmax=6.74,
y grid style={darkgray176},
ymajorgrids,
ymin=0.22, ymax=0.77,
ymode=log,
ytick style={color=black}
]
\draw[draw=none,fill=orange2361576] (axis cs:-0.4,0.0001) rectangle (axis cs:0.4,0.256699026081264);
\draw[draw=none,fill=orangered25510427] (axis cs:1.1,0.0001) rectangle (axis cs:1.9,0.343674412636175);
\draw[draw=none,fill=royalblue28108204] (axis cs:2.1,0.0001) rectangle (axis cs:2.9,0.306831788514982);
\draw[draw=none,fill=cadetblue80178158] (axis cs:3.1,0.0001) rectangle (axis cs:3.9,0.43292992425139);
\draw[draw=none,fill=deepskyblue33188255] (axis cs:4.6,0.0001) rectangle (axis cs:5.4,0.505837851827807);
\draw[draw=none,fill=yellowgreen18420623] (axis cs:5.6,0.0001) rectangle (axis cs:6.4,0.452008465760094);
\path [draw=black, semithick]
(axis cs:0,0.245195619025276)
--(axis cs:0,0.268202433137253);

\path [draw=black, semithick]
(axis cs:1.5,0.309615902209293)
--(axis cs:1.5,0.377732923063058);

\path [draw=black, semithick]
(axis cs:2.5,0.294772049542327)
--(axis cs:2.5,0.318891527487637);

\path [draw=black, semithick]
(axis cs:3.5,0.411024915793757)
--(axis cs:3.5,0.454834932709024);

\path [draw=black, semithick]
(axis cs:5,0.265707209685355)
--(axis cs:5,0.745968493970258);

\path [draw=black, semithick]
(axis cs:6,0.433318893491256)
--(axis cs:6,0.470698038028932);

\end{groupplot}

\end{tikzpicture}
    \end{minipage}%
    \begin{minipage}[t]{0.25\textwidth}
    \begin{tikzpicture}

\definecolor{cadetblue80178158}{RGB}{80,178,158}
\definecolor{darkgray176}{RGB}{176,176,176}
\definecolor{deepskyblue33188255}{RGB}{33,188,255}
\definecolor{orange2361576}{RGB}{236,157,6}
\definecolor{orangered25510427}{RGB}{255,104,27}
\definecolor{royalblue28108204}{RGB}{28,108,204}
\definecolor{yellowgreen18420623}{RGB}{184,206,23}

\begin{axis}[%
hide axis,
xmin=10,
xmax=50,
ymin=0,
ymax=0.1,
legend style={
    font=\small,
    draw=white!15!black,
    legend cell align=left,
    legend columns=1,
    legend style={
        draw=none,
        column sep=1ex,
        line width=1pt,
    }
},
]
\addlegendimage{ultra thick, orange2361576}
\addlegendentry{\gls{amber}}
\addlegendimage{ultra thick, orangered25510427}
\addlegendentry{\gls{amber} (1-Step)}
\addlegendimage{ultra thick, royalblue28108204}
\addlegendentry{\textit{Image (Var.)}}
\addlegendimage{ultra thick, cadetblue80178158}
\addlegendentry{\textit{GraphMesh (Var.)}}

\addlegendimage{ultra thick, deepskyblue33188255}
\addlegendentry{\textit{Image}~\cite{huang2021machine}}
\addlegendimage{ultra thick, yellowgreen18420623}
\addlegendentry{\textit{GraphMesh}~\cite{khan2024graphmesh}}
\end{axis}
\end{tikzpicture}%
    \end{minipage}
    \begin{tikzpicture}
\pgfplotsset{every tick label/.append style={font=\scriptsize}} 
\pgfplotsset{every axis label/.append style={font=\small}} 

\definecolor{darkgray176}{RGB}{176,176,176}
\definecolor{deepskyblue33188255}{RGB}{33,188,255}
\definecolor{orange2361576}{RGB}{236,157,6}
\definecolor{orangered25510427}{RGB}{255,104,27}
\definecolor{royalblue28108204}{RGB}{28,108,204}

\begin{groupplot}[group style={group size=5 by 1,  horizontal sep=0.85cm}]
\nextgroupplot[
xtick=\empty, 
x=0.33cm,
height=4cm,
title style={font=\small, yshift=-0.25cm}, 
log basis y={10},
tick align=outside,
tick pos=left,
title={\texttt{Laplace \vphantom{y}}},
xmin=-0.64, xmax=4.64,
y grid style={darkgray176},
ylabel={$L^2$ Error},
ymajorgrids,
ymin=0.28, ymax=0.42,
ymode=log,
ytick style={color=black},
yticklabels={0.3,0.35,0.4},
ytick={0.3,0.35,0.4},
]
\draw[draw=none,fill=orange2361576] (axis cs:-0.4,0.0001) rectangle (axis cs:0.4,0.319172010166023);
\draw[draw=none,fill=orangered25510427] (axis cs:1.1,0.0001) rectangle (axis cs:1.9,0.334240044565216);
\draw[draw=none,fill=royalblue28108204] (axis cs:2.1,0.0001) rectangle (axis cs:2.9,0.347708385950795);
\draw[draw=none,fill=deepskyblue33188255] (axis cs:3.6,0.0001) rectangle (axis cs:4.4,0.372539551114634);
\path [draw=black, semithick]
(axis cs:0,0.315110710207725)
--(axis cs:0,0.32323331012432);

\path [draw=black, semithick]
(axis cs:1.5,0.327832311068177)
--(axis cs:1.5,0.340647778062255);

\path [draw=black, semithick]
(axis cs:2.5,0.343148525800487)
--(axis cs:2.5,0.352268246101103);

\path [draw=black, semithick]
(axis cs:4,0.36764543407871)
--(axis cs:4,0.377433668150559);

\nextgroupplot[
xtick=\empty, 
x=0.33cm,
height=4cm,
title style={font=\small, yshift=-0.25cm}, 
log basis y={10},
tick align=outside,
tick pos=left,
title={\texttt{Airfoil \vphantom{y}}},
xmin=-0.64, xmax=4.64,
y grid style={darkgray176},
ymajorgrids,
ymin=0.08, ymax=0.37,
ymode=log,
ytick style={color=black},
yticklabels={0.1,0.2,0.3},
ytick={0.1,0.2,0.3},
]
\draw[draw=none,fill=orange2361576] (axis cs:-0.4,0.0001) rectangle (axis cs:0.4,0.132489294911399);
\draw[draw=none,fill=orangered25510427] (axis cs:1.1,0.0001) rectangle (axis cs:1.9,0.132639900116172);
\draw[draw=none,fill=royalblue28108204] (axis cs:2.1,0.0001) rectangle (axis cs:2.9,0.136570620245755);
\draw[draw=none,fill=deepskyblue33188255] (axis cs:3.6,0.0001) rectangle (axis cs:4.4,0.215261896003313);
\path [draw=black, semithick]
(axis cs:0,0.13209829576286)
--(axis cs:0,0.132880294059938);

\path [draw=black, semithick]
(axis cs:1.5,0.131788853071431)
--(axis cs:1.5,0.133490947160914);

\path [draw=black, semithick]
(axis cs:2.5,0.13384425699314)
--(axis cs:2.5,0.139296983498369);

\path [draw=black, semithick]
(axis cs:4,0.111971506697766)
--(axis cs:4,0.318552285308859);

\nextgroupplot[
xtick=\empty, 
x=0.33cm,
height=4cm,
title style={font=\small, yshift=-0.25cm}, 
log basis y={10},
tick align=outside,
tick pos=left,
title={\texttt{Beam \vphantom{y}}},
xmin=-0.64, xmax=4.64,
y grid style={darkgray176},
ymajorgrids,
ymin=0.08, ymax=0.22,
ymode=log,
ytick style={color=black},
yticklabels={0.1,0.15,0.2},
ytick={0.1,0.15,0.2},
]
\draw[draw=none,fill=orange2361576] (axis cs:-0.4,0.0001) rectangle (axis cs:0.4,0.0995298837536022);
\draw[draw=none,fill=orangered25510427] (axis cs:1.1,0.0001) rectangle (axis cs:1.9,0.156134620398455);
\draw[draw=none,fill=royalblue28108204] (axis cs:2.1,0.0001) rectangle (axis cs:2.9,0.154640263117625);
\draw[draw=none,fill=deepskyblue33188255] (axis cs:3.6,0.0001) rectangle (axis cs:4.4,0.191138301355483);
\path [draw=black, semithick]
(axis cs:0,0.0976831192436405)
--(axis cs:0,0.101376648263564);

\path [draw=black, semithick]
(axis cs:1.5,0.130715852075982)
--(axis cs:1.5,0.181553388720928);

\path [draw=black, semithick]
(axis cs:2.5,0.145087513967701)
--(axis cs:2.5,0.164193012267549);

\path [draw=black, semithick]
(axis cs:4,0.18644742918984)
--(axis cs:4,0.195829173521126);

\nextgroupplot[
xtick=\empty, 
x=0.33cm,
height=4cm,
title style={font=\small, yshift=-0.25cm}, 
log basis y={10},
tick align=outside,
tick pos=left,
title={\texttt{Console \vphantom{y}}},
xmin=-0.64, xmax=4.64,
y grid style={darkgray176},
ymajorgrids,
ymin=0.28, ymax=0.37,
ymode=log,
ytick style={color=black},
yticklabels={0.3,0.35},
ytick={0.3,0.35},
]
\draw[draw=none,fill=orange2361576] (axis cs:-0.4,0.0001) rectangle (axis cs:0.4,0.290018581781578);
\draw[draw=none,fill=orangered25510427] (axis cs:1.1,0.0001) rectangle (axis cs:1.9,0.28881508341395);
\draw[draw=none,fill=royalblue28108204] (axis cs:2.1,0.0001) rectangle (axis cs:2.9,0.333102690593945);
\draw[draw=none,fill=deepskyblue33188255] (axis cs:3.6,0.0001) rectangle (axis cs:4.4,0.338386438552285);
\path [draw=black, semithick]
(axis cs:0,0.285121508991691)
--(axis cs:0,0.294915654571465);

\path [draw=black, semithick]
(axis cs:1.5,0.28504715636355)
--(axis cs:1.5,0.29258301046435);

\path [draw=black, semithick]
(axis cs:2.5,0.316922030773358)
--(axis cs:2.5,0.349283350414532);

\path [draw=black, semithick]
(axis cs:4,0.33685458963424)
--(axis cs:4,0.33991828747033);

\nextgroupplot[
xtick=\empty, 
x=0.33cm,
height=4cm,
title style={font=\small, yshift=-0.25cm}, 
log basis y={10},
tick align=outside,
tick pos=left,
title={\texttt{Mold \vphantom{y}}},
xmin=-0.64, xmax=4.64,
y grid style={darkgray176},
ymajorgrids,
ymin=0.18, ymax=0.42,
ymode=log,
ytick style={color=black},
yticklabels={0.2,0.3,0.4},
ytick={0.2,0.3,0.4},
]
\draw[draw=none,fill=orange2361576] (axis cs:-0.4,0.0001) rectangle (axis cs:0.4,0.242672147890216);
\draw[draw=none,fill=orangered25510427] (axis cs:1.1,0.0001) rectangle (axis cs:1.9,0.257509391992171);
\draw[draw=none,fill=royalblue28108204] (axis cs:2.1,0.0001) rectangle (axis cs:2.9,0.325005178771875);
\draw[draw=none,fill=deepskyblue33188255] (axis cs:3.6,0.0001) rectangle (axis cs:4.4,0.395621119062338);
\path [draw=black, semithick]
(axis cs:0,0.23986461117175)
--(axis cs:0,0.245479684608682);

\path [draw=black, semithick]
(axis cs:1.5,0.249418156304164)
--(axis cs:1.5,0.265600627680177);

\path [draw=black, semithick]
(axis cs:2.5,0.318996125224072)
--(axis cs:2.5,0.331014232319678);

\path [draw=black, semithick]
(axis cs:4,0.383830802512958)
--(axis cs:4,0.407411435611717);

\end{groupplot}

\end{tikzpicture}
    \caption{$L^2$ error across datasets and supervised methods. 
    Overall trends are consistent with Figure~\ref{fig:main_results}.
    \gls{amber} shows larger relative improvements on datasets like \texttt{Poisson} and \texttt{Beam} compared to baselines. 
    On \texttt{Console}, \gls{amber} \textit{(1-Step)} slightly outperforms \gls{amber}, but with overlapping error bounds.}
    \label{app_fig:main_results_l2}
\end{figure}

Throughout our experiments, we primarily assess supervised approaches using the \glsreset{dcd}\gls{dcd} to the expert mesh. 
Here, we complement this with evaluations based on the $L^2$ error defined in Appendix~\ref{app_ssec:l2}.  
Figure~\ref{app_fig:main_results_l2} reports $L^2$ errors across all datasets and supervised methods.
While scales are different across datasets, the general trends closely mirror those in Figure~\ref{fig:main_results}, with only minor differences in relative performance.
On the $L^2$ error, \gls{amber} outperforms published baselines on all datasets, and shows a slightly larger advantage over the variants compared to the \gls{dcd} on, e.g., \texttt{Poisson} and \texttt{Beam}.
For \texttt{Console}, \gls{amber} \textit{(1-Step)} performs well on the $L^2$ metric, slightly improving over \gls{amber}, although error bounds overlap.

\subsection{Error Indicator Evaluations}
\label{app_ssec:indicator_evaluations}

We evaluate the norm of the error indicator of Equation~\ref{app_eq:error_indicator_norm} for \texttt{Poisson} (\textit{hard}) and \texttt{Laplace}, i.e., for tasks that use a concrete underlying system of equations.
Table~\ref{app_tab:indicator_evaluations} shows this error indicator norm and the number of used mesh elements, to account for the norm naturally decreasing with higher element budgets.
We find that \gls{amber} closely adheres to the element budget of the expert heuristic that was used to generate the data, and that it matches the expert in terms of error indicator. 
In contrast, many other supervised methods either fail to produce meshes with similar numbers of elements, or have worse error indicator norms, suggesting poor refinements and sub-optimal downstream simulation.
These trends highlight \gls{amber}'s utility for downstream simulations and validate the use of \gls{dcd} as a proxy for downstream simulation error.

\begin{table}[h]
    \centering
    \caption{
            Error indicator norm for \texttt{Poisson} (\textit{hard}) and \texttt{Laplace} for the expert heuristic and different supervised methods. 
            Overall trends are consistent with Figure~\ref{fig:main_results}, validating the use of \gls{dcd} as a proxy for downstream simulation error.
    }
    \label{app_tab:indicator_evaluations}
    \vspace{0.2cm}
    \begin{tabular}{lcccc}
    \toprule
     & \multicolumn{2}{c}{\textbf{Poisson}} & \multicolumn{2}{c}{\textbf{Laplace}} \\
    \cmidrule(lr){2-3} \cmidrule(lr){4-5}
    \textbf{Method} & Err. Norm & \#Elements & Err. Norm & \#Elements \\
    \midrule
    \gls{amber} & $0.031 \pm 0.001$ & $27859.7 \pm 1583.1$ & $2.555 \pm 0.050$ & $27622.5 \pm 943.1$ \\
    \gls{amber} (1-Step) & $0.032 \pm 0.001$ & $28780.9 \pm 2196.8$ & $2.568 \pm 0.039$ & $27488.7 \pm 706.0$ \\
    Image (Var.) & $0.034 \pm 0.001$ & $24836.3 \pm 1213.0$ & $2.697 \pm 0.062$ & $26745.2 \pm 866.7$ \\
    Image & $0.082 \pm 0.071$ & $130571.3 \pm 119228.3$ & $3.235 \pm 0.174$ & $29297.8 \pm 8065.7$ \\
    GraphMesh (Var.) & $0.042 \pm 0.007$ & $46841.7 \pm 15014.0$ & -- & -- \\
    GraphMesh & $0.034 \pm 0.001$ & $31378.2 \pm 4776.3$ & -- & -- \\
    \hline
    Expert & $0.033$ & $25625.2$ & $2.766$ & $25130.5$ \\
    \bottomrule
    \end{tabular}
\end{table}

\subsection{Runtime Experiments}
\label{app_ssec:runtime}

\begin{figure}[t]
\centering
\begin{tikzpicture}
\definecolor{darkgray176}{RGB}{176,176,176}
\definecolor{indianred19152101}{RGB}{191,52,101}
\definecolor{orange2361576}{RGB}{236,157,6}
\definecolor{purple11522131}{RGB}{115,22,131}
\begin{axis}[%
hide axis,
xmin=10,
xmax=50,
ymin=0,
ymax=0.1,
legend style={
    draw=white!15!black,
    legend cell align=left,
    legend columns=5,
    legend style={
        draw=none,
        column sep=1ex,
        line width=1pt,
        font=\small
    }
},
]

\addlegendimage{ultra thick, mark=diamond*, only marks, draw opacity=0, color=orange2361576}
\addlegendentry{\gls{amber}}

\addlegendimage{ultra thick, , mark=triangle*, only marks, draw opacity=0.0}
\addlegendentry{Expert}

\end{axis}
\end{tikzpicture}
\centering
\includegraphics{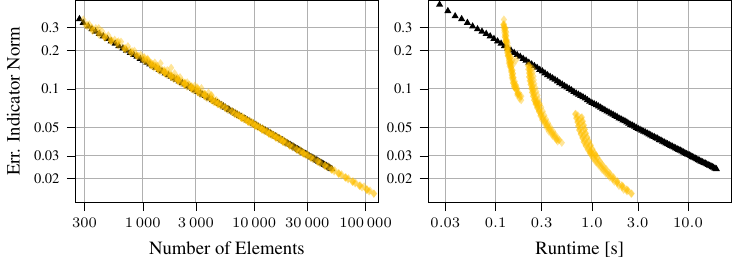}
\caption{
        Log-log plot of error indicator norm versus number of mesh elements (\textbf{left}) and runtime (\textbf{right}) for \gls{amber} and the expert across \texttt{Poisson} (\textit{easy}/\textit{medium}/\textit{hard}). Lower left is better.
        Each marker shows the mean over the test set for a given seed and target mesh resolution.
        \textbf{Left}: 
        As in Figure~\ref{fig:pareto}, \gls{amber} achieves comparable error to the expert heuristic for a given element budget.
        \textbf{Right}: 
        For a given training dataset, i.e., any of \texttt{Poisson} (\textit{easy}/\textit{medium}/\textit{hard}), \gls{amber} produces roughly the same intermediate meshes, only adapting to the element budget via the scaling constant $c_T\in[0.5,2.0]$ at the last step. 
        This process causes a distinct runtime curve for each training dataset. 
        \gls{amber} scales better with the element budget than the expert heuristic, eventually achieving a speed-up of more than $10\times$ for meshes with more than $30\,000$ elements. 
}
\label{app_fig:runtime_pareto}
\end{figure}

We explore \gls{amber}'s runtime behavior across different mesh granularities by training on \texttt{Poisson} (\textit{easy}/\textit{medium}/\textit{hard}) datasets.
We evaluate each trained model by setting the last step's scaling constant $c_T\in[0.5,2.0]$, as also done in Figure~\ref{fig:pareto}.
Figure~\ref{app_fig:runtime_pareto} compares the error indicator norm against both the number of mesh elements and the total runtime for \gls{amber} and the expert heuristic. 
Since the scaling constant only comes into effect at the last mesh generation step, the training dataset significantly influences runtime, with distinct curves for models trained with \texttt{Poisson} (\textit{easy}/\textit{medium}/\textit{hard}).
\gls{amber} attains an error comparable to the expert heuristic for all element budgets. 
However, \gls{amber} scales significantly better with larger numbers of elements.
For large meshes, \gls{amber} eventually outperforms the heuristic by more than an order of magnitude, taking less than $3$ seconds to generate a mesh with more than $100\,000$ elements.
We similarly find that \gls{amber} takes less than $5$ seconds to accurately imitate a $3$D mesh on both \texttt{Console} and \texttt{Mold}, where a human expert needs roughly $15$ to $20$ minutes for refinement.

\begin{table}[t]
    \centering
    \caption{
        Runtime breakdown of \texttt{Poisson} (\textit{easy}/\textit{hard}) in milliseconds. 
        Mesh generation is the most expensive step, and becomes more costly as the number of mesh elements increases.
    }
    \label{app_tab:runtime_split}
    \begin{tabular}{lcccc}
        \toprule
        \multirow{2}{*}{Category} & 
        \multicolumn{2}{c}{\texttt{Poisson} (\textit{easy})} & 
        \multicolumn{2}{c}{\texttt{Poisson} (\textit{hard})} \\
        \cmidrule(lr){2-3} \cmidrule(lr){4-5}
         & Mean runtime (ms) & \% of total & Mean runtime (ms) & \% of total \\
        \midrule
        Mesh to graph conversion & $15.815$ & $8.91$ & $94.620$ & $8.37$ \\
        Adding hierarchical graph & $11.219$ & $6.32$ & $12.606$ & $1.11$ \\
        Model forward & $59.963$ & $33.80$ & $155.915$ & $13.79$ \\
        Mesh generation & $90.406$ & $50.96$ & $867.760$ & $76.73$ \\
        \bottomrule
    \end{tabular}
\end{table}

Considering the cost of the individual components of \gls{amber}, Table~\ref{app_tab:runtime_split} shows that, for $c_T=1$, mesh generation quickly dominates runtime, taking up more than $50\,\%$ of total cost for \texttt{Poisson} (\textit{easy}) and jumping to more than $75\,\%$ for \texttt{Poisson} (\textit{hard}).
This relative increase in cost is explained by the O($N\log N)$ scaling of the mesh generation step, which outscales the linear graph-related operations, including the \gls{mpn} forward, especially for finer meshes.
Notably, \gls{amber} acts on coarse intermediate meshes, and that the expensive last generation step is also required for the one-step baselines.

\subsection{Algorithm Design}
\label{app_ssec:ablations_algorithm_design}

\begin{figure}[t]
\centering
    \begin{tikzpicture}
\definecolor{cadetblue80178158}{RGB}{80,178,158}
\definecolor{darkgray176}{RGB}{176,176,176}
\definecolor{deepskyblue33188255}{RGB}{33,188,255}
\definecolor{indianred19152101}{RGB}{191,52,101}
\definecolor{orange2361576}{RGB}{236,157,6}
\definecolor{plum223165229}{RGB}{223,165,229}
\definecolor{purple11522131}{RGB}{115,22,131}
\definecolor{royalblue28108204}{RGB}{28,108,204}

\begin{axis}[%
hide axis,
xmin=10,
xmax=50,
ymin=0,
ymax=0.1,
legend style={
    draw=white!15!black,
    legend cell align=left,
    legend columns=4,
    legend style={
        draw=none,
        column sep=1ex,
        line width=1pt,
        font=\small
    }
},
]

\addlegendimage{ultra thick, orange2361576}
\addlegendentry{\gls{amber}}

\addlegendimage{ultra thick, indianred19152101}
\addlegendentry{Non Hierachical}

\addlegendimage{ultra thick, purple11522131}
\addlegendentry{No Sizing Field Scaling}

\addlegendimage{ultra thick, plum223165229}
\addlegendentry{No Prediction Offset} %

\addlegendimage{white, ultra thick}
\addlegendentry{}

\addlegendimage{ultra thick, royalblue28108204}
\addlegendentry{MSE Loss}

\addlegendimage{ultra thick, deepskyblue33188255}
\addlegendentry{No Normalization}

\addlegendimage{ultra thick, , cadetblue80178158}
\addlegendentry{Random Buffer Sampling}

\end{axis}
\end{tikzpicture}%
    \begin{tikzpicture}
\pgfplotsset{every tick label/.append style={font=\scriptsize}} 
\pgfplotsset{every axis label/.append style={font=\small}} 

\definecolor{cadetblue80178158}{RGB}{80,178,158}
\definecolor{darkgray176}{RGB}{176,176,176}
\definecolor{deepskyblue33188255}{RGB}{33,188,255}
\definecolor{indianred19152101}{RGB}{191,52,101}
\definecolor{orange2361576}{RGB}{236,157,6}
\definecolor{plum223165229}{RGB}{223,165,229}
\definecolor{purple11522131}{RGB}{115,22,131}
\definecolor{royalblue28108204}{RGB}{28,108,204}

\begin{groupplot}[group style={group size=3 by 1,  horizontal sep=0.85cm}]
\nextgroupplot[
xtick=\empty, 
x=0.33cm,
height=4cm,
title style={font=\small, yshift=-0.25cm}, 
log basis y={10},
tick align=outside,
tick pos=left,
title={\texttt{Laplace \vphantom{y}}},
xmin=-0.765, xmax=7.265,
y grid style={darkgray176},
ylabel={DCD},
ymajorgrids,
ymin=0.18, ymax=0.37,
ymode=log,
ytick style={color=black},
yticklabels={0.2,0.25,0.3,0.35},
ytick={0.2,0.25,0.3,0.35},
]
\draw[draw=none,fill=orange2361576] (axis cs:-0.4,0.0001) rectangle (axis cs:0.4,0.221179231943084);
\draw[draw=none,fill=indianred19152101] (axis cs:1.1,0.0001) rectangle (axis cs:1.9,0.219230616921368);
\draw[draw=none,fill=purple11522131] (axis cs:2.1,0.0001) rectangle (axis cs:2.9,0.226353989933311);
\draw[draw=none,fill=plum223165229] (axis cs:3.1,0.0001) rectangle (axis cs:3.9,0.221853628305255);
\draw[draw=none,fill=royalblue28108204] (axis cs:4.1,0.0001) rectangle (axis cs:4.9,0.326649580877473);
\draw[draw=none,fill=deepskyblue33188255] (axis cs:5.1,0.0001) rectangle (axis cs:5.9,0.219272997672874);
\draw[draw=none,fill=cadetblue80178158] (axis cs:6.1,0.0001) rectangle (axis cs:6.9,0.22195616198323);
\path [draw=black, semithick]
(axis cs:0,0.217991227984705)
--(axis cs:0,0.224367235901464);

\path [draw=black, semithick]
(axis cs:1.5,0.21629330847059)
--(axis cs:1.5,0.222167925372146);

\path [draw=black, semithick]
(axis cs:2.5,0.222860575147877)
--(axis cs:2.5,0.229847404718745);

\path [draw=black, semithick]
(axis cs:3.5,0.218244267557354)
--(axis cs:3.5,0.225462989053156);

\path [draw=black, semithick]
(axis cs:4.5,0.32089885839872)
--(axis cs:4.5,0.332400303356226);

\path [draw=black, semithick]
(axis cs:5.5,0.216497782305461)
--(axis cs:5.5,0.222048213040287);

\path [draw=black, semithick]
(axis cs:6.5,0.219980165106137)
--(axis cs:6.5,0.223932158860322);

\nextgroupplot[
xtick=\empty, 
x=0.33cm,
height=4cm,
title style={font=\small, yshift=-0.25cm}, 
log basis y={10},
tick align=outside,
tick pos=left,
title={\texttt{Beam \vphantom{y}}},
xmin=-0.765, xmax=7.265,
y grid style={darkgray176},
ymajorgrids,
ymin=0.08, ymax=0.32,
ymode=log,
ytick style={color=black},
yticklabels={0.1,0.2,0.3},
ytick={0.1,0.2,0.3},
]
\draw[draw=none,fill=orange2361576] (axis cs:-0.4,0.0001) rectangle (axis cs:0.4,0.118940497407635);
\draw[draw=none,fill=indianred19152101] (axis cs:1.1,0.0001) rectangle (axis cs:1.9,0.159935096002947);
\draw[draw=none,fill=purple11522131] (axis cs:2.1,0.0001) rectangle (axis cs:2.9,0.141338456118828);
\draw[draw=none,fill=plum223165229] (axis cs:3.1,0.0001) rectangle (axis cs:3.9,0.140287347077331);
\draw[draw=none,fill=royalblue28108204] (axis cs:4.1,0.0001) rectangle (axis cs:4.9,0.263184271859181);
\draw[draw=none,fill=deepskyblue33188255] (axis cs:5.1,0.0001) rectangle (axis cs:5.9,0.161629084082087);
\draw[draw=none,fill=cadetblue80178158] (axis cs:6.1,0.0001) rectangle (axis cs:6.9,0.121922061723374);
\path [draw=black, semithick]
(axis cs:0,0.117521214978392)
--(axis cs:0,0.120359779836878);

\path [draw=black, semithick]
(axis cs:1.5,0.156281167423899)
--(axis cs:1.5,0.163589024581994);

\path [draw=black, semithick]
(axis cs:2.5,0.129703586995028)
--(axis cs:2.5,0.152973325242628);

\path [draw=black, semithick]
(axis cs:3.5,0.12539764993645)
--(axis cs:3.5,0.155177044218212);

\path [draw=black, semithick]
(axis cs:4.5,0.247327027604926)
--(axis cs:4.5,0.279041516113437);

\path [draw=black, semithick]
(axis cs:5.5,0.135825392671112)
--(axis cs:5.5,0.187432775493062);

\path [draw=black, semithick]
(axis cs:6.5,0.119677646103382)
--(axis cs:6.5,0.124166477343366);

\nextgroupplot[
xtick=\empty, 
x=0.33cm,
height=4cm,
title style={font=\small, yshift=-0.25cm}, 
log basis y={10},
tick align=outside,
tick pos=left,
title={\texttt{Console \vphantom{y}}},
xmin=-0.765, xmax=7.265,
y grid style={darkgray176},
ymajorgrids,
ymin=0.5, ymax=0.56,
ymode=log,
ytick style={color=black},
yticklabels={0.51,0.53,0.55},
ytick={0.51,0.53,0.55},
]
\draw[draw=none,fill=orange2361576] (axis cs:-0.4,0.0001) rectangle (axis cs:0.4,0.529926837372588);
\draw[draw=none,fill=indianred19152101] (axis cs:1.1,0.0001) rectangle (axis cs:1.9,0.529705270883762);
\draw[draw=none,fill=purple11522131] (axis cs:2.1,0.0001) rectangle (axis cs:2.9,0.541704747550457);
\draw[draw=none,fill=plum223165229] (axis cs:3.1,0.0001) rectangle (axis cs:3.9,0.533687184509434);
\draw[draw=none,fill=royalblue28108204] (axis cs:4.1,0.0001) rectangle (axis cs:4.9,0.535198016992464);
\draw[draw=none,fill=deepskyblue33188255] (axis cs:5.1,0.0001) rectangle (axis cs:5.9,0.536101774138007);
\draw[draw=none,fill=cadetblue80178158] (axis cs:6.1,0.0001) rectangle (axis cs:6.9,0.536400585764637);
\path [draw=black, semithick]
(axis cs:0,0.521627075993722)
--(axis cs:0,0.538226598751454);

\path [draw=black, semithick]
(axis cs:1.5,0.525238344733594)
--(axis cs:1.5,0.53417219703393);

\path [draw=black, semithick]
(axis cs:2.5,0.538117933871765)
--(axis cs:2.5,0.54529156122915);

\path [draw=black, semithick]
(axis cs:3.5,0.528715919212567)
--(axis cs:3.5,0.538658449806301);

\path [draw=black, semithick]
(axis cs:4.5,0.529502552913069)
--(axis cs:4.5,0.540893481071859);

\path [draw=black, semithick]
(axis cs:5.5,0.533584146327376)
--(axis cs:5.5,0.538619401948639);

\path [draw=black, semithick]
(axis cs:6.5,0.532397173128154)
--(axis cs:6.5,0.54040399840112);

\end{groupplot}

\end{tikzpicture}%
    \caption{
    Ablation study on \gls{amber} using \glsreset{dcd}\gls{dcd} across three datasets. 
    Each bar represents a variant of the model with one component removed or modified. 
    Using a non-hierarchical \gls{mpn}, omitting the prediction offset, or sampling newly generated meshes randomly degrades performance moderately, depending on the task.
    Omitting normalization or using a regular \gls{mse} loss leads to substantially worse generated meshes.
    }
    \label{app_fig:algorithm_design}
\end{figure}
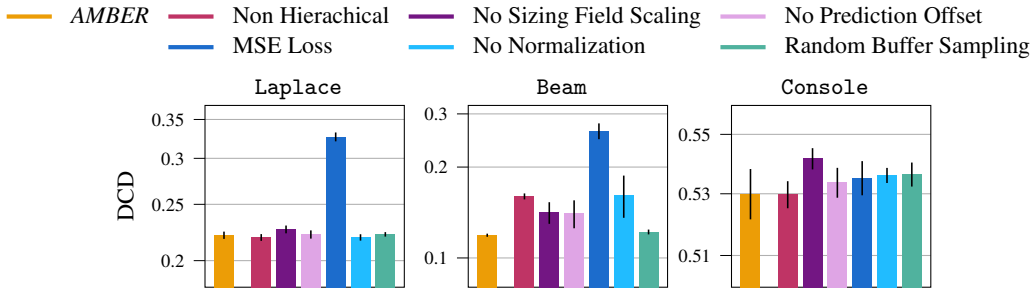

We evaluate the importance of several of \gls{amber}'s components on \texttt{Laplace}, \texttt{Beam} and \texttt{Console}.
To evaluate the impact of the loss of Equation~\ref{eq:main_loss}, we compare against an \gls{amber} \textit{(MSE Loss)} variant using a direct \gls{mse} loss between the softplus-transformed predictions and the sizing field targets, i.e.,
\begin{equation}
\label{app_eq:mse_loss}
\mathcal{L}=\frac{1}{|V|}\sum_{j=1}^{V}(\log(1+e^{x_j}) - y_j)^2\text{.}
\end{equation}

For the algorithmic components, we first replace the stratified sampling for the replay buffer with uniform sampling over all intermediate meshes (\gls{amber} \textit{(Random Buffer Sampling)}). 
This results in an over-representation of meshes with many prior refinements, leading to a skewed and unbalanced training distribution.
Next, we disable the hierarchical mesh representation, feeding only the non-hierarchical graph $\mathcal{G}$ into the \gls{mpn} (\gls{amber} \textit{(Non-hierarchical)}). 
This reduces consistency in the receptive field across and within meshes, as regions with higher local resolution require more message passing steps.
We also ablate the normalization (\gls{amber} \textit{(No Normalization)}) and the offset term $b_j$ of Section~\ref{ssec:practical_improvements} (\gls{amber} \textit{(No Prediction Offset)}).
Finally, we remove the scaling of sizing fields for intermediate meshes by setting the refinement constant of Section~\ref{ssec:practical_improvements} to $c_t{=}1$ for all $t$ (\gls{amber} \textit{(No Sizing Field Scaling)}). 
While this does not directly impact optimization, it significantly increases intermediate mesh sizes, slowing down mesh generation during training inference, and reducing the number of meshes that fit in a training batch.
Figure~\ref{app_fig:algorithm_design} presents the results of aforementioned algorithmic variants.
We find that \gls{amber} consistently performs on par with or better than its variations across all datasets. 
Replacing our loss with a regular \gls{mse} leads to the largest degradation in performance, consistently yielding worse meshes than \gls{amber} across datasets.
Depending on the dataset, different algorithmic components have different impact.
The hierarchical graph representation is crucial on \texttt{Beam}, as it requires long-range message passing to capture the spatial dependencies of the elongated geometry.
The softplus-transformed loss is essential for \texttt{Laplace} given its high element scale variation. 
The sizing field scaling only improves mesh quality slightly, but decreases the size of intermediate meshes, speeding up training and inference.
Other factors like normalization and buffer sampling have smaller effects, but still generally yield modest benefits.

\subsection{Sizing Field Parameterization}
\label{app_ssec:ablations_sizing_field}

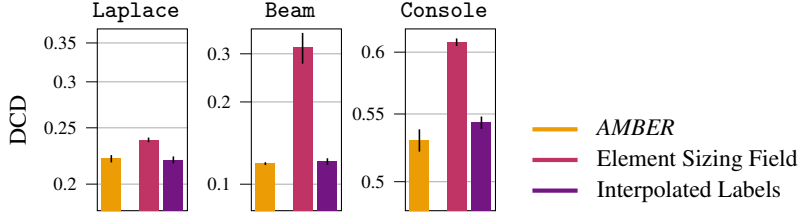
\begin{figure}[t]
    \centering
    \begin{tikzpicture}
\pgfplotsset{every tick label/.append style={font=\scriptsize}} 
\pgfplotsset{every axis label/.append style={font=\small}} 

\definecolor{darkgray176}{RGB}{176,176,176}
\definecolor{indianred19152101}{RGB}{191,52,101}
\definecolor{orange2361576}{RGB}{236,157,6}
\definecolor{purple11522131}{RGB}{115,22,131}

\begin{groupplot}[group style={group size=3 by 1,  horizontal sep=0.85cm}]
\nextgroupplot[
xtick=\empty, 
x=0.33cm,
height=4cm,
title style={font=\small, yshift=-0.25cm}, 
log basis y={10},
tick align=outside,
tick pos=left,
title={\texttt{Laplace \vphantom{y}}},
xmin=-0.565, xmax=3.065,
y grid style={darkgray176},
ylabel={DCD},
ymajorgrids,
ymin=0.18, ymax=0.37,
ymode=log,
ytick style={color=black},
yticklabels={0.2,0.25,0.3,0.35},
ytick={0.2,0.25,0.3,0.35},
]
\draw[draw=none,fill=orange2361576] (axis cs:-0.4,0.0001) rectangle (axis cs:0.4,0.221179231943084);
\draw[draw=none,fill=indianred19152101] (axis cs:1.1,0.0001) rectangle (axis cs:1.9,0.238329234901799);
\draw[draw=none,fill=purple11522131] (axis cs:2.1,0.0001) rectangle (axis cs:2.9,0.22020536657496);
\path [draw=black, semithick]
(axis cs:0,0.217991227984705)
--(axis cs:0,0.224367235901464);

\path [draw=black, semithick]
(axis cs:1.5,0.23615776376682)
--(axis cs:1.5,0.240500706036778);

\path [draw=black, semithick]
(axis cs:2.5,0.217232660683002)
--(axis cs:2.5,0.223178072466918);

\nextgroupplot[
xtick=\empty, 
x=0.33cm,
height=4cm,
title style={font=\small, yshift=-0.25cm}, 
log basis y={10},
tick align=outside,
tick pos=left,
title={\texttt{Beam \vphantom{y}}},
xmin=-0.565, xmax=3.065,
y grid style={darkgray176},
ymajorgrids,
ymin=0.08, ymax=0.37,
ymode=log,
ytick style={color=black},
yticklabels={0.1,0.2,0.3},
ytick={0.1,0.2,0.3},
]
\draw[draw=none,fill=orange2361576] (axis cs:-0.4,0.0001) rectangle (axis cs:0.4,0.118940497407635);
\draw[draw=none,fill=indianred19152101] (axis cs:1.1,0.0001) rectangle (axis cs:1.9,0.316928581212376);
\draw[draw=none,fill=purple11522131] (axis cs:2.1,0.0001) rectangle (axis cs:2.9,0.1210501598308);
\path [draw=black, semithick]
(axis cs:0,0.117521214978392)
--(axis cs:0,0.120359779836878);

\path [draw=black, semithick]
(axis cs:1.5,0.276098656482907)
--(axis cs:1.5,0.357758505941845);

\path [draw=black, semithick]
(axis cs:2.5,0.117666284150709)
--(axis cs:2.5,0.124434035510891);

\nextgroupplot[
xtick=\empty, 
x=0.33cm,
height=4cm,
title style={font=\small, yshift=-0.25cm}, 
log basis y={10},
tick align=outside,
tick pos=left,
title={\texttt{Console \vphantom{y}}},
xmin=-0.565, xmax=3.065,
y grid style={darkgray176},
ymajorgrids,
ymin=0.48, ymax=0.62,
ymode=log,
ytick style={color=black},
yticklabels={0.5,0.55,0.6},
ytick={0.5,0.55,0.6},
]
\draw[draw=none,fill=orange2361576] (axis cs:-0.4,0.0001) rectangle (axis cs:0.4,0.529926837372588);
\draw[draw=none,fill=indianred19152101] (axis cs:1.1,0.0001) rectangle (axis cs:1.9,0.608460132098668);
\draw[draw=none,fill=purple11522131] (axis cs:2.1,0.0001) rectangle (axis cs:2.9,0.543325473390879);
\path [draw=black, semithick]
(axis cs:0,0.521627075993722)
--(axis cs:0,0.538226598751454);

\path [draw=black, semithick]
(axis cs:1.5,0.605190726895973)
--(axis cs:1.5,0.611729537301364);

\path [draw=black, semithick]
(axis cs:2.5,0.538613156477545)
--(axis cs:2.5,0.548037790304212);

\end{groupplot}

\end{tikzpicture}
    \begin{tikzpicture}
\definecolor{cadetblue80178158}{RGB}{80,178,158}
\definecolor{darkgray176}{RGB}{176,176,176}
\definecolor{deepskyblue33188255}{RGB}{33,188,255}
\definecolor{indianred19152101}{RGB}{191,52,101}
\definecolor{orange2361576}{RGB}{236,157,6}
\definecolor{plum223165229}{RGB}{223,165,229}
\definecolor{purple11522131}{RGB}{115,22,131}
\definecolor{royalblue28108204}{RGB}{28,108,204}

\begin{axis}[%
hide axis,
xmin=10,
xmax=50,
ymin=0,
ymax=0.1,
legend style={
    draw=white!15!black,
    legend cell align=left,
    legend columns=1,
    legend style={
        draw=none,
        column sep=1ex,
        line width=1pt,
        font=\small
    }
},
]

\addlegendimage{ultra thick, orange2361576}
\addlegendentry{\gls{amber}}

\addlegendimage{ultra thick, indianred19152101}
\addlegendentry{Element Sizing Field}

\addlegendimage{ultra thick, purple11522131}
\addlegendentry{Interpolated Labels}

\end{axis}
\end{tikzpicture}%
    \caption{
        \gls{dcd} comparison across tasks for different sizing field parameterizations. Interpolating the labels closely matches \gls{amber}'s parameterization, reflecting similar optimization objectives. 
        In contrast, using a piecewise-constant sizing field on the elements yields worse meshes on \texttt{Beam} and \texttt{Console}, likely due to reduced expressiveness.
    }
    \label{app_fig:sizing_field_ablation}
\end{figure}

We experiment with different parameterizations of the predicted sizing field on \texttt{Laplace}, \texttt{Beam} and \texttt{Console}.
Given an expert mesh $M^*$, we consider using the vertex-interpolated expert sizing field $f(v_j)$, as defined in Equation~\ref{eq:vertex_sizing_field} to define labels $y_j=\mathcal{I}_{M^*}(f)(\mathbf{p}(v_j))$ using the interpolant of Equation~\ref{eq:constant_sizing_field}.
We call this variant \gls{amber}~\textit{(Interpolated Labels)}.

Additionally, we consider a version that predicts a piecewise-constant sizing field $\hat{f}_e(M_i)$ on the elements $M_i$ instead of a piecewise-linear sizing field $\hat{f}(v_j)$ on the vertices $v_j$.
Here, the corresponding interpolant $\mathcal{I}_M(f_e)$ is just the union over the element's predictions evaluated at their subdomain, i.e.,

\begin{equation*}
\label{eq:constant_sizing_field}
\mathcal{I}_M(f_e)(\mathbf{z}) = 
\begin{cases}
f_e(M_i), & \text{if } \mathbf{z} \in \Omega_i \text{ for some } i, \\
f_e(M_{i'}), & \text{otherwise, where } i' = \arg\min\limits_i \|\mathbf{z} - \mathbf{p}(M_i)\| \text{,}
\end{cases}
\end{equation*}

where $\mathbf{p}(M_i)$ denotes the position of the element's midpoint.
We assign each element the integrated sizing field of all expert elements that it contains, i.e., we compute a volume-weighted average of the sizing field values from the fine mesh elements whose midpoints lie within the coarse element. Let $f_e^*(M_k^*)$ be the sizing field on the fine elements and $V(M_k^*)$ their volume. For each element $M_i$ of the current mesh, we compute the target sizing field as

\begin{equation*}
    y_i = 
\begin{cases}
\displaystyle \sum_{k \in \mathcal{J}_i}\frac{ V(M_k^*) }{\sum_{k \in \mathcal{J}_i} V(M_k^*)}f_e(M_k^*), & \text{if } \mathcal{J}_i \neq \emptyset, 
\\
f_e(M_k^*), & \text{if } \mathbf{p}(M_i) \in \Omega^*_{k} \text{ for some } M_k^*, \\
f_e(M_{k'}^*), & \text{otherwise, where } {k'} = \arg\min_{k} \|\mathbf{p}(M_i) - \mathbf{p}(M_k^*)\|,
\end{cases}
\end{equation*}

where $\mathcal{J}_i = \{ j \mid \mathbf{p}(M_k^*) \in M_i \}$ is the set of expert elements whose midpoints lie within the element $M_i$. 
If there are no such elements, we first attempt to find an expert element $M_{j'}$ that contains the midpoint of $M_i$.
If that also fails, the meshes represent different discretizations of the underlying domain.
Here, we fall back to nearest-neighbor interpolation using the element midpoint positions.
We adapt the \gls{mpn} input accordingly, constructing the graph $\mathcal{G}$ over mesh elements and element neighborhood.
We use the same graph node and edge features, except for the neighborhood size, and always evaluate position-dependent features at the element midpoint.
This process yields a variant \gls{amber}~\textit{(Element Sizing Field)}.

Figure~\ref{app_fig:sizing_field_ablation} visualizes results.
We find that \gls{amber} \textit{(Interpolated Labels)} performs very similarly to \gls{amber}, likely because both optimize a similar objective.
While there are small differences in the concrete sizing field targets, especially for early generation steps and coarser input meshes, both parameterizations work well.
Here, both parameterizations provide targets that aim to coarsen too-fine regions, while increasing the resolution in too-coarse regions of the current mesh, eventually converging to very similar generated meshes.
In contrast, \gls{amber} \textit{Element Sizing Field} predicts a piecewise-constant sizing field over mesh elements. 
While this works well on \texttt{Laplace}, the reduced expressiveness of this parameterization compared to the piecewise-linear interpolant of Equation~\ref{eq:continuous_sizing_field} yields significantly worse meshes on both \texttt{Beam} and \texttt{Console}.

\subsection{Data Efficiency}
\label{app_ssec:data_efficiency}

\begin{figure}[t]
\begin{tikzpicture}

\definecolor{darkgray176}{RGB}{176,176,176}
\definecolor{firebrick1906030}{RGB}{190,60,30}
\definecolor{khaki230225120}{RGB}{230,225,120}
\definecolor{lightgoldenrodyellow240250210}{RGB}{240,250,210}
\definecolor{orange2361576}{RGB}{236,157,6}
\definecolor{sandybrown24519560}{RGB}{245,195,60}

\pgfplotsset{every tick label/.append style={font=\scriptsize}} 
\pgfplotsset{every axis label/.append style={font=\small}} 

\begin{groupplot}[
  group style={
    group size=3 by 1,
    horizontal sep=1cm,
    ylabels at=edge left,
  },
  width=0.27\textwidth,
  height=4.0cm,
]

\nextgroupplot[
  title={\texttt{Laplace}},
  ylabel={DCD},
  title style={font=\small, yshift=-0.25cm},
  tick align=outside,
  tick pos=left,
  xtick style={color=black},
  ytick style={color=black},
  y grid style={darkgray176},
  ymajorgrids,
  xmin=-0.64, xmax=4.64,
  ymin=0.19, ymax=0.255,
  xtick=\empty, 
]
\draw[draw=none,fill=lightgoldenrodyellow240250210] (axis cs:-0.4,0) rectangle (axis cs:0.4,0.24);
\draw[draw=none,fill=khaki230225120] (axis cs:0.6,0) rectangle (axis cs:1.4,0.231);
\draw[draw=none,fill=sandybrown24519560] (axis cs:1.6,0) rectangle (axis cs:2.4,0.224);
\draw[draw=none,fill=orange2361576] (axis cs:2.6,0) rectangle (axis cs:3.4,0.222);
\draw[draw=none,fill=firebrick1906030] (axis cs:3.6,0) rectangle (axis cs:4.4,0.212);
\path [draw=black, semithick] (axis cs:0,0.233)--(axis cs:0,0.247);
\path [draw=black, semithick] (axis cs:1,0.225)--(axis cs:1,0.237);
\path [draw=black, semithick] (axis cs:2,0.221)--(axis cs:2,0.227);
\path [draw=black, semithick] (axis cs:3,0.219)--(axis cs:3,0.225);
\path [draw=black, semithick] (axis cs:4,0.21)--(axis cs:4,0.214);

\nextgroupplot[
  title={\texttt{Beam \vphantom{y}}},
  title style={font=\small, yshift=-0.25cm},
  tick align=outside,
  tick pos=left,
  xtick style={color=black},
  ytick style={color=black},
  y grid style={darkgray176},
  ymajorgrids,
  xmin=-0.59, xmax=3.59,
  ymin=0.11, ymax=0.17,
  xtick=\empty, 
]
\draw[draw=none,fill=lightgoldenrodyellow240250210] (axis cs:-0.4,0) rectangle (axis cs:0.4,0.156);
\draw[draw=none,fill=khaki230225120] (axis cs:0.6,0) rectangle (axis cs:1.4,0.134);
\draw[draw=none,fill=sandybrown24519560] (axis cs:1.6,0) rectangle (axis cs:2.4,0.129);
\draw[draw=none,fill=orange2361576] (axis cs:2.6,0) rectangle (axis cs:3.4,0.122);
\path [draw=black, semithick] (axis cs:0,0.154)--(axis cs:0,0.158);
\path [draw=black, semithick] (axis cs:1,0.131)--(axis cs:1,0.137);
\path [draw=black, semithick] (axis cs:2,0.127)--(axis cs:2,0.131);
\path [draw=black, semithick] (axis cs:3,0.119)--(axis cs:3,0.125);

\nextgroupplot[
  title={\texttt{Console \vphantom{y}}},
  title style={font=\small, yshift=-0.25cm},
  tick align=outside,
  tick pos=left,
  xtick style={color=black},
  ytick style={color=black},
  y grid style={darkgray176},
  ymajorgrids,
  xmin=-0.59, xmax=3.59,
  ymin=0.525, ymax=0.555,
  xtick=\empty, 
]
\draw[draw=none,fill=lightgoldenrodyellow240250210] (axis cs:-0.4,0) rectangle (axis cs:0.4,0.542);
\draw[draw=none,fill=khaki230225120] (axis cs:0.6,0) rectangle (axis cs:1.4,0.54);
\draw[draw=none,fill=sandybrown24519560] (axis cs:1.6,0) rectangle (axis cs:2.4,0.534);
\draw[draw=none,fill=orange2361576] (axis cs:2.6,0) rectangle (axis cs:3.4,0.535);
\path [draw=black, semithick] (axis cs:0,0.537)--(axis cs:0,0.547);
\path [draw=black, semithick] (axis cs:1,0.531)--(axis cs:1,0.549);
\path [draw=black, semithick] (axis cs:2,0.53)--(axis cs:2,0.538);
\path [draw=black, semithick] (axis cs:3,0.531)--(axis cs:3,0.539);

\end{groupplot}
\end{tikzpicture}
\begin{tikzpicture}
\definecolor{cadetblue80178158}{RGB}{80,178,158}
\definecolor{darkgray176}{RGB}{176,176,176}
\definecolor{deepskyblue33188255}{RGB}{33,188,255}
\definecolor{indianred19152101}{RGB}{191,52,101}
\definecolor{orange2361576}{RGB}{236,157,6}
\definecolor{plum223165229}{RGB}{223,165,229}
\definecolor{purple11522131}{RGB}{115,22,131}
\definecolor{royalblue28108204}{RGB}{28,108,204}

\definecolor{darkgray176}{RGB}{176,176,176}
\definecolor{firebrick1906030}{RGB}{190,60,30}
\definecolor{khaki230225120}{RGB}{230,225,120}
\definecolor{lightgoldenrodyellow240250210}{RGB}{240,250,210}
\definecolor{orange2361576}{RGB}{236,157,6}
\definecolor{sandybrown24519560}{RGB}{245,195,60}

\begin{axis}[%
hide axis,
xmin=10,
xmax=50,
ymin=0,
ymax=0.1,
legend style={
    draw=white!15!black,
    legend cell align=left,
    legend columns=1,
    legend style={
        draw=none,
        column sep=1ex,
        line width=1pt,
        font=\small
    }
},
]
\addlegendimage{ultra thick, lightgoldenrodyellow240250210}
\addlegendentry{5 Samples}

\addlegendimage{ultra thick, khaki230225120}
\addlegendentry{10 Samples}

\addlegendimage{ultra thick, sandybrown24519560}
\addlegendentry{15 Samples}

\addlegendimage{ultra thick, orange2361576}
\addlegendentry{20 Samples}

\addlegendimage{ultra thick, white}
\addlegendentry{(\gls{amber})}

\addlegendimage{ultra thick, firebrick1906030}
\addlegendentry{100 Samples}

\end{axis}
\end{tikzpicture}%
    \caption{
        \gls{dcd} comparison for \gls{amber} with different numbers of training samples for \texttt{Laplace}, \texttt{Beam} and \texttt{Console}. 
        \gls{amber} performs well with as little as $5$ samples, but steadily improves for up to $20$ samples. On \texttt{Laplace}, where additional training data is easy to generate, there is a moderate improvement for $100$ instead of $20$ train meshes and geometries.
    }
    \label{app_fig:sample_size}
\end{figure}

Figure~\ref{app_fig:sample_size} assesses \gls{amber}'s data efficiency.
All other training settings are held constant, and evaluation is performed on the original test set.
Accurate mesh generation is achieved with as few as five training meshes and corresponding geometries.
Using more samples consistently improves performance.
On \texttt{Laplace}, where training data can be easily generated via the expert heuristic, there are additional improvements for $100$ instead of $20$ meshes.

\gls{amber}'s efficient use of data likely stems from the local, per-node loss in Equation~\ref{eq:main_loss} and the symmetry-preserving features and structure of the~\gls{mpn} architecture.

\subsection{Mesh Generation Steps}
\label{app_ssec:generation_steps}

\begin{figure}[t]
    \centering
    \begin{tikzpicture}

\definecolor{darkgray176}{RGB}{176,176,176}
\definecolor{firebrick1906030}{RGB}{190,60,30}
\definecolor{khaki230225120}{RGB}{230,225,120}
\definecolor{lightgoldenrodyellow240250210}{RGB}{240,250,210}
\definecolor{orange2361576}{RGB}{236,157,6}
\definecolor{sandybrown24519560}{RGB}{245,195,60}

\pgfplotsset{every tick label/.append style={font=\scriptsize}} 
\pgfplotsset{every axis label/.append style={font=\small}} 

\begin{groupplot}[
  group style={
    group size=3 by 1,
    horizontal sep=1cm,
    ylabels at=edge left,
  },
  width=0.27\textwidth,
  height=4.0cm,
]

\nextgroupplot[
  title={\texttt{Laplace}},
  ylabel={DCD},
  title style={font=\small, yshift=-0.25cm},
  tick align=outside,
  tick pos=left,
  xtick style={color=black},
  ytick style={color=black},
  y grid style={darkgray176},
  ymajorgrids,
  xmin=-0.64, xmax=4.64,
  ymin=0.213, ymax=0.237,
  xtick=\empty, 
]
\draw[draw=none,fill=khaki230225120] (axis cs:-0.4,0) rectangle (axis cs:0.4,0.231914891976621);
\draw[draw=none,fill=sandybrown24519560] (axis cs:0.6,0) rectangle (axis cs:1.4,0.223);
\draw[draw=none,fill=orange2361576] (axis cs:1.6,0) rectangle (axis cs:2.4,0.221179231943084);
\draw[draw=none,fill=firebrick1906030] (axis cs:2.6,0) rectangle (axis cs:3.4,0.223);
\path [draw=black, semithick] (axis cs:0,0.229743180361744)--(axis cs:0,0.234086603591498);
\path [draw=black, semithick] (axis cs:1,0.221)--(axis cs:1,0.225);
\path [draw=black, semithick] (axis cs:2,0.217991227984705)--(axis cs:2,0.224367235901464);
\path [draw=black, semithick] (axis cs:3,0.221)--(axis cs:3,0.225);

\nextgroupplot[
  title={\texttt{Beam \vphantom{y}}},
  title style={font=\small, yshift=-0.25cm},
  tick align=outside,
  tick pos=left,
  xtick style={color=black},
  ytick style={color=black},
  y grid style={darkgray176},
  ymajorgrids,
  xmin=-0.59, xmax=3.59,
  ymin=0.09, ymax=0.17,
  xtick=\empty, 
]
\draw[draw=none,fill=khaki230225120] (axis cs:-0.4,0) rectangle (axis cs:0.4,0.144538072599842);
\draw[draw=none,fill=sandybrown24519560] (axis cs:0.6,0) rectangle (axis cs:1.4,0.126);
\draw[draw=none,fill=orange2361576] (axis cs:1.6,0) rectangle (axis cs:2.4,0.122);
\draw[draw=none,fill=firebrick1906030] (axis cs:2.6,0) rectangle (axis cs:3.4,0.119);
\path [draw=black, semithick] (axis cs:0,0.137984586122174)--(axis cs:0,0.151091559077509);
\path [draw=black, semithick] (axis cs:1,0.122)--(axis cs:1,0.130);
\path [draw=black, semithick] (axis cs:2,0.119)--(axis cs:2,0.125);
\path [draw=black, semithick] (axis cs:3,0.117)--(axis cs:3,0.121);

\nextgroupplot[
  title={\texttt{Console \vphantom{y}}},
  title style={font=\small, yshift=-0.25cm},
  tick align=outside,
  tick pos=left,
  xtick style={color=black},
  ytick style={color=black},
  y grid style={darkgray176},
  ymajorgrids,
  xmin=-0.59, xmax=3.59,
  ymin=0.517, ymax=0.548,
  xtick=\empty, 
]
\draw[draw=none,fill=khaki230225120] (axis cs:-0.4,0) rectangle (axis cs:0.4,0.536675554521954);
\draw[draw=none,fill=sandybrown24519560] (axis cs:0.6,0) rectangle (axis cs:1.4,0.537);
\draw[draw=none,fill=orange2361576] (axis cs:1.6,0) rectangle (axis cs:2.4,0.529926837372588);
\draw[draw=none,fill=firebrick1906030] (axis cs:2.6,0) rectangle (axis cs:3.4,0.540);
\path [draw=black, semithick] (axis cs:0,0.532295386475416)--(axis cs:0,0.541055722568492);
\path [draw=black, semithick] (axis cs:1,0.533)--(axis cs:1,0.541);
\path [draw=black, semithick] (axis cs:2,0.521627075993722)--(axis cs:2,0.538226598751454);
\path [draw=black, semithick] (axis cs:3,0.535)--(axis cs:3,0.545);

\end{groupplot}
\end{tikzpicture}
    \begin{tikzpicture}
\definecolor{cadetblue80178158}{RGB}{80,178,158}
\definecolor{darkgray176}{RGB}{176,176,176}
\definecolor{deepskyblue33188255}{RGB}{33,188,255}
\definecolor{indianred19152101}{RGB}{191,52,101}
\definecolor{orange2361576}{RGB}{236,157,6}
\definecolor{plum223165229}{RGB}{223,165,229}
\definecolor{purple11522131}{RGB}{115,22,131}
\definecolor{royalblue28108204}{RGB}{28,108,204}

\definecolor{darkgray176}{RGB}{176,176,176}
\definecolor{firebrick1906030}{RGB}{190,60,30}
\definecolor{khaki230225120}{RGB}{230,225,120}
\definecolor{lightgoldenrodyellow240250210}{RGB}{240,250,210}
\definecolor{orange2361576}{RGB}{236,157,6}
\definecolor{sandybrown24519560}{RGB}{245,195,60}

\begin{axis}[%
hide axis,
xmin=10,
xmax=50,
ymin=0,
ymax=0.1,
legend style={
    draw=white!15!black,
    legend cell align=left,
    legend columns=1,
    legend style={
        draw=none,
        column sep=1ex,
        line width=1pt,
        font=\small
    }
},
]
\addlegendimage{ultra thick, khaki230225120}
\addlegendentry{1 Step}
\addlegendimage{ultra thick, white}
\addlegendentry{(\gls{amber} (1-Step))}

\addlegendimage{ultra thick, sandybrown24519560}
\addlegendentry{2 Steps}

\addlegendimage{ultra thick, orange2361576}
\addlegendentry{3 Steps}
\addlegendimage{ultra thick, white}
\addlegendentry{(\gls{amber})}

\addlegendimage{ultra thick, firebrick1906030}
\addlegendentry{4 Steps}

\end{axis}
\end{tikzpicture}%
    \caption{
        \gls{dcd} comparison for \gls{amber} with different numbers of mesh generation steps for \texttt{Laplace}, \texttt{Beam} and \texttt{Console}. 
        \gls{amber} improves for more mesh generation steps, and converges at around three steps.
        A single mesh generation step is insufficient for accurate generations, likely because it acts on a fixed mesh resolution. 
        Results for \gls{amber} and \gls{amber} (1-Step) are taken from Figure~\ref{fig:main_results}.
    }
    \label{app_fig:generation_steps}
\end{figure}

We evaluate how \gls{amber} behaves for different numbers of mesh generation steps. 
In particular, we use three generation steps in all main experiments, and have a single step for \gls{amber} (1-Step) as a baseline that acts on a tuned but fixed mesh resolution per task.
Figure~\ref{app_fig:generation_steps} shows that \gls{amber} improves for more mesh generation steps, converging at roughly three steps.
Despite tuning the initial mesh size, a single mesh generation step is insufficient for optimal performance, presumably because it does not allow for arbitrarily fine sizing field resolution.
In contrast, starting from two mesh generation steps, \gls{amber} learns to predict the sizing field used to generate its intermediate meshes, allowing for a flexible, adaptive sizing field representations.

\subsection{Image Ablations}
\label{app_ssec:image_ablations}

\begin{figure}[t]
    \centering
    \begin{tikzpicture}
\pgfplotsset{every tick label/.append style={font=\scriptsize}} 
\pgfplotsset{every axis label/.append style={font=\small}} 

\definecolor{cadetblue80178158}{RGB}{80,178,158}
\definecolor{darkgray176}{RGB}{176,176,176}
\definecolor{deepskyblue33188255}{RGB}{33,188,255}
\definecolor{indianred19152101}{RGB}{191,52,101}
\definecolor{purple11522131}{RGB}{115,22,131}
\definecolor{royalblue28108204}{RGB}{28,108,204}
\definecolor{yellowgreen18420623}{RGB}{184,206,23}

\begin{axis}[
xtick=\empty, 
width=0.5\textwidth,
height=4cm,
title style={font=\small, yshift=-0.25cm}, 
log basis y={10},
tick align=outside,
tick pos=left,
title={\texttt{Laplace \vphantom{y}}},
xmin=-0.74, xmax=6.74,
y grid style={darkgray176},
ylabel={DCD},
ymajorgrids,
ymin=0.22, ymax=0.32,
ymode=log,
ytick style={color=black},
yticklabels={0.25,0.3},
ytick={0.25,0.3},
]
\draw[draw=none,fill=royalblue28108204] (axis cs:-0.4,0.0001) rectangle (axis cs:0.4,0.252501531375795);
\draw[draw=none,fill=indianred19152101] (axis cs:1.1,0.0001) rectangle (axis cs:1.9,0.290708924602756);
\draw[draw=none,fill=purple11522131] (axis cs:2.1,0.0001) rectangle (axis cs:2.9,0.263639093998036);
\draw[draw=none,fill=deepskyblue33188255] (axis cs:3.6,0.0001) rectangle (axis cs:4.4,0.260267232261977);
\draw[draw=none,fill=cadetblue80178158] (axis cs:4.6,0.0001) rectangle (axis cs:5.4,0.271691038143941);
\draw[draw=none,fill=yellowgreen18420623] (axis cs:5.6,0.0001) rectangle (axis cs:6.4,0.294843511724031);
\path [draw=black, semithick]
(axis cs:0,0.246285319391829)
--(axis cs:0,0.25871774335976);

\path [draw=black, semithick]
(axis cs:1.5,0.280249357288967)
--(axis cs:1.5,0.301168491916544);

\path [draw=black, semithick]
(axis cs:2.5,0.254474574595375)
--(axis cs:2.5,0.272803613400697);

\path [draw=black, semithick]
(axis cs:4,0.251389659313882)
--(axis cs:4,0.269144805210073);

\path [draw=black, semithick]
(axis cs:5,0.268565582397477)
--(axis cs:5,0.274816493890405);

\path [draw=black, semithick]
(axis cs:6,0.293679435205529)
--(axis cs:6,0.296007588242533);

\end{axis}

\end{tikzpicture}
    \begin{tikzpicture}
\definecolor{cadetblue80178158}{RGB}{80,178,158}
\definecolor{darkgray176}{RGB}{176,176,176}
\definecolor{deepskyblue33188255}{RGB}{33,188,255}
\definecolor{indianred19152101}{RGB}{191,52,101}
\definecolor{purple11522131}{RGB}{115,22,131}
\definecolor{royalblue28108204}{RGB}{28,108,204}
\definecolor{yellowgreen18420623}{RGB}{184,206,23}

\begin{axis}[%
hide axis,
xmin=10,
xmax=50,
ymin=0,
ymax=0.1,
legend style={
    draw=white!15!black,
    legend cell align=left,
    legend columns=1,
    legend style={
        draw=none,
        column sep=1ex,
        line width=1pt,
        font=\small
    }
},
]

\addlegendimage{ultra thick, white}
\addlegendentry{~}

\addlegendimage{ultra thick, royalblue28108204}
\addlegendentry{\textit{Image (Var.)}}

\addlegendimage{ultra thick, white}
\addlegendentry{($512 \times 512$)}

\addlegendimage{ultra thick, indianred19152101}
\addlegendentry{MSE Loss}

\addlegendimage{ultra thick, purple11522131}
\addlegendentry{No Norm.}

\addlegendimage{ultra thick, deepskyblue33188255}
\addlegendentry{$256 \times 256$}

\addlegendimage{ultra thick, cadetblue80178158}
\addlegendentry{$128 \times 128$}

\addlegendimage{ultra thick, yellowgreen18420623}
\addlegendentry{$64 \times 64$}

\end{axis}
\end{tikzpicture}%
    \caption{
        \gls{dcd} comparison for different \textit{Image (Variant)} ablations. 
        Both the loss of Equation 2 and normalization
        are crucial for \textit{Image (Variant)}. Performance improves with higher image resolutions, although the
        rate of improvement eventually slows down.
    }
    \label{app_fig:image_ablation}
\end{figure}

We explore the behavior of the \textit{Image (Variant)} baseline on the \texttt{Laplace} task.
We vary image resolution and remove either input/output normalization (\textit{Image (No Normalization)}) or the loss from Equation~\ref{eq:main_loss}, replacing the latter with the \gls{mse} loss over the pixel-wise sizing fields (\textit{Image (MSE Loss)}).
Omitting both components recovers the original \textit{Image} baseline.
In all cases, we still use a softplus to transform the predictions, as we find that directly predicting a sizing field leads to worse performance and unstable mesh generation.
Figure~\ref{app_fig:image_ablation} shows that performance improves with image resolution, although gains diminish at finer scales. 
Since the \textit{Image (Variant)} enforces a uniform resolution by design, adapting to high-detail regions becomes prohibitively expensive, leading to substantial waste in less sensitive areas.
In contrast, \gls{amber} allows for variable sampling resolutions of the predicted sizing field by design, ensuring a more efficient prediction process, especially for highly adaptive meshes.
Other than that, both normalization and our loss are crucial for accurate mesh generation, which is consistent with the~\gls{amber} ablations in Section~\ref{app_ssec:ablations_algorithm_design}.

\subsection{\textit{ASMR} (Error Indicator)}
\label{app_ssec:asmr_variant}

\begin{figure}[t]
\centering
\includegraphics{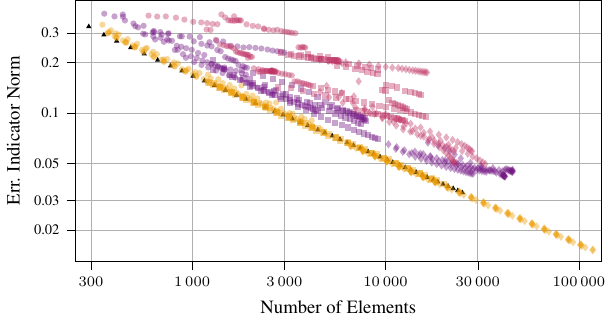}
\caption{
        Log-log plot of error indicator norm versus number of mesh elements (lower left is better) for \gls{amber}, \textit{\gls{asmr}}, \textit{\gls{asmr} (Error Indicator)} and the expert across \texttt{Poisson} (\textit{easy}, \textit{medium}, \textit{hard}). 
        Each marker shows the mean over the test set for a given seed and target mesh resolution.
        This figure overlays Figure~\ref{fig:pareto} with \textit{\gls{asmr} (Error Indicator)}.
        We find that training \textit{\gls{asmr}} on the indicator error yields less reliable, more noisy policies.
        However, as this \textit{\gls{asmr}} variant is no longer constrained to a fixed refinement depth, it does not degrade as strongly for high-resolution meshes.
        }
\label{app_fig:pareto_extended}
\end{figure}

We experiment with a version of \textit{\gls{asmr}} that uses the error indicator in its reward function, i.e., sets~$\text{err}(M_i^t)$ in Equation~\ref{app_eq:asmr_local_reward} to Equation~\ref{app_eq:error_indicator_poisson}, leaving the rest of the reward unchanged.
To further adapt the resulting \textit{\gls{asmr} (Error Indicator)} version to our setup, we disable normalization of the initial errors, as we found this to be unstable when using the indicator, and adapt the \gls{mpn} architecture to $10$ message passing steps.
We then increase the number of refinement steps to $7/9/11$ for \texttt{Poisson} (\textit{easy}/\textit{medium}/\textit{hard}), allowing for elements with maximum refinement depth to be of the same size as the minimum expert elements.

Figure~\ref{app_fig:pareto_extended} overlays Figure~\ref{fig:pareto} with \textit{\gls{asmr} (Error Indicator)}.
We find that this method performs worse than \textit{\gls{asmr}}, presumably because the error indicator is less expressive than the integrated reward.
In comparison to the integrated reward, the indicator is noiser, yielding low relative contrast for elements of the same scale.
This imbalance makes the reward function harder to optimize, reducing the consistency of the resulting policy.
Yet, the indicator does not constraint the refinement depth, allowing for higher mesh resolutions and thus less saturation in simulation quality for finer meshes.

\section{Visualizations}
\label{app_sec:Visualizations}

We provide additional visualizations for \gls{amber} on all datasets, and for all methods on \texttt{Poisson} (\textit{hard}). 
We visualize the first test data point on the first trained seed for all methods.
All visualizations include the expert mesh for reference, and zoom in on a representative region of the geometry.

\subsection{Baseline Comparisons}
\label{app_ssec:full_baseline_comparison}

\begin{figure}[t]
  \centering
  \begin{subfigure}[c]{0.395\textwidth}
    \centering
    \includegraphics[width=\linewidth]{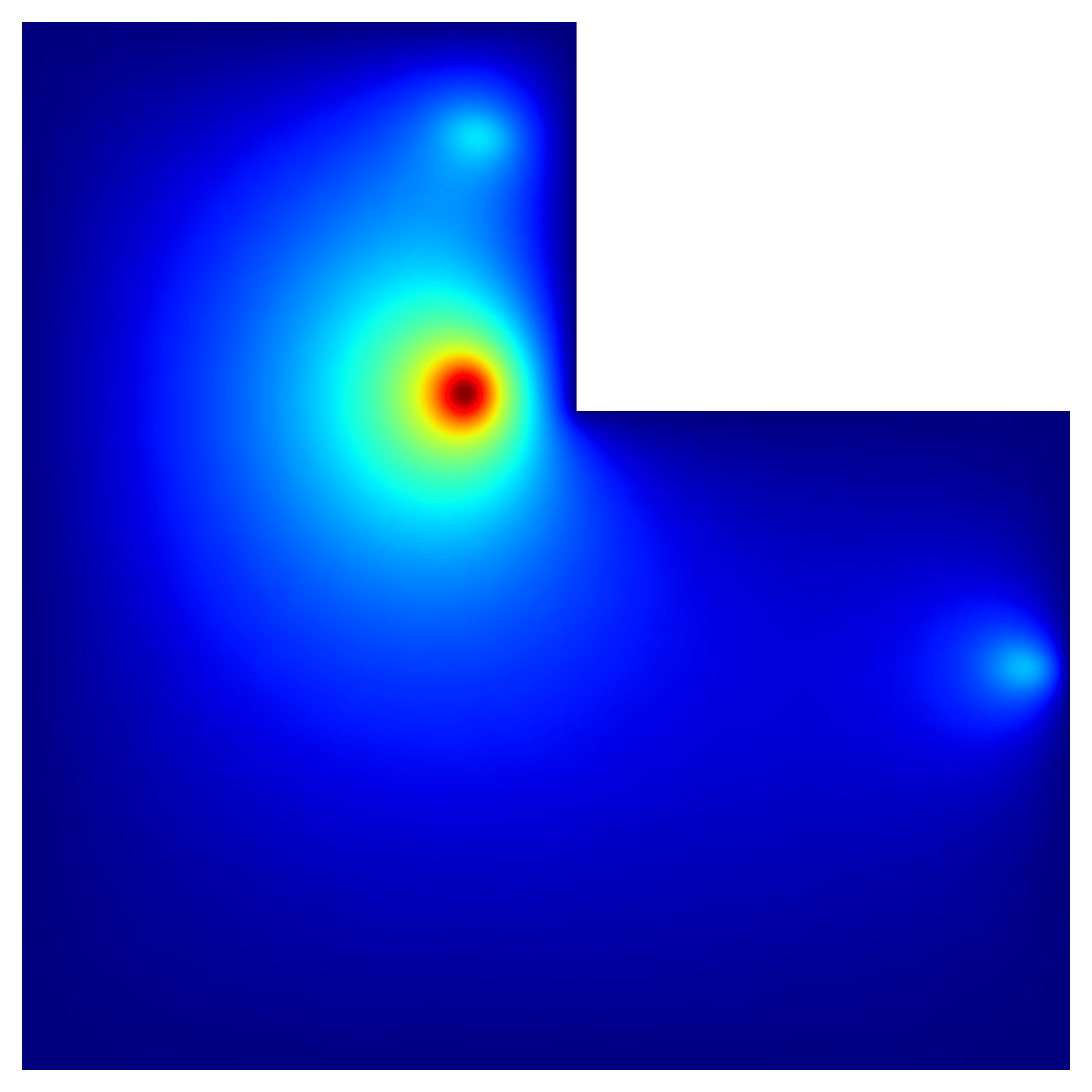}
    \caption*{\texttt{Poisson} (Expert, full view)}
    \vspace{2pt}
  \end{subfigure}\hfill
  \begin{subfigure}[c]{0.195\textwidth}
    \centering
    \includegraphics[width=\linewidth]{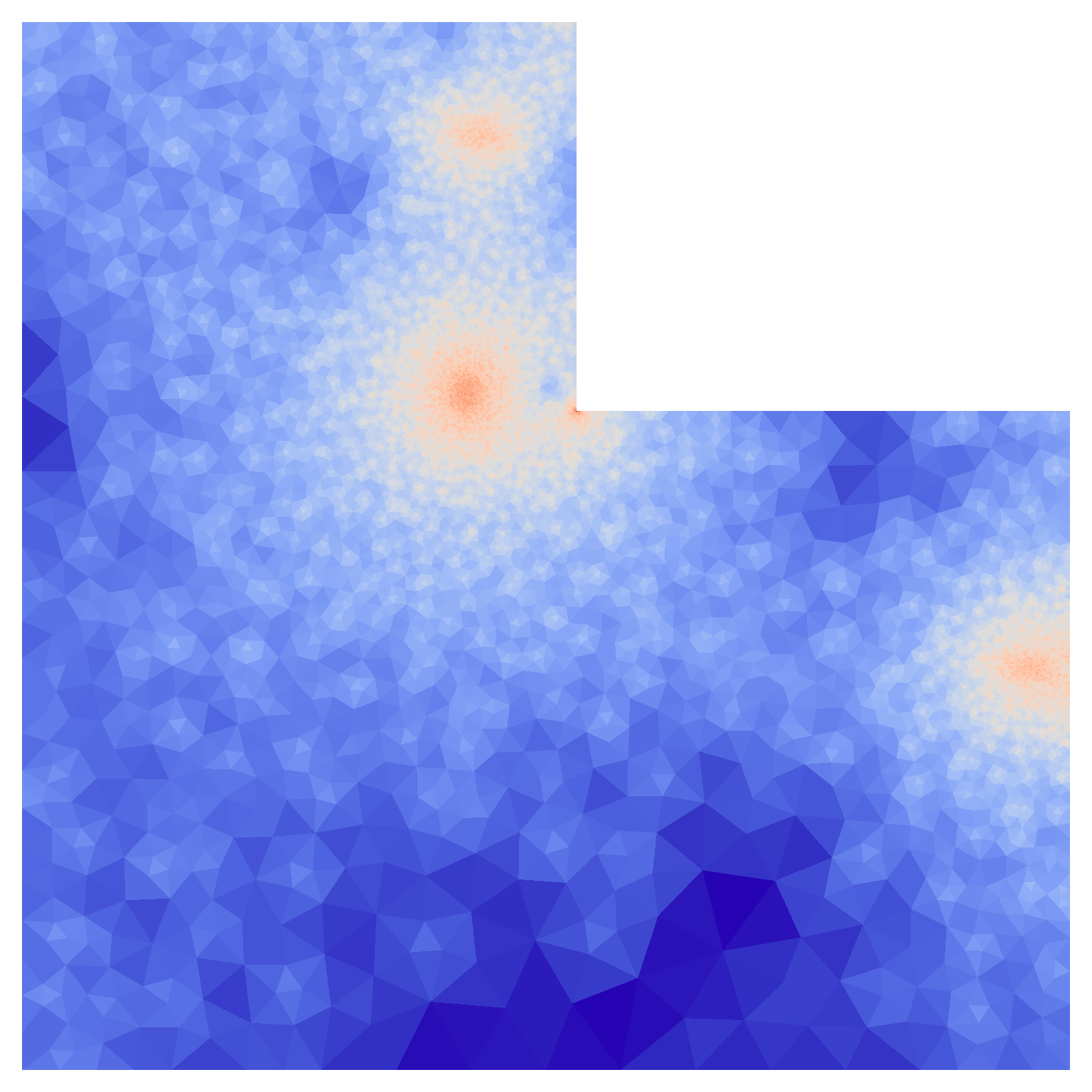}
    \vspace{1pt}
    \includegraphics[width=\linewidth]{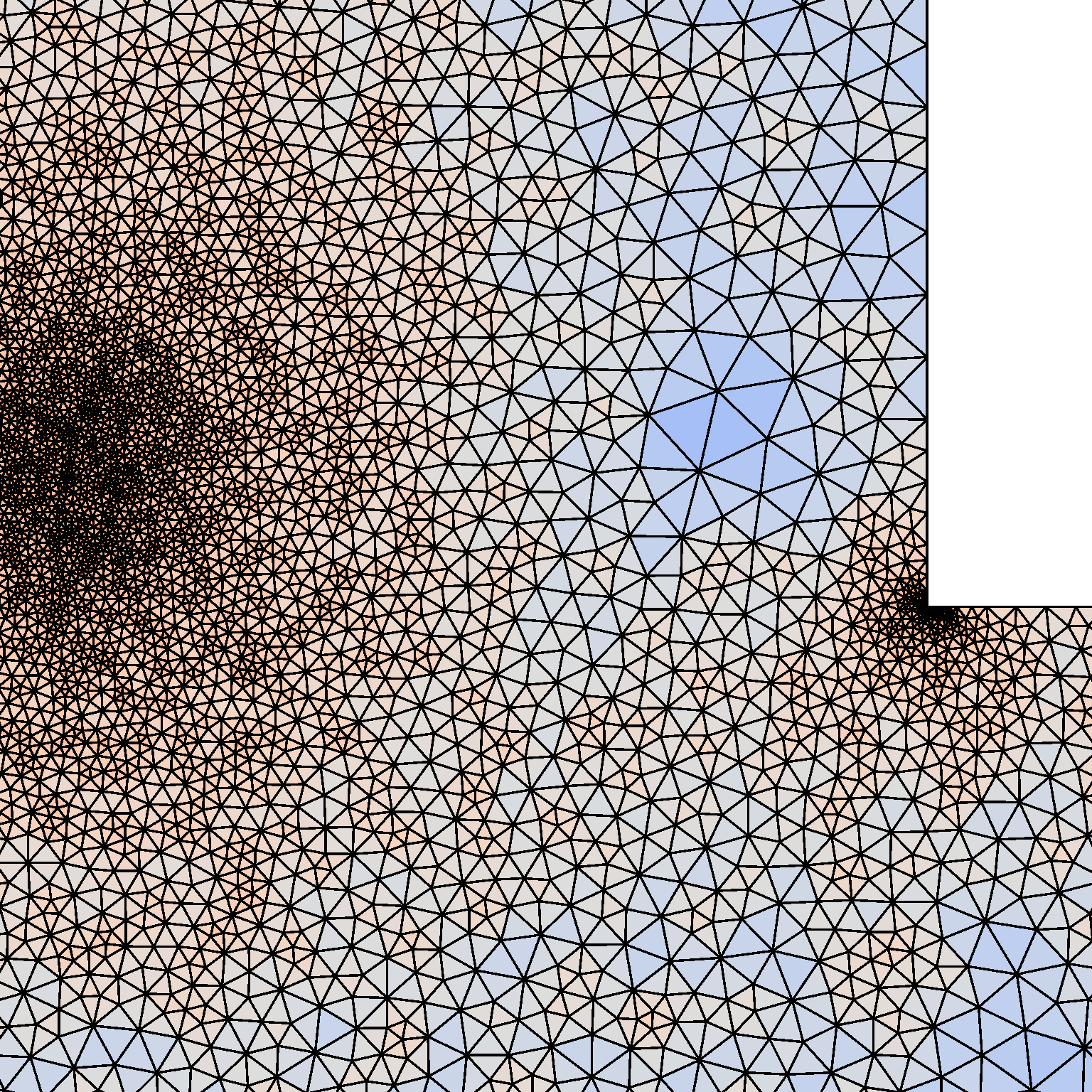}
    \caption*{Expert}
    \vspace{2pt}
  \end{subfigure}\hfill
  \begin{subfigure}[c]{0.195\textwidth}
    \centering
    \includegraphics[width=\linewidth]{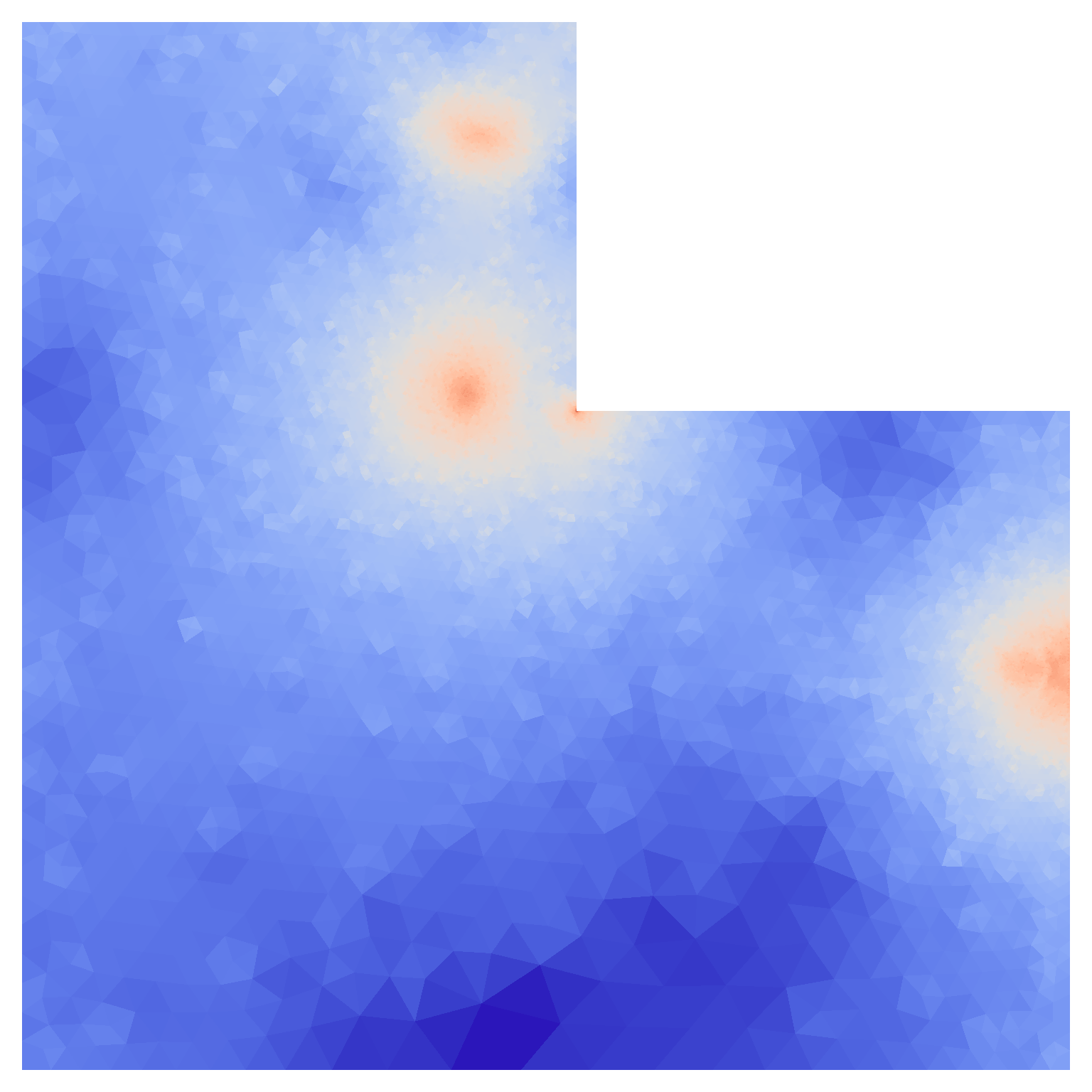}
    \vspace{1pt} %
    \includegraphics[width=\linewidth]{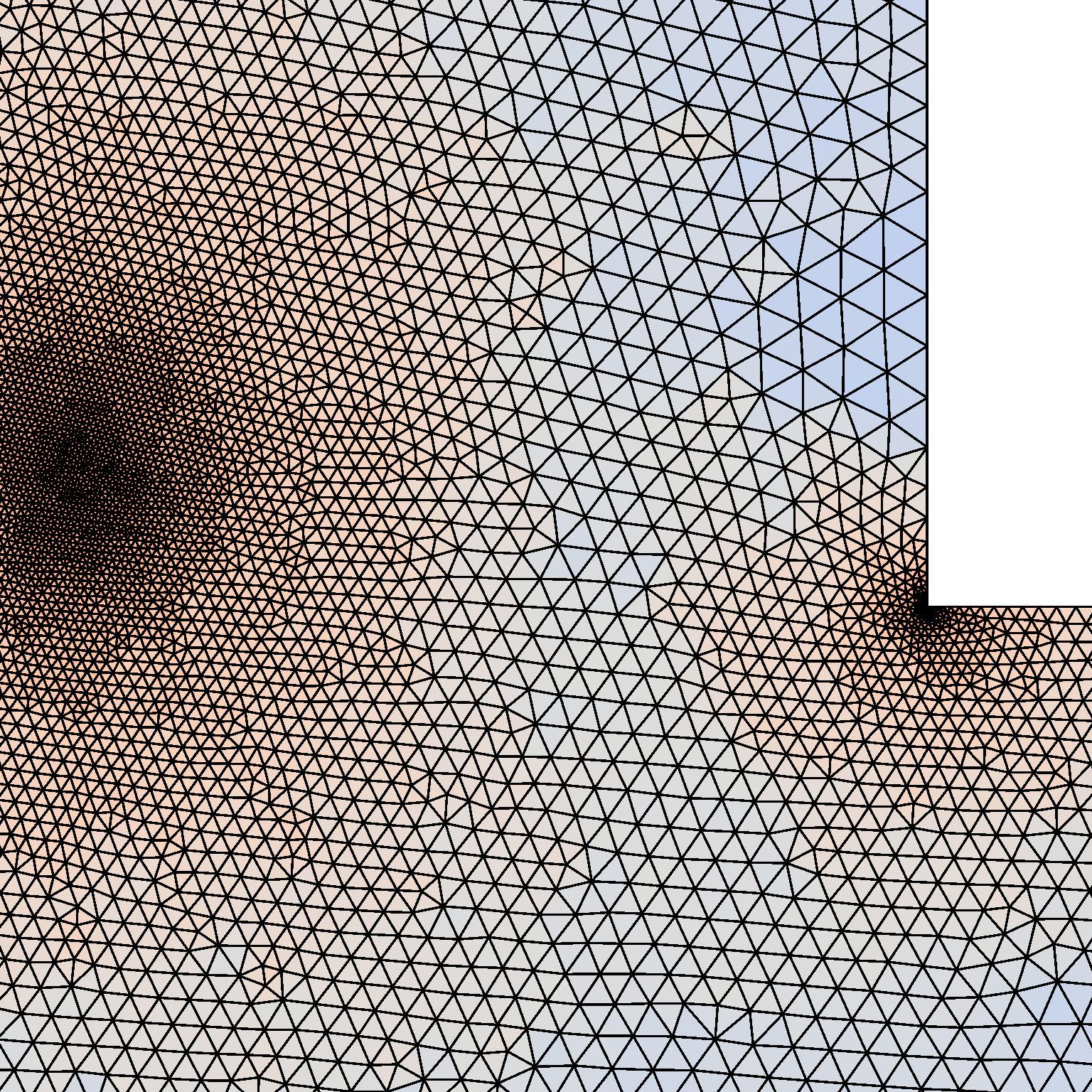}
    \caption*{\gls{amber}}
    \vspace{2pt}
  \end{subfigure}\hfill
  \begin{subfigure}[c]{0.195\textwidth}
    \centering
    \includegraphics[width=\linewidth]{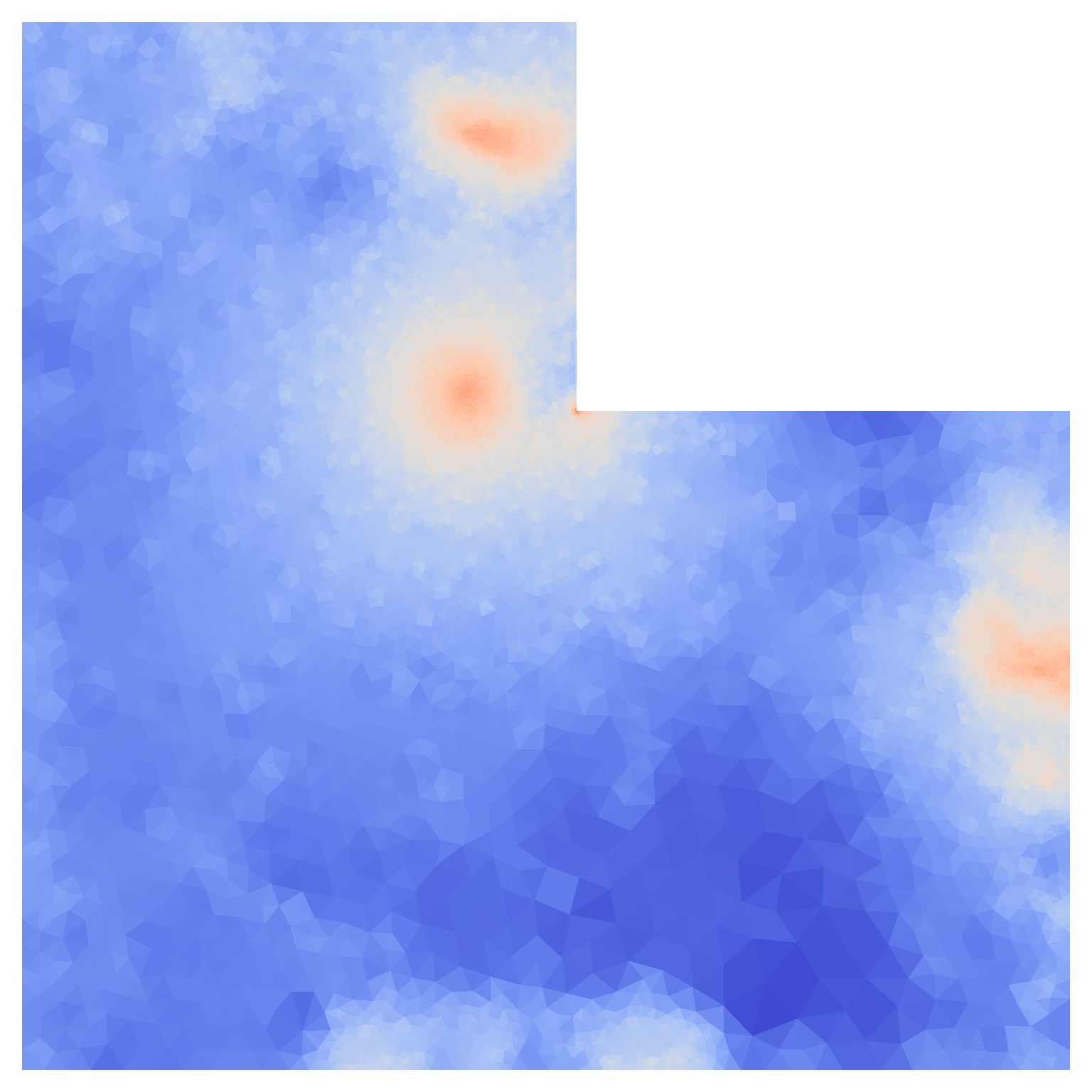}
    \vspace{1pt}
    \includegraphics[width=\linewidth]{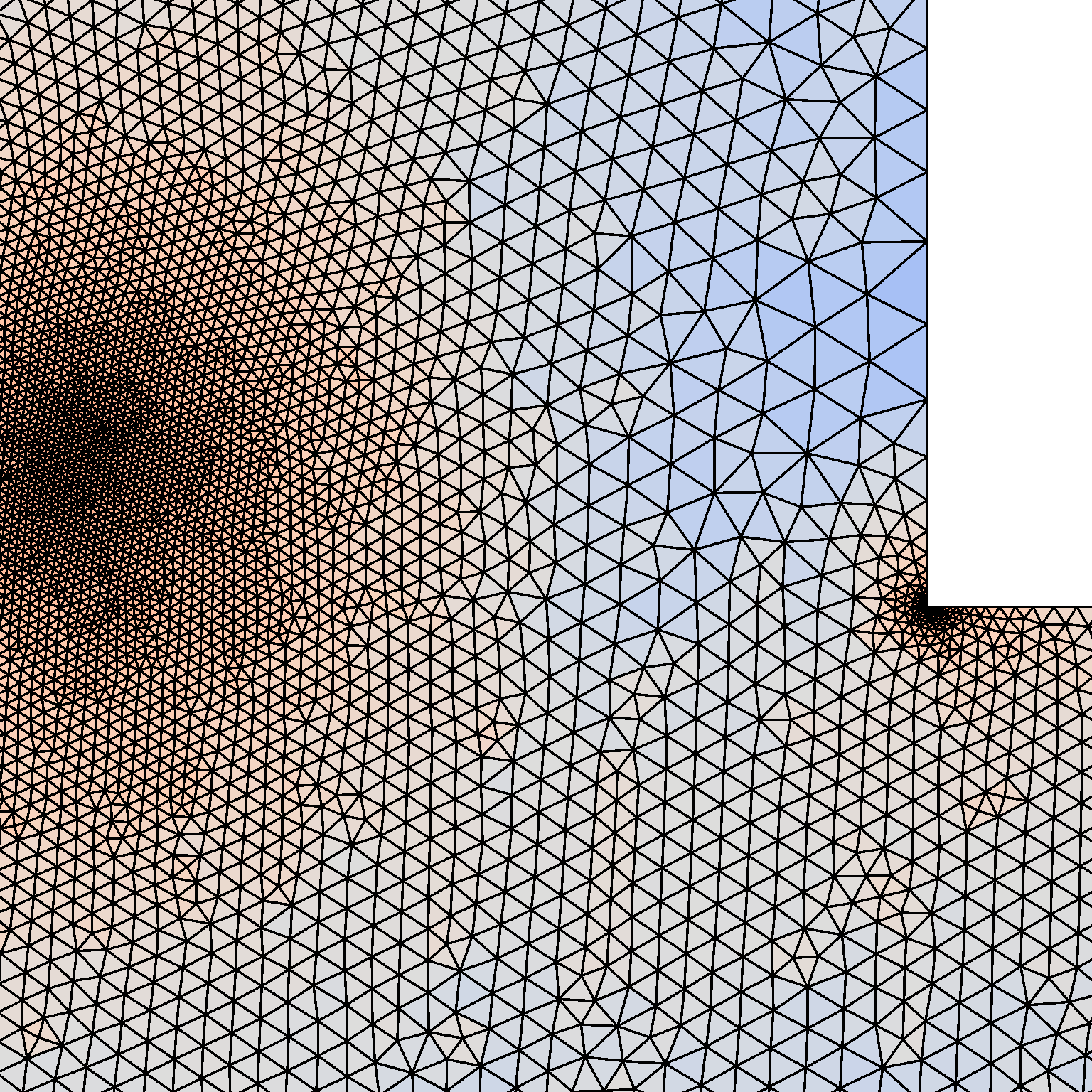}
    \caption*{\gls{amber} \textit{(1-Step)}}
    \vspace{2pt}
  \end{subfigure}\hfill
  \begin{subfigure}[t]{0.195\textwidth}
    \centering
    \includegraphics[width=\linewidth]{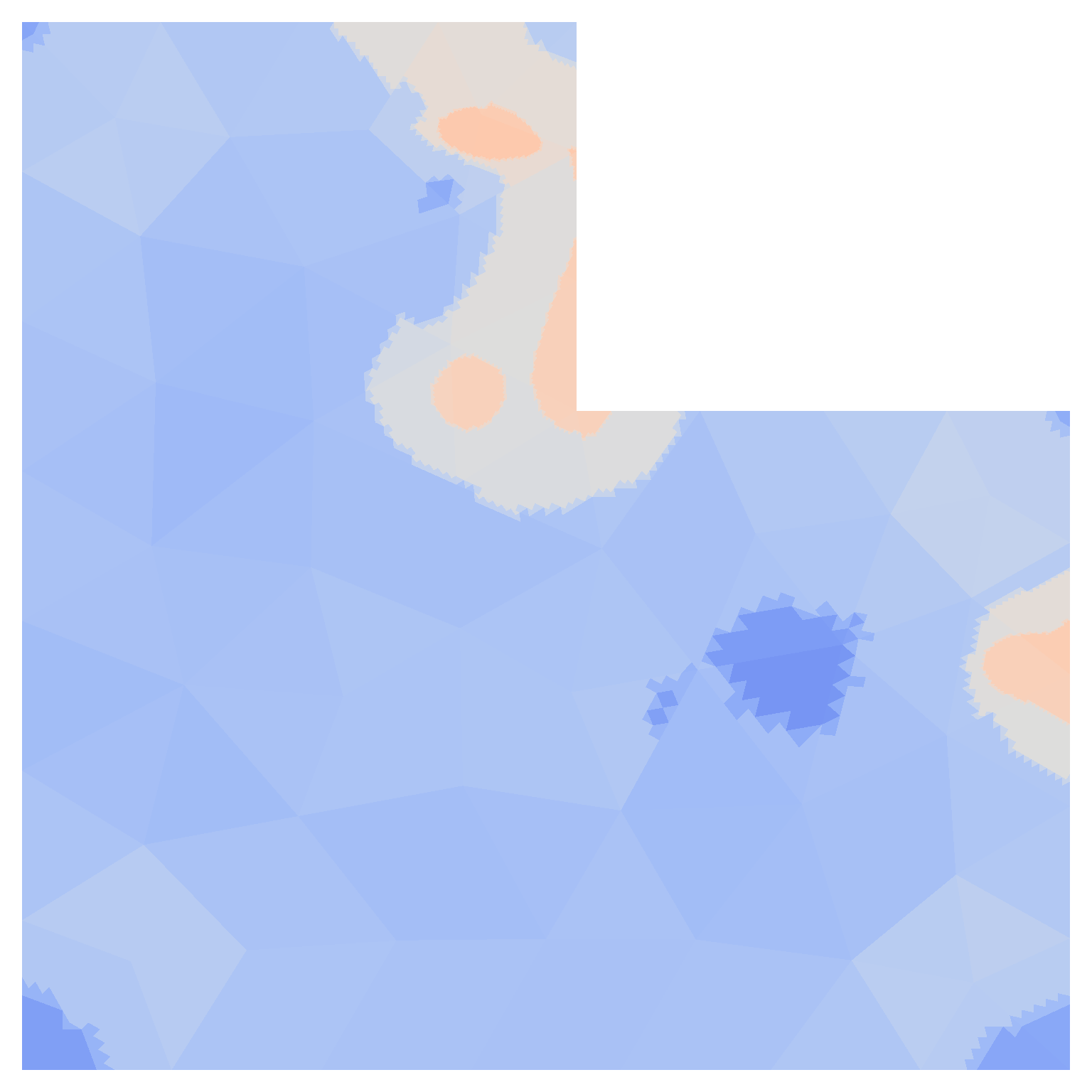}
    \vspace{1pt}
    \includegraphics[width=\linewidth]{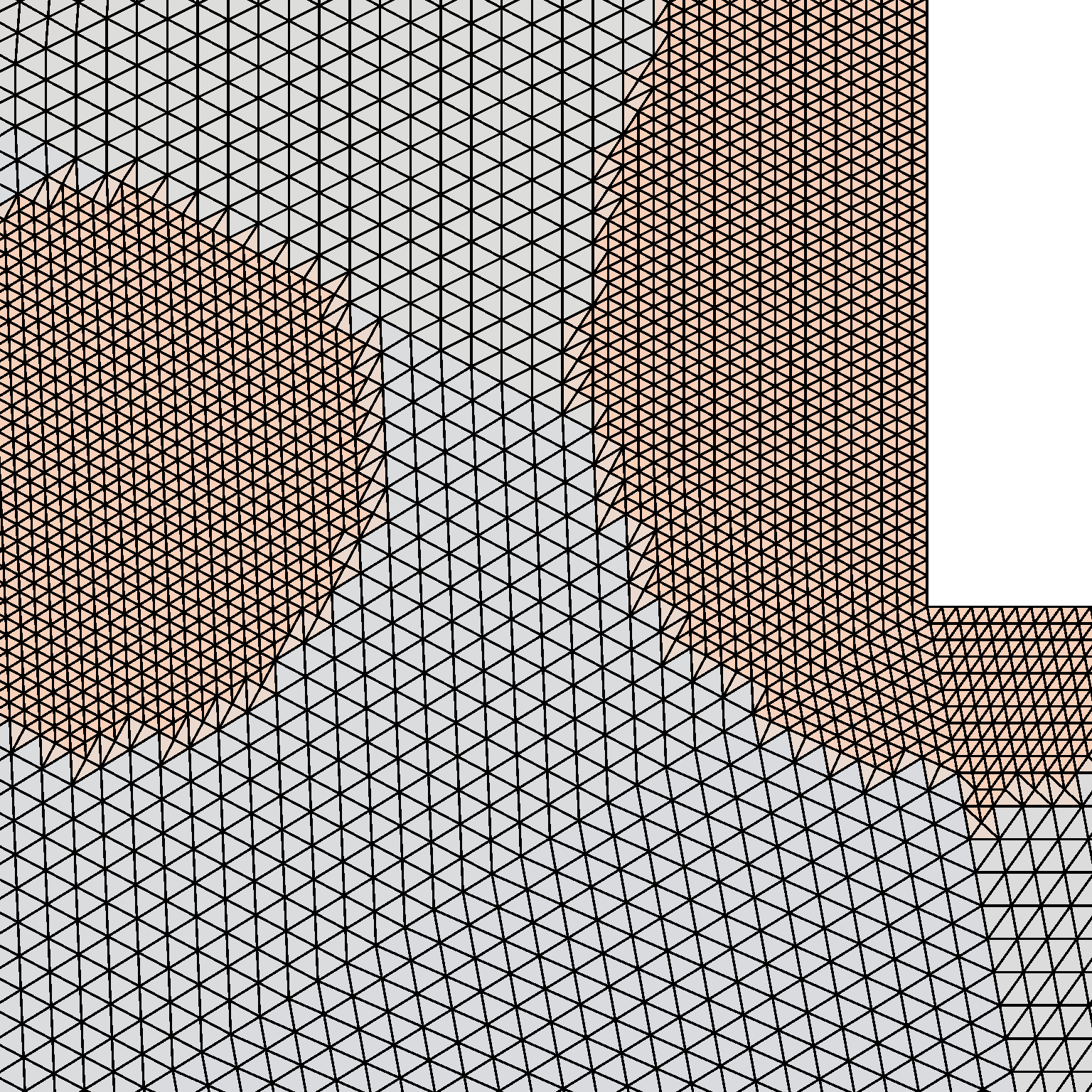}
    \caption*{\textit{\gls{asmr}}}
  \end{subfigure}\hfill
  \begin{subfigure}[t]{0.195\textwidth}
    \centering
    \includegraphics[width=\linewidth]{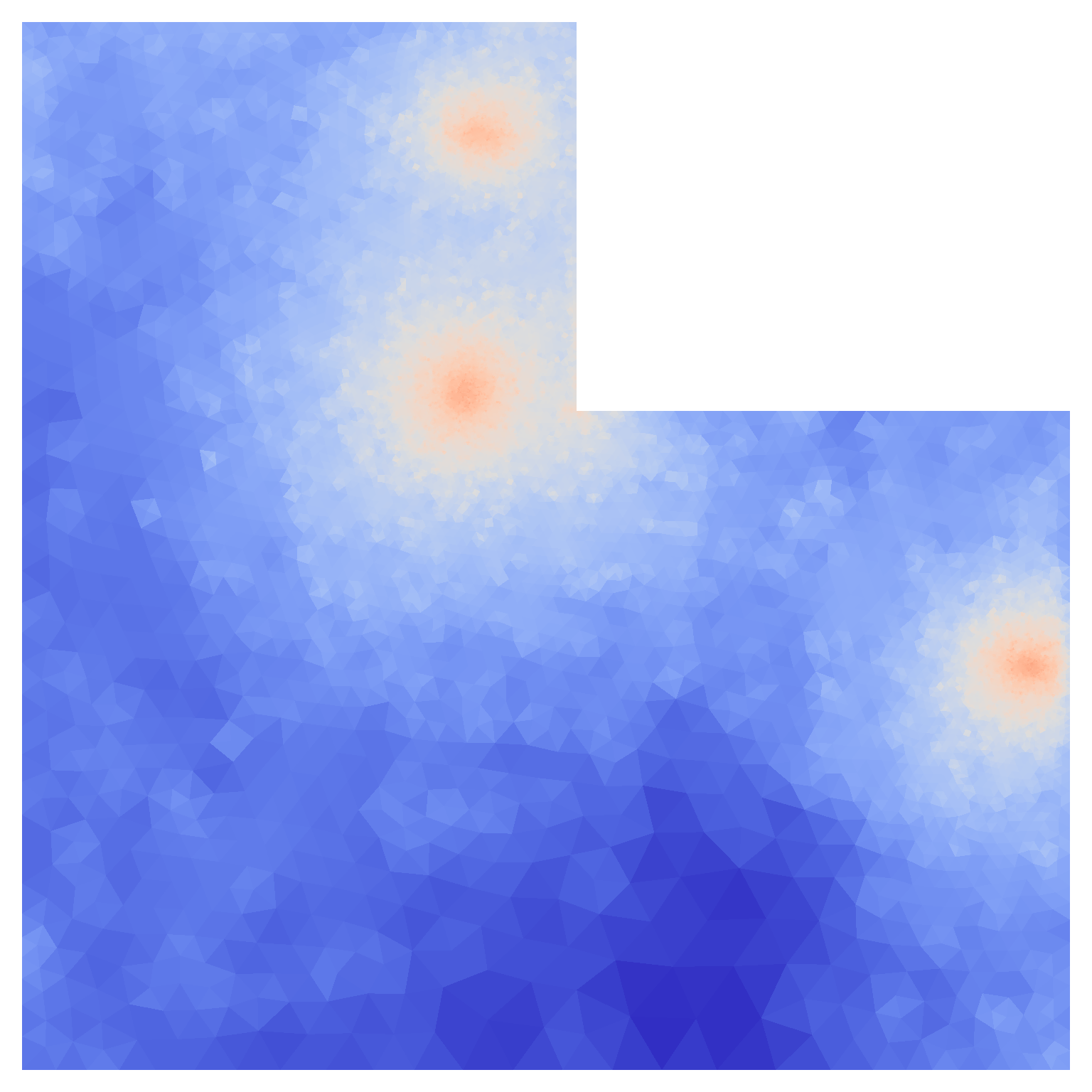}
    \vspace{1pt}
    \includegraphics[width=\linewidth]{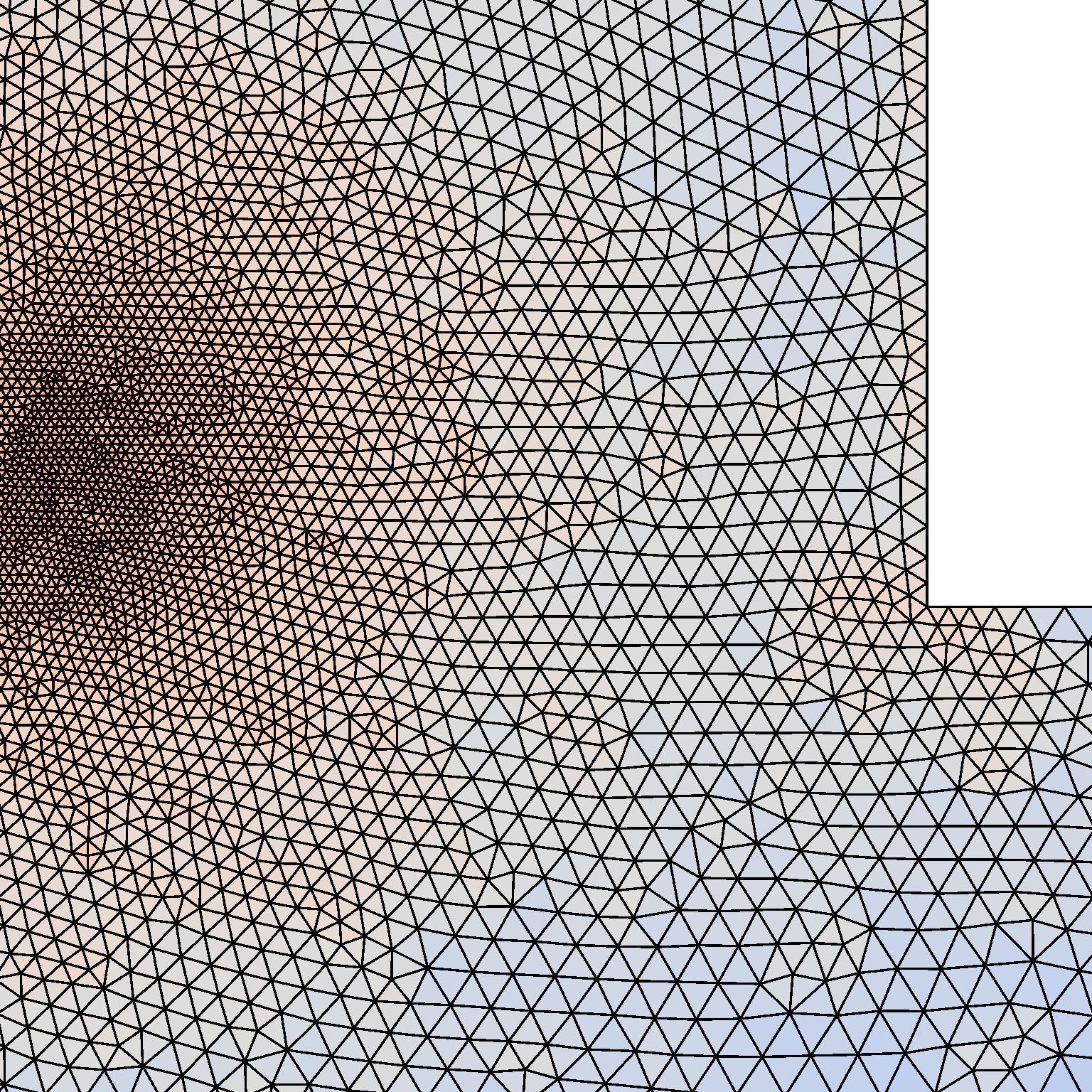}
    \caption*{\textit{Image (Variant)}}
  \end{subfigure}\hfill  
  \begin{subfigure}[t]{0.195\textwidth}
    \centering
    \includegraphics[width=\linewidth]{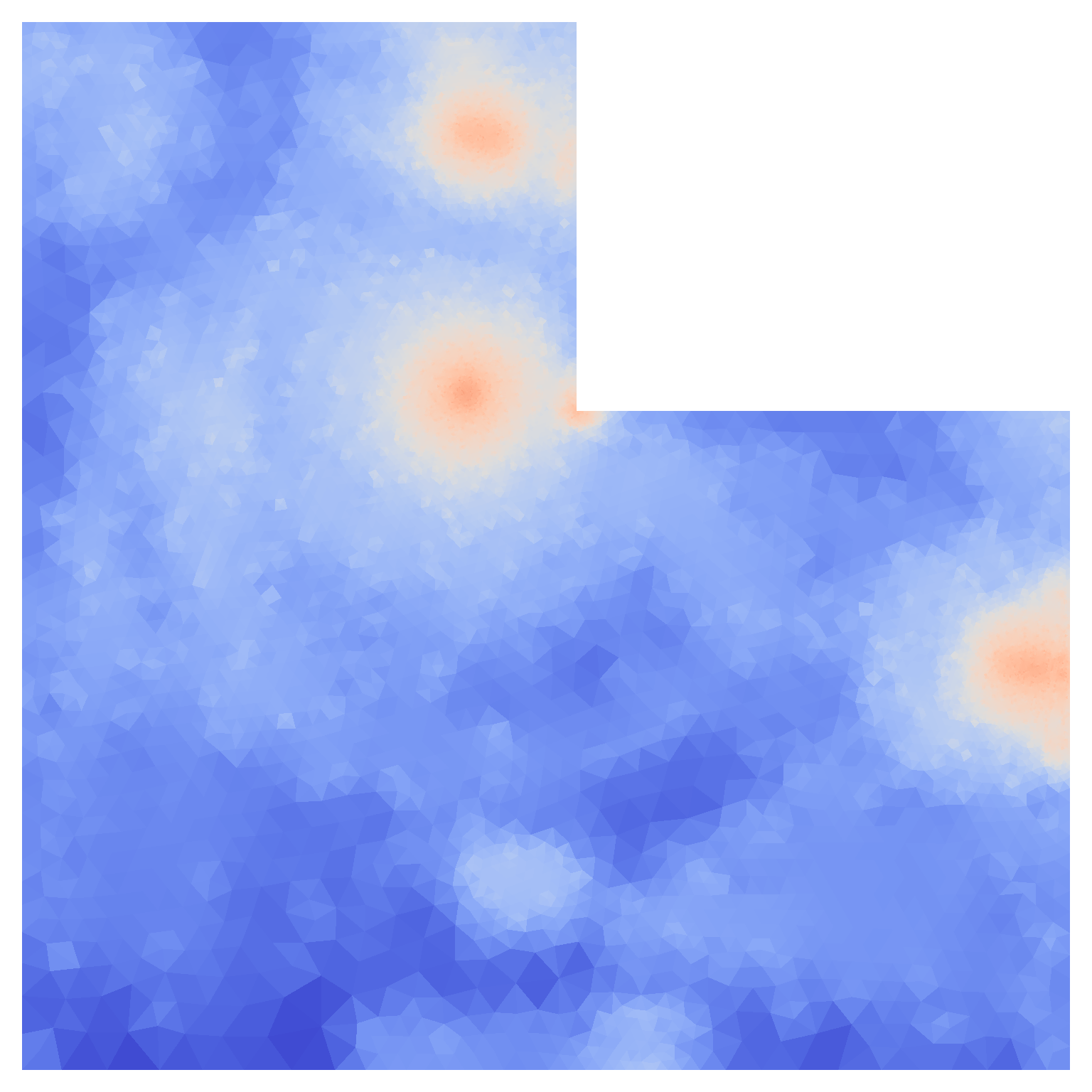}
    \vspace{1pt}
    \includegraphics[width=\linewidth]{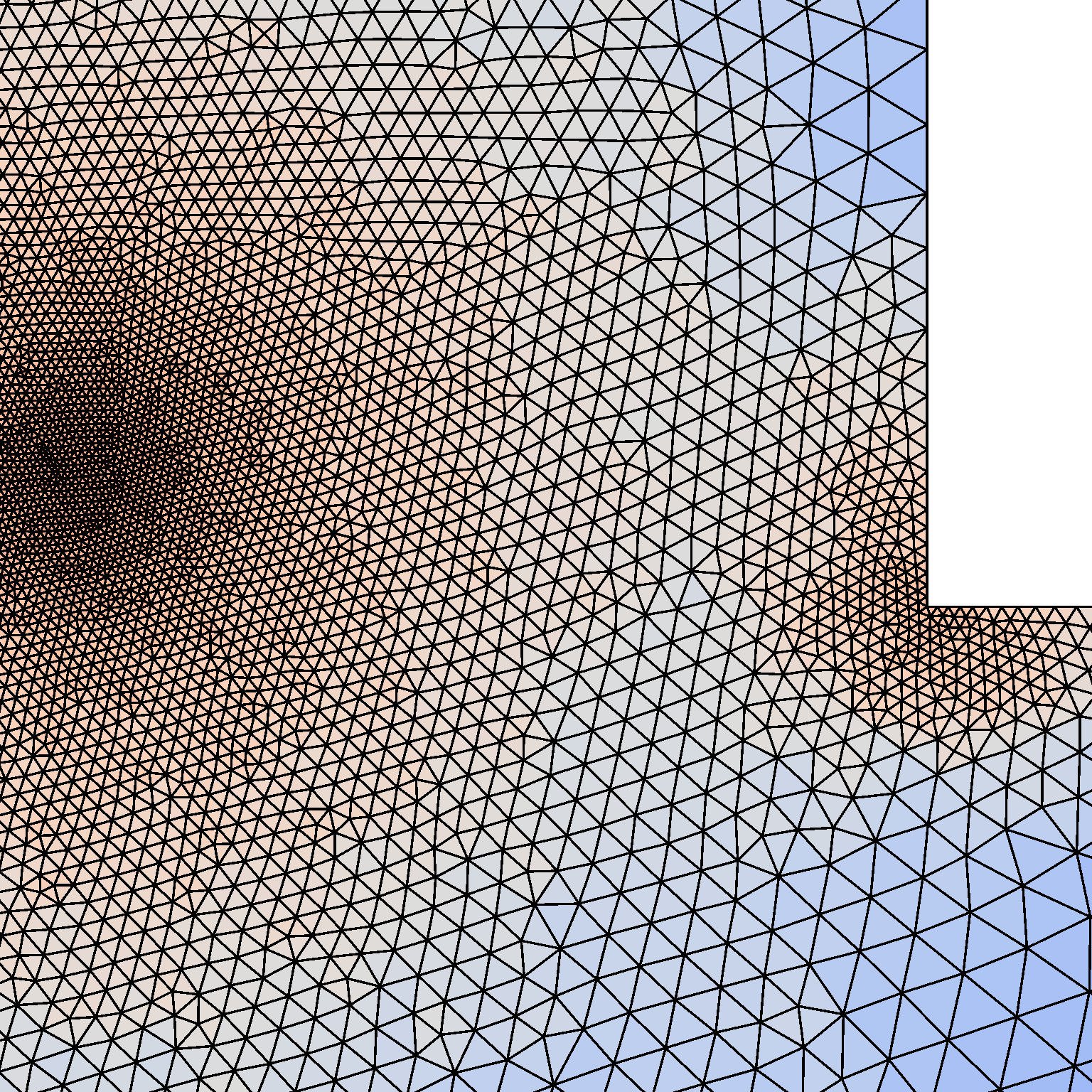}
    \caption*{\textit{GraphMesh (Variant)}}
  \end{subfigure}\hfill
  \begin{subfigure}[t]{0.195\textwidth}
    \centering
    \includegraphics[width=\linewidth]{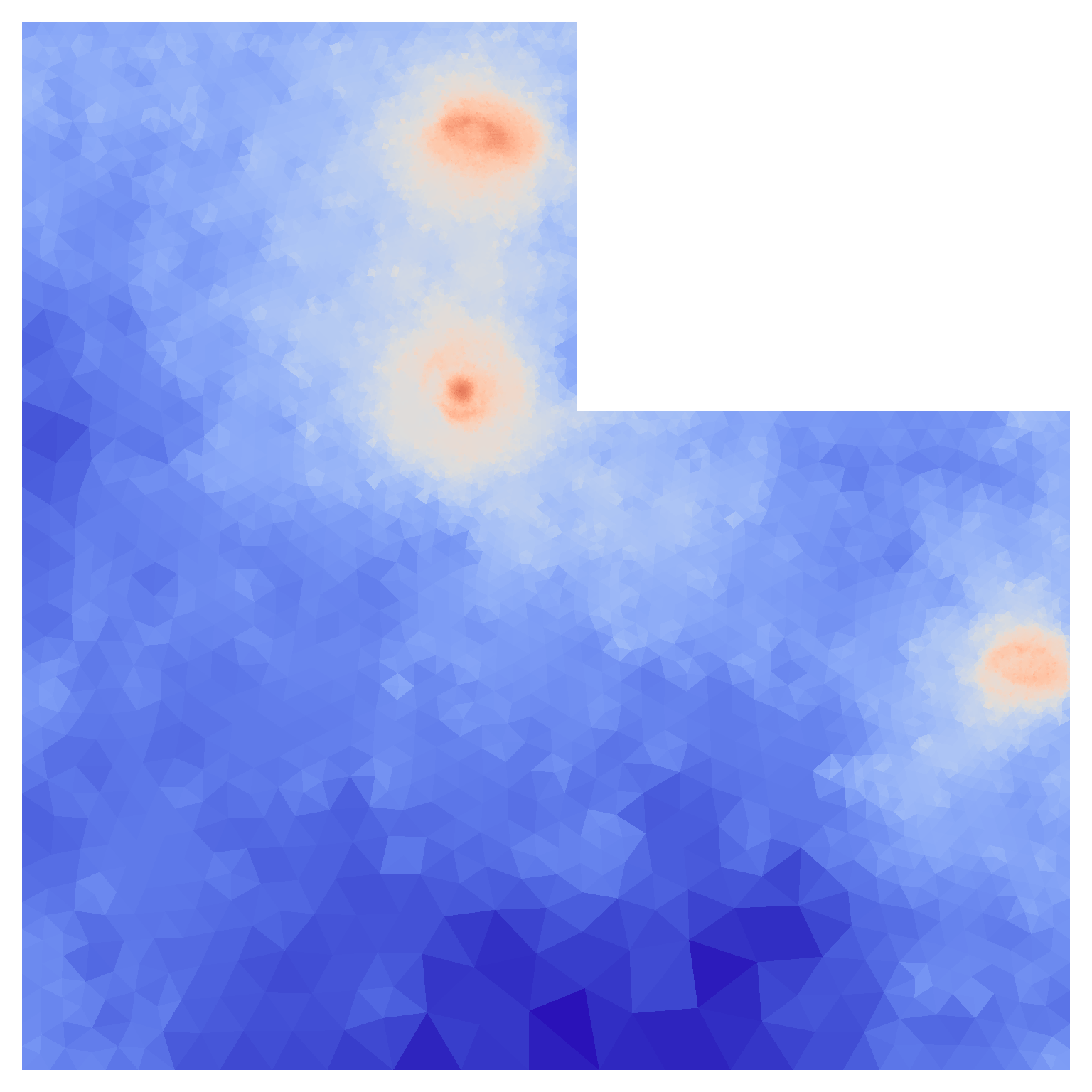}
    \vspace{1pt}
    \includegraphics[width=\linewidth]{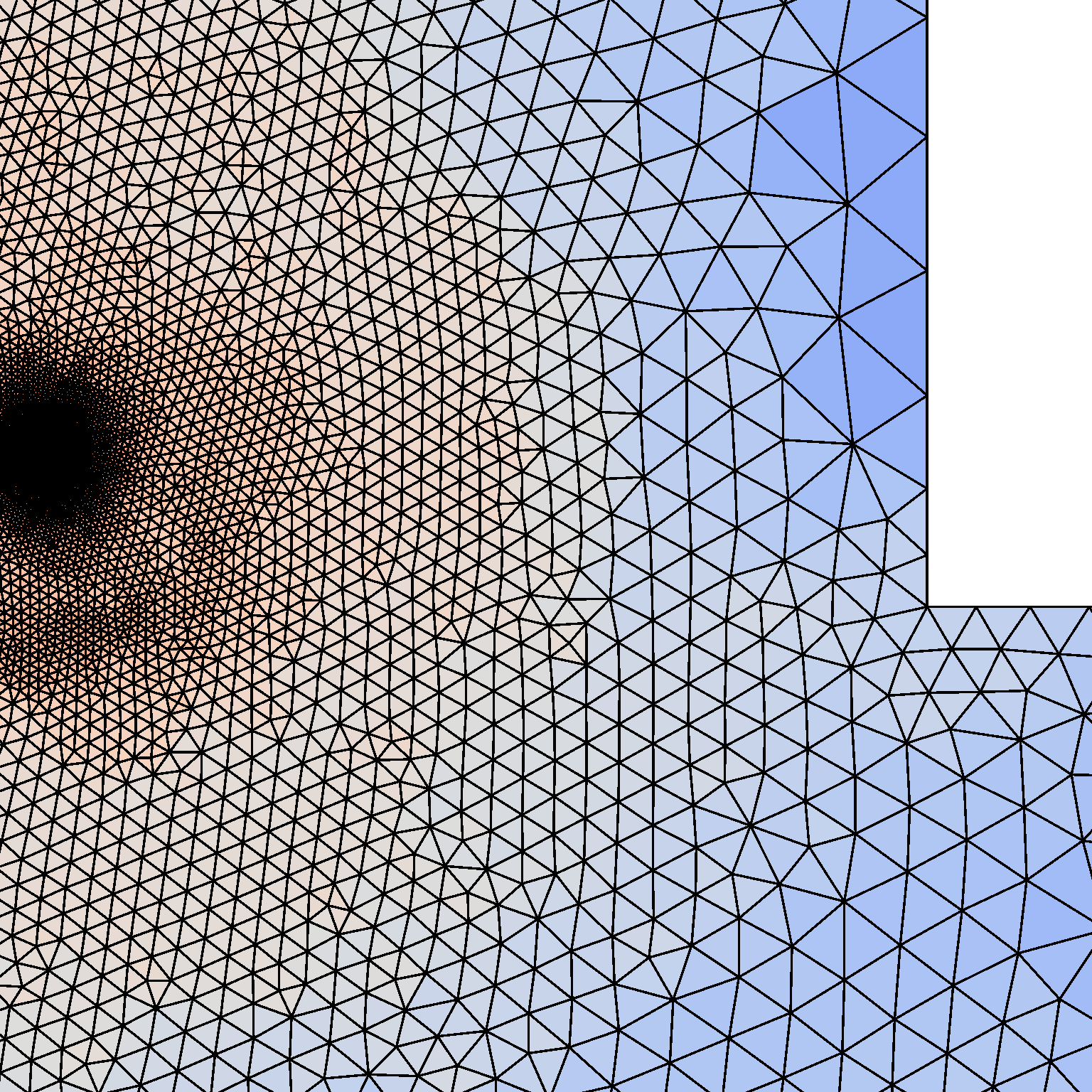}
    \caption*{\textit{Image}~\citep{huang2021machine}}
  \end{subfigure}\hfill
  \begin{subfigure}[t]{0.195\textwidth}
    \centering
    \includegraphics[width=\linewidth]{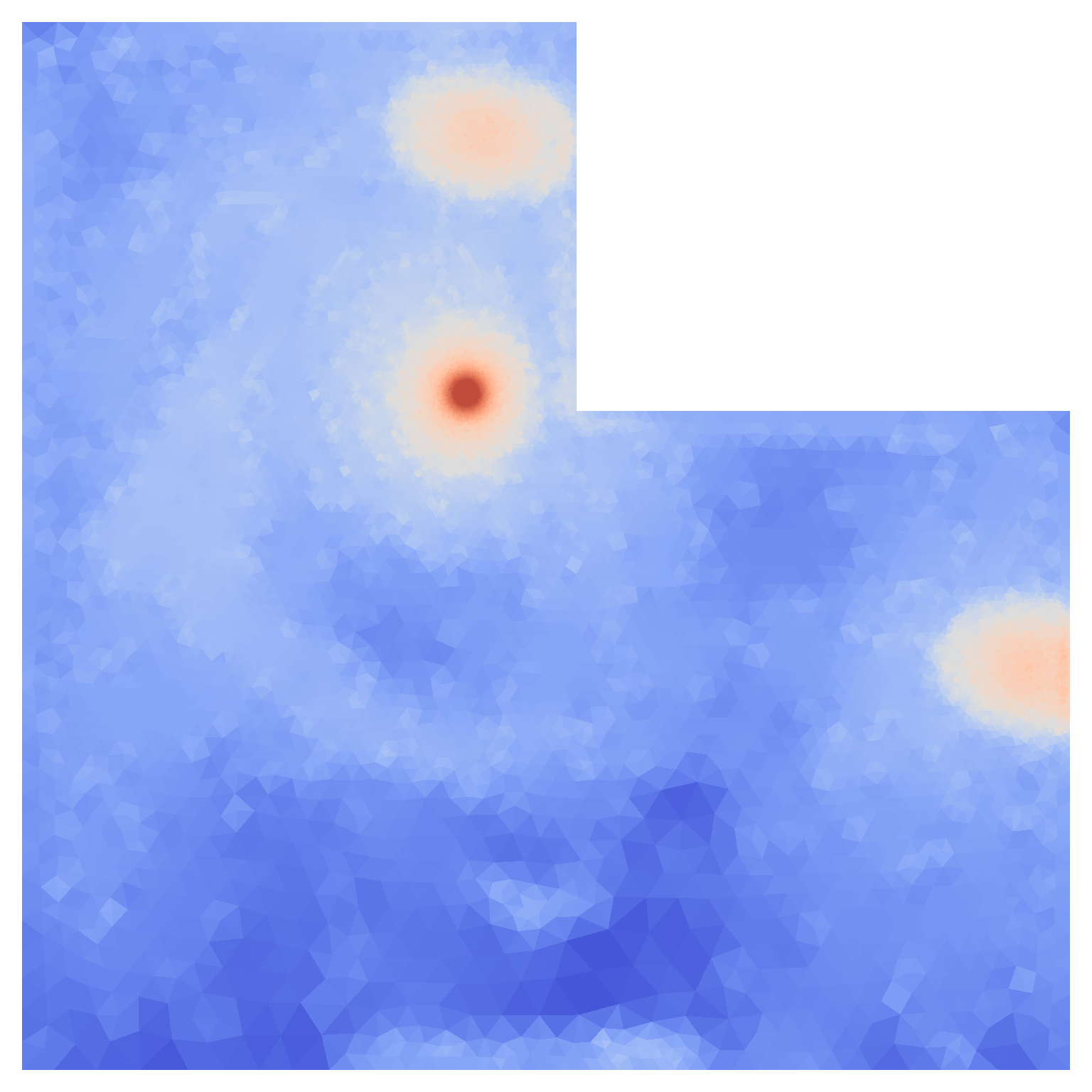}
    \vspace{1pt}
    \includegraphics[width=\linewidth]{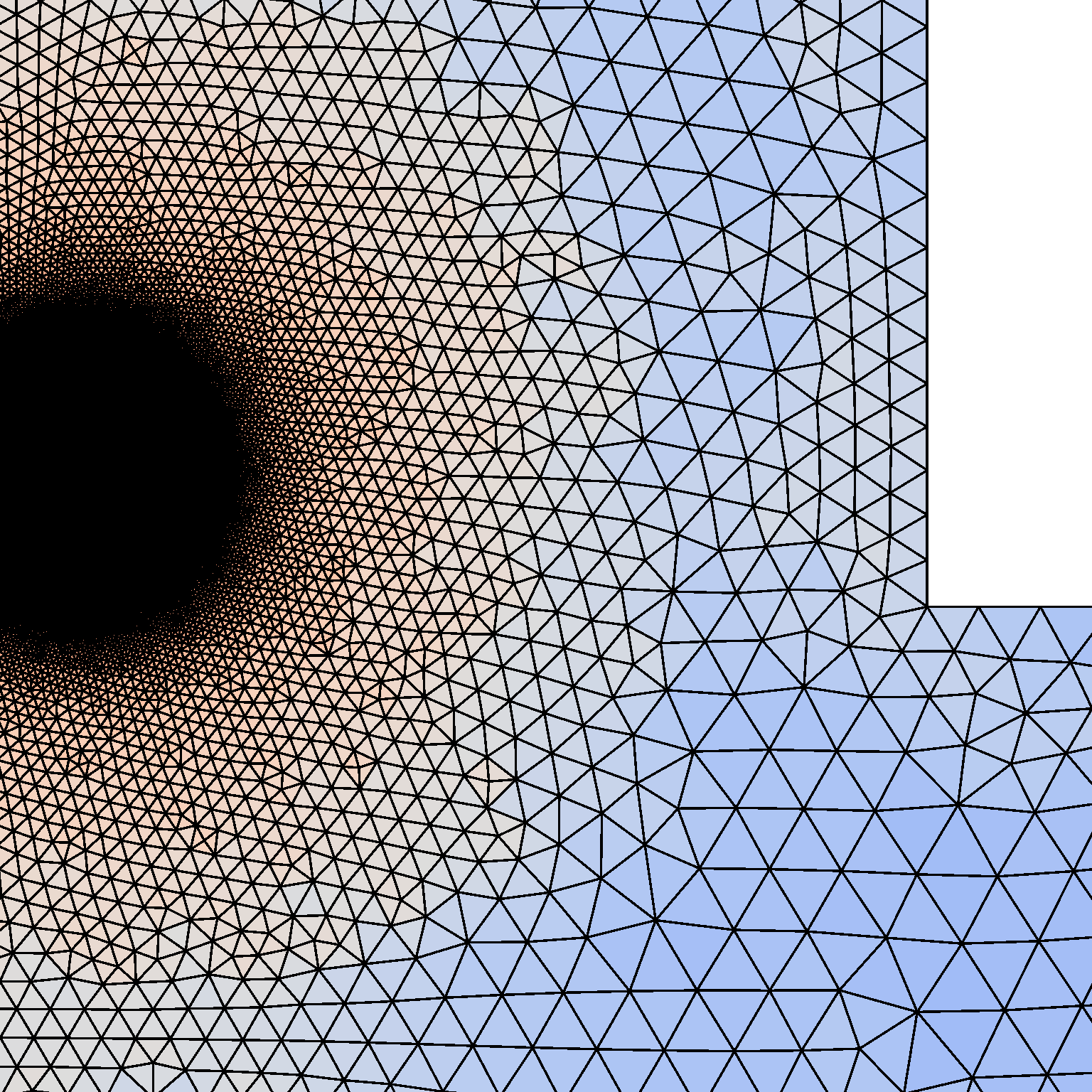}
    \caption*{\textit{GraphMesh}~\citep{khan2024graphmesh}}
  \end{subfigure}\hfill
  \vspace{-0.1cm}
  \caption{
  Expert mesh and generated meshes for all baseline methods and \gls{amber} on the \texttt{Poisson}~(\textit{hard}) dataset. 
  The enlarged view of the expert mesh shows the \gls{fem} solution, while other plots show the element size, with red indicating smaller elements.
  \gls{amber} yields more accurate and adaptive meshes, especially in regions with high resolution variability.
  \textit{\gls{asmr}} has a constrained depth, leading to too-uniform refinements, and both \textit{Image}~\citep{huang2021machine} and \textit{GraphMesh}~\citep{khan2024graphmesh} fail to correctly estimate sizing fields in local regions.
  }
  \vspace{-0.4cm}
  \label{app_fig:baseline_comp_poisson}
\end{figure}

Figure~\ref{app_fig:baseline_comp_poisson} visualizes mesh outputs from all baseline methods and \gls{amber} on the \texttt{Poisson} (\textit{hard}) dataset. 
\gls{amber} produces more accurate and adaptive meshes, particularly in regions requiring fine detail and large variation in local mesh resolution.
In contrast, the baseline methods exhibit artifacts or provide overly smooth or uniform sizing fields.
For example, \textit{\gls{asmr}} is constrained by the depth of its reference mesh, leading to too-uniform meshes, while \textit{GraphMesh}~\citep{khan2024graphmesh} and the \textit{Image}~\citep{huang2021machine}  baseline greatly over- and under-estimate local regions.

\subsection{Full Rollouts}
\label{app_ssec:full_rollouts}

Figures~\ref{app_fig:full_rollouts_with_fem}~and~\ref{app_fig:full_rollouts_without_fem} illustrate full \gls{amber} rollouts, showing the iterative mesh generation process from $t{=}0$ to $t{=}T{=}3$ across all datasets. Figure~\ref{app_fig:full_rollouts_with_fem} also visualizes the \gls{fem} solution on the expert for reference.
Across datasets, each generation step incrementally refines the mesh, adding geometric detail and improving alignment with the target solution. 
The refinement constant $c_t$ introduced in Section~\ref{ssec:practical_improvements} ensures that early iterations produce coarser meshes, reducing computational cost in the initial stages.

\begin{figure}[t]
  \centering
  \setlength{\tabcolsep}{0pt}%
  \renewcommand{\arraystretch}{0}%
  \begin{tabularx}{\textwidth}{
      *{4}{>{\centering\arraybackslash}X}
      !{\hspace{0.8mm}\vrule width 0.4pt\hspace{0.8mm}}
      *{2}{>{\centering\arraybackslash}X}
      !{\hspace{0.8mm}\vrule width 0.4pt\hspace{0.8mm}}
      *{2}{>{\centering\arraybackslash}X}
    }
      \includegraphics[width=0.95\linewidth]{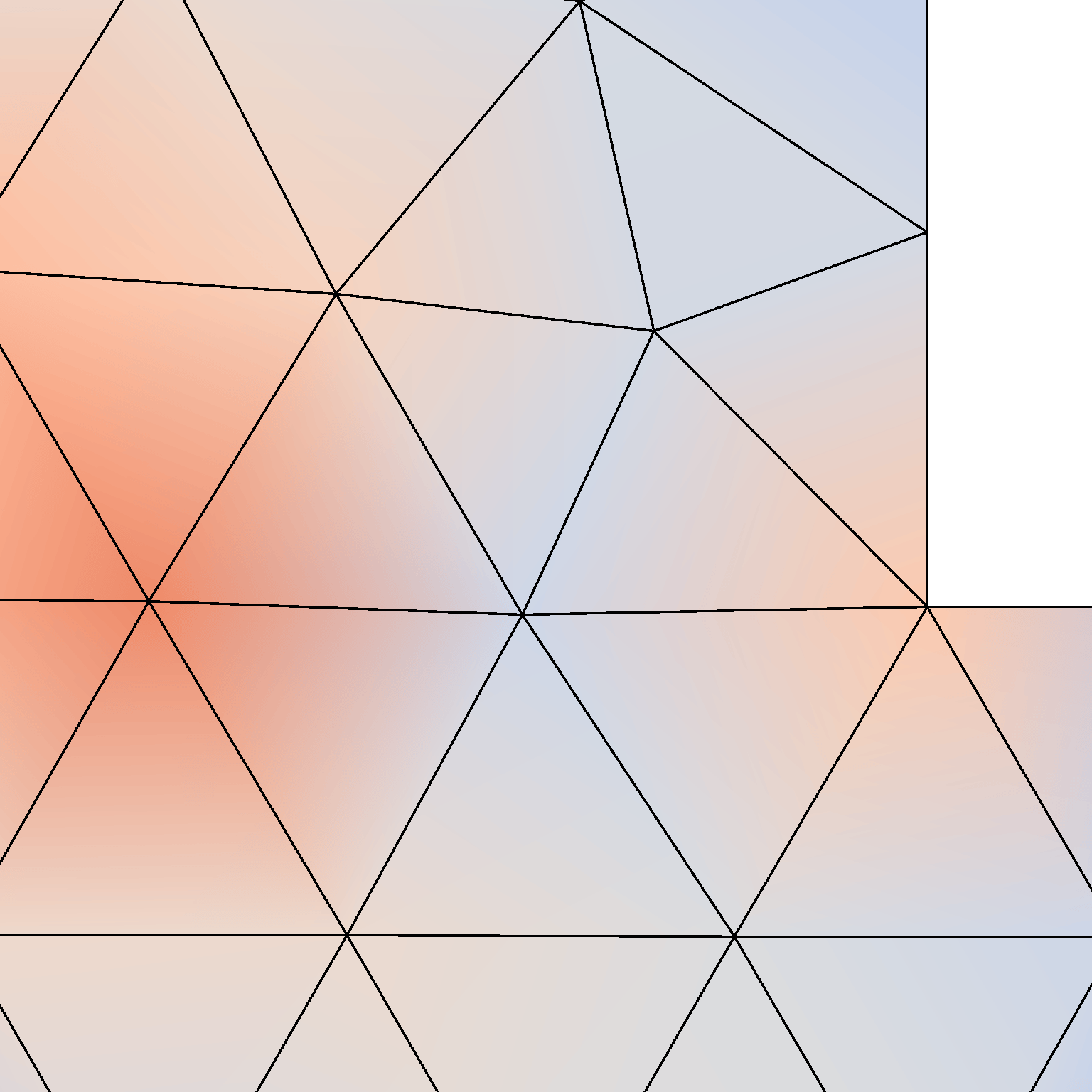} &
      \includegraphics[width=0.95\linewidth]{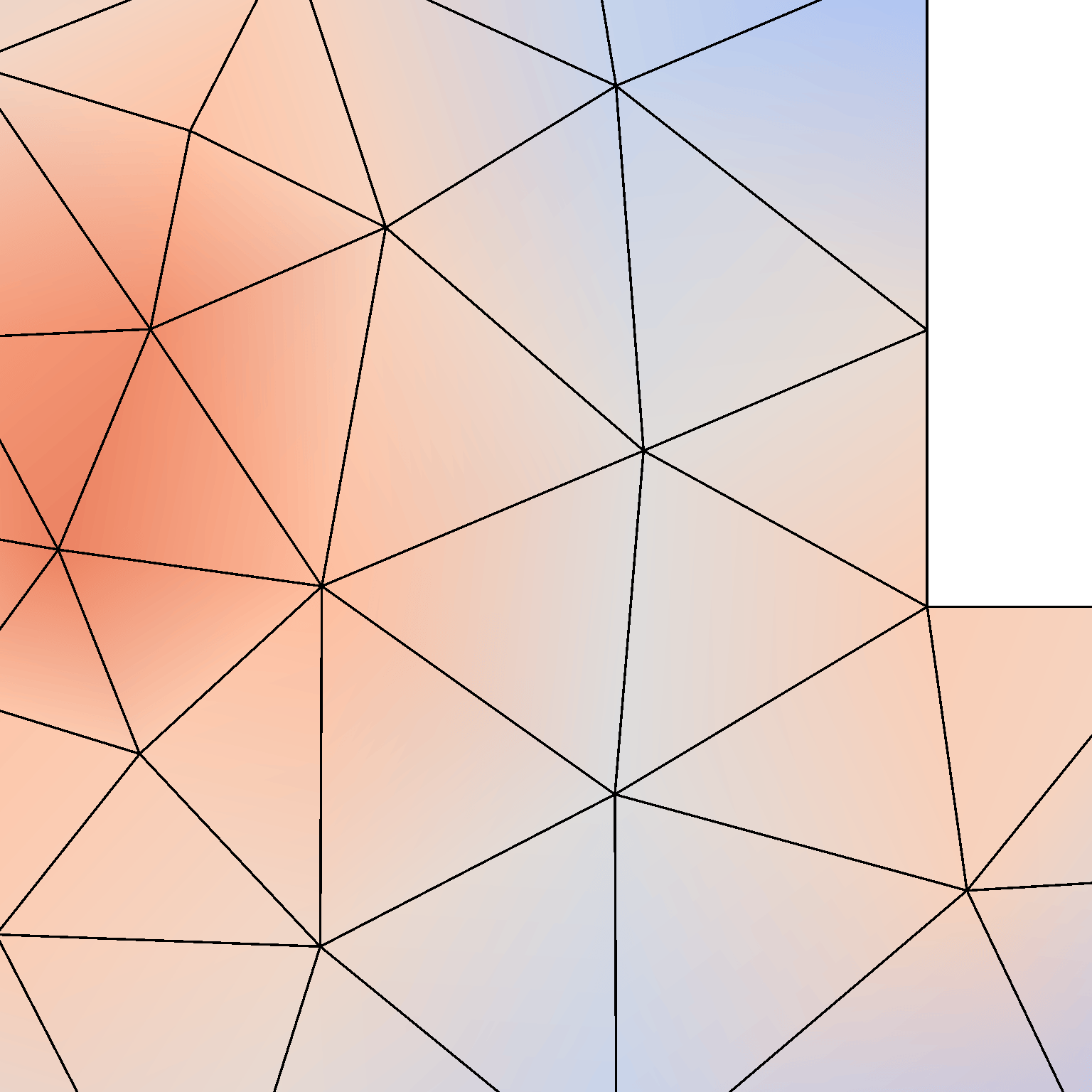} & 
      \includegraphics[width=0.95\linewidth]{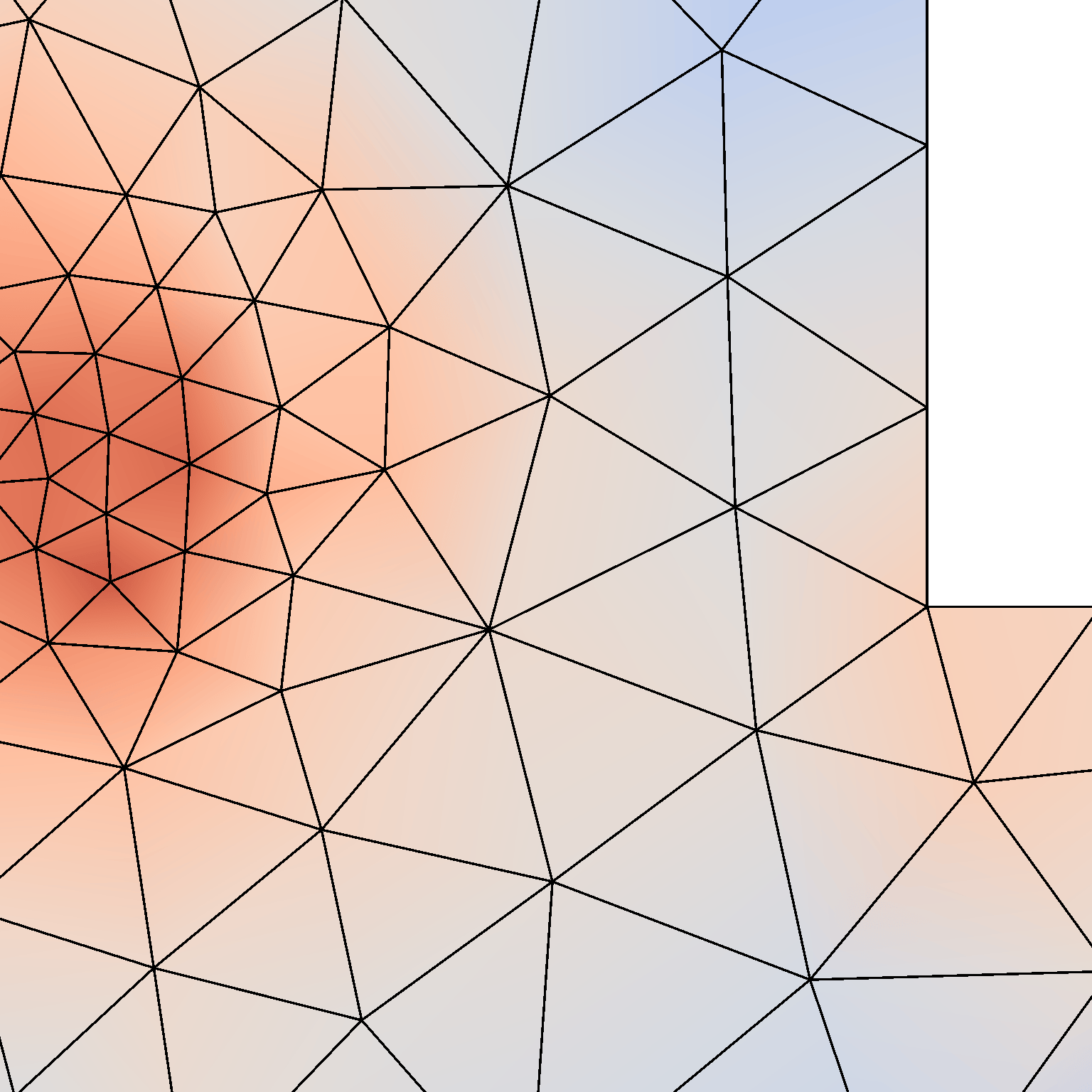} &
      \includegraphics[width=0.95\linewidth]{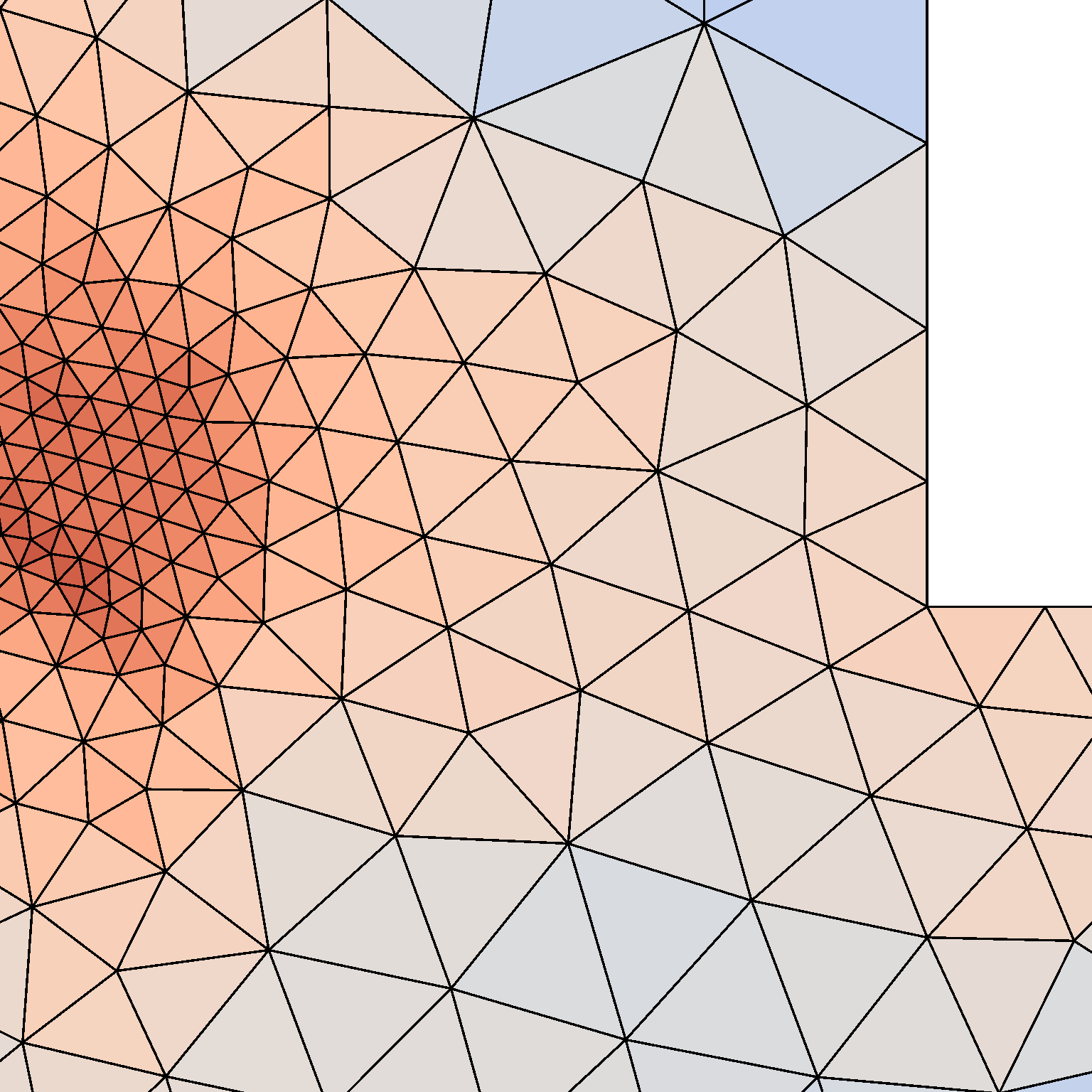} &
      \includegraphics[width=0.95\linewidth]{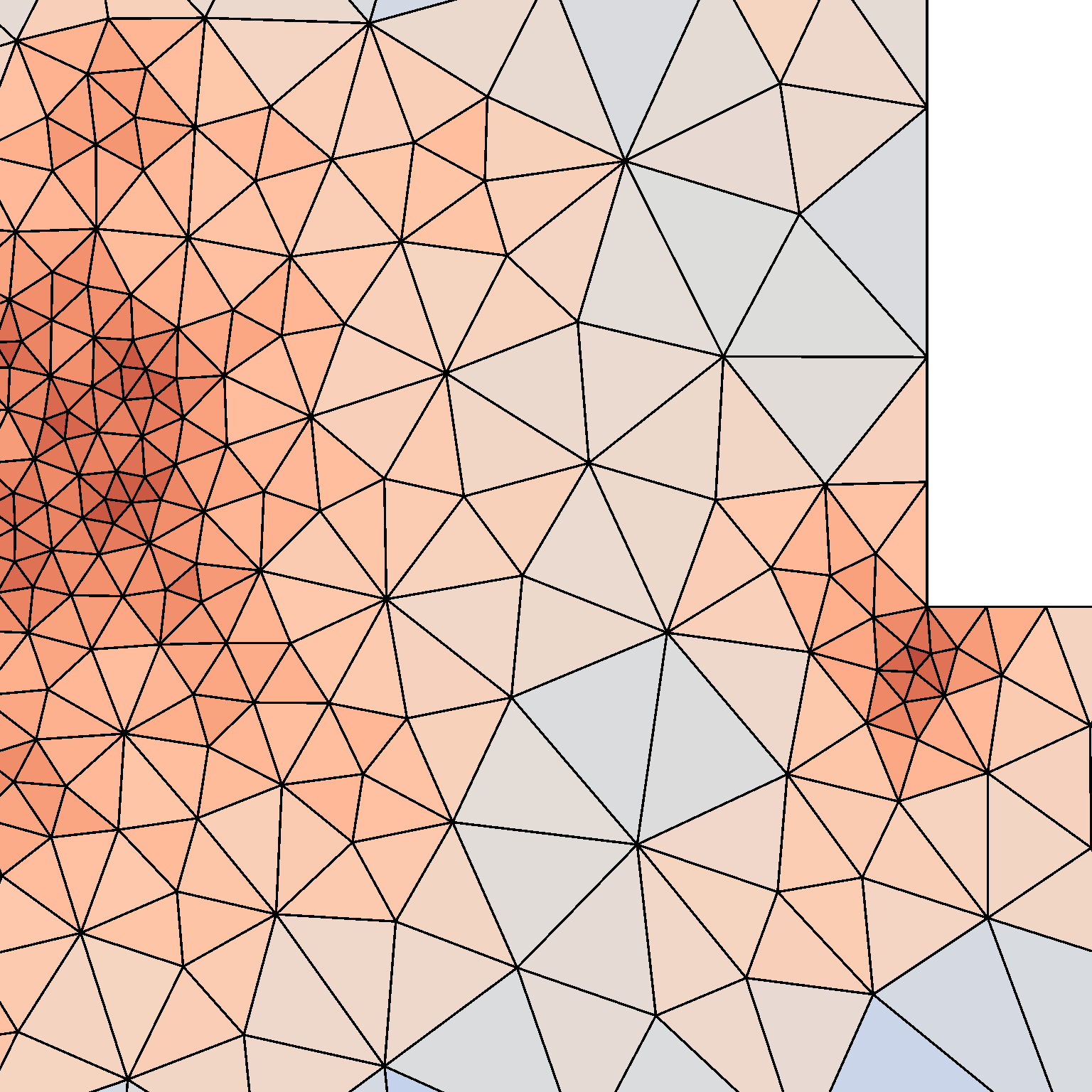} & 
      \includegraphics[width=0.95\linewidth]{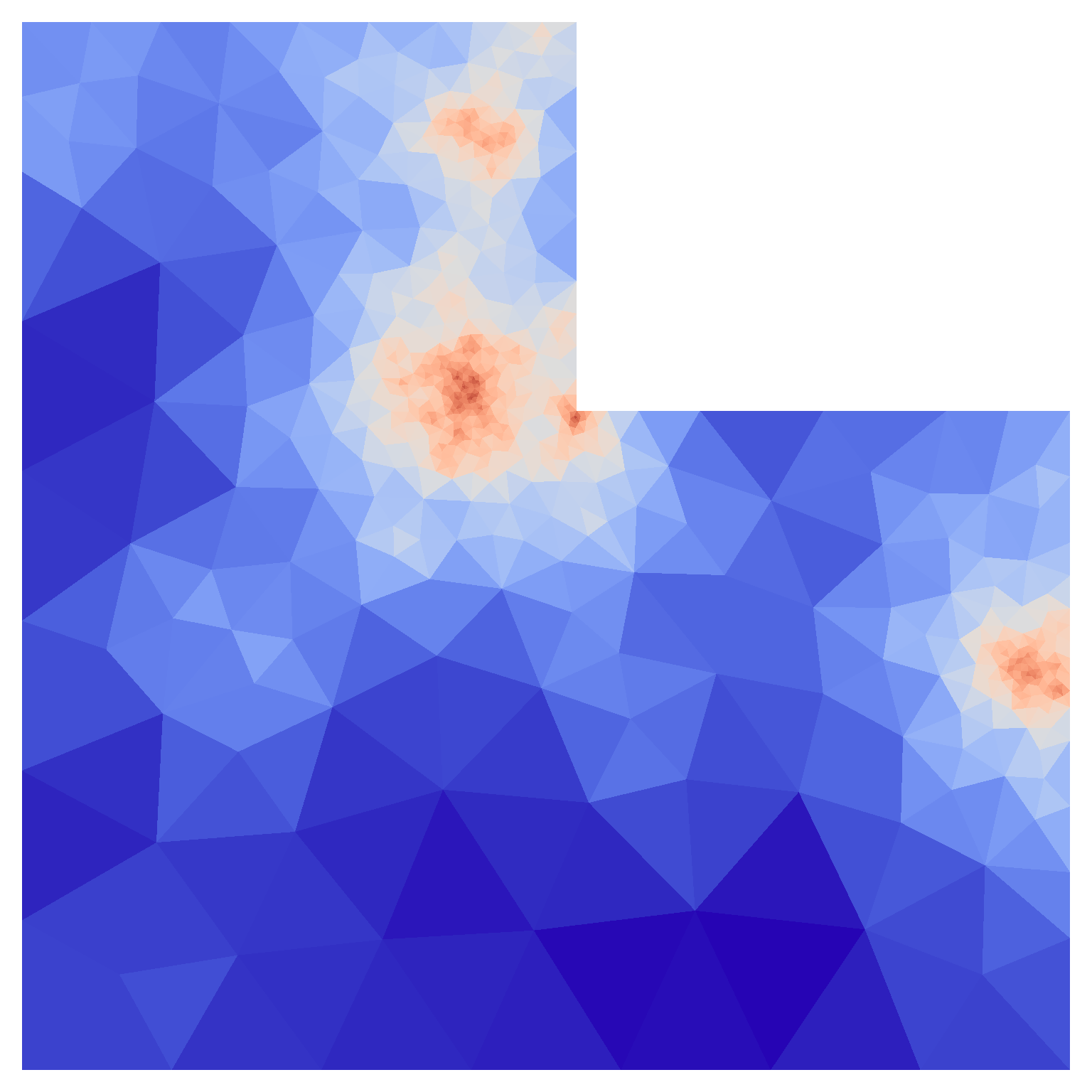} & 
      \includegraphics[width=0.95\linewidth]{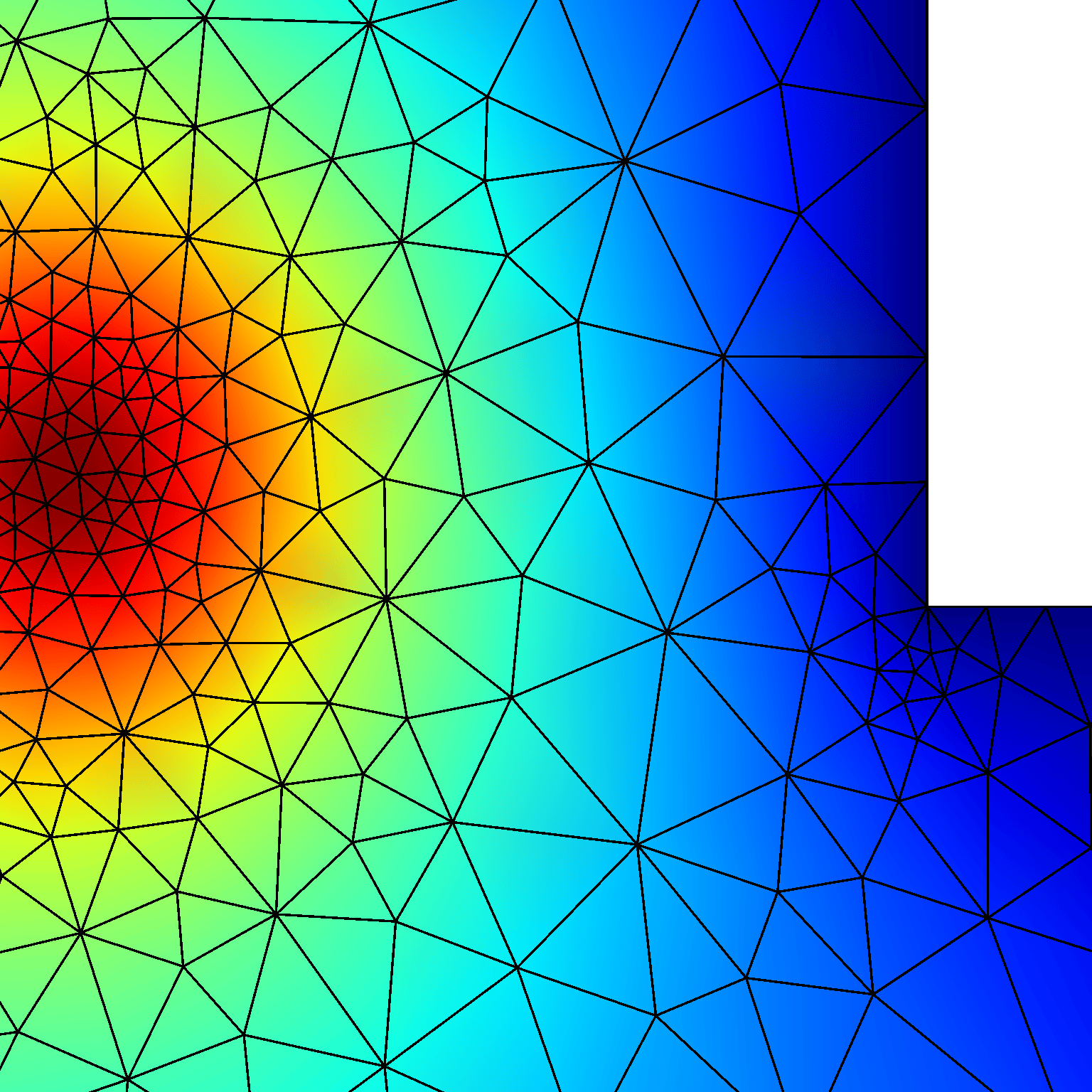} & 
      \includegraphics[width=0.95\linewidth]{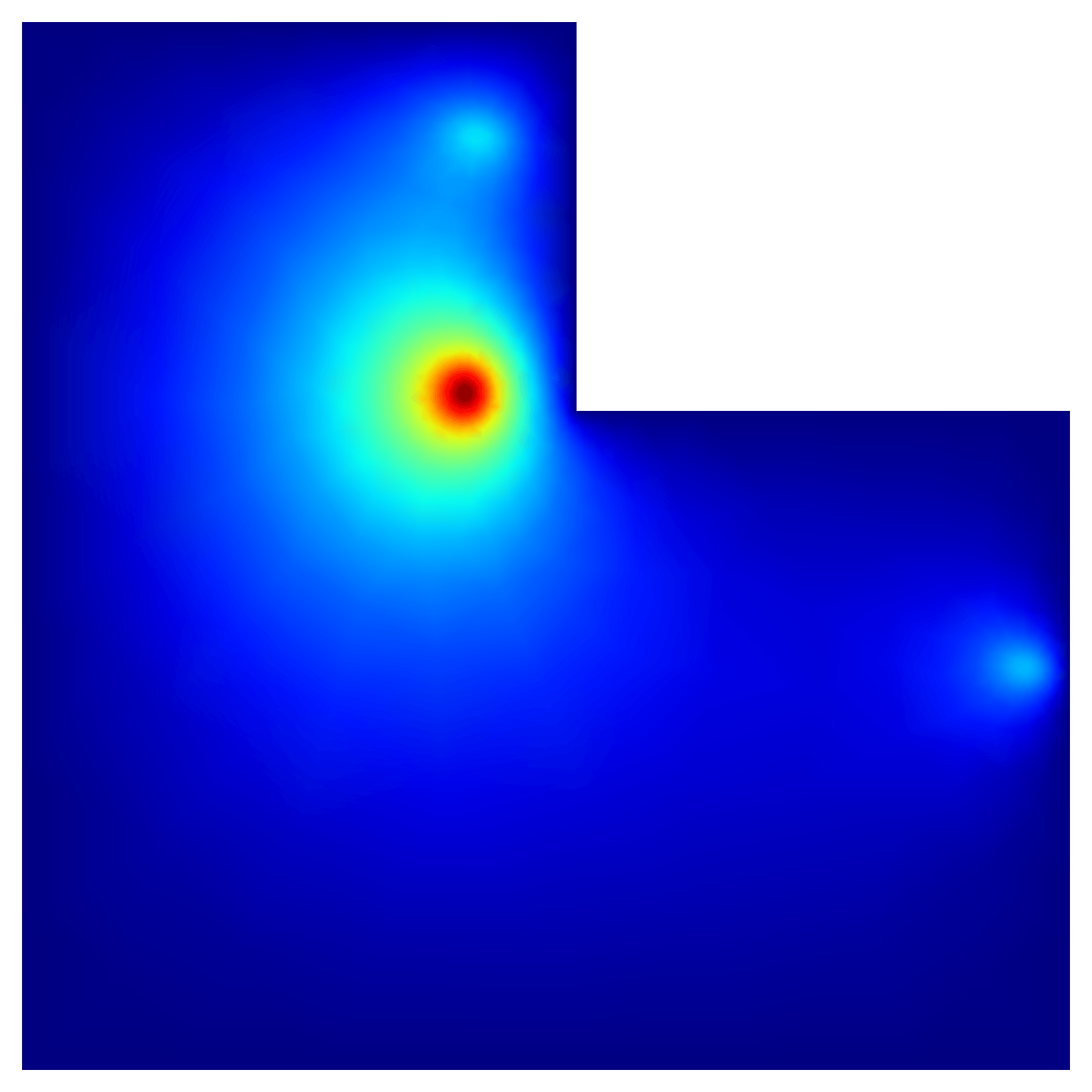}
      \vspace{0.4mm} \\
      
      \includegraphics[width=0.95\linewidth]{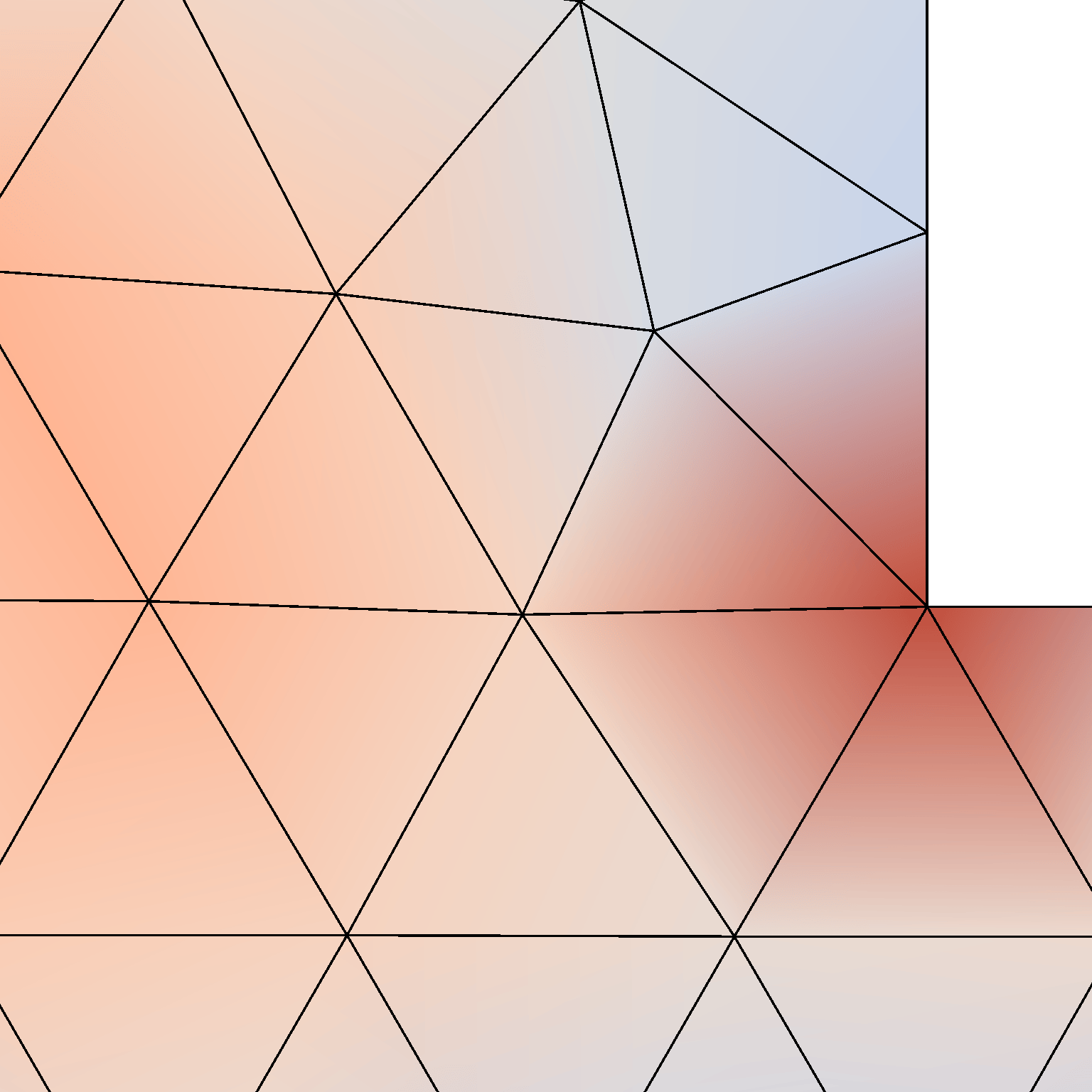} &
      \includegraphics[width=0.95\linewidth]{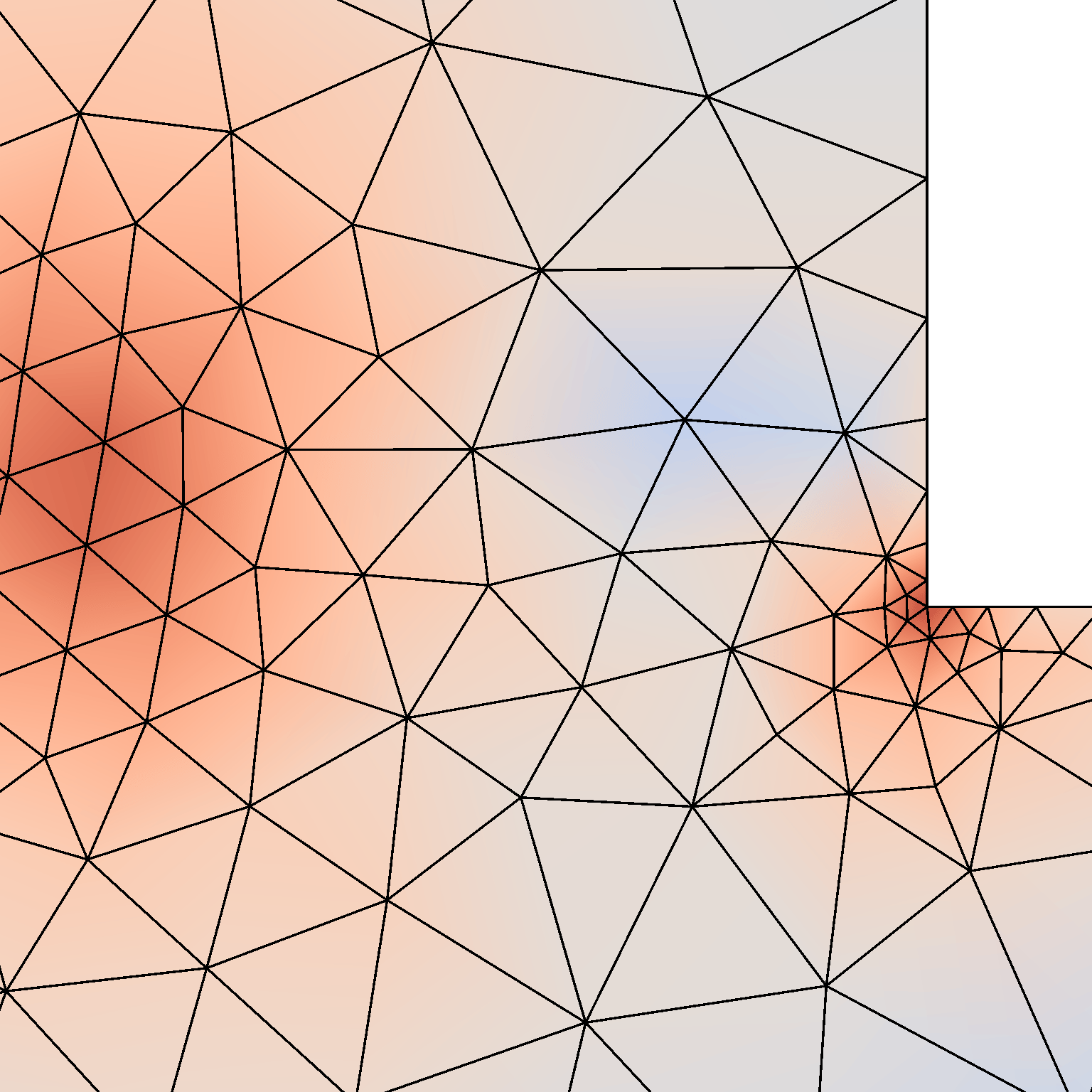} & 
      \includegraphics[width=0.95\linewidth]{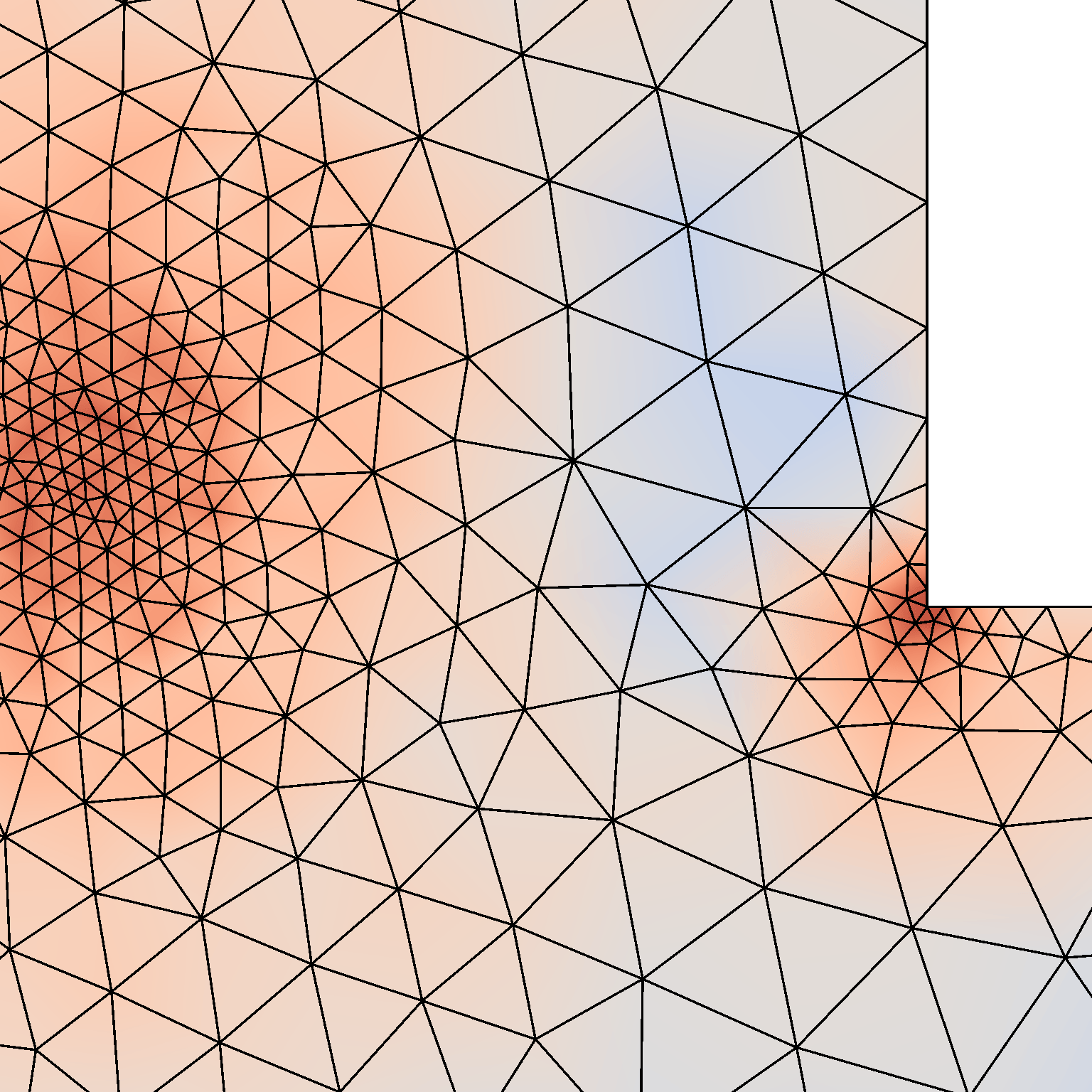} &
      \includegraphics[width=0.95\linewidth]{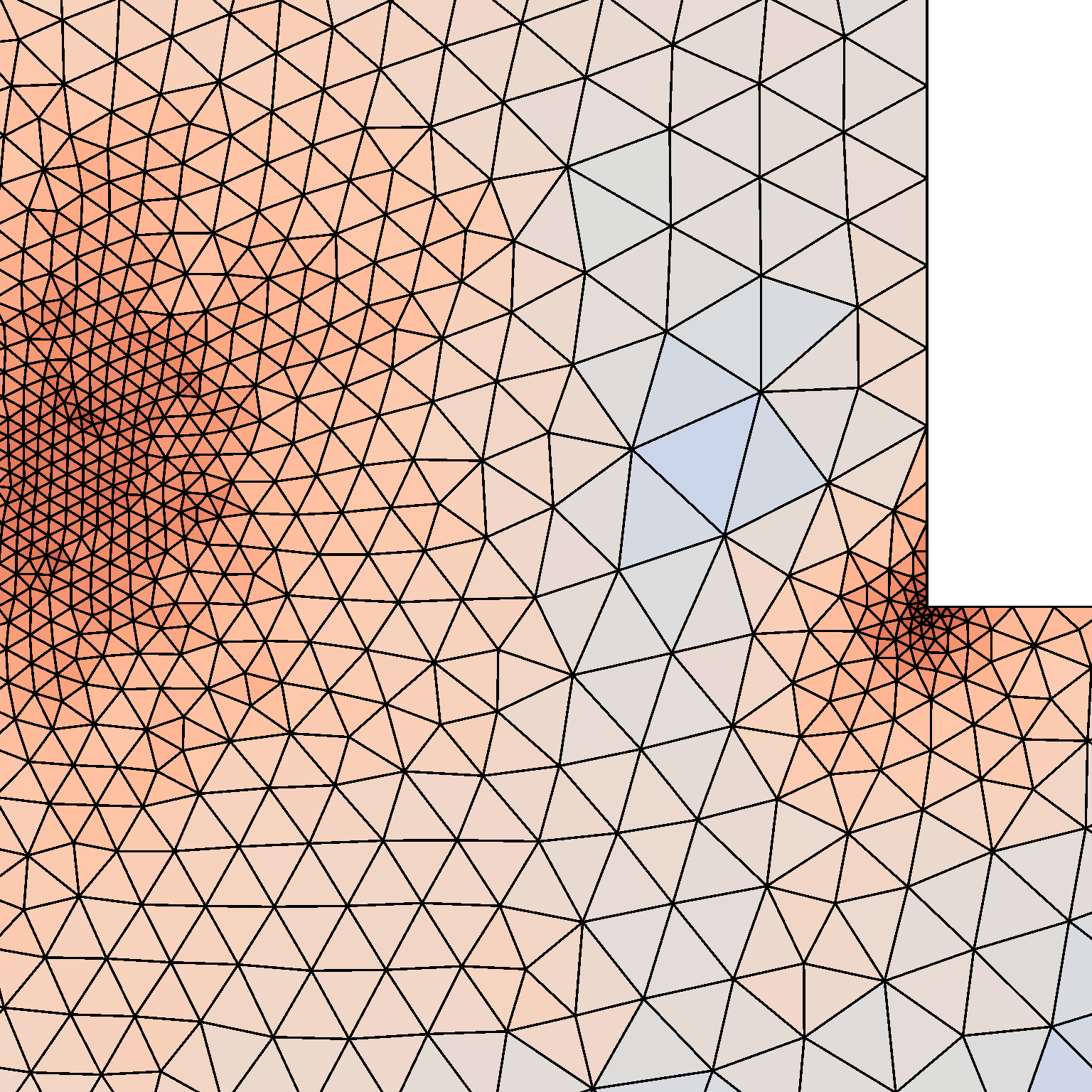} &
      \includegraphics[width=0.95\linewidth]{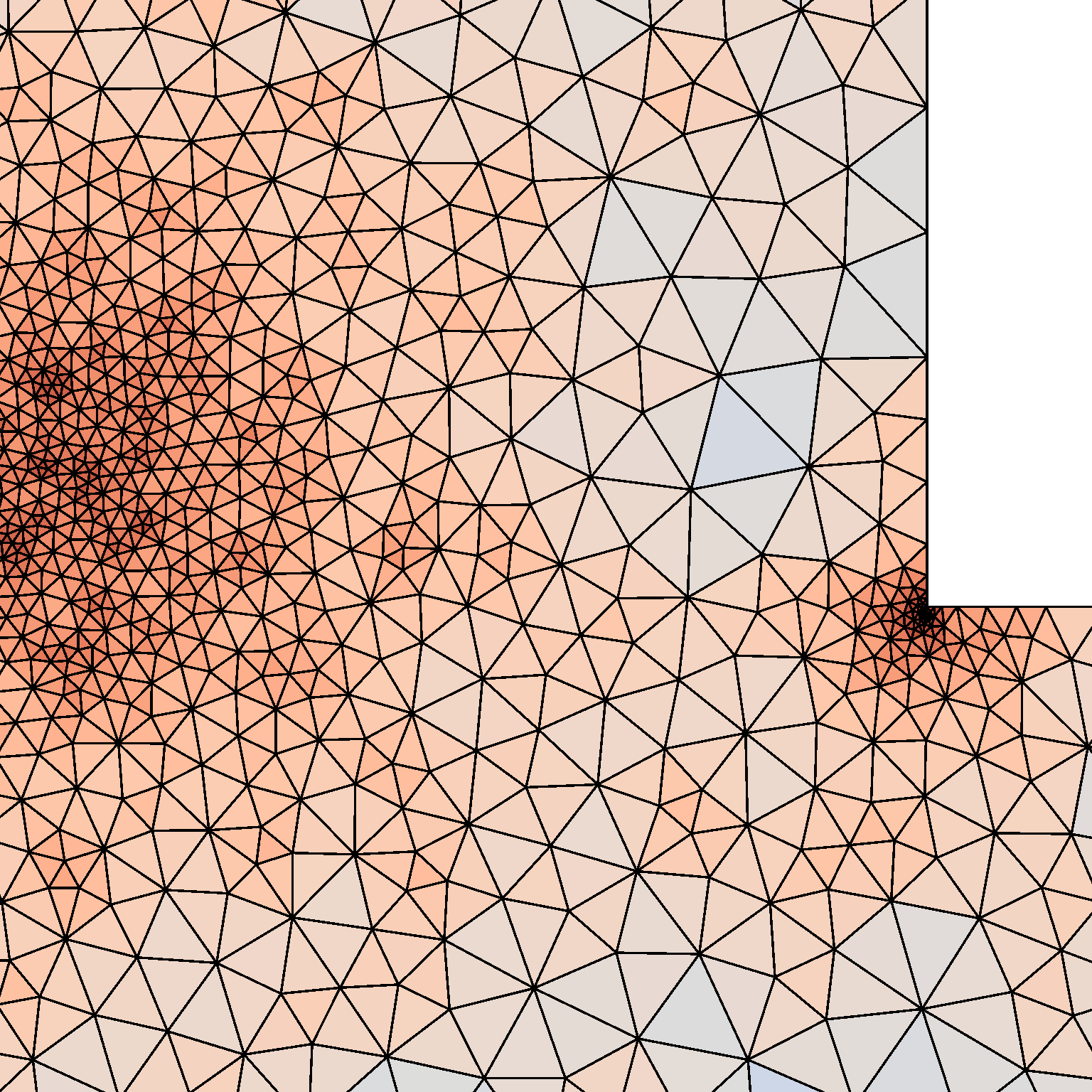} & 
      \includegraphics[width=0.95\linewidth]{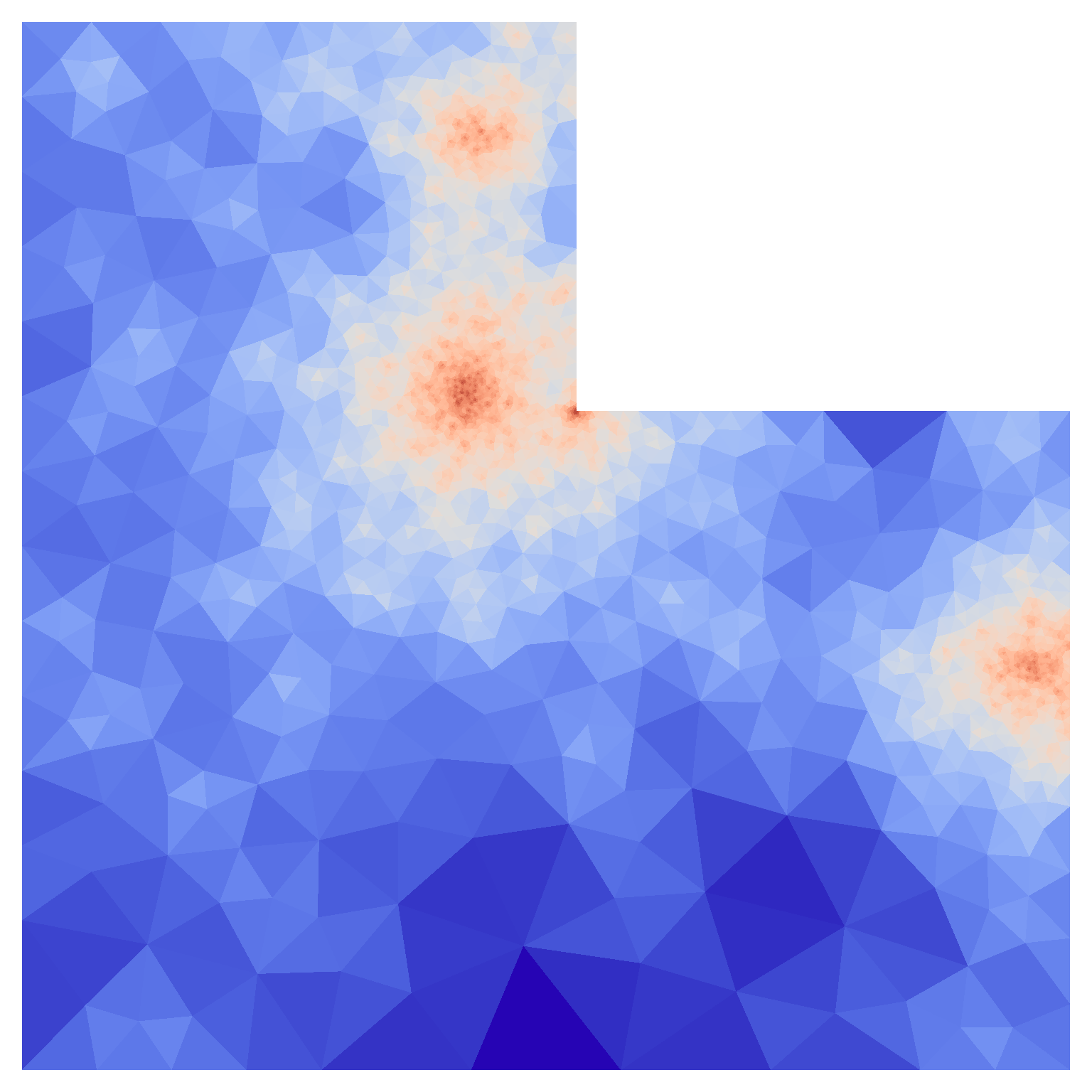} &
      \includegraphics[width=0.95\linewidth]{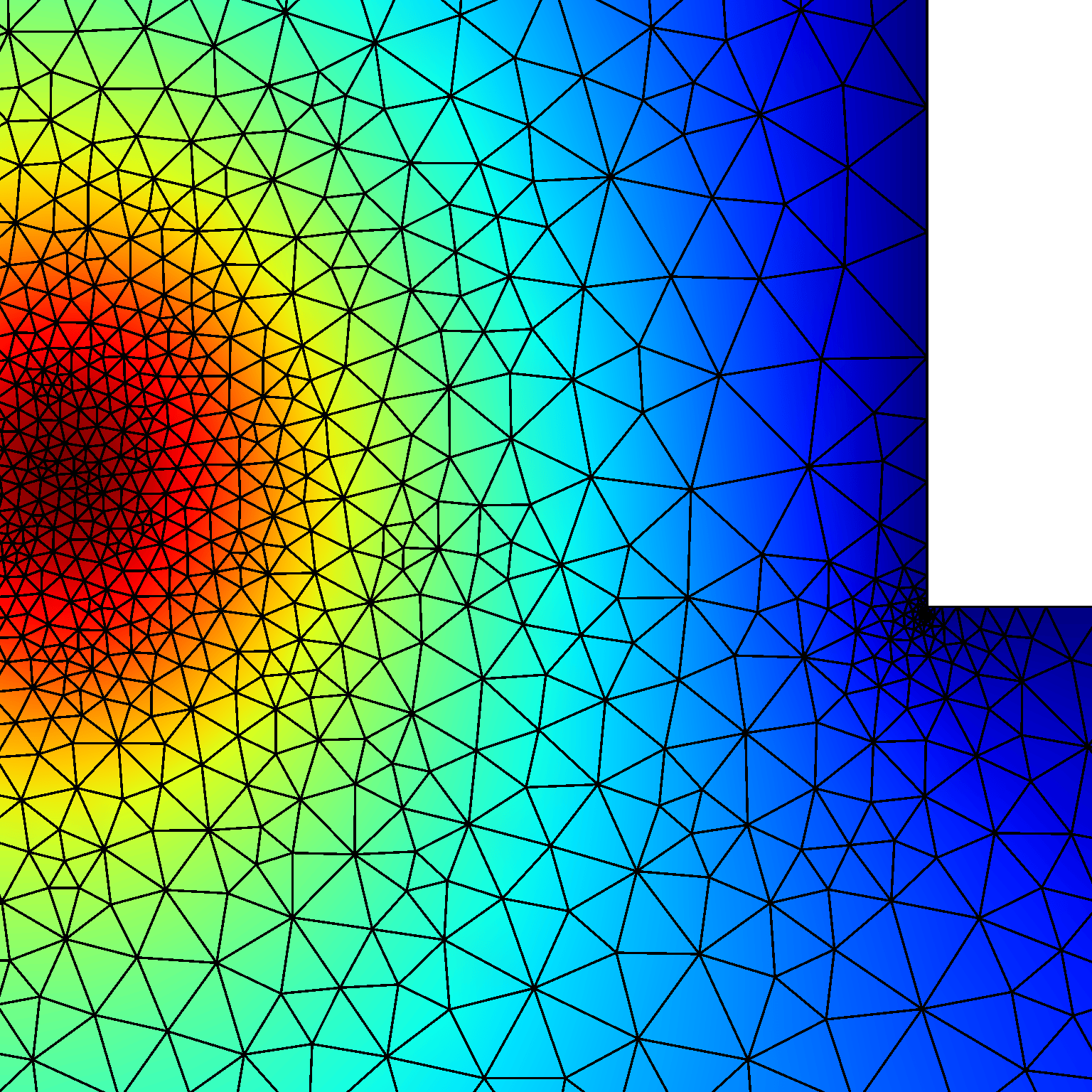} & 
      \includegraphics[width=0.95\linewidth]{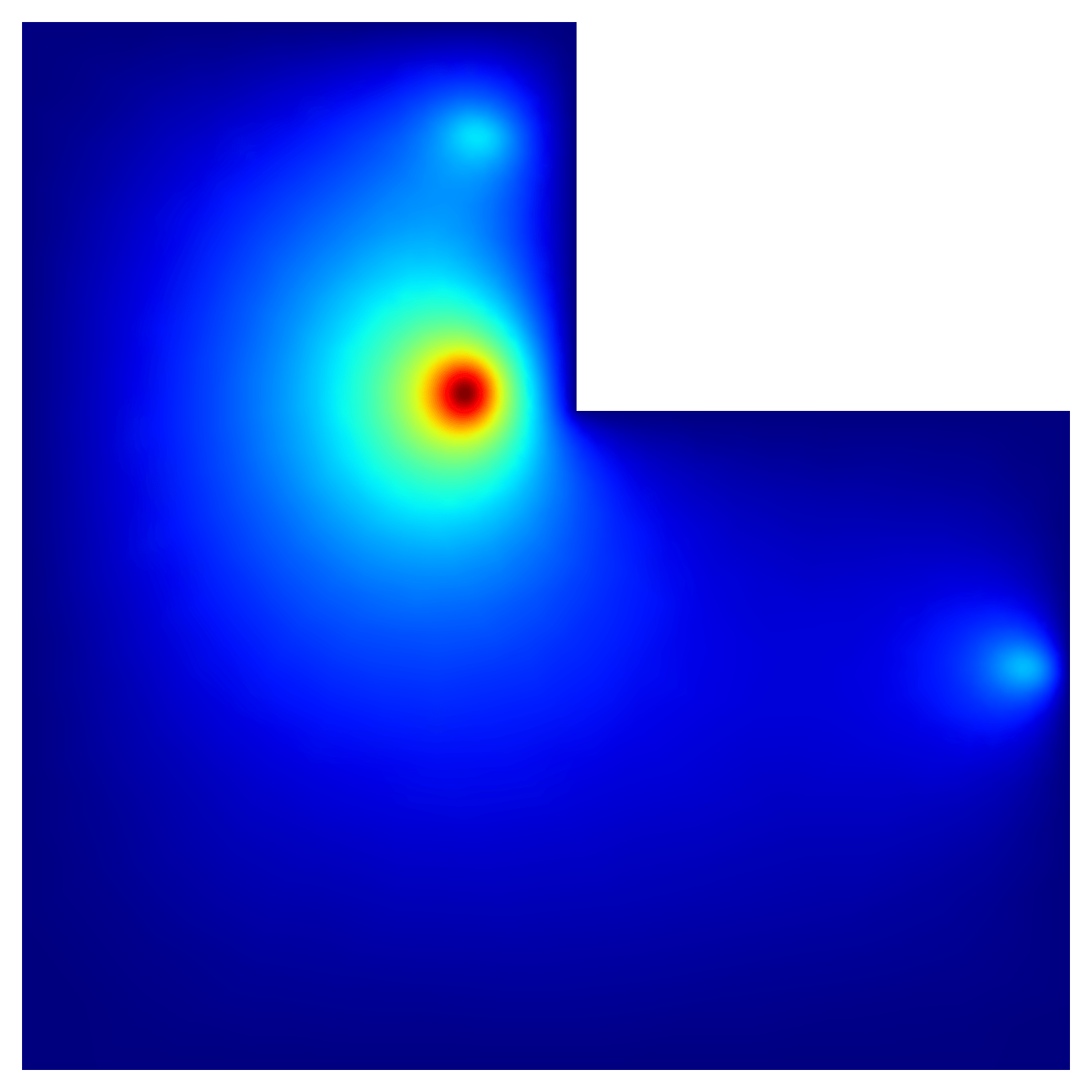}
      \vspace{0.4mm} \\

      \includegraphics[width=0.95\linewidth]{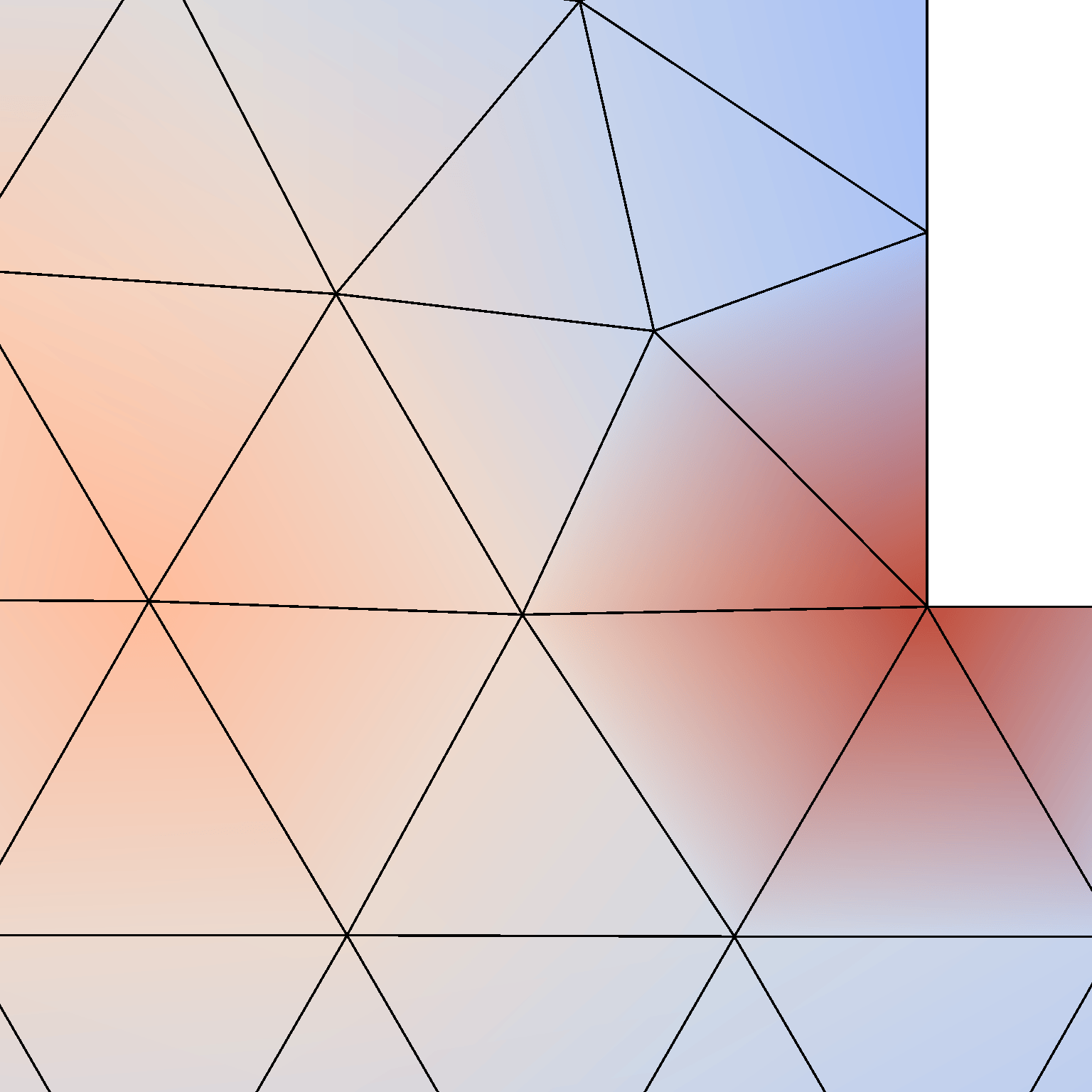} &
      \includegraphics[width=0.95\linewidth]{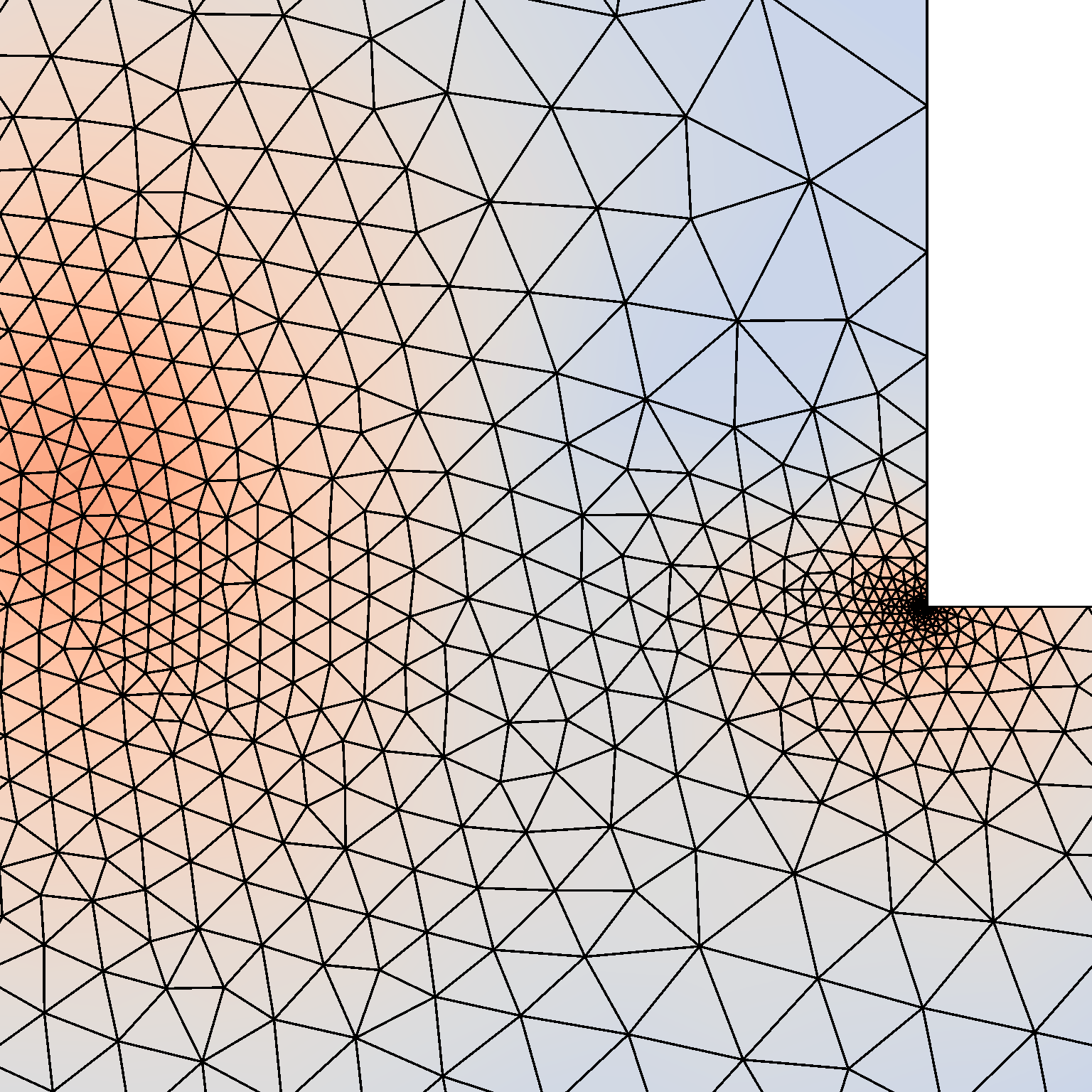} & 
      \includegraphics[width=0.95\linewidth]{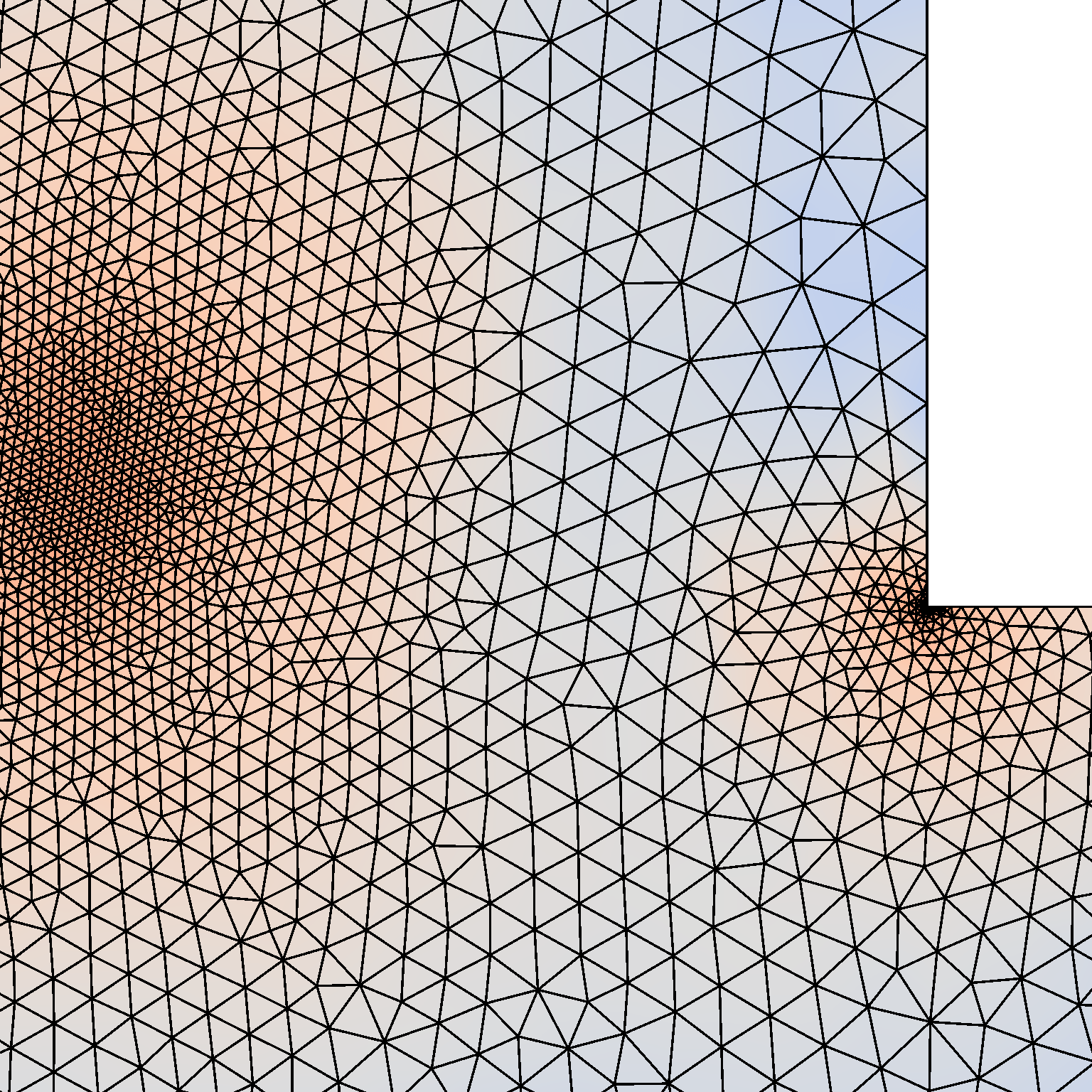} &
      \includegraphics[width=0.95\linewidth]{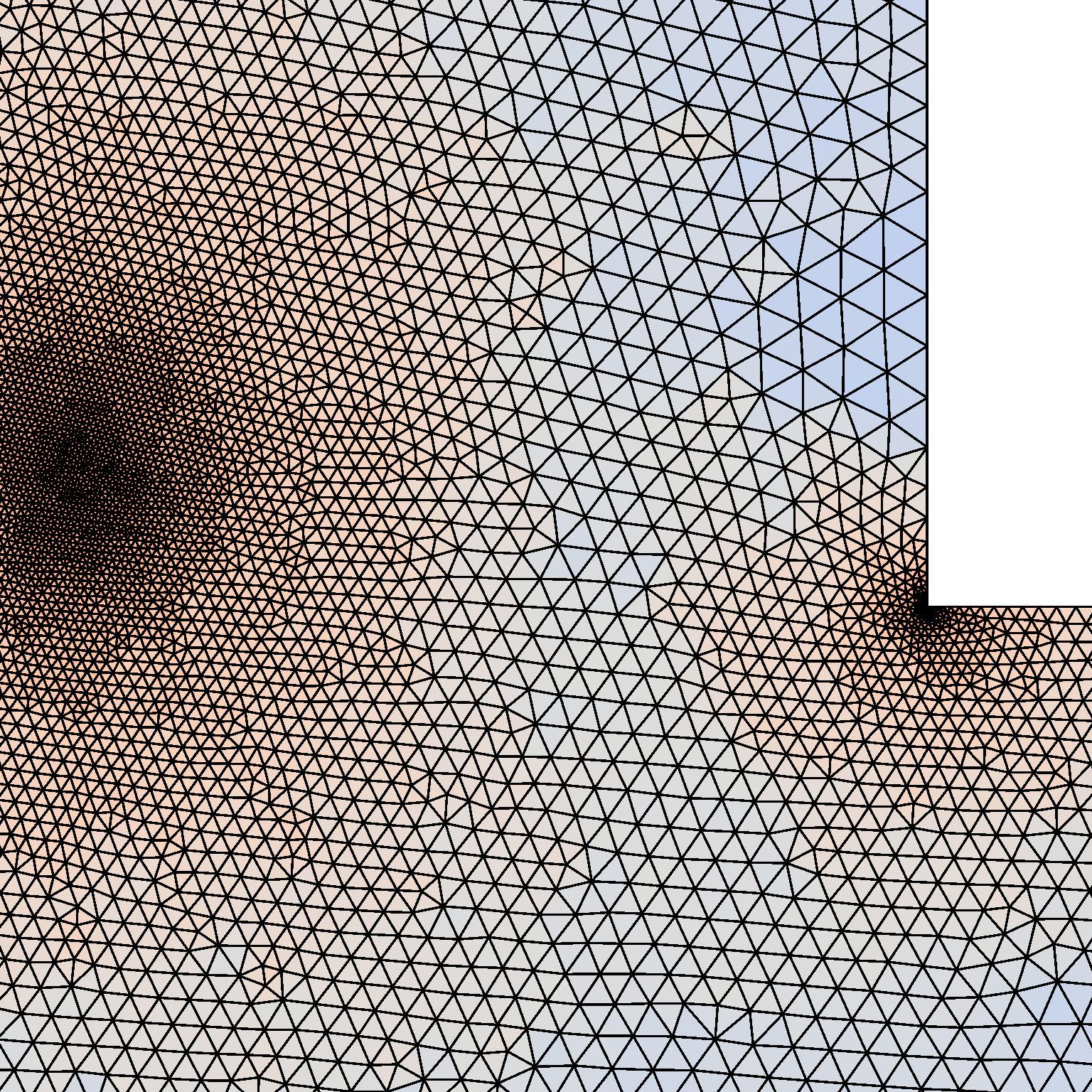} &
    \includegraphics[width=0.95\linewidth]{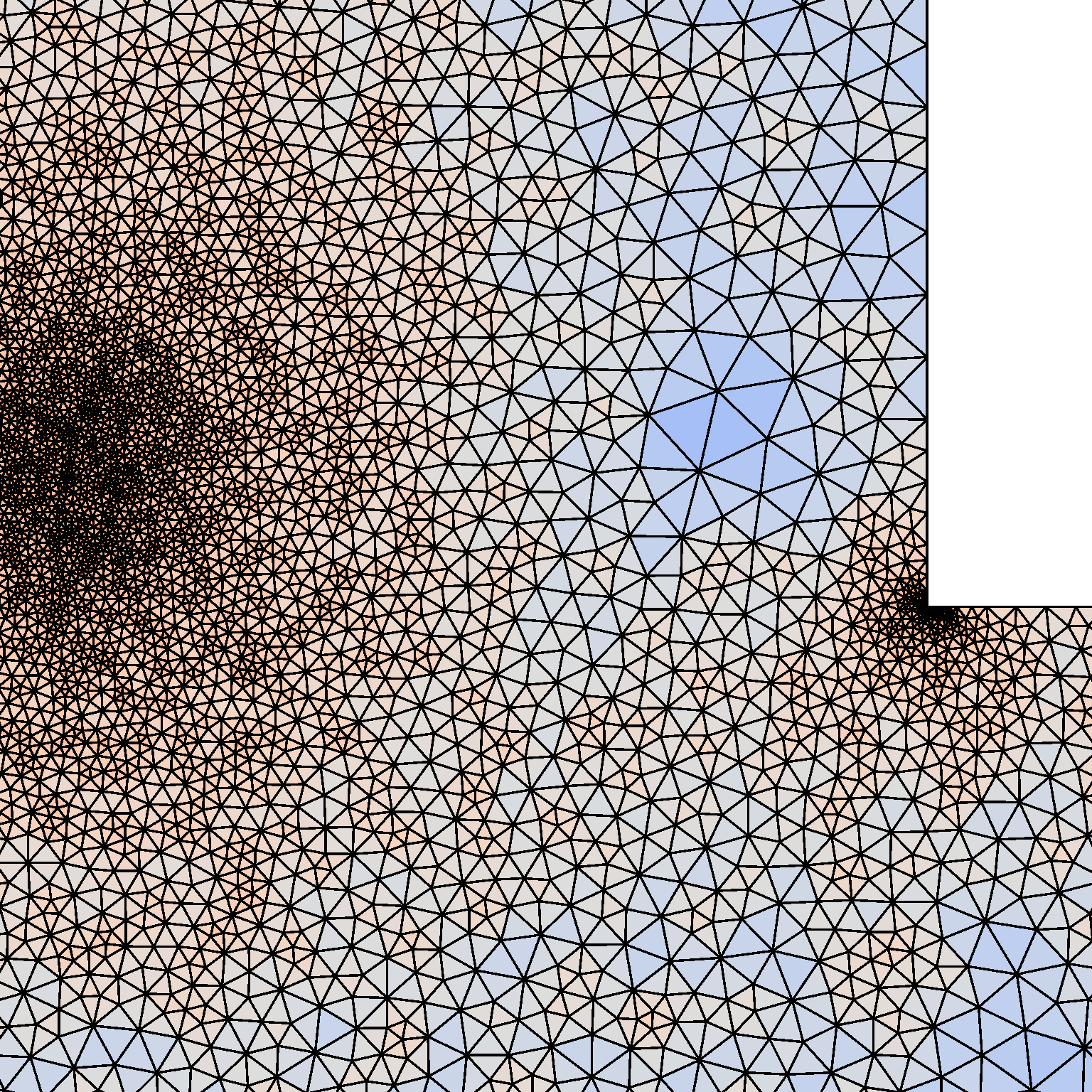} & 
      \includegraphics[width=0.95\linewidth]{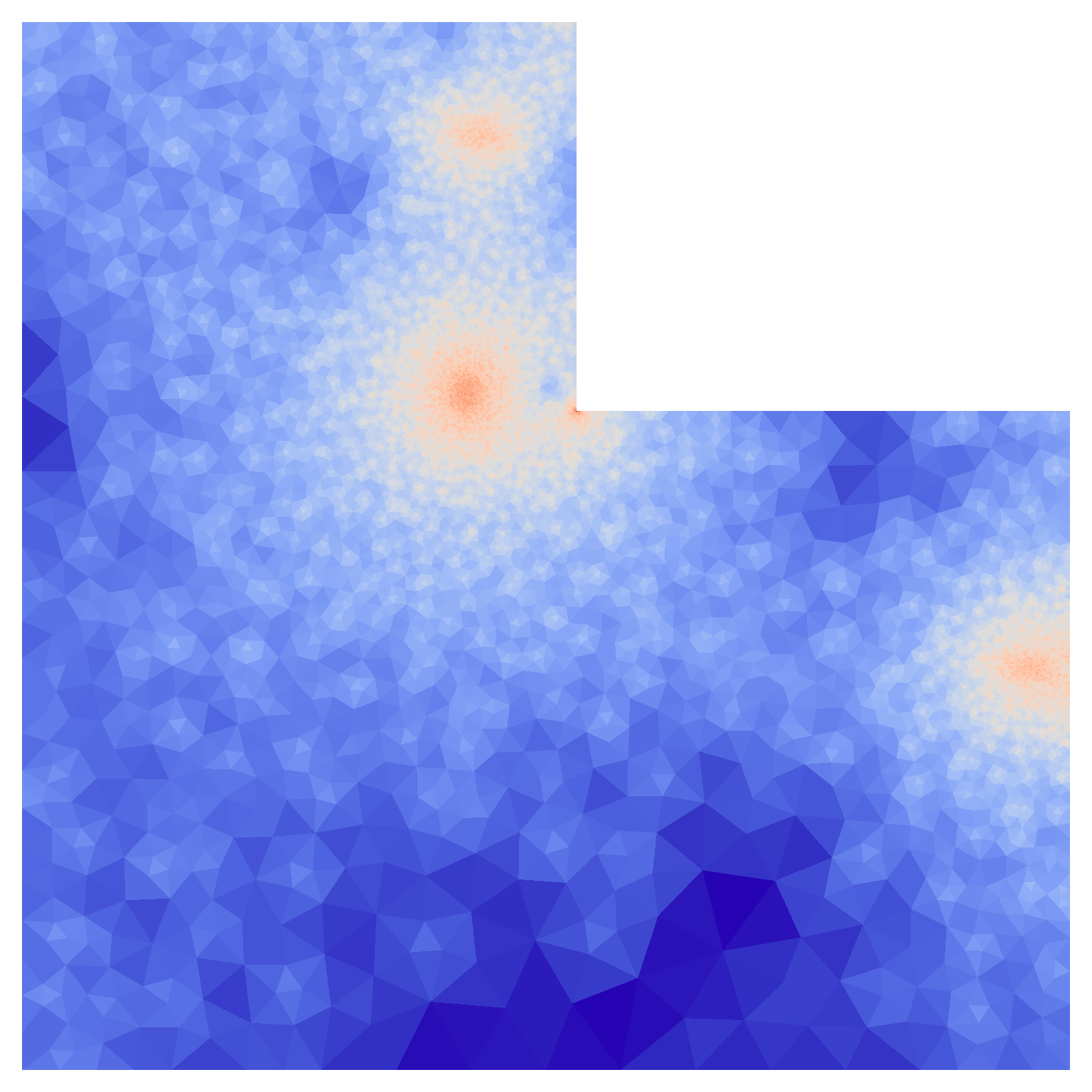} &
      \includegraphics[width=0.95\linewidth]{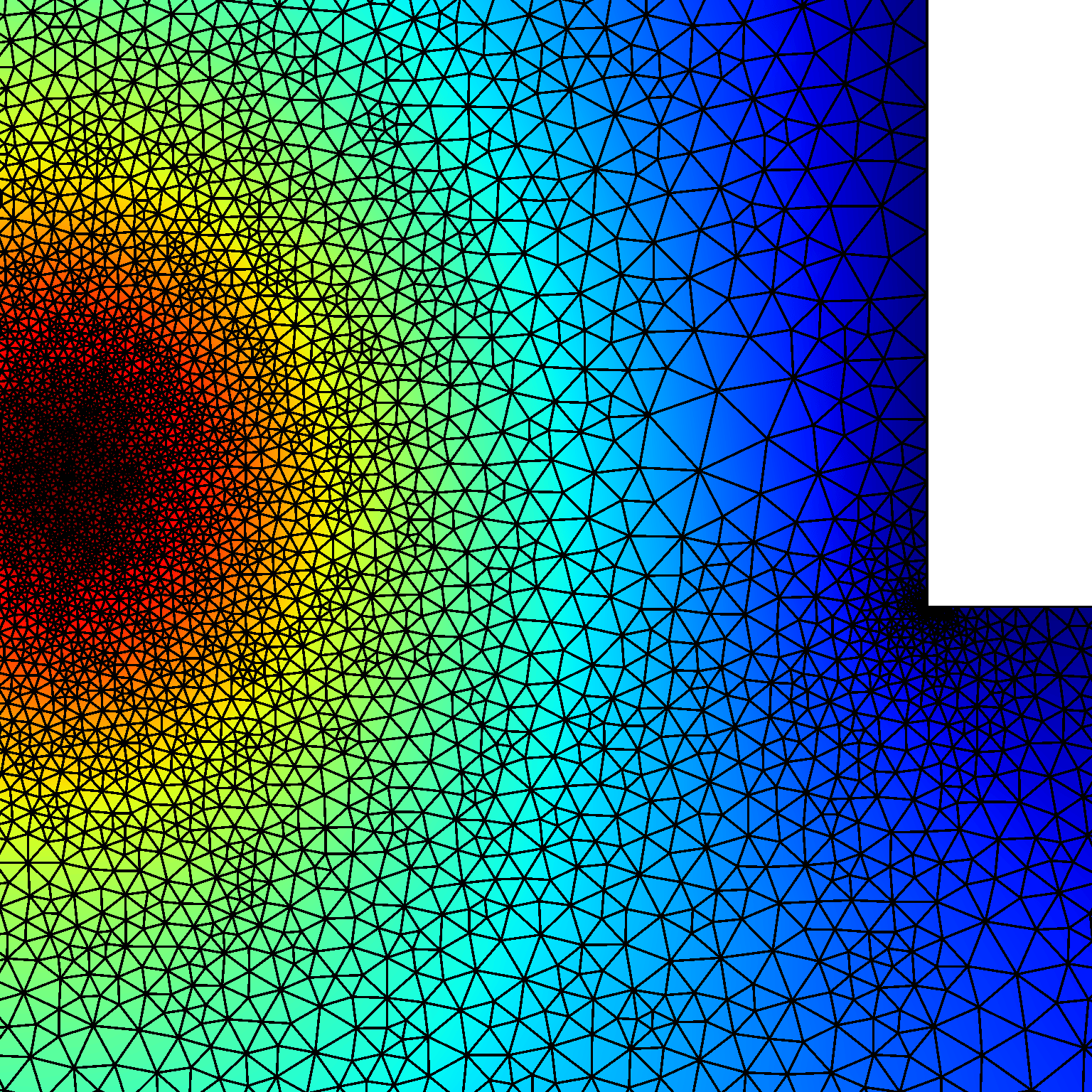} & 
      \includegraphics[width=0.95\linewidth]{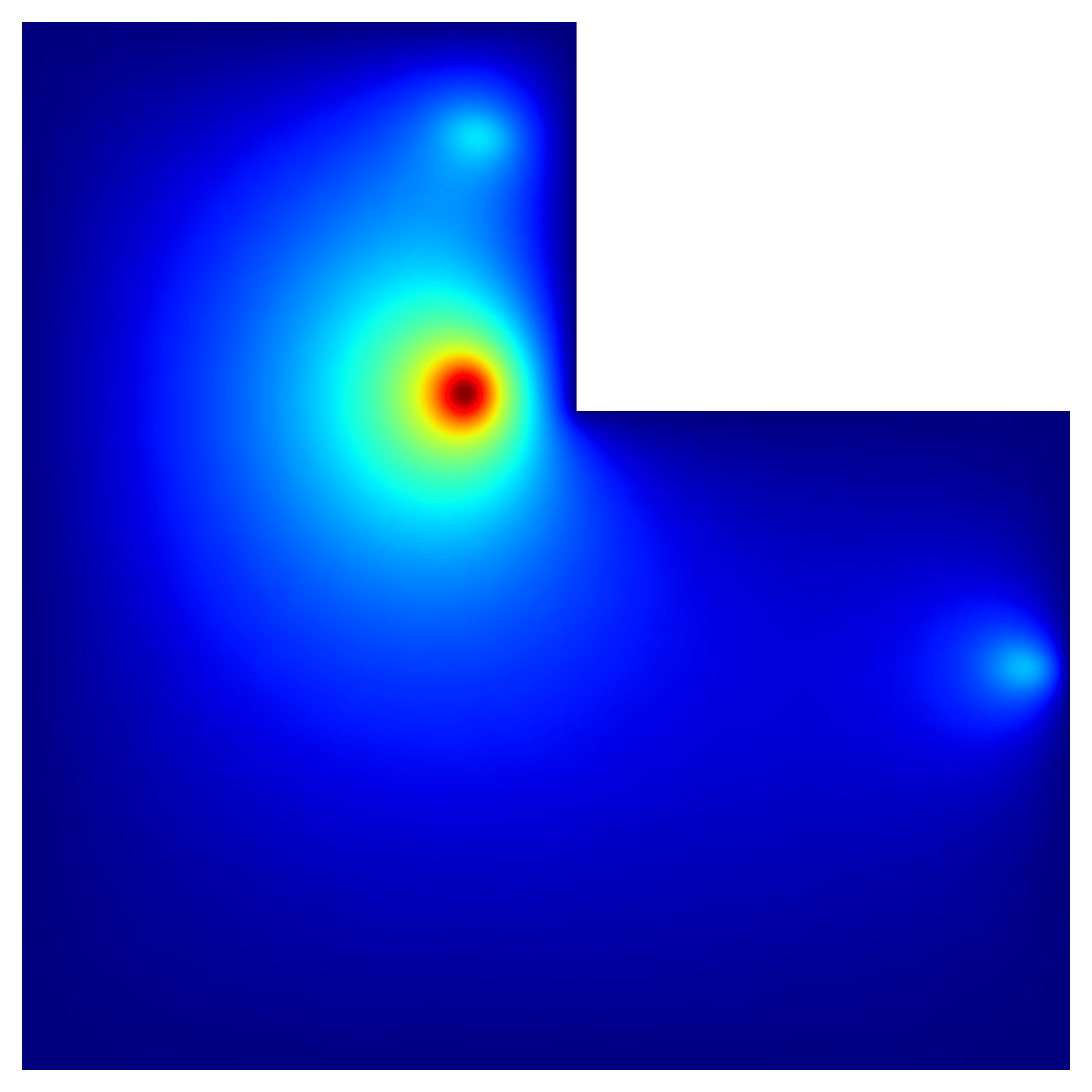} 
      \vspace{0.4mm} \\

    \subcaptionbox*{$t{=}0$}[0.95\linewidth]{%
      \includegraphics[width=0.95\linewidth]{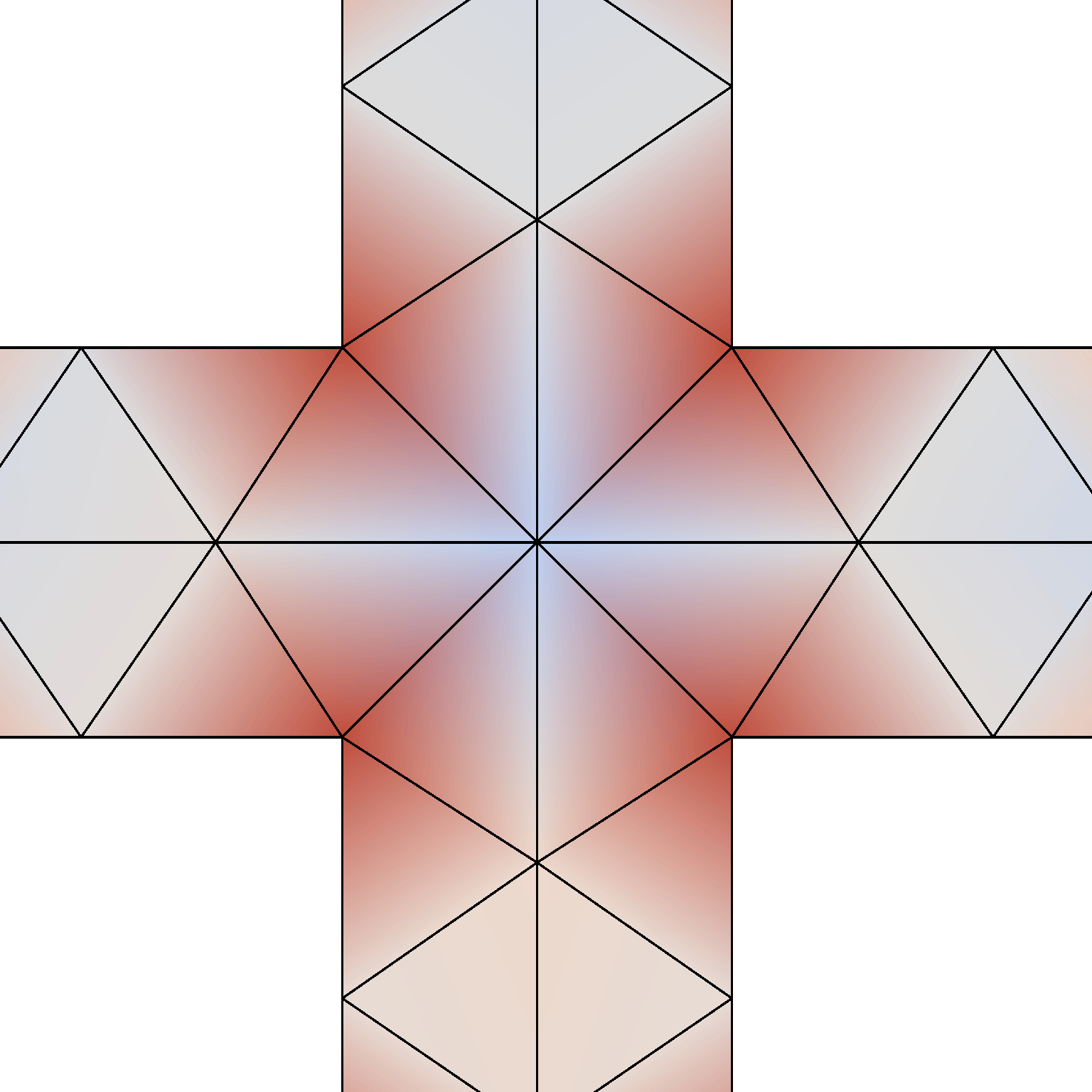}
      \vspace{-0.1cm}
      } &
    \subcaptionbox*{$t{=}1$}[0.95\linewidth]{%
      \includegraphics[width=0.95\linewidth]{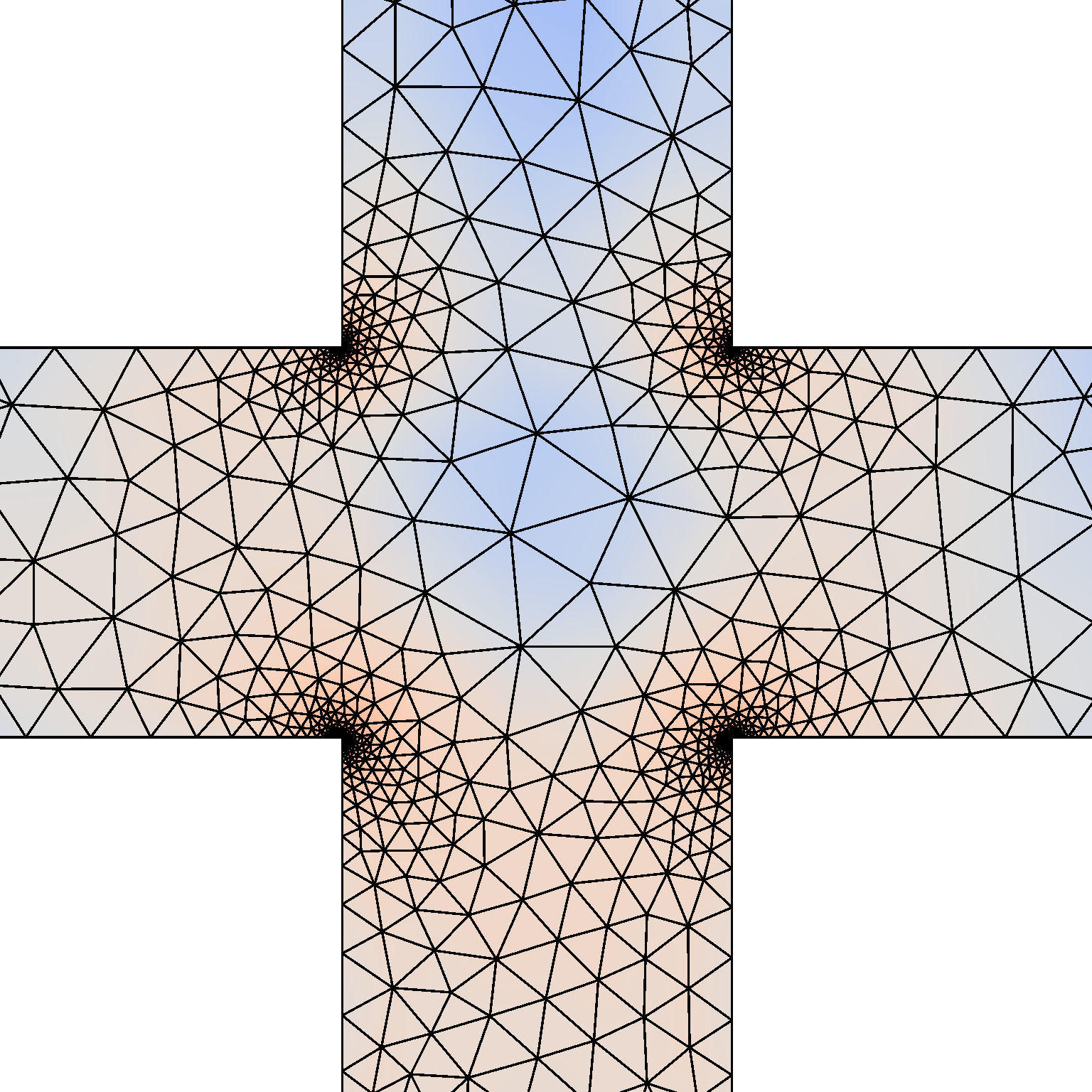}
      \vspace{-0.1cm}
      } & 
    \subcaptionbox*{$t{=}2$}[0.95\linewidth]{%
      \includegraphics[width=0.95\linewidth]{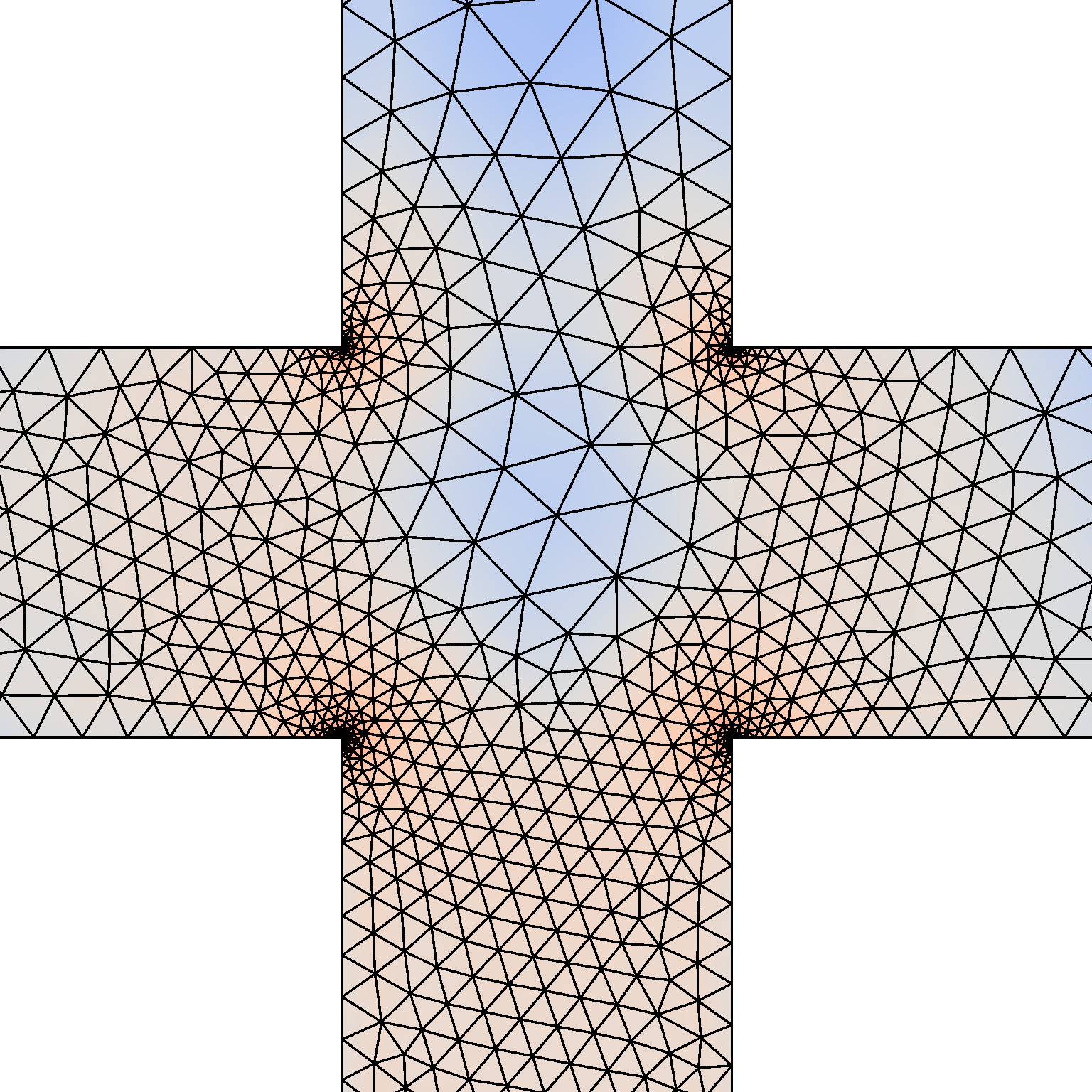}
      \vspace{-0.1cm}
    }&
    \subcaptionbox*{$t{=}3$}[0.95\linewidth]{%
      \includegraphics[width=0.95\linewidth]{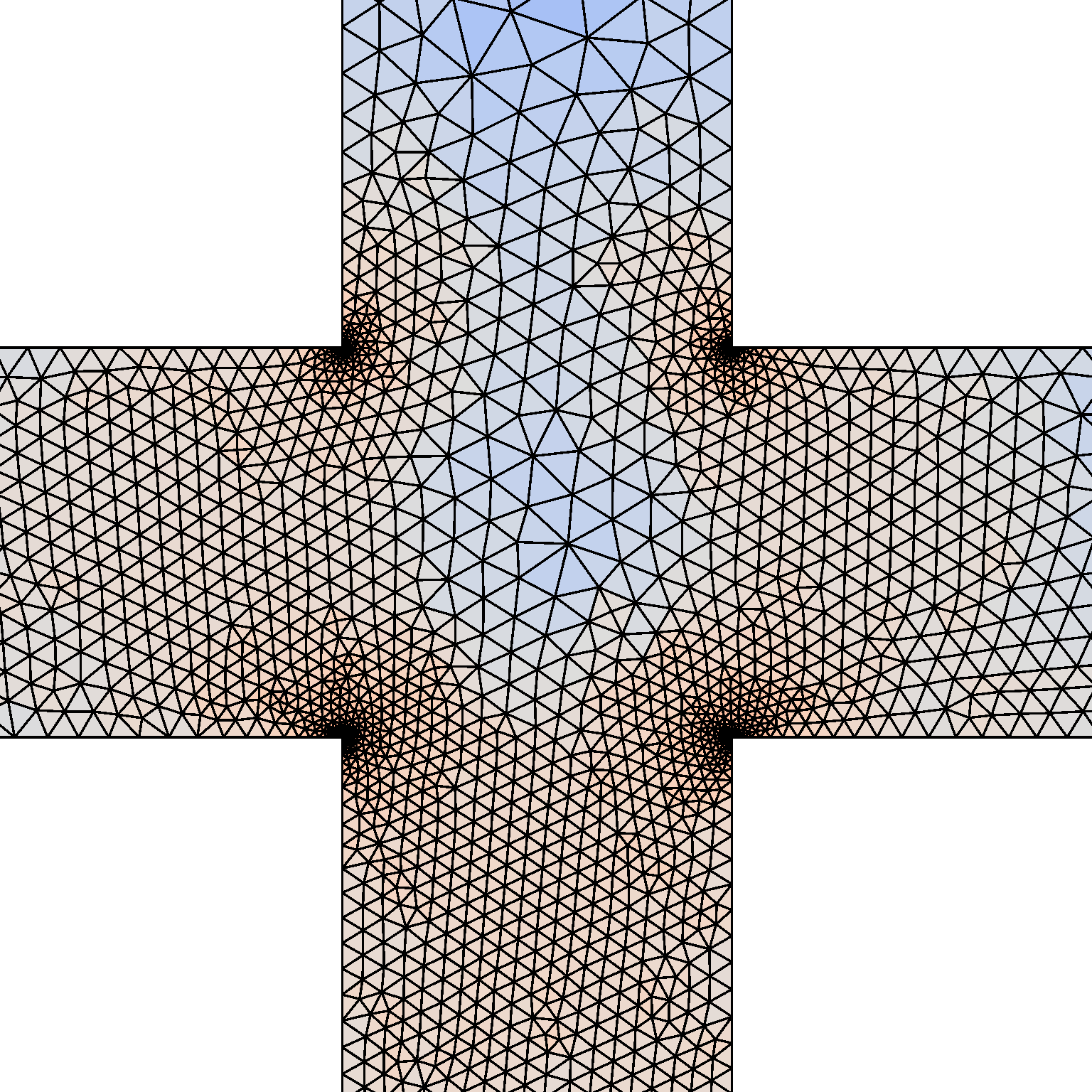}
      \vspace{-0.1cm}
      } &
    \subcaptionbox*{Expert}[0.95\linewidth]{%
      \includegraphics[width=0.95\linewidth]{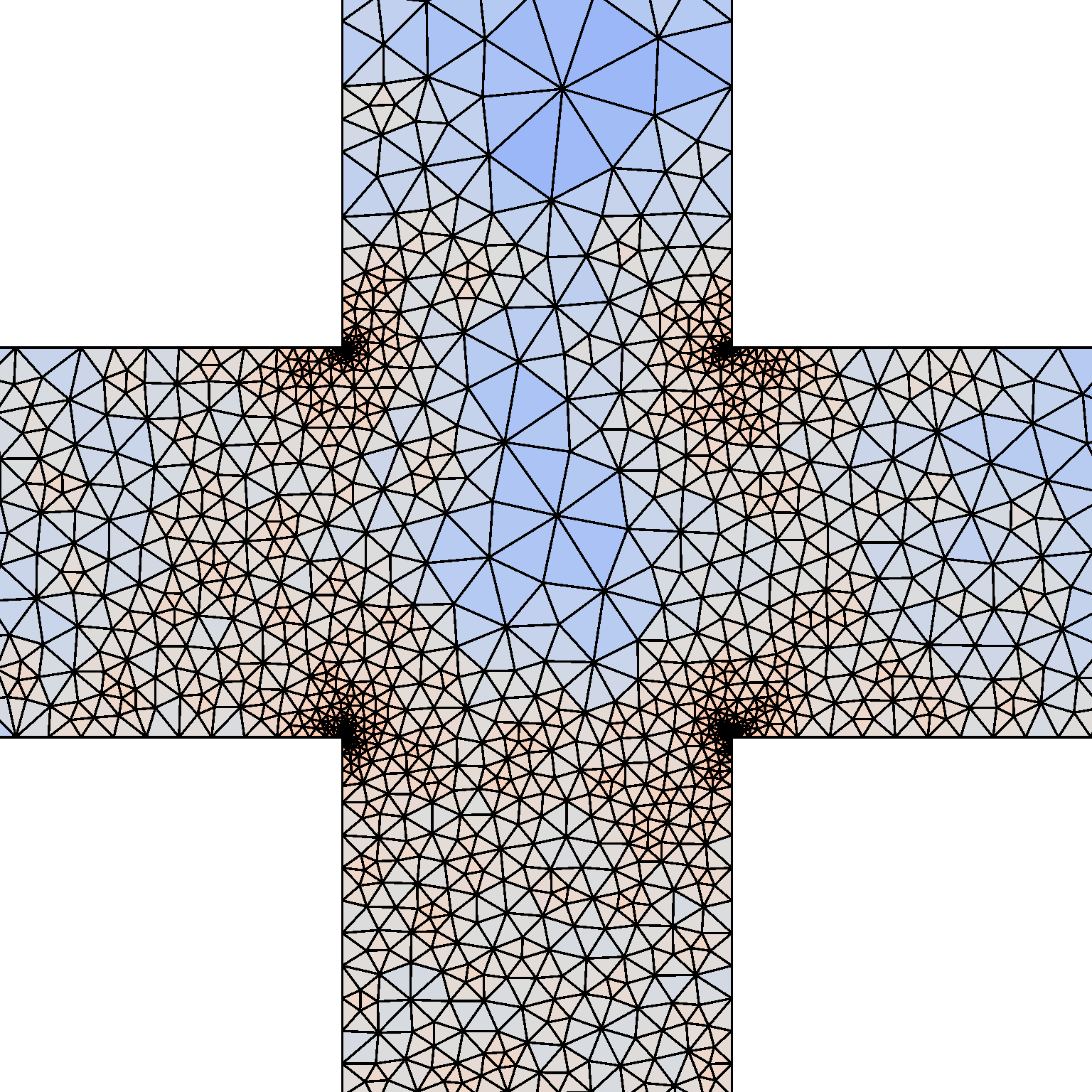}
      \vspace{-0.1cm}
      } & 
    \subcaptionbox*{Expert}[0.95\linewidth]{%
      \includegraphics[width=0.95\linewidth]{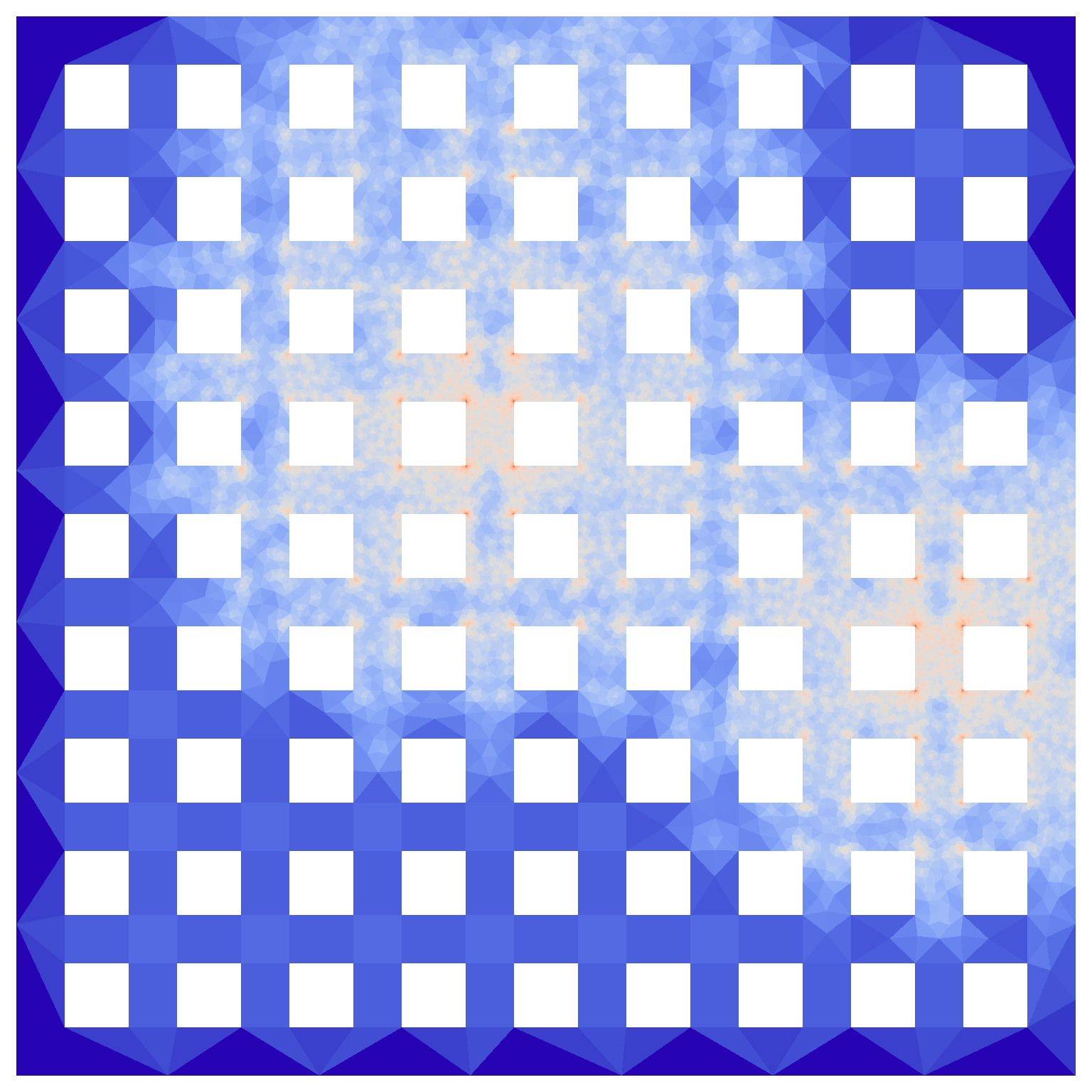}
      \vspace{-0.1cm}
      } & 
    \subcaptionbox*{Solution}[0.95\linewidth]{%
      \includegraphics[width=0.95\linewidth]{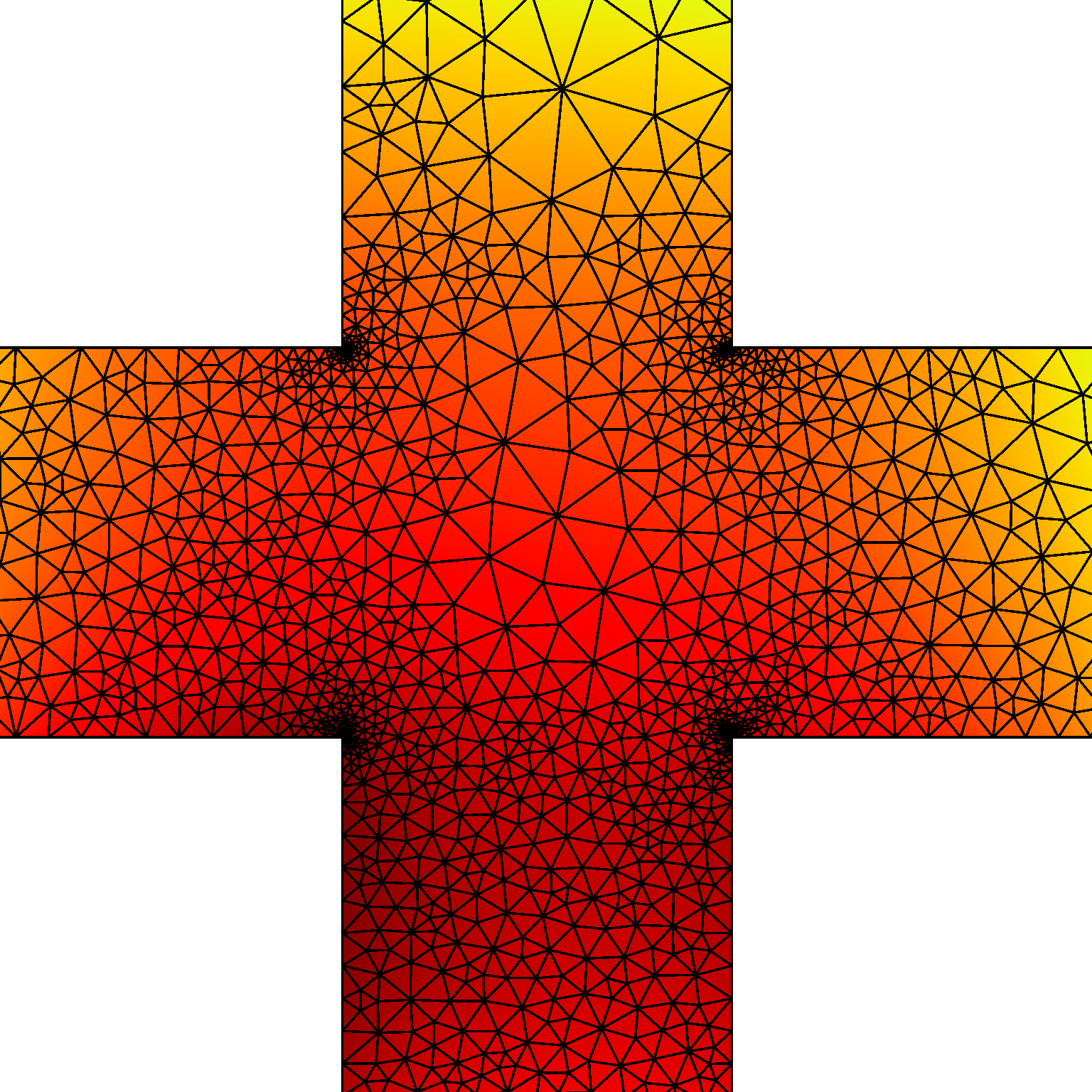}
      \vspace{-0.1cm}
      } &
    \subcaptionbox*{Solution}[0.95\linewidth]{%
      \includegraphics[width=0.95\linewidth]{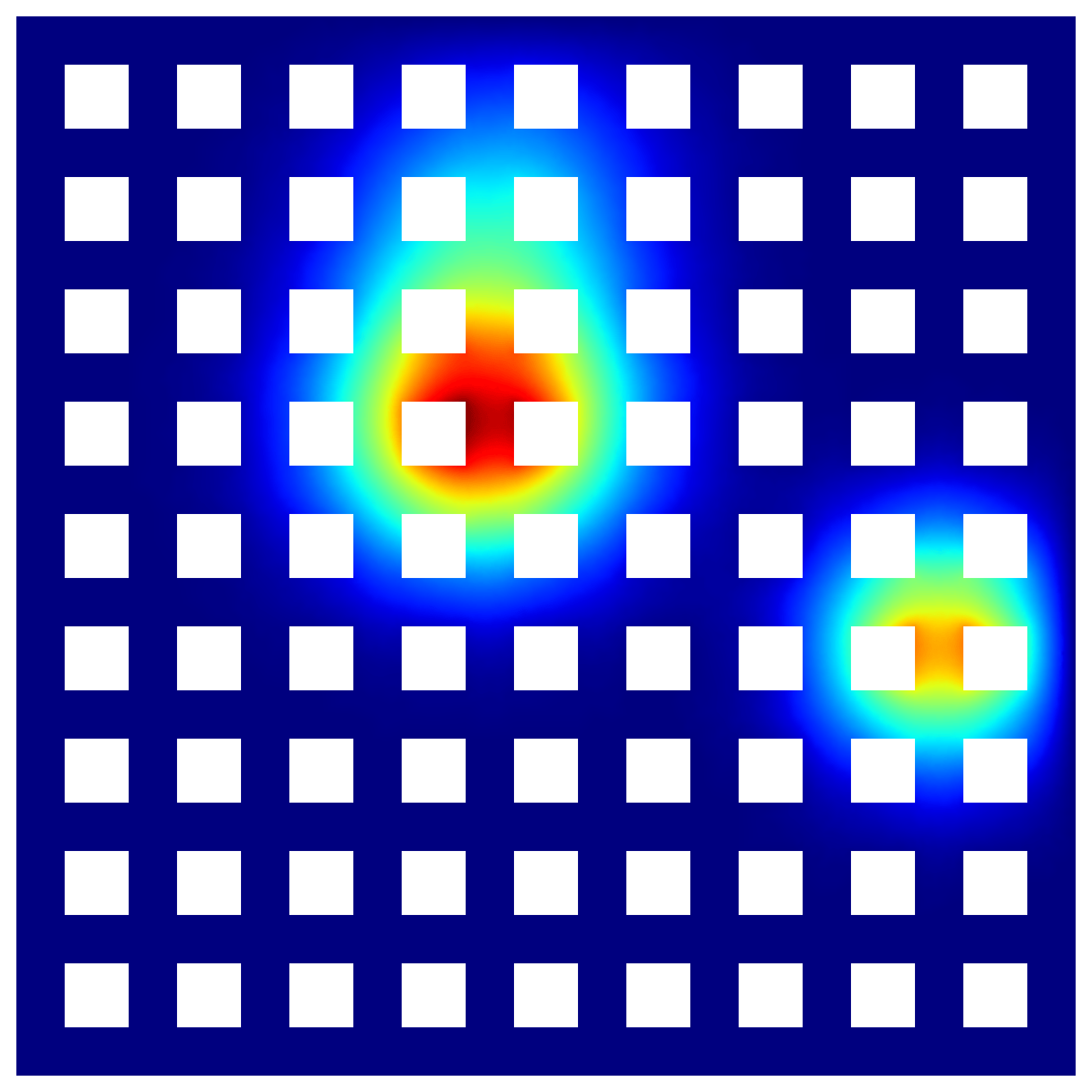}
      \vspace{-0.1cm}
      } 
      \vspace{0.4mm}

  \end{tabularx}
  \vspace{-0.2cm}
  \caption{
  Close-ups of \gls{amber} rollouts on the \texttt{Poisson} (\textit{easy}/\textit{medium}/\textit{hard}) and \texttt{Laplace} datasets from $t{=}0$ to $t{=}T{=}3$.
  Left, middle: For $t{<}3$, the colorscale denotes the prediction, otherwise the element size.
  Right: The colorscale shows the \gls{fem} solution on the zoomed-in and full domain for the expert.
  Successive \gls{amber} steps produce increasingly refined meshes that better match the expert, improving sampling resolution for the next sizing field prediction. The refinement constant $c_t$ from Section~\ref{ssec:practical_improvements} controls mesh granularity over time, enabling coarse and efficient early steps.
  }
  \vspace{-0.4cm}
  \label{app_fig:full_rollouts_with_fem}
\end{figure}

\begin{figure}[t]
  \centering
  \setlength{\tabcolsep}{0pt}%
  \renewcommand{\arraystretch}{0}%
  \begin{tabularx}{\textwidth}{
      *{4}{>{\centering\arraybackslash}X}
      !{\hspace{0.8mm}\vrule width 0.4pt\hspace{0.8mm}}
      *{2}{>{\centering\arraybackslash}X}
    }
      
      \includegraphics[width=0.95\linewidth]{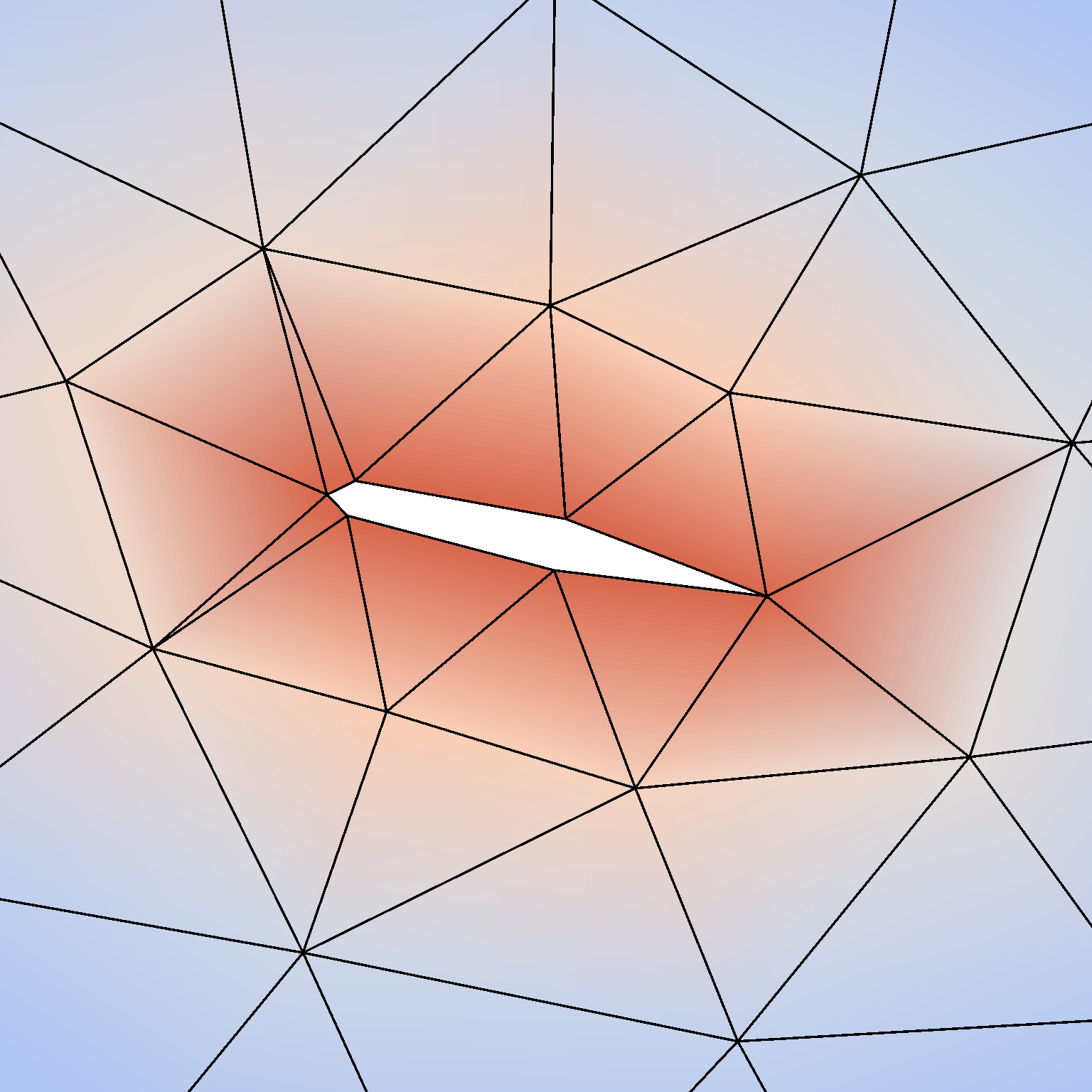} &
      \includegraphics[width=0.95\linewidth]{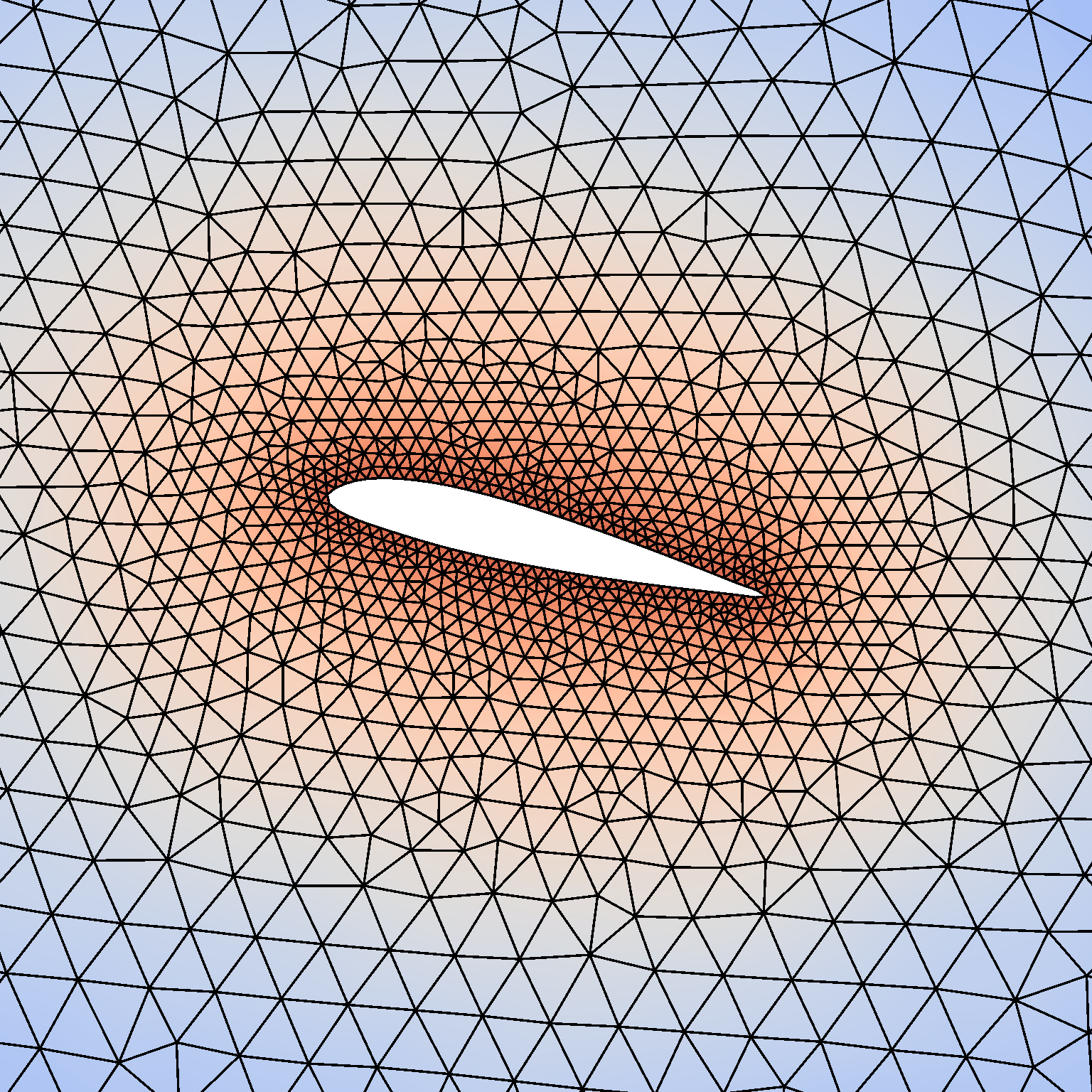} & 
      \includegraphics[width=0.95\linewidth]{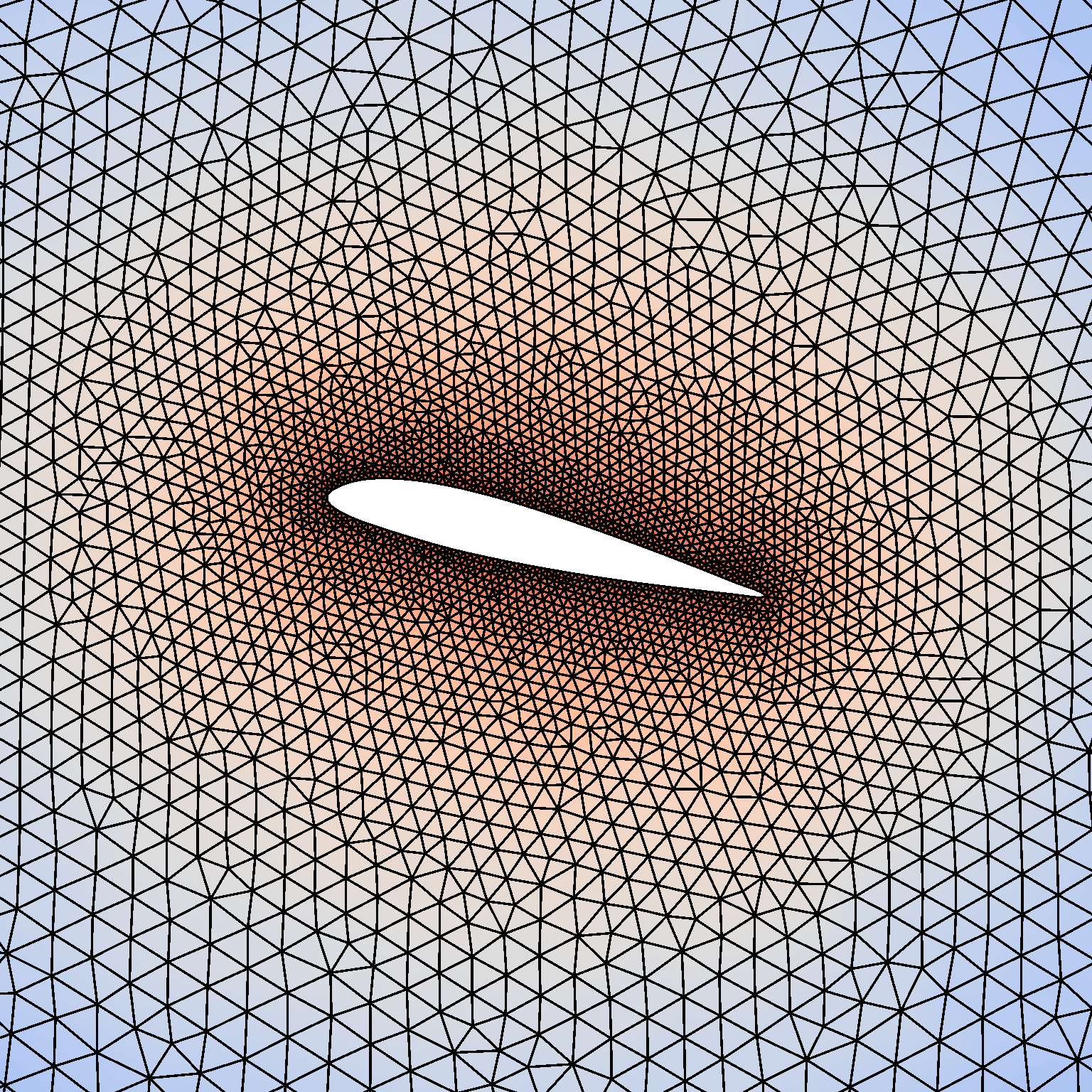} &
      \includegraphics[width=0.95\linewidth]{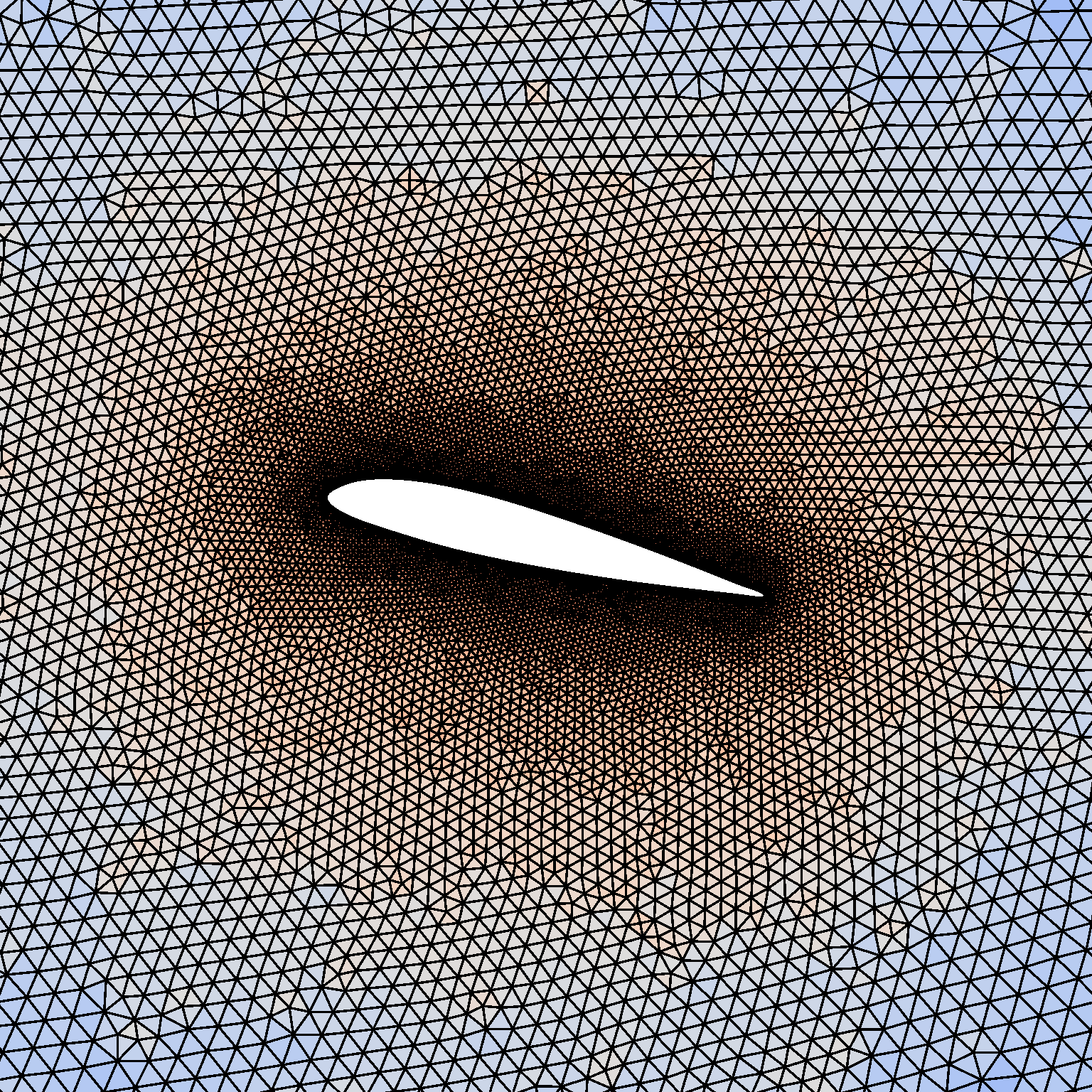} &
      \includegraphics[width=0.95\linewidth]{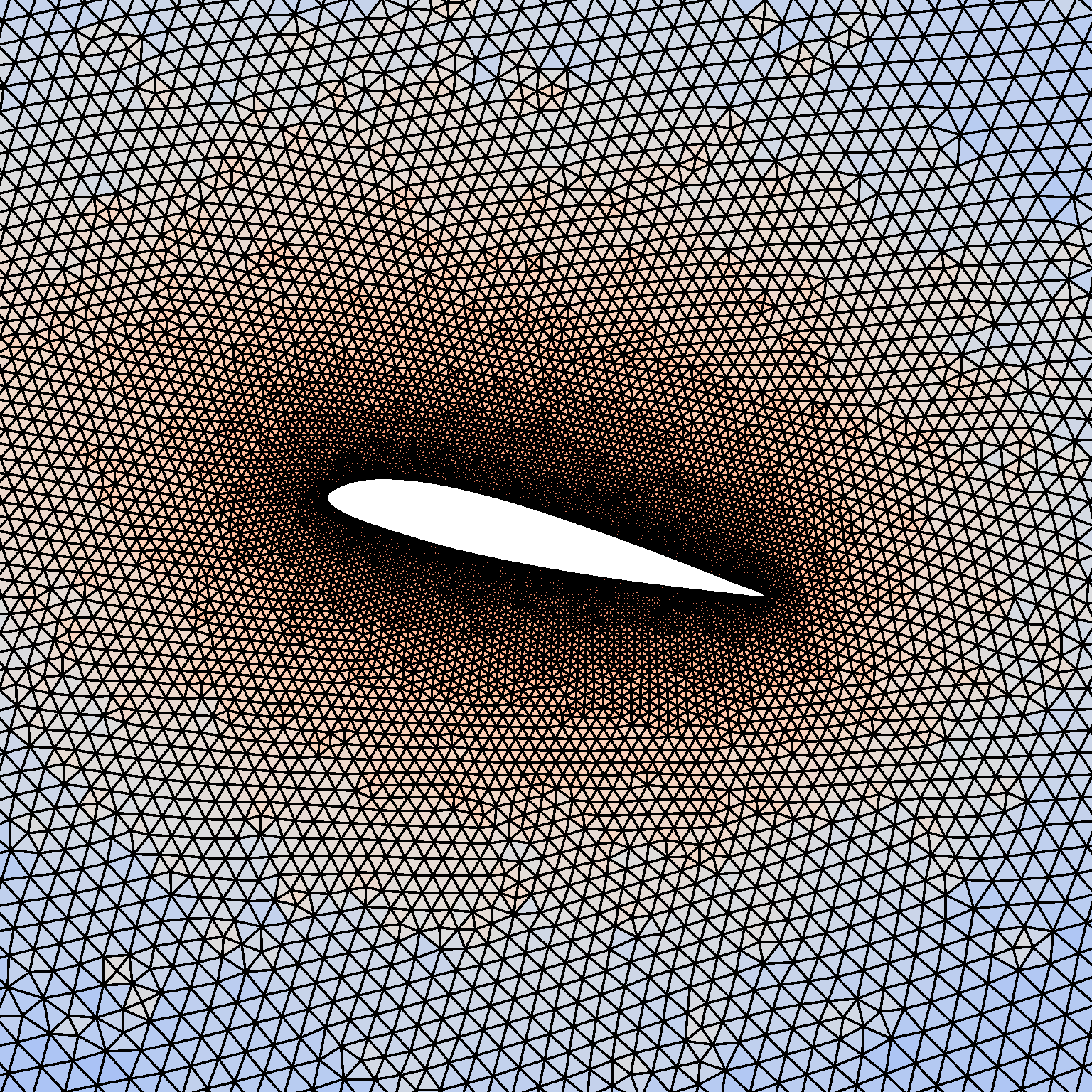} & 
      \includegraphics[width=0.95\linewidth]{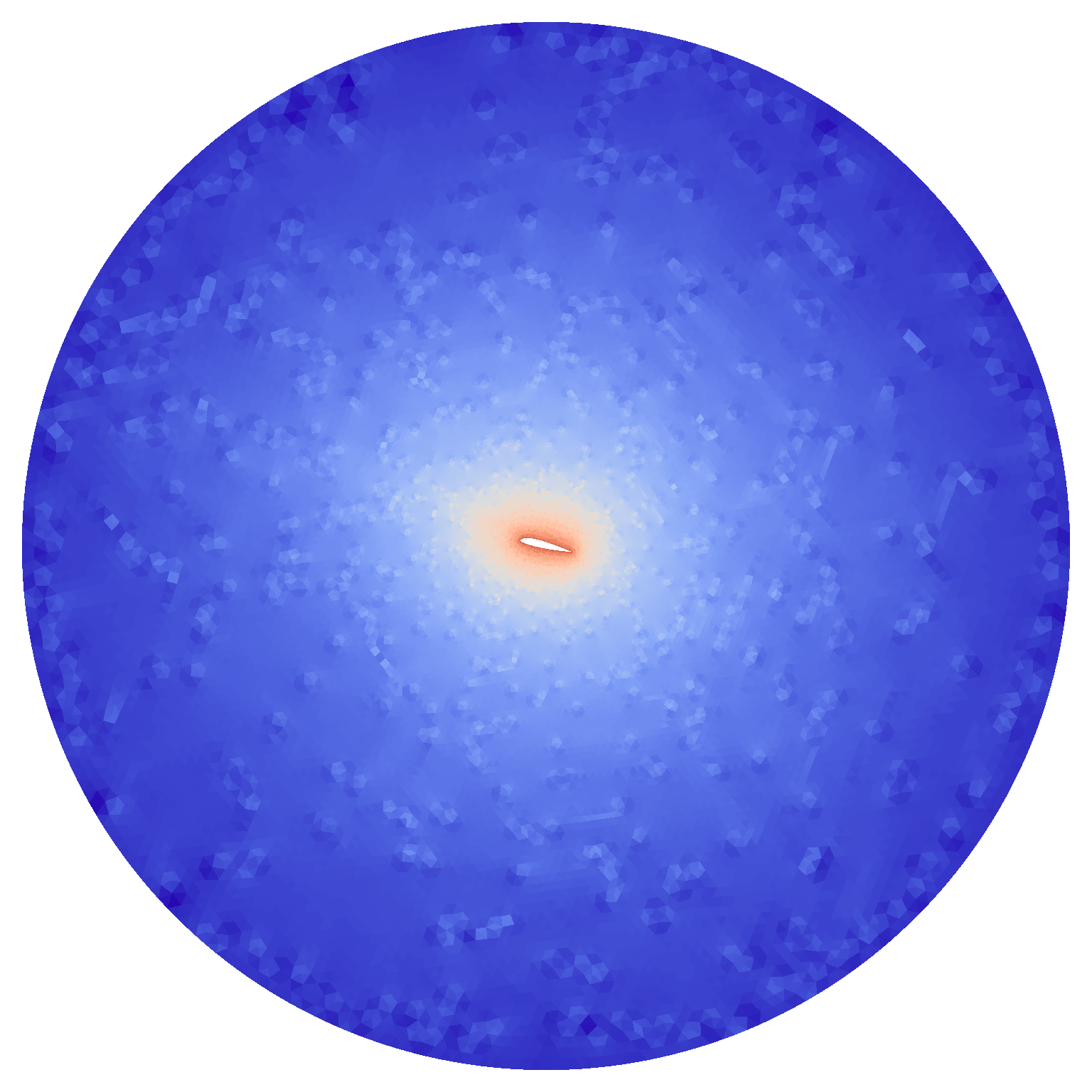}
      \vspace{0.4mm} \\

      \includegraphics[width=0.95\linewidth]{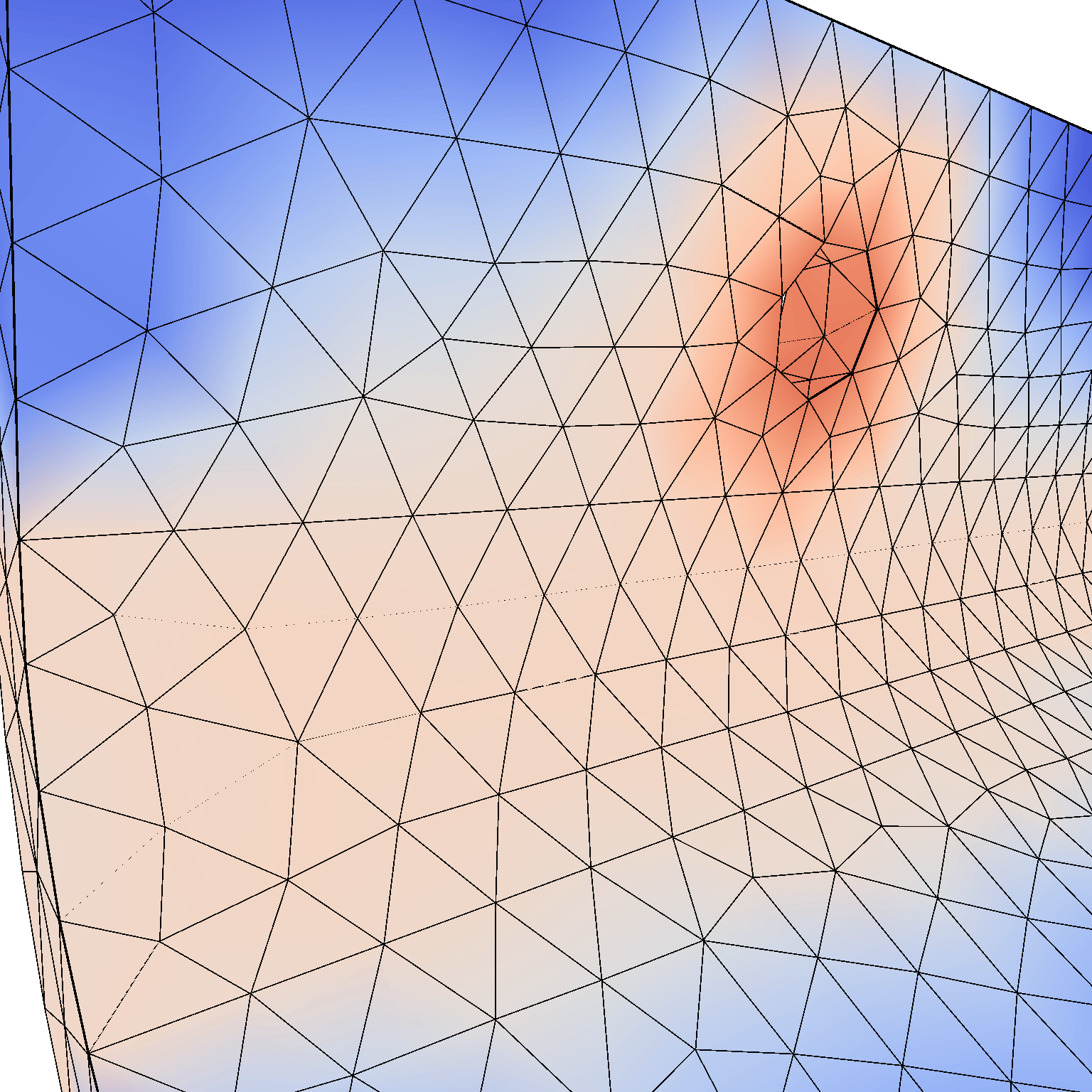} &
      \includegraphics[width=0.95\linewidth]{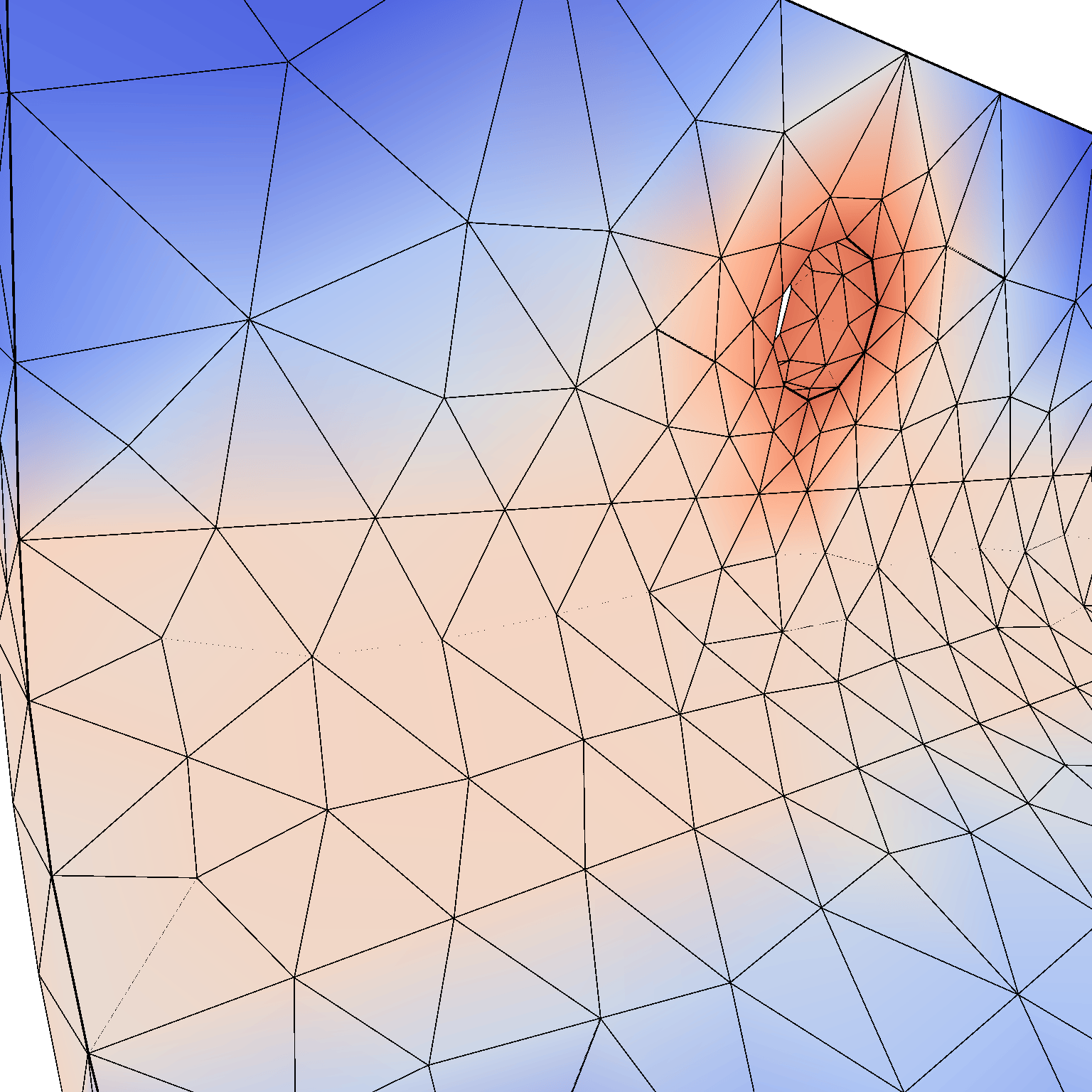} & 
      \includegraphics[width=0.95\linewidth]{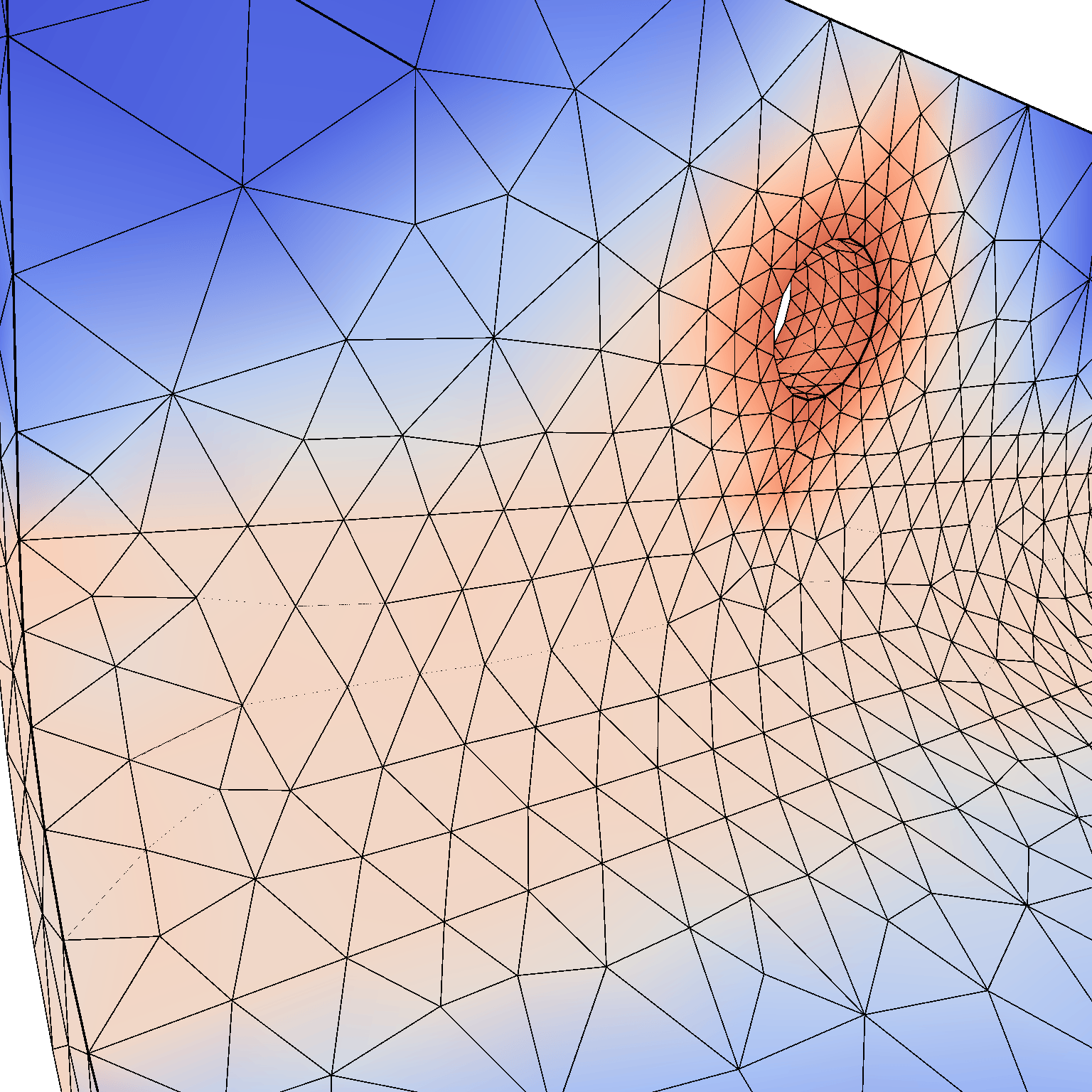} &
      \includegraphics[width=0.95\linewidth]{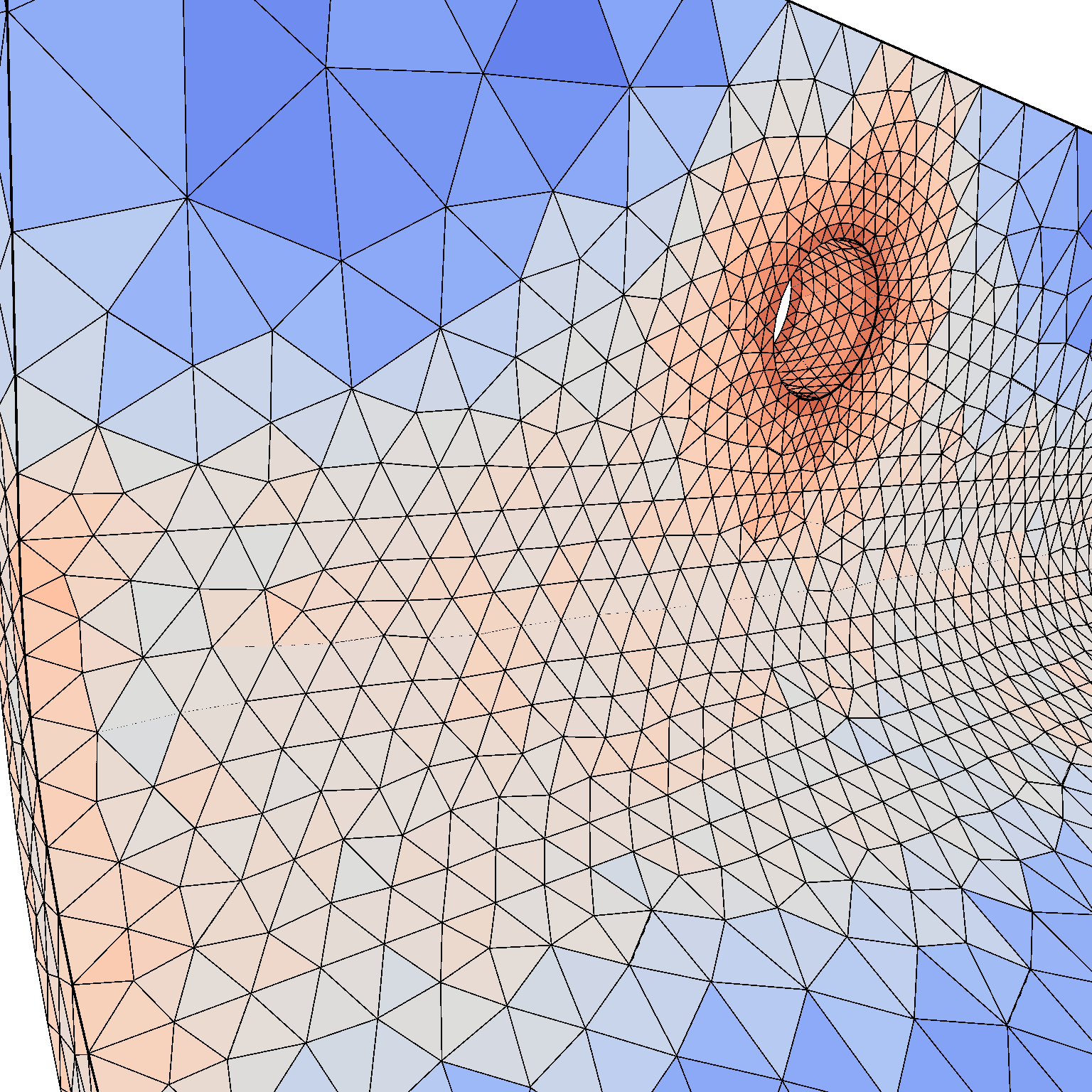} &
      \includegraphics[width=0.95\linewidth]{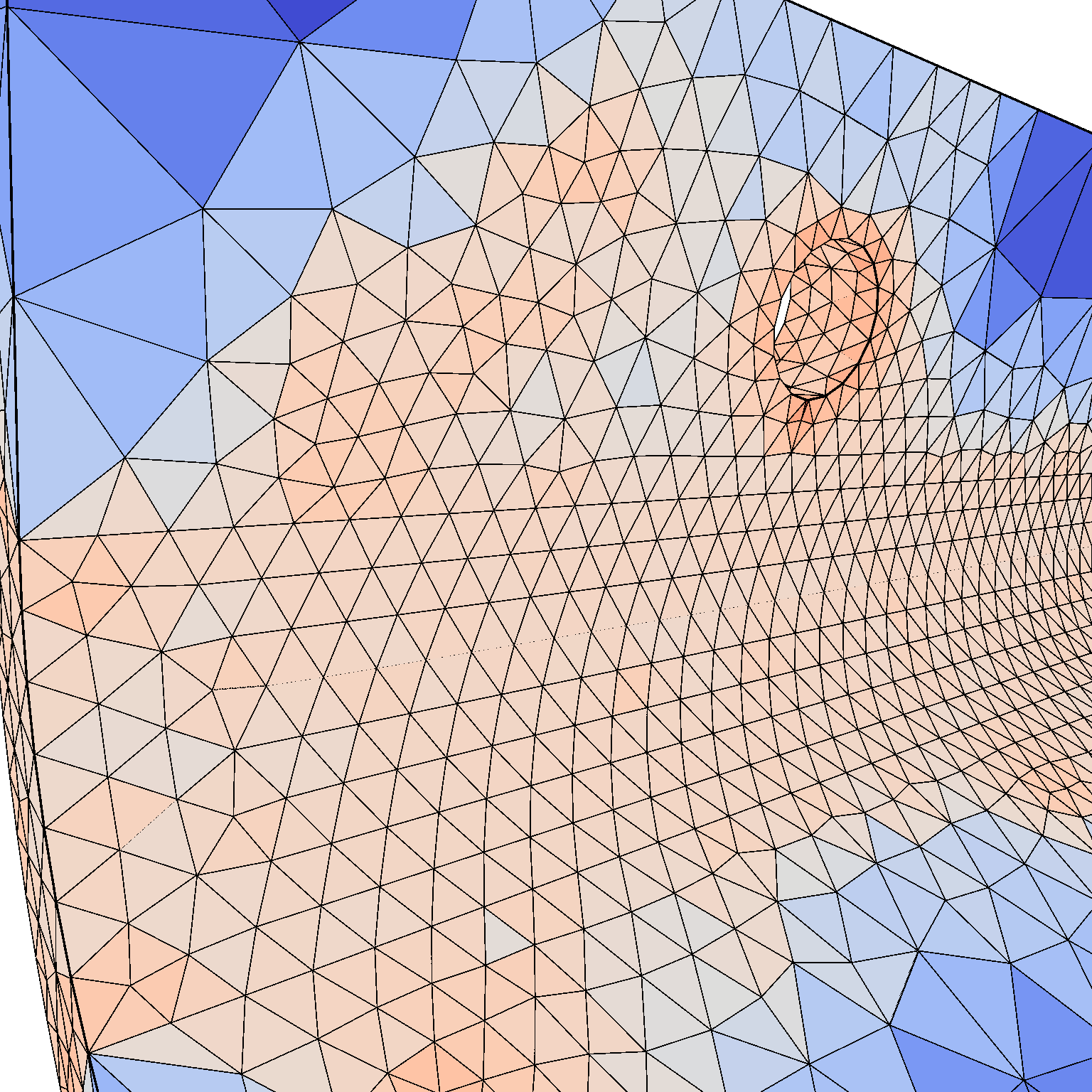} &
      \includegraphics[width=0.95\linewidth]{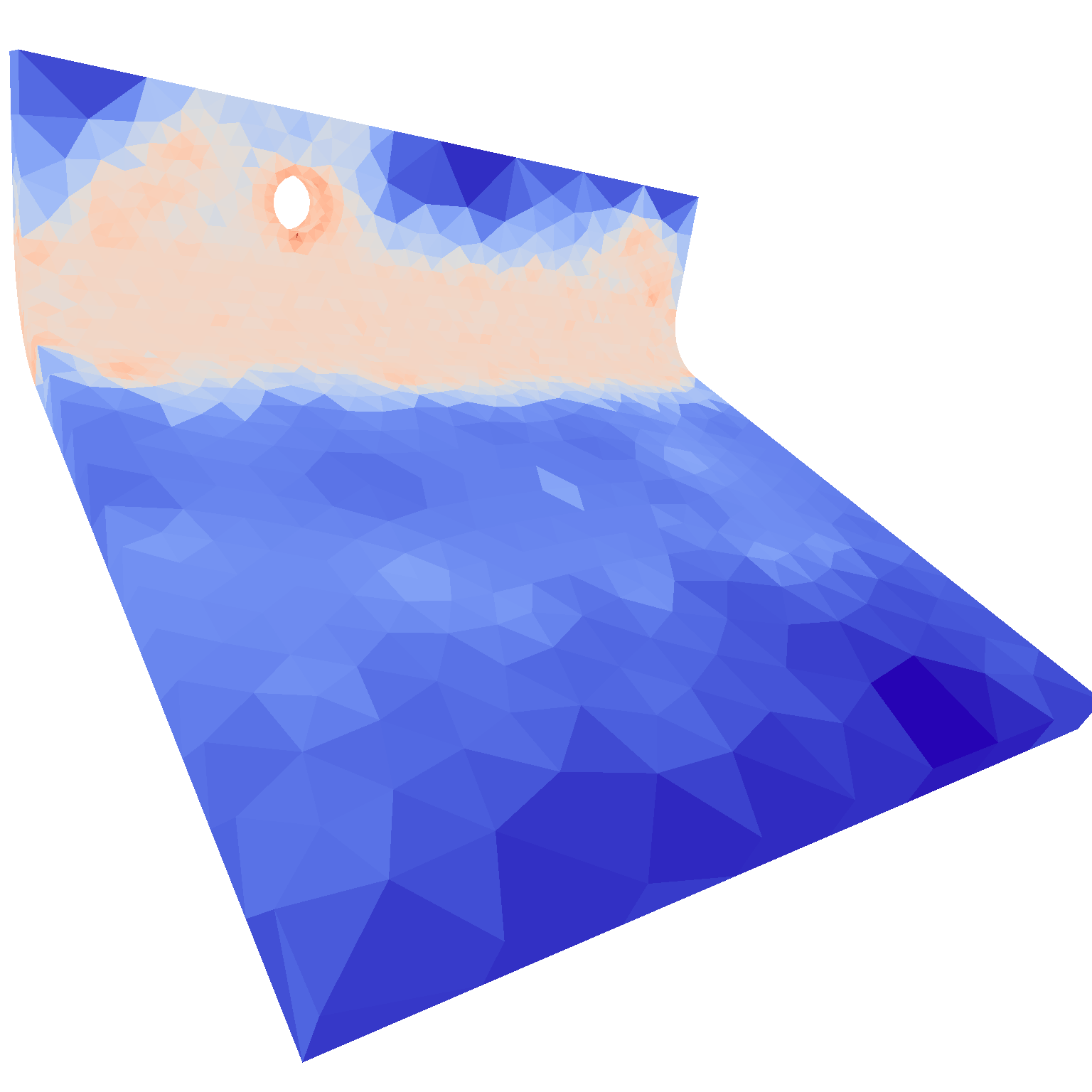} 
      \vspace{0.4mm} \\

      \includegraphics[width=0.95\linewidth]{05_assets/full_rollout/mold/step_0_close.png} &
      \includegraphics[width=0.95\linewidth]{05_assets/full_rollout/mold/step_1_close.png} & 
      \includegraphics[width=0.95\linewidth]{05_assets/full_rollout/mold/step_2_close.png} &
      \includegraphics[width=0.95\linewidth]{05_assets/full_rollout/mold/step_3_close.png} &
      \includegraphics[width=0.95\linewidth]{05_assets/full_rollout/mold/expert_close.png} &
      \includegraphics[width=0.95\linewidth]{05_assets/full_rollout/mold/expert_far.png}
      \vspace{0.4mm} \\
      
    \subcaptionbox*{$t{=}0$}[0.95\linewidth]{%
      \includegraphics[width=\linewidth]{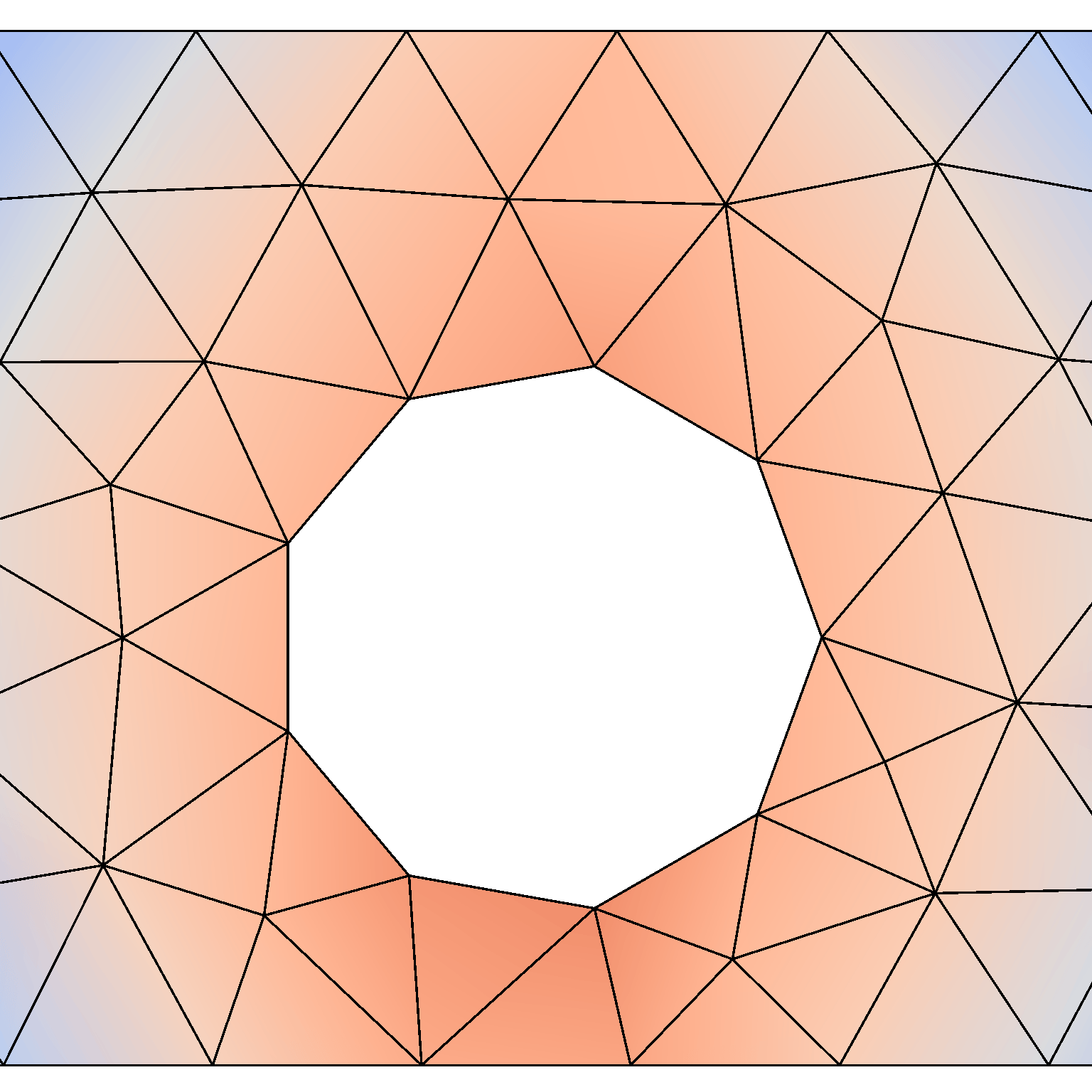}%
      \vspace{-0.1cm}
    }%
    & \subcaptionbox*{$t{=}1$}[0.95\linewidth]{%
      \includegraphics[width=\linewidth]{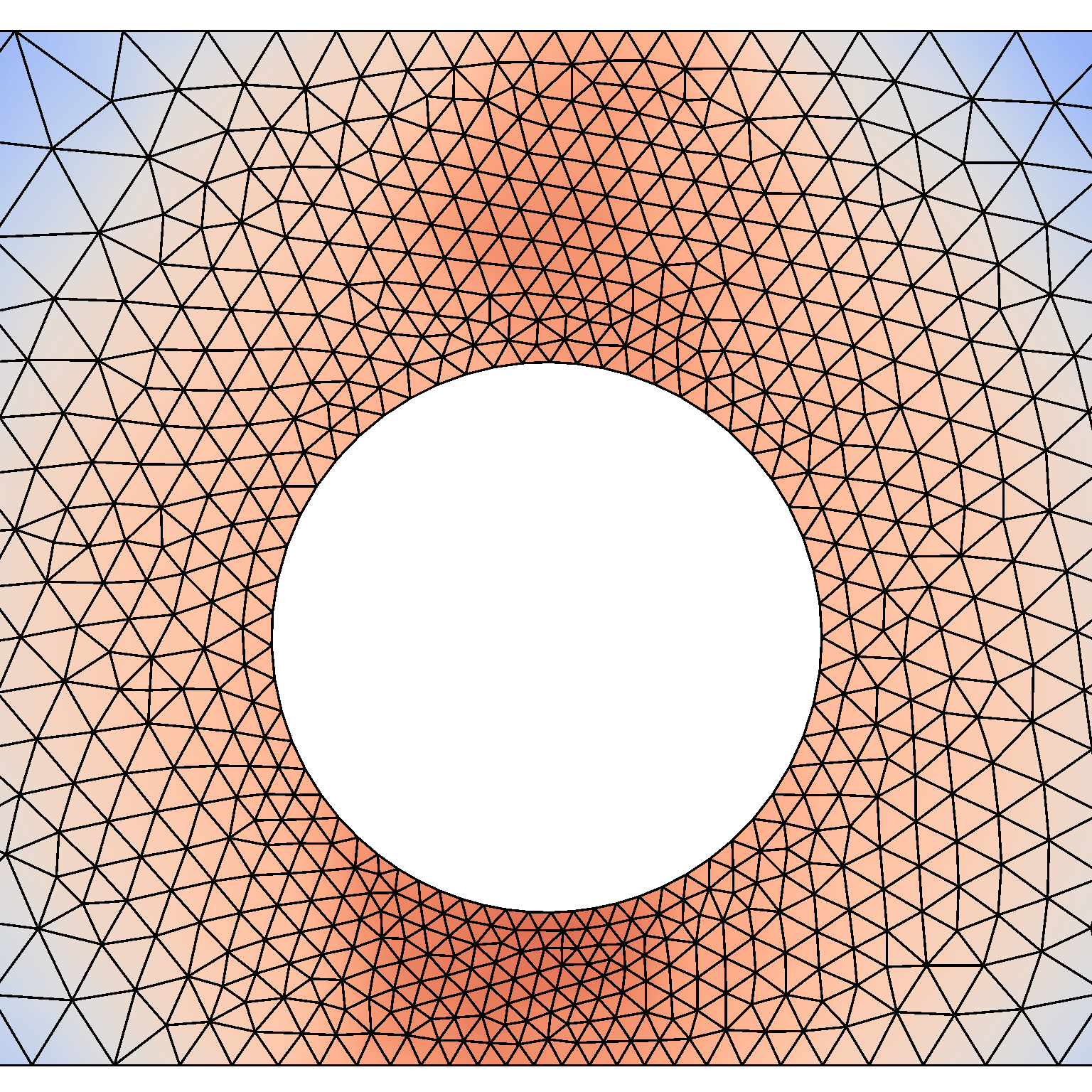}%
      \vspace{-0.1cm}
    }%
    & \subcaptionbox*{$t{=}2$}[0.95\linewidth]{%
      \includegraphics[width=\linewidth]{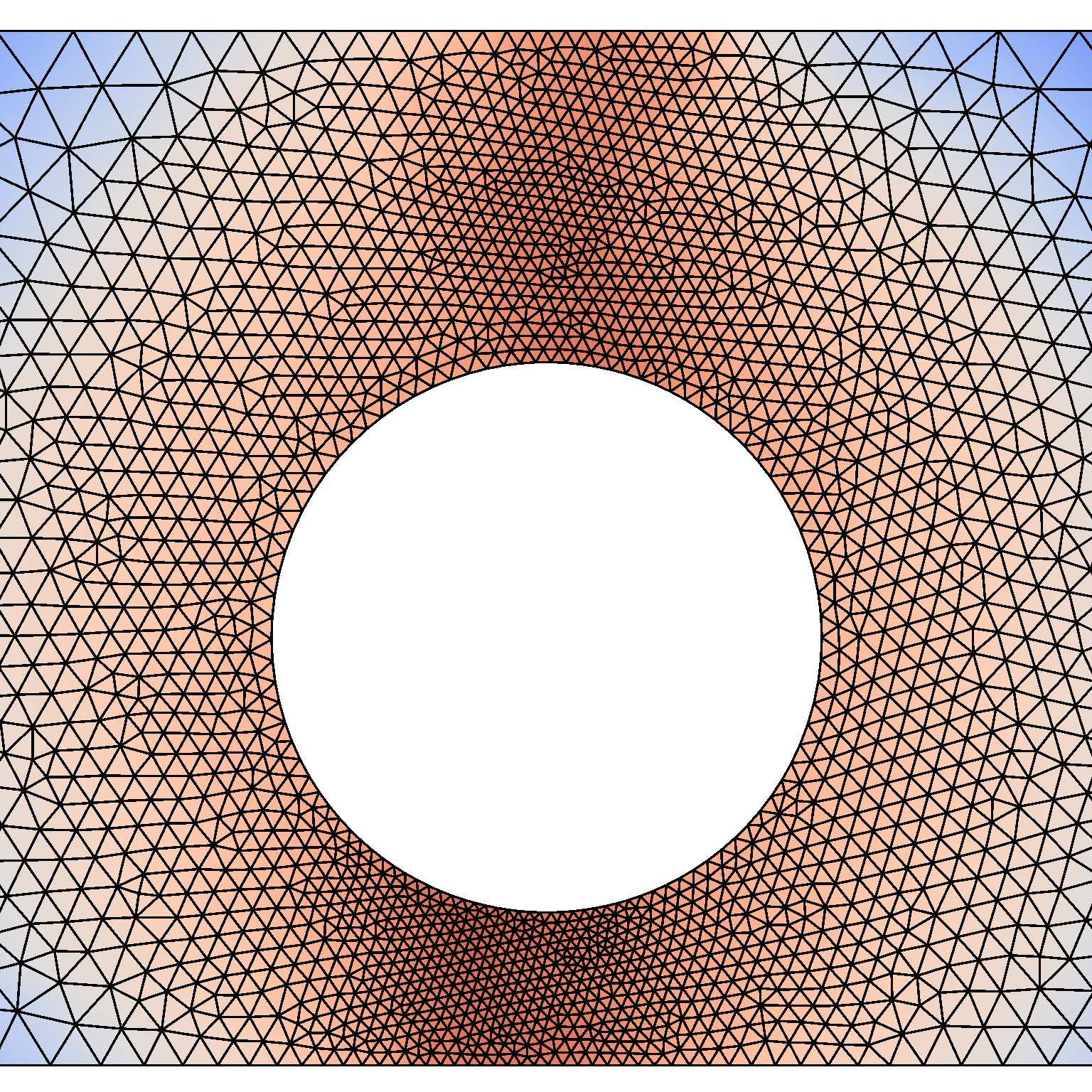}%
      \vspace{-0.1cm}
    }%
    & \subcaptionbox*{$t{=}3$}[0.95\linewidth]{%
      \includegraphics[width=\linewidth]{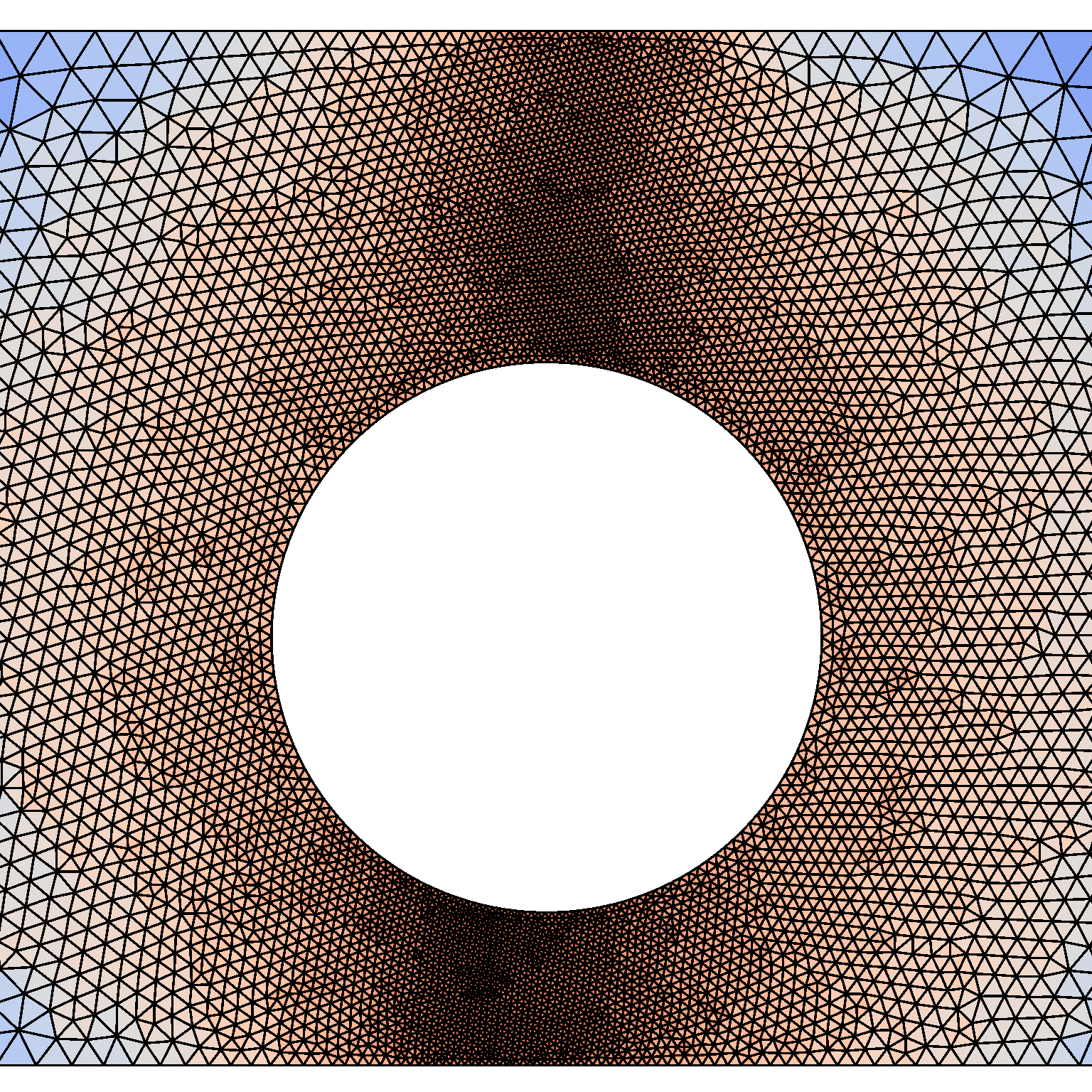}%
      \vspace{-0.1cm}
    }%
    & \subcaptionbox*{Expert}[0.95\linewidth]{%
      \includegraphics[width=\linewidth]{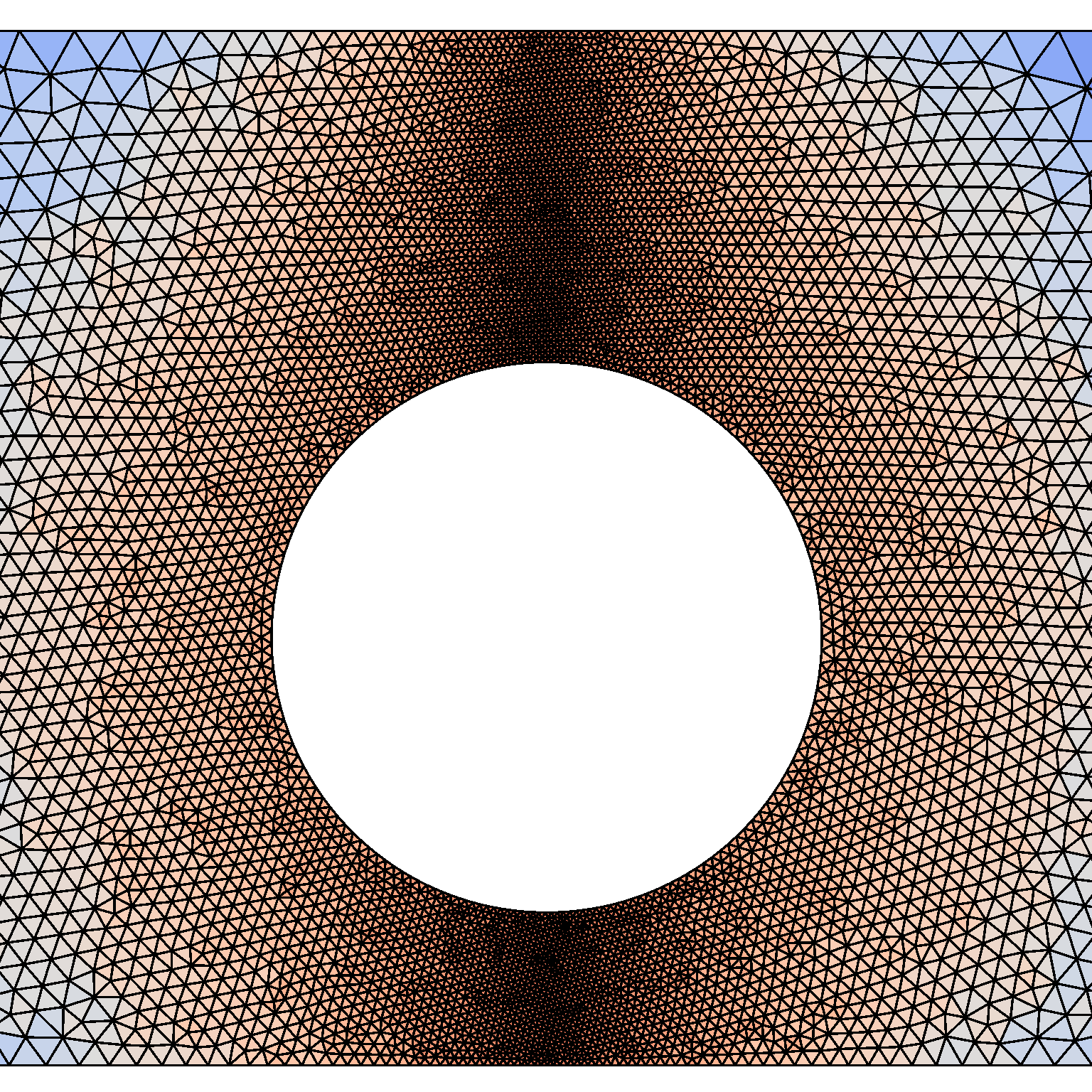}%
      \vspace{-0.1cm}
    }%
    & \subcaptionbox*{Expert (full view)}[0.95\linewidth]{%
      \raisebox{6.5\height}{%
        \includegraphics[width=\linewidth]{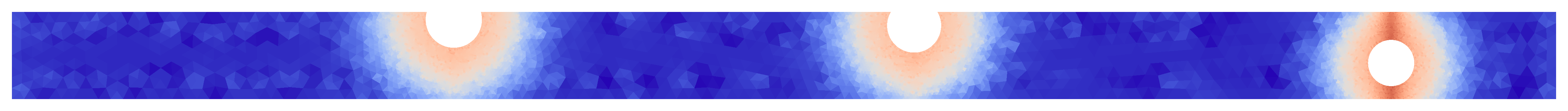}%
      \vspace{-0.1cm}
      }%
    }
    \vspace{0.4mm}
  \end{tabularx}
  \vspace{-0.2cm}
  \caption{
  \gls{amber} rollouts across the \texttt{Airfoil}, \texttt{Console}, \texttt{Mold} and \texttt{Beam}. 
  The colorscale denotes predictions for $t{<}3$, and element size otherwise, with red indicating smaller values.
  As in Figure~\ref{app_fig:full_rollouts_with_fem}, each step provides an increasingly detailed mesh, improving the next prediction's sampling resolution.
  }
  \vspace{-0.4cm}
  \label{app_fig:full_rollouts_without_fem}
\end{figure}

\end{document}